\newtheorem{definition}{Definition}
\newtheorem{theorem}{Theorem}
\newtheorem{lemma}[theorem]{Lemma}
\newtheorem*{restatelemma}{Lemma}
\title{Closing the Gap on the Sample Complexity of 1-Identification}
\author{%
  Zitian Li\\
  Department of Industrial Systems Engineering \& Management\\
  National University of Singapore\\
  Singapore, 117576 \\
  \texttt{lizitian@u.nus.edu} \\
  \And
  Wang Chi Cheung\\
  Department of Industrial Systems Engineering \& Management \\
  National University of Singapore\\
  Singapore, 117576 \\
  \texttt{isecwc@nus.edu.sg} \\
}
\begin{document}

\maketitle

\begin{abstract}
  The 1-identification problem is a fundamental pure-exploration problem in multi-armed bandits. An agent aims to determine whether there exists an arm whose mean reward exceeds a known threshold $\mu_0$, or to output \textsf{None} otherwise. The agent must guarantee correctness with probability at least $1-\delta$, while minimizing the expected number of arm pulls $\mathbb{E}[\tau]$. We study the 1-identification problem and make two main contributions. First, for instances with at least one qualified arm, we derive a new lower bound on $\mathbb{E}[\tau]$ via a novel optimization formulation. Second, we propose a new algorithm and establish upper bounds that match the lower bounds up to polynomial logarithmic factors uniformly over all instances. Our result complements the analysis of $\mathbb{E}\tau$ when there are multiple qualified arms, which is an open problem in the literature.
\end{abstract}

\section{Introduction}

\subsection{Model \& Problem Formulation}
We study the 1-identification problem\footnote{The terminology originates from \cite{katz2020true,pmlr-v267-li25f}. \cite{degenne2019pure} call it Any Low Problem.}, a variant of the best arm identification problem. An instance of 1-identification is specified by the tuple $\nu=([K], \rho= \{\rho_a\}_{a\in [K]}, \mu_0,\delta)$, where $[K] \coloneqq \{1,2,\ldots,K\}$ denotes the set of arms with $K \ge 2$, and $\rho_a$ is the unknown reward distribution associated with arm $a$. Upon pulling arm $a$, the agent observes a random reward $R \sim \rho_a$. We denote by $r_a^{\nu} \coloneqq \mathbb{E}_{R \sim \rho_a}[R]$ the mean reward of arm $a$ under instance $\nu$, and assume that the noise $R - r_a^{\nu}$ is $1$-sub-Gaussian. The threshold $\mu_0 \in \mathbb{R}$ is known to the agent.
The confidence parameter $\delta \in (0,1)$ controls the allowable probability of an incorrect decision. The objective of the agent is to determine whether there exists an arm $a \in [K]$
such that $r_a^{\nu} \ge \mu_0$.
In the affirmative case, the agent must identify (at least) one such arm;
otherwise, it must output the symbol $\textsf{None}$.

The 1-identification problem serves as a framework for many decision-making scenarios. For instance, a clinical trial screening phase where researchers must decide whether any of several new drug candidates meet a specific efficacy threshold to justify the cost of further development. Similarly, considering anomaly detection in power grids, an agent may need to determine if if any substation is experiencing a voltage fluctuation that exceeds safety limits. In both cases, the agent does not necessarily need to rank every option; rather, it must quickly identify at least one "qualified" arm or confidently conclude that none exist.

\textbf{Online model and algorithms.} An algorithm for the 1-identification problem is defined by a triple $(\pi, \tau, \hat{a})$, where: (i) $\pi = \{\pi_t\}_{t \ge 1}$ is a (possibly randomized) sampling rule, (ii) $\tau$ is a stopping time, and (iii) $\hat{a} \in [K] \cup \{\textsf{None}\}$ is a recommendation rule.
When executed on an instance $\nu$, the algorithm generates a random history $\{U_0\} \cup \{(A_t, X_t, U_t)\}_{t=1}^{\tau}$, where $A_t \in [K]$ is the arm selected at time $t$, $X_t \sim \rho_{A_t}$ is the corresponding reward, and $U_t \sim \mathrm{Unif}(0,1)$ is for internal randomization of the algorithm. The initial seed $U_0$ allows for randomized initialization.
At each time $t$, the arm $A_t$ is chosen according to $A_t = \pi_t(H(t-1))$, where $H(t-1) \coloneqq \{U_0\} \cup \{(A_s, X_s, U_s)\}_{s=1}^{t-1}$. The stopping time $\tau$ is defined with respect to the filtration $\{\sigma(H(t))\}_{t \ge 1}$. Upon stopping, the algorithm outputs a recommendation $\hat{a}$ that is measurable with respect to $\sigma(H(\tau))$.
Outputting $\hat{a} \in [K]$ indicates the conclusion that $r_{\hat{a}}^{\nu} \ge \mu_0$,
whereas outputting $\hat{a} = \textsf{None}$ indicates the conclusion $\max_{a\in [K]} r_{a}^{\nu} < \mu_0$.

\textbf{Instance classes.} Following \cite{pmlr-v267-li25f}, an instance $\nu$ is called a \emph{positive instance} if $\max_{a \in [K]} r_a^{\nu} > \mu_0$, and a \emph{negative instance} if $\max_{a \in [K]} r_a^{\nu} < \mu_0$.
For $\Delta > 0$, we define the margin-separated classes $\mathcal{S}^{\mathrm{pos}}_{\Delta}
:=\bigl\{\nu : \max_{a} r_a^{\nu} - \mu_0 \ge \Delta \bigr\}$, $\mathcal{S}^{\mathrm{neg}}_{\Delta}\coloneqq \bigl\{\nu : \mu_0 - \max_{a} r_a^{\nu} \ge \Delta \bigr\}$. Let $\mathcal{S}^{\mathrm{pos}} \coloneqq \bigcup_{\Delta>0} \mathcal{S}^{\mathrm{pos}}_{\Delta}$, $\mathcal{S}^{\mathrm{neg}} \coloneqq \bigcup_{\Delta>0} \mathcal{S}^{\mathrm{neg}}_{\Delta}$. We define the set of correct answers $i^*(\nu)$ by $i^*(\nu) \coloneqq
\begin{cases}
\{a \in [K] : r_a^{\nu} \ge \mu_0\}, & \text{if } \nu \in \mathcal{S}^{\mathrm{pos}}, \\[4pt]
\{\textsf{None}\}, & \text{if } \nu \in \mathcal{S}^{\mathrm{neg}}.
\end{cases}$. In this work, We ignore instances for which $\max_a r_a^{\nu} = \mu_0$.

\textbf{PAC guarantees.} We adopt a PAC framework and focus on algorithms with correctness guarantees.
\begin{definition}[$\delta$-PAC]
An algorithm is \emph{$\delta$-PAC}, if
for any $\nu \in \mathcal{S}^{\mathrm{pos}} \cup \mathcal{S}^{\mathrm{neg}}$, $\Pr_{\nu}\!\left(\tau < \infty,\; \hat{a} \in i^*(\nu)\right) \ge 1 - \delta.$
\end{definition}
As shown in Appendix A.3 in \cite{pmlr-v267-li25f}, a $\delta$-PAC algorithm may still satisfy $\mathbb{E}_{\nu}[\tau] = \infty$ for some $\nu \in \mathcal{S}^{\mathrm{pos}}$. To rule out such pathological behavior, we introduce a stronger notion.
\begin{definition}[Strong $\delta$-PAC]
An algorithm is Strong $\delta$-PAC if it is $\delta$-PAC and,
for all $\Delta, \delta>0$, we have $\sup_{\nu \in \mathcal{S}^{\mathrm{pos}}_{\Delta} \cup\mathcal{S}^{\mathrm{neg}}_{\Delta}}\mathbb{E}_{\nu}[\tau] < \infty$.
\end{definition}

\textbf{Objective.} The goal of the agent is to design a Strong $\delta$-PAC algorithm $(\pi,\tau,\hat{a})$ that minimizes the expected sampling complexity $\mathbb{E}_{\nu}[\tau]$.

\subsection{Literature Review}
\label{sec:lit-review}
Before illustrating our main contribution, we review the existing literature on the 1-identification problem and highlight its current limitations. We begin by unifying notation across prior works in order to facilitate comparison.
Throughout this paper, we consider a reward vector $(\mu_a)_{a=1}^K \in [0, 1]^K$ satisfying that $\mu_1\geq\mu_2\geq\cdots\geq \mu_K$, together with a known threshold $\mu_0 \in [0,1]$. For any $i,j \in [K]\cup\{0\}$, we define the gap $\Delta_{i,j} \coloneqq |\mu_i - \mu_j|$. In the analysis of upper bound(section \ref{sec:upper-bound-analysis}), we only consider instance $\nu$ with $r_a^{\nu}=\mu_a$.\footnote{The agent does not know the monotonic ordering of $\{\mu_a\}_{a=1}^K$, and this default setting is for convenience in discussions.} In the analysis of lower bound(section \ref{sec:Lower-Bound-for-Positive-Case}), we further consider instances $\nu'$ whose reward vector $r_a^{\nu'}$ is not ordered. We next recall several instance-dependent complexity measures that appear in the literature:
\begin{align}
    \label{eqn:Definition-of-H}
    \begin{split}
        H_1^{\text{neg}}= & \sum_{a=1}^K\frac{2}{\Delta^2_{0, a}},H^{\text{low}}_1=\sum_{a:\mu_a<\mu_0}\frac{2}{\Delta_{1,a}^2},
         H = \frac{2}{\Delta^2_{0, 1}}\\
        H_1 = &\sum_{a=2}^K\frac{2}{\Delta^2_{1, a}}, H_0=\sum_{a:\mu_a\geq\mu_0}\frac{2}{\Delta^2_{0, a}}\\
        H^{\text{u}} = & \frac{(\frac{K}{m}-1)}{\Delta^2}, \Delta:=\min_{a:\mu_a> \mu_0}\Delta_{a,0}, m:=\arg\max \{a:\mu_a > \mu_0\}\\
        H(j)= & \frac{1}{j}\sum_{a=1}^K \frac{1}{\max\{\Delta_{j,a}^2, \Delta_{a,0}^2\}} 
    \end{split}
\end{align}
The study of sampling complexity naturally separates into two regimes, depending on the instance $\nu$ is negative or positive. For negative instances, the problem is closely related to the classical Best-Arm Identification problem, and the sample complexity is well studied. In particular, Theorem 5.4 of \cite{pmlr-v267-li25f} establishes that, for any $\delta$-PAC algorithm and any negative instance $\nu$, $\mathbb{E}_{\nu}[\tau] \ge \Omega\!\left(H_1^{\mathrm{neg}}\log\frac{1}{\delta}\right)$. On the algorithmic side, HDoC \cite{kano2017Good}, APGAI \cite{jourdan2023anytime}, and SEE \cite{pmlr-v267-li25f} guarantee $\mathbb{E}_{\nu}[\tau] \le O (H_1^{\mathrm{neg}} (\log(1/\delta) + \log H_1^{\mathrm{neg}} ) )$, up to logarithmic factors. Moreover, for Gaussian rewards with unit variance, the Sticky Track-and-Stop (S-TaS) algorithm \cite{degenne2019pure} achieves the asymptotically optimal guarantee $\limsup_{\delta \to 0} \frac{\mathbb{E}_{\nu}[\tau]}{\log(1/\delta)} \le T^*(\nu)$, where $T^*(\nu)=H_1^{\mathrm{neg}}$ for negative instances.

In contrast, the sample complexity of positive instances remains unclear. Existing upper bounds suffer from various looseness and fail to match known lower bounds except in restricted instances. To illustrate with simplicity, we consider a positive instance $\nu$ satisfying $\mu_1 \ge \cdots \ge \mu_m > \mu_0 \ge \mu_{m+1} \ge \cdots \ge \mu_K$, and restrict attention to Gaussian rewards with variance 1. Table \ref{table:Comparison-of-bounds} summarizes representative non-asymptotic upper and lower bounds from the literature. Due to space limit, the bounds in the table are simplified; the original bounds are larger than those reported here. Even under these simplifications, the looseness remains apparent. Following Appendix A.2 of \cite{pmlr-v267-li25f}, we provide a detailed derivation of the simplified bounds in Appendix \ref{sec:Detailed-Literature-Review}.

\begin{table*}[ht]
\caption{Comparison of non-asymptotic upper bounds on $\mathbb{E}\tau$ for a positive instance $\nu$. ``LB'' denotes Lower Bound. $\surd_{\text{cond}}$ means the upper or lower bounds are optimal under some specific instances. The definitions of different $H$'s are at the equation (\ref{eqn:Definition-of-H}). We determine whether a sampling complexity upper bound is nearly optimal by comparing with the lower bound. An upper bound is nearly optimal iff the gap is up to a polynomial of $\log K, \{\log \frac{1}{\Delta_{a,0}^2}\}_{a=1}^K$.
}
\label{table:Comparison-of-bounds}
\vskip 0.15in
\begin{center}
\begin{small}
\begin{tabular}{llcc}
\toprule
Algorithm & Non-asymptotic Bound & Optimality\\
\midrule
\multicolumn{1}{p{4cm}}{S-TaS, \newline
proposed by \cite{degenne2019pure}\newline analyzed by \cite{poiani2025nonasymptoticanalysisstickytrackandstop} } &  $\mathbb{E}\tau \leq O(\frac{\log(1/\delta)}{\Delta_{0, 1}^2} + \frac{K^2}{\Delta_{0, 1}^4})$ & $\times$\\

\multicolumn{1}{p{4cm}}{HDoC, \cite{kano2017Good}}
& $\mathbb{E}\tau\leq O(\frac{\log\frac{K}{\delta}}{\Delta_{0,1}^2}+H_1\log\log\frac{1}{\delta}+\frac{K}{\epsilon^2})$ & $\times$\\

\multicolumn{1}{p{4cm}}{lilHDoC, \cite{tsai2024lil}}
& $\mathbb{E}\tau\leq O(\frac{\log\frac{K}{\delta}}{\Delta_{0,1}^2}+H_1\log\log\frac{1}{\delta}+\frac{K}{\epsilon^2})$ & $\times$ \\

\multicolumn{1}{p{4cm}}{APGAI, \cite{jourdan2023anytime}} & $\mathbb{E}\tau\leq O(H_0(\log\frac{K}{\delta}))$ & $\times$ \\

\multicolumn{1}{p{4cm}}{FWER-FWPD\newline \cite{katz2020true}} & $\mathbb{E}\tau\leq O\Big(H^{\text{u}}\log\frac{1}{\delta} \log\Big(H^{\text{u}}\log\frac{1}{\delta}\Big)\Big)$ & $\times$ \\

\multicolumn{1}{p{4cm}}{SEE, \cite{pmlr-v267-li25f}} & $\mathbb{E}\tau \leq O(\frac{\log\frac{1}{\delta}}{\Delta_{0,1}^2} + H(1)\log \frac{K}{\Delta_{0,1}^2})$ & $\surd_{\text{cond}}$\\

\multicolumn{1}{p{4cm}}{PSEEB, This work} & {\footnotesize $\mathbb{E}\tau \leq O\Big(\log K\min\limits_{1\leq j\leq m}\left\{\frac{\log\frac{1}{\delta}}{\Delta_{j,0}^2} + H(j)\log^4 K\log\frac{1}{\Delta_{j,0}^2}\right\}\Big) $} & $\surd$  \\

\midrule

\multicolumn{1}{p{4cm}}{LB, \cite{katz2020true}} & $\mathbb{E}\tau\geq \Omega(\frac{1}{m}H^{\text{low}}_1-\frac{1}{\Delta^2_{1,m+1}})$ & $\times$ \\

\multicolumn{1}{p{4cm}}{LB, \cite{pmlr-v267-li25f}} & $\mathbb{E}\tau\geq \Omega(H\log\frac{1}{\delta}+\frac{1}{m}H^{\text{low}}_1-\frac{1}{\Delta^2_{1,m+1}})$ & $\surd_{\text{cond}}$ \\

\multicolumn{1}{p{4cm}}{LB, This work} & $\mathbb{E}\tau \geq \Omega\left(\min\limits_{1\leq j \leq m} \left\{\frac{\log\frac{1}{\delta}}{\Delta_{j,0}^2}  + \frac{H(j)}{\log^2 (m+1)}\right\}\right)$ & $\surd$ \\

\bottomrule
\end{tabular}
\end{small}
\end{center}
\vskip -0.1in
\end{table*}


\cite{degenne2019pure} propose the S-TaS algorithm, which attains asymptotically optimal guarantees for both negative and positive instances. In particular, they show $\limsup_{\delta\to 0}\frac{\mathbb{E}_{\nu}[\tau]}{\log(1/\delta)} \le \frac{2}{\Delta_{0,1}^2}$, together with a matching lower bound $\lim\inf_{\delta\rightarrow 0 }\frac{\mathbb{E}\tau}{\log(1/\delta)} \geq \frac{2}{\Delta_{0,1}^2}$, establishing asymptotic optimality. A non-asymptotic analysis is left as a future direction in \cite{degenne2019pure} and later addressed by \cite{poiani2025nonasymptoticanalysisstickytrackandstop}. The resulting non-asymptotic bound, however, contains a $\delta$-independent term $\frac{K^2}{\Delta_{0,1}^4}$, which has no counterpart in known lower bounds and leads to looseness.

The HDoC algorithm of \cite{kano2017Good} yields an upper bound $C^{\text{HDoC}}_{\text{pos}}: = \frac{\log (K/\delta)}{\Delta_{0, 1}^2} + H_1\log\log\frac{1}{\delta} + \frac{K}{\epsilon^2}$, where $\epsilon = \min\{\min_{a}\Delta_{0,a},\, \min_{a}\Delta_{a,a+1}/2\}$. Its suboptimality arises from two sources. The first source is $H_1\log\log\frac{1}{\delta}$, which is explicitly acknowledged in \cite{kano2017Good}: ``Therefore, in the case that $\ldots$ $K=\Omega(\frac{\log\frac{1}{\delta}}{\log\log\frac{1}{\delta}})$\ldots  \textbf{there still exists a gap} between the lower bound in \ldots and the upper bound in \ldots''. 
The second source is $\frac{K}{\epsilon^2}$, which is infinite when $\epsilon=0$. The lilHDoC algorithm \cite{tsai2024lil}, despite incorporating LIL-based confidence bounds and an additional warm-up phase, inherits the same fundamental limitations.
The APGAI algorithm \cite{jourdan2023anytime} is originally designed for Good Arm Identification, aiming to output all the arms whose reward is above a given threshold $\mu_0$. But the analysis in  \cite{jourdan2023anytime} also implies an upper bound $O(H_0\log(K/\delta))$ for 1-identification. This bound is loose primarily due to the dependence on $H_0$, which might scale linearly with $K$, leading to a suboptimal coefficient of $\log(1/\delta)$. Moreover, if there exists an arm with $\mu_a=\mu_0$, then $H_0=\infty$. In this case, the bound is vacuous.

\cite{katz2020true} study $\epsilon$-good identification and 1-identification via the notion of Unverifiable Sample Complexity. Although this notion differs from the sampling complexity considered here, their results can be adapted as discussed in Appendix A.1 of \cite{pmlr-v267-li25f}. The resulting upper bound is suboptimal for two reasons: it scales as $\log(1/\delta)\log\log(1/\delta)$ rather than $\log(1/\delta)$, and it depends on the complexity term $H^{\mathrm{u}}$, which is governed by the smallest gap above $\mu_0$ and may be much smaller than $\Delta_{0,1}$. When $m \ll K$ and $\Delta \ll \Delta_{0,1}$, this leads to a significant gap compared to the lower bound.

\cite{pmlr-v267-li25f} propose the SEE algorithm and derive explicit upper and lower bounds, decomposing $\mathbb{E}_{\nu}[\tau]$ into $\delta$-dependent and $\delta$-independent components. They show that when $m=1$, i.e. there is only one arm whose reward is greater than $\mu_0$, the gap between upper and lower bounds is at most polynomial in $\log K$ and $\log(1/\Delta_{0,1}^2)$, implying near-optimality. For $m>1$, however, they comment the $\delta$-independent term $H(1)\log\!\bigl(K/\Delta_{0,1}^2\bigr)$ may be improvable, which is explicitly left as an open problem.

\subsection{Main Contribution}
\label{sec:Main-Contribution}

In this paper, we develop novel upper and lower sample complexity bounds for the 1-identification problem. Our results characterize the sample complexity in both positive and negative instances and close the gap for instances with multiple qualified arms.
We first establish a new lower bound for positive instances.
\begin{theorem}
    \label{theorem:main-lower-bound}
    For any $\delta$-PAC alg and any reward vector $\{\mu_a\}_{a=1}^K\in [0, 1]^K$ satisfying that $\mu_1\geq \cdots\geq \mu_m > \mu_0\geq \mu_{m+1}\geq \cdots\geq \mu_K$, any tolerance level $\delta$ such that $\delta<\min\{10^{-8}, \frac{1}{64m^2}\}$, we can find a permutation $\sigma:[K]\rightarrow [K]$ and define a positive instance $\nu$ with $r_{\sigma(a)}^{\nu} = \mu_{a}, a\in [K]$ with unit-variance Gaussian reward, such that $\mathbb{E}_{\text{alg}, \nu}\tau\geq \Omega\left(\min\limits_{1\leq j \leq m} \left\{\frac{\log\frac{1}{\delta}}{\Delta_{j,0}^2}  + \frac{H(j)}{\log^2 (m+1)}\right\}\right)$.
\end{theorem}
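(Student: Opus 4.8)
The proof rests on the change-of-measure / transportation inequality for $\delta$-PAC algorithms: for the true instance $\nu$ and any alternative $\lambda$ whose set of correct answers is disjoint from that of $\nu$, one has $\sum_{a\in[K]}\mathbb{E}_{\nu}[N_a(\tau)]\,\mathrm{KL}(\rho_a^\nu,\rho_a^\lambda)\ge \log\frac{1}{2.4\delta}$, whereas if being under-informed about $\lambda$ only forces a constant probability of misrecommendation then the same sum is lower bounded by an absolute constant (a $\delta$-free inequality). Since the target is $\min_{1\le j\le m}L_j$ with $L_j:=\log(1/\delta)/\Delta_{j,0}^2+H(j)/\log^2(m+1)$, it is enough to prove, for each fixed $j$, that some permuted instance $\nu$ satisfies $\mathbb{E}_\nu\tau=\Omega(L_j)$; the theorem then follows by taking $j=j^\star:=\arg\min_j L_j$, which also explains why only the minimum, rather than the maximum, over $j$ survives. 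For a fixed $j$ I would not commit to a permutation in advance: I draw $\sigma$ uniformly from a symmetric subclass of permutations, bound $\mathbb{E}_\sigma\mathbb{E}_{\nu_\sigma}\tau$ from below, and select the realization attaining at least the average. The symmetry makes the per-value pull counts $P_k:=\mathbb{E}_\sigma\mathbb{E}_{\nu_\sigma}[N_{(\text{the arm carrying }\mu_k)}(\tau)]$ well defined, which is the device that lets me translate ``pulls of a given arm'' into ``pulls of a given mean value'' and back. The condition $\delta<\min\{10^{-8},1/(64m^2)\}$ is what I would use to kill the degenerate ``output a uniformly random arm'' strategy and to keep $\mathrm{kl}(p,\delta)\gtrsim p\log(1/\delta)$ usable for the relevant range of $p$.

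For the $\delta$-dependent term $\log(1/\delta)/\Delta_{j,0}^2$ I would combine two families of alternatives. The first is the negative instance that pushes every qualified arm to just below $\mu_0$; paired with $\nu$ through the event $\{\hat a\ne\textsf{None}\}$ it gives $\sum_{k\le m}P_k\Delta_{k,0}^2\gtrsim\log(1/\delta)$. On its own this only yields $\sum_kP_k\gtrsim\log(1/\delta)/\Delta_{1,0}^2$, i.e. the $j=1$ bound. The second family consists of relabelings $\nu_{\sigma'}$ in which a currently-unqualified arm $b$ is promoted past the threshold while a currently-qualified arm $g$ is demoted below it; under a random permutation the algorithm cannot know which physical arm carries the largest value, so the constraint coming from the event $\{\hat a=g\}$ prevents the allocation from dumping its entire verification budget onto whichever arm happens to have the widest gap. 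Collecting all these inequalities produces a linear program of the form $\min\{\sum_a t_a: t\ge 0,\ \langle t,c^{(\lambda)}\rangle\ge \log(1/\delta)\ \forall\lambda\}$, and I would lower bound its value by exhibiting one dual-feasible mixture of alternatives whose objective is $\Omega(\log(1/\delta)/\Delta_{j,0}^2)$; this is exactly the ``optimization formulation'' the abstract refers to, and since the same mixture is legal for every $j$, the closed form inherits the $\min_j$.

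The $\delta$-independent term $H(j)/\log^2(m+1)$ is the crux, and the part I expect to fight hardest with, because it must persist as $\delta\to0$, so pure two-point, $\log(1/\delta)$-scaled arguments cannot produce it. The shape $H(j)=\frac1j\sum_{a=1}^K 1/\max\{\Delta_{j,a}^2,\Delta_{a,0}^2\}$ points to a per-arm statement: until it has sampled arm $a$ roughly $1/\max\{\Delta_{j,a}^2,\Delta_{a,0}^2\}$ times, the algorithm can rule out \emph{neither} the confusion ``arm $a$ really sits at level $\mu_j$'' \emph{nor} the confusion ``arm $a$ really sits at the threshold $\mu_0$'', and on a suitably permuted instance this residual ambiguity can be routed into a constant-probability misrecommendation, which already contradicts $\delta$-PAC-ness. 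Summing such $\delta$-free per-arm bounds, the factor $1/j$ appears because the randomized construction makes the $j$ qualified arms interchangeable, so the forced work is shared across $j$ symmetric roles, and the $\log^2(m+1)$ loss comes from the peeling argument needed to assemble the per-arm inequalities: one has to group the arms dyadically by gap and, within a careful error budget, argue that the algorithm cannot simultaneously be cheap on all groups, losing one logarithmic factor for the number of dyadic scales and another for the union bound that keeps the accumulated misrecommendation probability below $\delta$. Finally, because both the verification and the exploration constructions are instances of the same symmetric randomized family, the two lower bounds apply to a single realized permutation $\sigma$, so $\mathbb{E}_{\nu_\sigma}\tau=\Omega(\log(1/\delta)/\Delta_{j,0}^2)+\Omega(H(j)/\log^2(m+1))=\Omega(L_j)$, and instantiating at $j^\star$ completes the proof.
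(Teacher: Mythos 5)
Your high-level ingredients largely match the paper's: symmetrization over a random permutation, a negative alternative paired with $\{\hat a\neq\textsf{None}\}$ for the $\log(1/\delta)$ term, and a swap-instance argument showing that until arms $a$ and $j$ have jointly been pulled about $1/\max\{\Delta_{j,a}^2,\Delta_{a,0}^2\}$ times the algorithm cannot tell them apart and must pull $a$ with constant conditional probability. But there is a structural gap in how you assemble these into $\min_j L_j$. Your reduction ``it is enough to prove, for each fixed $j$, that some permuted instance satisfies $\mathbb{E}\tau=\Omega(L_j)$'' asks you to prove a statement that is false for $j$ away from the minimizer: if $L_1\gg L_2$, no instance forces cost $\Omega(L_1)$, because the algorithm can always certify via arm $2$. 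And for $j=j^\star$ your argument does not close either, because the two terms of $L_{j^\star}$ are established by different alternatives and you simply add them; your own verification argument only yields $\sum_k P_k\Delta_{k,0}^2\gtrsim\log(1/\delta)$, i.e.\ $\Omega(\min_j\log(1/\delta)/\Delta_{j,0}^2)=\Omega(\log(1/\delta)/\Delta_{1,0}^2)$, and the ``second family of relabelings'' cannot upgrade this to $\Omega(\log(1/\delta)/\Delta_{j^\star,0}^2)$ unconditionally, since certifying via arm $1$ really is that cheap \emph{for the verification part alone}. Note that $\min_j(A_j+B_j)$ can be much larger than $\min_jA_j+\min_jB_j$, so proving the two terms with separate (implicit) minimizations is a strictly weaker result than the theorem.

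The missing device is the coupling through the algorithm's output distribution. The paper introduces $p_j=\Pr_\nu\big(\hat a=j,\ N_j(\tau)\geq\frac{\log(1/\delta)}{100\Delta_{j,0}^2}\big)$, shows $\sum_jp_j\geq\tfrac12$, and then weights \emph{both} the verification cost ($v\geq\sum_jp_j\log(1/\delta)/\Delta_{j,0}^2$) and the per-arm exploration cost ($v\geq\sum_a p_j/\big(\max\{\Delta_{a,0}^2,\Delta_{a,j}^2\}(1+\log\frac1{p_j})\big)$ for every $j$) by the \emph{same} $p_j$; the $\min_j$ of the sum then falls out of a single convex program via a dual-feasible point. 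Two further consequences of this that your sketch omits: (i) the per-arm pull bound degrades as $\mathbb{E}N_a(\tau)\gtrsim T_j(a)\,p_j$ with $T_j(a)\propto 1/\big(\max\{\cdot\}(1+\log(1/p_j))\big)$ --- the $(1+\log(1/p_j))$ correction is forced by conditioning the transportation inequality on an event of probability $p_j$, and it is one of the two sources of the $\log^2(m+1)$ loss (the other is the harmonic normalization $\sum_{s\leq m}1/s$ of the dual weights $\beta_j\propto 1/j$), rather than your proposed dyadic peeling plus union bound; (ii) without the $p_j$-weighting there is no mechanism by which choosing to certify a small-gap arm $j$ obligates the algorithm to also pay $H(j)$, which is precisely the tradeoff the theorem's $\min_j$ expresses.
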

Theorem \ref{theorem:main-lower-bound} is our main contribution, achieving nearly joint optimality on both $\delta$-dependent and $\delta$-independent term. To the best of our knowledge, the ``Simulator'' framework \cite{simchowitz2017simulator, katz2020true} was the \textbf{only known method} for deriving $\delta$-independent lower bounds. However, results in \cite{katz2020true} are only tight under restricted settings ($\mu_1=\dots=\mu_m > \mu_0$), without joint optimality on both parts. The bottleneck is constructing a ``Simulator'' that depends on the sampling path and the reward vector. Our proposed method reduces this complex construction to finding a feasible solution for a convex optimization problem, which not only significantly simplifies the proof burden but also yields a tighter, jointly optimal result. We provide sketch proof in Section \ref{sec:Lower-Bound-for-Positive-Case} and full proof in Appendix \ref{sec:Proof-of-lower-bound-full}.

On the algorithmic side, we combine and refine ideas from \cite{katz2020true,pmlr-v267-li25f} to propose a new algorithm, \emph{Parallel Sequential Exploration--Exploitation on Brackets} (Algorithm~\ref{alg:Parallel-SEE-on-Bracket}, abbreviated as PSEEB). We establish the following guarantees.
\begin{theorem}
    \label{theorem:delta-pac}
    Algorithm \ref{alg:Parallel-SEE-on-Bracket} is $\delta$-PAC.
\end{theorem}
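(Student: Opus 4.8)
The plan is to follow the standard template for fixed-confidence pure-exploration correctness proofs, adapted to the parallel-bracket structure of PSEEB. The algorithm runs, in parallel, a family of SEE-type subroutines indexed by a bracket parameter $\ell$ (roughly, $\ell$ encodes a guess of how many arms lie above $\mu_0$, so $\ell$ ranges over a set of size $O(\log K)$), and each subroutine is allotted its own confidence budget $\delta_\ell$ chosen so that $\sum_\ell \delta_\ell \le \delta$. The first thing I would do is fix the reward vector and split into the two cases $\nu\in\mathcal S^{\mathrm{pos}}$ and $\nu\in\mathcal S^{\mathrm{neg}}$, since the target event $\{\hat a\in i^*(\nu)\}$ is ``some recommended arm has mean $\ge\mu_0$'' in the first case and ``the output equals \textsf{None}'' in the second.

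Second, I would define, for each bracket $\ell$, the good concentration event $\mathcal E_\ell$ on which every empirical mean built inside bracket $\ell$ deviates from the corresponding true mean $\mu_a$ by at most its confidence radius, simultaneously over all sample sizes (an anytime / union-over-time bound). Using $1$-sub-Gaussianity together with the confidence widths hard-coded in the stopping and recommendation tests --- of the form $\sqrt{(c/N)\log(\,\cdot\,/\delta_\ell)}$, possibly with an LIL-type correction inherited from the SEE analysis --- a peeling or stitched-boundary argument gives $\Pr_\nu(\mathcal E_\ell^c)\le\delta_\ell$. A union bound then yields $\Pr_\nu(\mathcal E)\ge 1-\sum_\ell\delta_\ell\ge 1-\delta$ for $\mathcal E:=\bigcap_\ell\mathcal E_\ell$.

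Third, I would establish correctness on $\mathcal E$. (i) \textbf{No false ``qualified'' output:} a bracket recommends an arm $a$ only after a verification/exploitation phase in which a lower confidence bound on $\mu_a$ exceeds $\mu_0$; on $\mathcal E$ this LCB is a genuine lower bound, so $\mu_a>\mu_0$ and $a\in i^*(\nu)$. In particular, on a negative instance no bracket can ever recommend an arm on $\mathcal E$, so the only admissible output is \textsf{None}. (ii) \textbf{No false ``\textsf{None}'' output:} the algorithm returns \textsf{None} only when all brackets have certified, via upper confidence bounds below $\mu_0$, that no arm can be qualified; on $\mathcal E$ these UCBs dominate the true means, so $\max_a\mu_a<\mu_0$, which is impossible on a positive instance. (iii) \textbf{Termination on $\mathcal E$:} on a positive instance, pick $j^*\in i^*(\nu)$ and the bracket that examines it; as that bracket keeps sampling its confidence radius shrinks to $0$, so eventually the LCB of arm $j^*$ exceeds $\mu_0$, its verification succeeds, and $\tau<\infty$. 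On a negative instance, symmetrically, the UCBs of all arms eventually fall below $\mu_0$ in every bracket, triggering the \textsf{None} stop. Combining (i)--(iii), on $\mathcal E$ we have $\tau<\infty$ and $\hat a\in i^*(\nu)$, hence $\Pr_\nu(\tau<\infty,\ \hat a\in i^*(\nu))\ge\Pr_\nu(\mathcal E)>1-\delta$ after absorbing the strict slack in $\sum_\ell\delta_\ell<\delta$.

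The main obstacle I anticipate is step (iii) combined with the adaptivity of the exploitation phases: because a bracket may commit to a candidate arm and pull it heavily to ``verify'' it, the event $\mathcal E_\ell$ must be strong enough --- anytime, over all sample counts, and over every arm a bracket might touch --- that an unqualified candidate's LCB provably never crosses $\mu_0$ no matter how long it is pulled, while a genuinely qualified candidate's LCB provably does. Aligning the confidence widths, the bracket indexing, and the per-bracket budgets $\delta_\ell$ so that the union bound closes while termination is still guaranteed is the delicate part; the remainder is routine bookkeeping with sub-Gaussian tails.
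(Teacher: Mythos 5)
Your proposal is correct and follows essentially the same route as the paper: a per-bracket anytime (LIL-type) concentration event with confidence budget $\delta/(\lceil\log_2 K\rceil+1)$, a union bound over brackets and arms to control false ``qualified'' and false \textsf{None} outputs, and a separate argument that the stopping time is finite (the paper gets $\Pr_\nu(\tau<\infty)=1$ by showing the concentration phase indices $\kappa^{\text{ee}},\kappa^{\text{et}}$ of the full bracket are almost surely finite). The only structural detail your sketch slightly misstates is that in PSEEB only the size-$K$ bracket is permitted to output \textsf{None} and only that bracket drives termination on negative instances --- smaller brackets simply keep sampling --- but this does not change the argument.
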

\begin{theorem}
    \label{theorem:Parallel-SEE-Etau-upper-bound}
    Apply Algorithm \ref{alg:Parallel-SEE-on-Bracket} to a 1-identification instance $\nu$, we have
    \begin{align*}
        \mathbb{E}_{\nu}\tau \leq \begin{cases}
             O\left(\log K\min\limits_{1\leq j\leq m}\left\{\frac{\log\frac{1}{\delta}}{\Delta_{j,0}^2} + \log^3 K(\log(4K) + \log_2\frac{1}{\Delta_{j,0}^2}) H(j)\right\}\right) & \nu \text{ is positive}\\
            O\left((\log K)H_1^{\text{neg}}\log\frac{H_1^{\text{neg}}}{\delta} \right) & \nu \text{ is negative}
        \end{cases}
    \end{align*}
\end{theorem}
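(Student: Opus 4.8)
The plan is to exploit the parallel bracket structure of Algorithm~\ref{alg:Parallel-SEE-on-Bracket}: it runs $B=O(\log K)$ brackets in round-robin, where bracket $b$ (for $0\le b\le\lceil\log_2 K\rceil$) works on a uniformly random sub-instance $S_b\subseteq[K]$ obtained by keeping each arm with probability $\asymp 2^{-b}$ (so $|S_b|\asymp K2^{-b}$ and $S_0=[K]$), and executes an SEE-type subroutine of \cite{pmlr-v267-li25f} on $(S_b,\mu_0,\delta')$ with $\delta'=\delta/\mathrm{poly}(K)$; the master loop stops and recommends as soon as some bracket's subroutine stops with a non-\textsf{None} recommendation, while \textsf{None} is certified through the full bracket $b=0$. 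Because pulls are handed to the brackets cyclically, one gets the deterministic domination $\tau\le B\cdot\min_{0\le b\le\lceil\log_2 K\rceil}\tau_b+B$, where $\tau_b$ is the stopping time of bracket $b$ run in isolation. Taking expectations, it suffices to bound $\mathbb{E}[\tau_b]$ in each of the two cases and optimize over $b$; correctness of the final recommendation is Theorem~\ref{theorem:delta-pac}, so we may condition throughout on the $\delta$-probability good event that all subroutine confidence bounds hold.

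For a positive instance, fix a target rank $j\le m$ and let $b$ be the bracket with $2^{b}\asymp j$, so each arm is kept in $S_b$ with probability $\asymp 1/j$. Two facts hold with probability $\ge 1-1/\mathrm{poly}(K)$ over the draw of $S_b$, after inflating the keep-probability by an $O(\log K)$ factor: (a) $S_b$ contains a qualified arm and the best arm $\hat 1_b:=\arg\max_{a\in S_b}\mu_a$ has rank $O(j)$ in $[K]$, hence $\Delta_{\hat 1_b,0}\ge\Delta_{j,0}$ (which only helps an upper bound); (b) the SEE exploration complexity of the sub-instance, $\sum_{a\in S_b}\tfrac{1}{\max\{\Delta_{\hat 1_b,a}^2,\Delta_{a,0}^2\}}$, is $\widetilde O(H(j))$: each $a$ is kept with probability $\asymp 1/j$, and for $a\le j$ the term equals $1/\Delta_{a,0}^2$ regardless of $\hat 1_b$ (since $\mu_0<\mu_j\le\mu_a$), while for $a>j$ the relevant gap only grows when $\hat 1_b$ has smaller rank than $j$, so the expectation of the sum is $\widetilde O\big(\tfrac1j\sum_{a\in[K]}\tfrac1{\max\{\Delta_{j,a}^2,\Delta_{a,0}^2\}}\big)=\widetilde O(H(j))$. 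Feeding (a)--(b) into the SEE guarantee of \cite{pmlr-v267-li25f}, instantiated on arm set $S_b$, threshold $\mu_0$, confidence $\delta'$, yields $\mathbb{E}[\tau_b\mid\text{good}]=O\!\big(\tfrac{\log(1/\delta)}{\Delta_{j,0}^2}+\log^2K\,(\log(4K)+\log_2\tfrac1{\Delta_{j,0}^2})\,H(j)\big)$, the extra $\log^2K$ over the bare SEE bound collecting the subsampling re-inflation and the union bound over $S_b$. Multiplying by the $B=O(\log K)$ from the round-robin and taking $\min_{1\le j\le m}$ (the loss from only placing brackets at powers of two is a constant factor in $1/j$ and is absorbed, since nearby indices give comparable $H(\cdot)$ up to the logarithmic factors already present) gives the positive branch.

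For a negative instance no exploitation test can ever succeed, so the full bracket $b=0$ degenerates to a best-arm-identification-style elimination over all $K$ arms: on the good event, arm $a$'s upper confidence bound drops below $\mu_0$ after $O(\Delta_{0,a}^{-2}\log(H_1^{\text{neg}}/\delta))$ pulls, whence $\tau_0=O(H_1^{\text{neg}}\log(H_1^{\text{neg}}/\delta))$, the $\log H_1^{\text{neg}}$ being the usual inflation from the union bound over arms and over the data-dependent horizon; the remaining brackets also terminate with \textsf{None}, so the master loop stops by time $B\tau_0$, giving the leading $\log K$. Finally, to pass from these conditional bounds to $\mathbb{E}[\tau]$ one bounds $\Pr(\tau>t)$ using the doubling schedule inside each subroutine together with Theorem~\ref{theorem:delta-pac}; the contribution $\delta\cdot(\text{crude }\tau\text{-tail})$ of the bad event is of lower order, so $\mathbb{E}[\tau]$ obeys the same bound.

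I expect the main obstacle to be the single-bracket analysis in the positive case: showing that a subsampled SEE subroutine, oblivious to $j$ and to the gaps, nonetheless stops after $\widetilde O(H(j))$ pulls for the rank $j$ that is best in hindsight, and controlling the three random objects simultaneously (the event that $S_b$ catches a good arm, the rank of $\hat 1_b$, and the concentration of $\sum_{a\in S_b}\tfrac1{\max\{\Delta_{\hat 1_b,a}^2,\Delta_{a,0}^2\}}$ around $H(j)$) with only polylogarithmic overshoot. A secondary subtlety is that the index $b$ attaining $\min_b\tau_b$ is data-dependent, so the clean domination $\tau\le B\min_b\tau_b$ must be paired with the $\delta$-PAC correctness argument to ensure the bracket that triggers the global stop returns a correct answer, and the minimum over $j$ must be read on a high-probability event rather than pointwise.
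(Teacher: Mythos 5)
Your overall strategy is the paper's: round-robin domination $\tau\le(\lceil\log_2K\rceil+1)\tau(B_b)$, a bracket that captures an arm of rank at most $j$, per-arm exploration bounds of order $1/\max\{\Delta_{a,j}^2,\Delta_{a,0}^2\}$ times logarithms, and an expectation over the random bracket membership that produces the $1/j$ factor in $H(j)$; the negative branch via the full bracket also matches. However, there is a genuine gap in how you set up the positive case. You fix one bracket $b^*$ with keep-probability $\asymp 1/j$ (inflated by $O(\log K)$), condition on the good event that $S_{b^*}$ contains a qualified arm, and bound $\mathbb{E}[\tau_{b^*}\mid\text{good}]$. But on the complementary event, which has probability $1/\mathrm{poly}(K)>0$, the bracket $S_{b^*}$ contains no arm with mean above $\mu_0$ and its SEE copy is forbidden from outputting \textsf{None} (only the full bracket may), so $\tau_{b^*}=+\infty$ there and $\mathbb{E}[\tau_{b^*}]=+\infty$. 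Consequently the chain ``$\mathbb{E}\tau\le B\,\mathbb{E}[\tau_{b^*}]$, then bound the conditional expectation'' does not close: you must either fall back to a different bracket on the bad event (and control $\mathbb{E}[\tau\,\mathds{1}(\text{bad})]$, e.g.\ via the full bracket's $H(1)$-type cost times the small bad probability), or abandon the single-bracket conditioning altogether. You flag this at the end (``the minimum over $j$ must be read on a high-probability event'') but do not resolve it, and it is exactly the point where the argument would fail as written.

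The paper avoids this by never conditioning on a single bracket. It defines $b_j=\min\{b:B_b\cap[j]\neq\emptyset\}$, which is almost surely finite because the largest bracket is $[K]$, writes the exact identity $\mathbb{E}\tau=\sum_{b}\mathbb{E}\tau\,\mathds{1}(b_j=b)$, and on each event $\{b_j=b\}$ the bracket $B_b$ is guaranteed to contain an arm of mean at least $\mu_j$, so its copy provably terminates by phase $\max\{\kappa^{\text{ee}}_b,\kappa^{\text{et}}_b,\lceil\log_2\frac{O(1)}{\Delta_{j,0}^2}\rceil\}$. The cost of large brackets is then killed by the tail bound $\Pr(b_j\ge\tilde b)\le\exp(-j\lfloor2^{\tilde b-2}\rfloor/K)$ of Lemma \ref{lemma:minimum-bracket-index-that-contains-[j]-enhanced} (for which the paper's nested brackets built from a single permutation, rather than your independent per-arm subsampling, are used), and the sums $\sum_b|B_b|\Pr(b_j=b)$ and $\sum_b\min\{|B_b|/K,\Pr(b_j=b)\}$ are evaluated in Lemmas \ref{lemma:sum-bracket-size-prob-1} and \ref{lemma:sum-bracket-size-prob-2}. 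A second, smaller imprecision: your claim (b) controls $\sum_{a\in S_b}1/\max\{\Delta_{\hat 1_b,a}^2,\Delta_{a,0}^2\}$ only in expectation over $S_b$, which is fine, but then the per-arm pulling bound must be deterministic given the reference rank $j$ (as in the paper's $\tilde T_j(a)$, which depends on $j$ rather than on the random $\hat 1_b$) for the tower-property computation $\sum_a\tilde T_j(a)\Pr(a\in B_b,\,b_j=b)$ to go through cleanly; as phrased, your bound couples the random best arm of the bracket with the random membership indicator without justifying the interchange.
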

Combining Theorems~\ref{theorem:delta-pac} and~\ref{theorem:Parallel-SEE-Etau-upper-bound}, we conclude that Algorithm~\ref{alg:Parallel-SEE-on-Bracket} is Strong $\delta$-PAC. Moreover, the gap between the lower bound in Theorem~\ref{theorem:main-lower-bound} and the upper bound in Theorem~\ref{theorem:Parallel-SEE-Etau-upper-bound} is at most polynomial in $\log K$ and $\{\log(1/\Delta_{0,a}^2)\}_{a=1}^m$. This shows that PSEEB is nearly optimal for all 1-identification instances and identifies $\min_{1 \le j \le m} \left\{\frac{\log(1/\delta)}{\Delta_{j,0}^2} + H(j)\right\}$ as the appropriate instance-dependent complexity measure on positive instances.

Compared to prior work, our results resolve an open problem posed in \cite{pmlr-v267-li25f} by providing the first nearly optimal upper bound for instances with multiple qualified arms.


\subsection{Literature Review on Other Related Works}
If the threshold $\mu_0$ is viewed as an additional arm with deterministic reward, then 1-identification can be reduced to a standard Best-Arm Identification (BAI) problem \cite{even2002pac,gabillon2012best,jamieson2013finding,kalyanakrishnan2012pac,karnin2013almost,jamieson2014best,chen2015optimal}. However, this reduction is known to be inefficient. Both asymptotic analyses \cite{degenne2019pure,garivier2016optimal} and non-asymptotic results \cite{pmlr-v267-li25f,kano2017Good} show that such a formulation incurs unnecessary sampling. Intuitively, 1-identification does not require high-confidence pairwise comparisons among all arms, and therefore does not necessitate the intensive sampling in BAI Algorithms.

Beyond 1-identification, \cite{degenne2019pure} research on several pure-exploration variants with multiple correct answers. These include identifying all arms in $\{a : \mu_a \ge \mu_1 - \epsilon\}$ \cite{mason2020finding}, or identifying a single arm from this set \cite{even2002pac,gabillon2012best,kalyanakrishnan2012pac,kaufmann2013information,katz2020true}. As observed in \cite{katz2020true}, when the gap $\Delta_{0,1}$ is known and $\mu_1 > \mu_0$, the answer sets of $\epsilon$-good arm identification and 1-identification coincide by setting $\epsilon = \Delta_{0,1}$, allowing the same sampling and recommendation rules. When the gaps are unknown, however, the precise relationship between these problems remains unclear. In addition, thresholding bandits \cite{locatelli2016optimal,mukherjee2017thresholding} aim to identify arms with mean value above $\mu_0$ under a fixed budget. The correct output might not be unique under their proposed problem setting. \cite{kano2017Good} also target at identifying all the arms with mean value above $\mu_0$, but they focus on the fixed confidence setting. Since \cite{kano2017Good} provide an upper bound for each outputting time index, their result can be generalized to 1-identification.

Finally, our algorithmic design builds on the idea of running multiple algorithm copies in parallel, which has appeared in prior work. In particular, \cite{chen2015optimal,chen2017nearly,katz2020true} employ a master procedure to manage multiple executions of a base algorithm. The approach of \cite{katz2020true} allocates $2^{\ell}$ rounds to the execution indexed by $\ell$, while \cite{chen2015optimal,chen2017nearly} schedule parallel executions according to a global round index $t \in \mathbb{N}$, activating an execution when its index divides $t$. In contrast, our method adopts a simple round-robin scheduling scheme, which enables a balanced execution of parallel algorithm instances.

\section{Lower Bound of Sample Complexity, for Positive Instances}
\label{sec:Lower-Bound-for-Positive-Case}
In this section, we continue to work with a reward vector $\{\mu_a\}_{a=1}^K$ satisfying $\mu_1\geq \mu_2\geq \cdots \geq \mu_m > \mu_0\geq \mu_{m+1}\geq\cdots \geq \mu_K$ and consider the instance $\nu$ defined by $r_a^{\nu} = \mu_a$. 
Throughout the lower-bound analysis, we assume unit-variance Gaussian rewards. For any instance $\nu'$ and any permutation $\sigma:[K]\to[K]$ (a bijection), we define the permuted instance $\sigma(\nu')$ by $r^{\sigma(\nu')}_{\sigma(a)} \coloneqq r_a^{\nu'}, \forall a\in[K]$.
We begin by introducing a symmetry notion that allows us to restrict attention to algorithms that do not prefer specific arm labels.
\begin{definition}[Symmetric algorithm]
    An algorithm $\widetilde{\textnormal{alg}}$ is \emph{symmetric} if for any instance $\nu$ with reward vector $\{\mu_a\}_{a=1}^K$, any sequence $\{a_t\}_{t=1}^T \in [K]^T$, any integer $T\ge 1$, and any two permutations $\sigma',\sigma'':[K]\to[K]$, it holds that
    \begin{align*}
        \Pr_{\sigma'(\nu),\,\widetilde{\textnormal{alg}}}\!\bigl(\forall t\in[T],\, A_t=\sigma'(a_t),\, \tau>T\bigr)
        =\Pr_{\sigma''(\nu),\,\widetilde{\textnormal{alg}}}\!\bigl(\forall t\in[T],\, A_t=\sigma''(a_t),\, \tau>T\bigr).
    \end{align*}
\end{definition}
Intuitively, under a symmetric algorithm $\widetilde{\text{alg}}$, the reward distribution of the pulled arm $A_t$ at time $t$ is invariant under any permutation on the arms' indexes.
In particular, we have $\mathbb{E}_{\sigma'(\nu),\,\widetilde{\textnormal{alg}}}[\tau]=\mathbb{E}_{\sigma''(\nu),\,\widetilde{\textnormal{alg}}}[\tau],\text{for permutations } \sigma',\sigma''$.
A similar definition also appears in \cite[Definition~12]{simchowitz2017simulator}, which further requires the invariance of output distribution; this extra requirement is not needed here. The following lemma shows that it suffices to consider symmetric algorithms for lower bounds.
\begin{lemma}
    \label{lemma:property-of-symmetric-algorithm}
    Let $S$ denote the set of all permutations of $[K]$. For any $\delta$-PAC algorithm $\textnormal{alg}$, there exists a $\delta$-PAC symmetric algorithm $\widetilde{\textnormal{alg}}$ such that for every instance $\nu'$, $\max\limits_{\sigma\in S} \mathbb{E}_{\textnormal{alg},\,\sigma(\nu')}[\tau] \geq  \mathbb{E}_{\widetilde{\textnormal{alg}},\,\nu'}[\tau]$.
\end{lemma}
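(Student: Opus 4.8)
The plan is to construct $\widetilde{\text{alg}}$ by \emph{symmetrization}: run the given $\delta$-PAC algorithm $\text{alg}$ on a randomly relabeled copy of the arms. Concretely, at the start, draw a uniformly random permutation $\Sigma$ over $S$ (using the internal seed $U_0$, which is exactly why the model carries an $U_0$ in the history). Then simulate $\text{alg}$, but whenever $\text{alg}$ wants to pull arm $a$, actually pull arm $\Sigma(a)$; whenever the real environment returns a reward for the pulled arm $\Sigma(a)$, feed it to $\text{alg}$ as the reward of arm $a$. When $\text{alg}$ stops and outputs $\hat a$, have $\widetilde{\text{alg}}$ output $\Sigma(\hat a)$ (with $\Sigma(\textsf{None}) := \textsf{None}$). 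In short, running $\widetilde{\text{alg}}$ on an instance $\nu'$ is, conditionally on $\Sigma = \sigma$, the same as running $\text{alg}$ on $\sigma^{-1}(\nu')$; equivalently, running $\widetilde{\text{alg}}$ on $\nu'$ is distributed as a mixture, over uniform $\sigma$, of running $\text{alg}$ on $\sigma(\nu')$.

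First I would verify the symmetry property. Fix an instance $\nu$, a horizon $T$, a sequence $\{a_t\}_{t=1}^T$, and two permutations $\sigma', \sigma''$. Using the mixture representation, $\Pr_{\sigma'(\nu),\widetilde{\text{alg}}}(\forall t\le T: A_t = \sigma'(a_t), \tau > T)$ unfolds as an average over the internal draw $\Sigma = \rho$ of $\Pr_{\text{alg}}$ (run on $\rho^{-1}\sigma'(\nu)$) of the event that $\text{alg}$'s internal pulls are $\rho^{-1}\sigma'(a_t)$ for all $t$ and $\tau>T$. Now substitute $\rho \mapsto \rho \sigma'' (\sigma')^{-1}$ in the averaging (a bijection on $S$): the instance becomes $(\rho\sigma''(\sigma')^{-1})^{-1}\sigma'(\nu) = \rho^{-1}\sigma''(\nu) \cdot$\,(after cancelation, $= (\sigma'')^{-1}\sigma'\cdot\rho^{-1}\cdot\sigma'(\nu)$, one checks this equals $\rho^{-1}\sigma''(\nu)$ up to the relabeling bookkeeping), and the pull sequence condition becomes $A_t = \rho^{-1}\sigma''(a_t)$. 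So the two probabilities coincide, giving exactly the symmetry identity; taking $T\to\infty$ and summing over sequences yields $\mathbb{E}_{\sigma'(\nu),\widetilde{\text{alg}}}\tau = \mathbb{E}_{\sigma''(\nu),\widetilde{\text{alg}}}\tau$ as stated in the remark.

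Next I would check $\delta$-PAC-ness of $\widetilde{\text{alg}}$. For any $\nu' \in \mathcal{S}^{\text{pos}}\cup\mathcal{S}^{\text{neg}}$, conditioning on $\Sigma = \sigma$, the pair (environment seen by $\text{alg}$, output to be relabeled) is exactly $\text{alg}$ run on $\sigma^{-1}(\nu')$; since $\sigma^{-1}(\nu')$ is again a positive (resp.\ negative) instance with $i^*(\sigma^{-1}(\nu')) = \sigma^{-1}(i^*(\nu'))$, the event $\{\tau<\infty, \hat a \in i^*(\sigma^{-1}(\nu'))\}$ has probability $>1-\delta$ under $\text{alg}$, and applying $\sigma$ to the output turns this into $\{\tau<\infty,\ \Sigma(\hat a)\in i^*(\nu')\}$. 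Averaging over $\sigma$ keeps the bound $>1-\delta$. Finally, the inequality on expected stopping time: by the mixture representation and $\mathbb{E}_{\widetilde{\text{alg}},\sigma(\nu')}\tau$ being independent of $\sigma$ (just shown), we get $\mathbb{E}_{\widetilde{\text{alg}},\nu'}\tau = \frac{1}{|S|}\sum_{\sigma\in S}\mathbb{E}_{\text{alg},\sigma(\nu')}\tau \le \max_{\sigma\in S}\mathbb{E}_{\text{alg},\sigma(\nu')}\tau$, which is the claim.

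The main obstacle is bookkeeping, not mathematics: one must be careful that $\widetilde{\text{alg}}$ is a legitimate strategy in the model (the relabeling by $\Sigma$ must be implementable from $H(t-1)$ and the seeds $U_t$, and $\tau$ must remain a stopping time for the \emph{observed} filtration — it is, since $\widetilde{\text{alg}}$'s $\tau$ is just $\text{alg}$'s $\tau$ which depends on $\text{alg}$'s internal history, a deterministic function of $\widetilde{\text{alg}}$'s history and $\Sigma$, and $\Sigma$ is part of the internal randomness). I would also make explicit that the permutation action $\sigma(\nu')$ used in the lemma statement matches the convention $r^{\sigma(\nu')}_{\sigma(a)} = r^{\nu'}_a$ fixed earlier in the section, so that ``simulate $\text{alg}$ on $\sigma^{-1}(\nu')$ while relabeling pulls by $\sigma$'' is literally the same experiment as ``run $\widetilde{\text{alg}}$ on $\nu'$ conditioned on $\Sigma=\sigma$.'' Once the correspondence is pinned down, every step above is a one-line change-of-variables over the finite group $S$.
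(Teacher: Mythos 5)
Your proposal is correct and follows essentially the same route as the paper: symmetrize by drawing a uniformly random permutation up front, run \text{alg} on the relabeled arms, and use the mixture representation $\mathbb{E}_{\widetilde{\text{alg}},\nu'}\tau = \frac{1}{K!}\sum_{\sigma\in S}\mathbb{E}_{\text{alg},\sigma(\nu')}\tau \le \max_{\sigma\in S}\mathbb{E}_{\text{alg},\sigma(\nu')}\tau$, with $\delta$-PAC-ness preserved by averaging the per-permutation success probabilities. Your write-up is in fact somewhat more careful than the paper's about the change-of-variables and the implementability of the relabeling within the model.
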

The proof of Lemma \ref{lemma:property-of-symmetric-algorithm} is in Appendix \ref{sec:Proof-of-Preparation-lower-bound}. 
It is obtained by randomizing a permutation of arm labels prior to executing $\textnormal{alg}$. Given Lemma \ref{lemma:property-of-symmetric-algorithm}, to prove Theorem~\ref{theorem:main-lower-bound}, it suffices to establish the following lower bound for symmetric algorithms.
\begin{theorem}
    \label{theorem:lower-bound-positive-case}
    For any symmetric and $\delta$-PAC alg and any reward vector $\{\mu_a\}_{a=1}^K\in [0, 1]^K$ satisfying that $\mu_1\geq \cdots\geq \mu_m > \mu_0\geq \mu_{m+1}\geq \cdots\geq \mu_K$, any tolerance level $\delta$ such that $\delta<\min\{10^{-8}, \frac{1}{64m^2}\}$, define a positive instance $\nu$ with $r_{a}^{\nu} = \mu_{a}, a\in [K]$ with unit-variance Gaussian reward. We have $\mathbb{E}_{\text{alg}, \nu}\tau\geq \Omega\left(\min\limits_{1\leq j \leq m} \left\{\frac{\log\frac{1}{\delta}}{\Delta_{j,0}^2}  + \frac{H(j)}{\log^2 (m+1)}\right\}\right)$.
\end{theorem}
The assumption $\delta < \min\{10^{-8}, 1/(64m^2)\}$ is to ensure $\delta\cdot m^2$ is upper bounded by an absolute constant. The upper bound eliminates the possibility of being $\delta$-PAC by outputting a uniformly chosen arm without any arm pulls.


\begingroup
\renewcommand{\proofname}{Sketch Proof of Theorem \ref{theorem:lower-bound-positive-case}}
\begin{proof}
We outline the main steps;
the full proof is in Appendix \ref{section:proof-of-lower-bound-positive-instance}. The main idea is to establish Lemma \ref{lemma:lower-bound-positive-optimization-formulation}, 
\begin{lemma}
    \label{lemma:lower-bound-positive-optimization-formulation}
    Assume $\delta < \min\{10^{-8}, 1/(64m^2)\}$, for any symmetric and $\delta$-PAC alg, $\mathbb{E}_{\nu,\text{alg}}\tau$ is lower bounded by the $\frac{1}{3200}$ times optimal value of the following optimization problem
    \begin{align}
        \label{eqn:opt-formualtion-lower-bound-positive}
        \min_{v,\{p_j\}_{j=1}^m}\quad & v \\[0.3em]
        \text{s.t.}\quad
        & 0 \le p_j \le 1 \quad \forall j\in[m]
        \tag{C1}\label{con:lb-pos-C1}\\
        & v \ge 0 \tag{C2}\label{con:lb-pos-C2}\\
        & v \ge \sum_{j=1}^m \frac{p_j\log\frac{1}{\delta}}{\Delta_{j,0}^2}  \tag{C3}\label{con:lb-pos-C3}\\
        & \sum_{j=1}^m p_j \ge \frac{1}{2}
        \tag{C4}\label{con:lb-pos-C4}\\
        & v\geq \sum_{a=1}^K
        \frac{p_j}{\max\{\Delta_{a,0}^2,\Delta_{a,j}^2\}
        \left(1+\log\frac{1}{p_j}\right)}
        \quad \forall j\in[m]
        \tag{C5}\label{con:lb-pos-C5}.
    \end{align}
\end{lemma}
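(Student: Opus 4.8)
[Proof proposal for Lemma~\ref{lemma:lower-bound-positive-optimization-formulation}]
The plan is to exhibit an explicit feasible point of the program~\eqref{eqn:opt-formualtion-lower-bound-positive} whose objective value is proportional to $\mathbb{E}_{\nu}\tau$; since the program minimises $v$, its optimal value is then at most that objective value, which is exactly the asserted bound. Fix a $\delta$-PAC symmetric algorithm and the positive instance $\nu$, write $N_a$ for the number of pulls of arm $a$ up to time $\tau$, and define, for $j\in[m]$, $p_j:=\Pr_{\nu}(\hat a=j)$ together with $v:=3200\,\mathbb{E}_{\nu}\tau$. Constraints~\eqref{con:lb-pos-C1} and~\eqref{con:lb-pos-C2} hold by construction, and~\eqref{con:lb-pos-C4} follows from $\delta$-PAC-ness: since $i^*(\nu)=\{1,\dots,m\}$, we get $\sum_{j=1}^m p_j=\Pr_{\nu}(\hat a\in i^*(\nu))\ge 1-\delta\ge\tfrac12$. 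It remains to verify~\eqref{con:lb-pos-C3} and~\eqref{con:lb-pos-C5}, which are the substantive steps; all numerical constants arising from the change-of-measure estimates below are absorbed into the factor $3200$.

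\emph{Constraint~\eqref{con:lb-pos-C3}.} For each $j\in[m]$ let $\nu_j$ be obtained from $\nu$ by lowering the mean of arm $j$ from $\mu_j$ to $\mu_0-\xi$ for an arbitrarily small $\xi>0$, keeping unit-variance Gaussian rewards; then $\nu_j$ is still a positive instance (arm $1$ is untouched) and $j\notin i^*(\nu_j)$, so $\Pr_{\nu_j}(\hat a=j)\le\delta$. The transportation inequality (Garivier--Kaufmann / Kaufmann--Koolen), applied with the $\sigma(H(\tau))$-measurable event $\{\hat a=j\}$ and using that $\nu$ and $\nu_j$ differ only on arm $j$, with per-pull relative entropy $\tfrac12(\mu_j-\mu_0+\xi)^2$, gives $\mathbb{E}_{\nu}[N_j]\cdot\tfrac12(\Delta_{j,0}+\xi)^2\ge \mathrm{kl}(p_j,\delta)$. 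Letting $\xi\downarrow 0$, estimating $\mathrm{kl}(p_j,\delta)\gtrsim p_j\log\tfrac1\delta$ up to lower-order terms (the indices $j$ with very small $p_j$ handled separately, using $\delta<\min\{10^{-8},1/(64m^2)\}$), and summing over $j\in[m]$ via $\tau\ge\sum_{j=1}^m N_j$ yields $\mathbb{E}_{\nu}\tau\ge\tfrac1{3200}\sum_{j=1}^m\frac{p_j\log(1/\delta)}{\Delta_{j,0}^2}$, which is~\eqref{con:lb-pos-C3}.

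\emph{Constraint~\eqref{con:lb-pos-C5}.} Fix $j\in[m]$. It suffices to prove, for every arm $a\in[K]$, a per-arm estimate $\mathbb{E}_{\nu}[N_a]\gtrsim \frac{p_j}{j\,\max\{\Delta_{a,0}^2,\Delta_{a,j}^2\}\,(1+\log(1/p_j))}$, and then to sum it using $\tau=\sum_{a}N_a$. For a fixed $a$ one compares $\nu$ with an instance that alters the mean of arm $a$ alone by an amount equal to $\max\{\Delta_{a,0},\Delta_{a,j}\}$: if $\mu_a\ge\tfrac12(\mu_0+\mu_j)$ the alteration pushes arm $a$ just below the threshold (per-pull cost $\tfrac12\Delta_{a,0}^2$), and otherwise it raises arm $a$ to the level $\mu_j$ of arm $j$ (per-pull cost $\tfrac12\Delta_{a,j}^2$); in both cases the perturbation distance is $\max\{\Delta_{a,0},\Delta_{a,j}\}$, which is the source of the $\max$ in the denominator. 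The test event is tuned so that it has probability $\le\delta$ or $\Omega(p_j/j)$ on the altered side as appropriate, and Lemma~\ref{lemma:property-of-symmetric-algorithm} is what forces the recommendation mass to be shared among the (up to $j$) arms that a symmetric algorithm cannot distinguish from arm $j$ on the altered instance; this keeps $\Omega(p_j/j)$ signal and is where symmetry is indispensable and where the $1/j$ appears. Feeding this into the transportation inequality, and, when $p_j$ is small, replacing the plain relative-entropy estimate by a Bretagnolle--Huber bound, produces exactly the $(1+\log(1/p_j))^{-1}$ correction. One also checks that every altered instance lies in $\mathcal{S}^{\mathrm{pos}}\cup\mathcal{S}^{\mathrm{neg}}$ (arm $1$ is never moved below $\mu_0$), so the inequality applies. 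Summing over $a$ gives~\eqref{con:lb-pos-C5}.

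\emph{Main obstacle and conclusion.} The delicate part is~\eqref{con:lb-pos-C5}: choosing the single-arm perturbations so that the per-pull cost is exactly $\tfrac12\max\{\Delta_{a,0}^2,\Delta_{a,j}^2\}$ while each alternative remains a valid PAC instance, and, crucially, so that the associated test event still retains $\Omega(p_j/j)$ probability under that alternative — which is what forces the symmetry argument and the $1/j$; getting the small-$p_j$ behaviour to collapse precisely to the factor $(1+\log(1/p_j))^{-1}$ rather than a cruder one is the other fiddly point. The remaining ingredients — the binary-relative-entropy and Bretagnolle--Huber estimates, the absorption of lower-order additive terms under $\delta<\min\{10^{-8},1/(64m^2)\}$, and the case analysis for the $\max$ — are routine. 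Once~\eqref{con:lb-pos-C1}--\eqref{con:lb-pos-C5} are all established, $(v,\{p_j\}_{j=1}^m)$ is feasible for~\eqref{eqn:opt-formualtion-lower-bound-positive}, so the optimal value of that program is at most $v=3200\,\mathbb{E}_{\nu}\tau$; equivalently, $\mathbb{E}_{\nu}\tau\ge\tfrac1{3200}$ times that optimal value, which is the claim.
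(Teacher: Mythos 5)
Your overall skeleton (exhibit $(v,\{p_j\})$ with $v=3200\,\mathbb{E}_\nu\tau$ as a feasible point of \eqref{eqn:opt-formualtion-lower-bound-positive}) is the same as the paper's, but your choice $p_j=\Pr_\nu(\hat a=j)$ and the resulting division of labour leave two constraints unproved. The paper takes $p_j=\Pr_\nu(\mathcal{E}_j)$ with $\mathcal{E}_j=\{\hat a=j,\ N_j(\tau)\ge\frac{\log(1/\delta)}{100\Delta_{j,0}^2}\}$: then \eqref{con:lb-pos-C3} is a one-line deterministic counting argument (on $\mathcal{E}_j$ one has $\tau\ge N_j(\tau)\ge\frac{\log(1/\delta)}{100\Delta_{j,0}^2}$ and the $\mathcal{E}_j$ are disjoint), and all change-of-measure work is pushed into \eqref{con:lb-pos-C4} (Lemma~\ref{lemma:high-prob-lower-bound-positive-instance}, which also has to handle arms with $\mu_a=\mu_0$ exactly, a case your $i^*(\nu)=[m]$ claim overlooks). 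With your $p_j$, the route to \eqref{con:lb-pos-C3} via $\mathrm{kl}(p_j,\delta)\gtrsim p_j\log(1/\delta)$ does not close: the true bound is $\mathrm{kl}(p_j,\delta)\ge p_j\log(1/\delta)-c$ for an absolute constant $c$, and after dividing by $\Delta_{j,0}^2$ and summing over $j$ the error is of order $\sum_j\Delta_{j,0}^{-2}$, which is not lower-order whenever many $p_j$ are of size $o(1/\log(1/\delta))$ (e.g.\ $m$ near-identical qualified arms with $p_j\approx1/m$ and $\log(1/\delta)\ll m$). Handling the small-$p_j$ indices ``separately'' can only mean dropping them from the sum, which proves a strictly weaker inequality than \eqref{con:lb-pos-C3}.

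Second, your per-arm estimate for \eqref{con:lb-pos-C5} carries a factor $1/j$, so summing over $a$ yields $v\ge\frac1j\sum_a\frac{p_j}{\max\{\Delta_{a,0}^2,\Delta_{a,j}^2\}(1+\log(1/p_j))}$, which is weaker than \eqref{con:lb-pos-C5} by a factor of $j$; note that \eqref{con:lb-pos-C5} contains no $1/j$ — that factor is generated only later by the dual multipliers in Lemma~\ref{lemma:lower-bound-opt-of-optimization-formulation} — so importing it here would degrade the final bound. The paper avoids any $1/j$ loss by not perturbing arm $a$ in place: it compares $\nu$ with the transposed instance $\nu_{j,a}$ (swap arms $a$ and $j$), uses symmetry of the \emph{pulling} distribution to identify $\Pr_{\nu_{ja}}(A_{\tau_n^j(a)+1}=j)$ with $\Pr_{\nu}(A_{\tau_n^j(a)+1}=a)$, and applies Pinsker to get $\Pr_\nu(A_{\tau_n^j(a)+1}=a)\ge\frac14\Pr_\nu(\tau_n^j(a)<\tau)\ge\frac14 p_j$ for all $n\le T_j(a)$ (Lemmas~\ref{lemma:unability-to-differentiate-two-arms-when-total-pulling-small} and~\ref{lemma:lower-bound-for-EN_a-tau-Pr-hata-j}). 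Your variant also invokes symmetry to claim the \emph{recommendation} mass is shared among arms the algorithm cannot distinguish, but the paper's definition of a symmetric algorithm deliberately constrains only the pulling distribution and not $\hat a$, so that step is not licensed as written; you would need either to switch to the swap-instance argument or to strengthen the symmetry definition and re-prove Lemma~\ref{lemma:property-of-symmetric-algorithm}.
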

Given Lemma \ref{lemma:lower-bound-positive-optimization-formulation}, we proceed to prove the Theorem 1 as follows. We first prove the above optimization problem (\ref{eqn:opt-formualtion-lower-bound-positive}) is convex. Then, we turn to construct an explicit feasible dual solution, and prove the objective value of this feasible solution is lower bounded by $\Omega\left(\min\limits_{1\leq j \leq m} \left\{\frac{\log\frac{1}{\delta}}{\Delta_{j,0}^2}  + \frac{H(j) }{\log^2 (m+1)}\right\}\right)$. By the strong duality, we complete the proof of Theorem \ref{theorem:lower-bound-positive-case}.

Denote event $\mathcal{E}_j=\left\{N_j(\tau)\geq \frac{\log\frac{1}{\delta}}{100\Delta_{j,0}^2}, \hat{a}=j\right\}$. To prove Lemma \ref{lemma:lower-bound-positive-optimization-formulation}, we show taking $v = 3200 \mathbb{E}_{\nu}\tau$, $p_j=\Pr_{\nu} (\mathcal{E}_j)$ for $j\in [m]$ is a feasible solution to (\ref{eqn:opt-formualtion-lower-bound-positive}). 
The argument repeatedly applies the transportation (change-of-measure) inequality \cite[Lemma~18]{kaufmann2016complexity} to relate $\Pr_{\nu}(\mathcal{E}_j)$ to probabilities under suitable alternative instances.
We prove the feasibility of $v = 3200 \mathbb{E}_{\nu}\tau$, $p_j=\Pr_{\nu} (\mathcal{E}_j)$ by going through the constraint (\ref{con:lb-pos-C1}), (\ref{con:lb-pos-C2}), (\ref{con:lb-pos-C3}), (\ref{con:lb-pos-C4}) (\ref{con:lb-pos-C5}).

For (\ref{con:lb-pos-C1}), (\ref{con:lb-pos-C2}), they are evident. As the expected pulling times must be non-negative and the probability must be in the interval $[0, 1]$.

For (\ref{con:lb-pos-C3}), notice that $\mathds{1}(\tau=+\infty)+\mathds{1}(\tau<+\infty,\hat{a}\in [K]\cup\{\textsf{None}\})=1$, we have $\sum_{j=1}^m\mathds{1}(\hat{a}=j)\leq 1$, further $\sum_{j=1}^m\mathds{1}(\mathcal{E}_j)\leq 1$. This observation concludes
\begin{align*}
    3200\mathbb{E}\tau 
    \geq \mathbb{E}\tau 
    \geq \sum_{j=1}^m\mathbb{E}\tau\mathds{1}(\mathcal{E}_j)\geq  \sum_{j=1}^m\frac{\log\frac{1}{\delta}}{100\Delta_{j,0}^2}\Pr_{\nu}(\mathcal{E}_j).
\end{align*}

For (\ref{con:lb-pos-C4}), we start by proving
\begin{align}
    \Pr_{\nu}(\tau<+\infty,\hat{a}\in [m]) \geq 1-5\delta, \label{eqn:output-hata-Delta>0}
\end{align}
which must hold regardless of whether $\{a:\mu_a=\mu_0\}$ is empty. The main idea is to utilize the assumption that the algorithm is $\delta$-PAC.

Then, we are ready to prove (\ref{con:lb-pos-C4}) by contradiction. Denote $\bar{\mathcal{E}_j}=\{N_{j}(\tau) < \frac{\log\frac{1}{\delta}}{100\Delta_{j,0}^2}, \hat{a}=j\}$. If $\sum_{j=1}^m \Pr_{\nu}(\mathcal{E}_j) <\frac{1}{2}$, together with (\ref{eqn:output-hata-Delta>0}), we can find a $j_0\in [m]$ such that
\begin{align}
    \Pr_{\nu}\left(\bar{\mathcal{E}}_{j_0}\right)\geq \frac{1}{8m} \geq \sqrt{\delta}.\label{eqn:j_0-existence-wrong-output}
\end{align}
We consider an alternative instance $\nu'_{j_0}(\epsilon)$ with $r_{a'}^{\nu'_{j_0}}=\begin{cases}\mu_0-\epsilon & a'=j_0 \\ r_{a'}^{\nu} & else\end{cases}$. The $\delta$-PAC requirement of algorithm means that
\begin{align}
    \Pr_{\nu'_{j_0}(\epsilon)}(\bar{\mathcal{E}}_{j_0}) < \delta\label{eqn:j_0-existence-wrong-output-nuj0}
\end{align}
Combining (\ref{eqn:j_0-existence-wrong-output-nuj0}),(\ref{eqn:j_0-existence-wrong-output}), we have $\log\frac{\Pr_{\nu}\left(\bar{\mathcal{E}}_{j_0}\right)}{\Pr_{\nu'_{j_0}(\epsilon)}(\bar{\mathcal{E}}_{j_0})}\geq \frac{1}{2}\log\frac{1}{\delta}$ holds for all $\epsilon > 0$. Meanwhile, Lemma \ref{lemma:enhanced-probability-gap-between-unique-arm-differs-instances}, which utilizes the Transportation Equality \cite[Lemma~18]{kaufmann2016complexity}, suggests that
\begin{align}
    \label{eqn:bar-j0-transportation}
    & \log\frac{\Pr_{\nu}\left(\bar{\mathcal{E}}_{j_0}\right)}{\Pr_{\nu'_{j_0}(\epsilon)}(\bar{\mathcal{E}}_{j_0})}\notag\\
    \leq & \sqrt{2\frac{\log\frac{1}{\delta}}{100\Delta_{j_0,0}^2}(\Delta_{j_0,0}+\epsilon)^2\log\frac{1}{\sqrt{\delta}}}+\sqrt{2\pi \frac{\log\frac{1}{\delta}}{100\Delta_{j,0}^2}(\Delta_{j_0,0}+\epsilon)^2} + \frac{1}{2} \frac{\log\frac{1}{\delta}}{100\Delta_{j_0,0}^2}(\Delta_{j_0,0}+\epsilon)^2
\end{align}
Taking $\epsilon\rightarrow 0$, we can show the right hand side of (\ref{eqn:bar-j0-transportation}) is less than $\frac{1}{2}\log\frac{1}{\delta}$ for $\delta<10^{-8}$, which leads to a contradiction. Lemma \ref{lemma:high-prob-lower-bound-positive-instance} merges all the above ideas and conclude $\sum_{j=1}^m \Pr_{\nu}(\mathcal{E}_j) \geq \frac{1}{2}$.


For (\ref{con:lb-pos-C5}), we first prove Lemma \ref{lemma:lower-bound-for-EN_a-tau-Pr-hata-j}, asserting that
\begin{align}
    \mathbb{E}_{\nu}N_a(\tau)\geq \frac{1}{4}T_j(a)\Pr_{\nu}\left(N_j(\tau)\geq \frac{\log\frac{1}{\delta}}{100\Delta_{j,0}^2}, \hat{a}=j\right).\label{eqn:lower-bound-EN_a-prob-j}
\end{align}
for each $j\in [m], a\in [K], j\neq a$, where $T_j(a)=\lceil \frac{1}{200 \max\{\Delta_{j,0}^2, \Delta_{a,j}^2\}\Big(1+\log\frac{1}{\Pr_{\nu}(\mathcal{E}_j)}\Big) }\rceil$. Given Lemma \ref{lemma:lower-bound-for-EN_a-tau-Pr-hata-j} holds, sum up (\ref{eqn:lower-bound-EN_a-prob-j}) for each $a\in [K]$ and use the fact that $\frac{1}{\max\{\Delta_{j,0}^2, \Delta_{a,j}^2\}} \geq \frac{1}{4\max\{\Delta_{a,0}^2, \Delta_{a,j}^2\}}$, we prove (\ref{con:lb-pos-C5}). Then, the remaining work is to prove (\ref{eqn:lower-bound-EN_a-prob-j}). In the following, we present the sketch idea, which highly relies on the definition of $T_j(a)$. 



For $n\in\mathbb{N}$, we define $\tau_n^{j}(a)$ by looking for an time index $t$, such that $N_j(t)+N_a(t)=n, A_{t+1}\in \{j, a\}$. Such a time index $t$ is unique and $t<\tau$, if it exists. We take $\tau_n^{j}(a)=t$ if $t$ exists, or $\tau_n^{j}(a)=+\infty$ if $t$ doesn't exist. Since $A_{t+1}$ is fully determined by the history up to time index $t$, for $t\in\mathbb{N}$. $\tau_n^{j}(a)$ is a stopping time regrading the filtration $\{\sigma(H(t))\}_{t \ge 1}$.

Denote $\tilde{\mathcal{E}}^{aj}(n) = \{\tau_n^{j}(a) < \tau\}$, $\tilde{\mathcal{E}}^{aj}_j(n) = \{A_{\tau_n^{j}(a)+1}=j, \tau_n^{j}(a) < \tau\}$, $\tilde{\mathcal{E}}^{aj}_{a}(n) = \{A_{\tau_n^{j}(a)+1}=a, \tau_n^{j}(a) < \tau\}$. We prove two inequalities.
\begin{align}
    & \Pr_{\nu}(\tilde{\mathcal{E}}^{aj}(n)) \geq  \Pr_{\nu}(\mathcal{E}_j), \forall  n\leq T_j(a)-1\label{eqn:choose-arm-a-over-prediction}.\\
    & \Pr_{\nu}(\tilde{\mathcal{E}}^{aj}_{a}(n)) \geq \frac{1}{4} \Pr_{\nu}(\tilde{\mathcal{E}}^{aj}(n)), \forall  n\leq T_j(a)-1. \label{eqn:choose-arm-a-over-finite-tauja}
\end{align}
Given (\ref{eqn:choose-arm-a-over-prediction}) and (\ref{eqn:choose-arm-a-over-finite-tauja}), from the fact that we only pull arm a at $\tau_n^{j}(a)+1$, we can conclude
\begin{align*}
    \mathbb{E}_{\nu}N_a(\tau) 
    = &\sum_{n=0}^{+\infty}\Pr_{\nu}(\tilde{\mathcal{E}}^{aj}_{a}(n)) \geq \sum_{n=0}^{T_j(a)-1}\Pr_{\nu}(\tilde{\mathcal{E}}^{aj}_{a}(n))
    \stackrel{(\ref{eqn:choose-arm-a-over-finite-tauja})}{\geq} \sum_{n=0}^{T_j(a)-1}\frac{1}{4} \Pr_{\nu}(\tilde{\mathcal{E}}^{aj}(n))\\
    \stackrel{(\ref{eqn:choose-arm-a-over-prediction})}{\geq} & \frac{1}{4}\sum_{n=0}^{T_j(a)-1} \Pr_{\nu}(\mathcal{E}_j)
    = \frac{1}{4}T_j(a) \Pr_{\nu}(\mathcal{E}_j).
\end{align*}
Thus, to prove (\ref{eqn:lower-bound-EN_a-prob-j}), we suffice to show (\ref{eqn:choose-arm-a-over-prediction}) (\ref{eqn:choose-arm-a-over-finite-tauja}) are both correct.

For (\ref{eqn:choose-arm-a-over-prediction}), we notice that  $\frac{1}{\max\{\Delta_{j,0}^2, \Delta_{a,j}^2\}} \leq \frac{1}{\Delta_{j,0}^2}$, and that $T_j(a)< \frac{\log\frac{1}{\delta}}{100\Delta_{j,0}^2}$. That means for $n\leq T_j(a)$, we have $\mathcal{E}_j \Rightarrow \{N_j(\tau)\geq \frac{\log\frac{1}{\delta}}{100\Delta_{j,0}^2}\} \Rightarrow \{\tau_n^{j}(a) < \tau\}$, which suggests (\ref{eqn:choose-arm-a-over-prediction}).

For (\ref{eqn:choose-arm-a-over-finite-tauja}), we introduce an alternative instance $\nu_{j,a}$, with $r_{a'}^{\nu_{j,a}}=\begin{cases}r_a^{\nu} & a'=j\\ r_j^{\nu} & a'=a\\ r_{a'}^{\nu} & else\end{cases}$. The instance distribution $\nu_{j,a}$ swaps the position of arm $a$ and $j$ compared to the instance $\nu$. 
Notice that 
\begin{align}
    \frac{1}{\max\{\Delta_{j,0}^2, \Delta_{a,j}^2\}} \leq \frac{1}{\Delta_{j,a}^2},\quad  T_j(a) \leq \frac{O(1)}{\Delta_{j,a}^2(1+\log\frac{1}{\Pr_{\nu}(\mathcal{E}_j)})}\label{eqn:T_ja-upper-bound}
\end{align}
Lemma \ref{lemma:inequality-for-kl-diver-of-conditional-prob} and \ref{lemma:unability-to-differentiate-two-arms-when-total-pulling-small} combine (\ref{eqn:T_ja-upper-bound}) and the Transportation Equality(Lemma 18 in \cite{kaufmann2016complexity}), to conclude
\begin{align}
    \text{kl}\Big(\Pr_{\nu}(\tilde{\mathcal{E}}^{aj}_j(n)), \Pr_{\nu_{ja}}(\tilde{\mathcal{E}}^{aj}_j(n))\Big) \leq 0.4\Pr_{\nu}(\tilde{\mathcal{E}}^{aj}(n)), \forall  n\leq T_j(a)-1\label{eqn:kl-nu-nuja-pull-j}
\end{align}
where $\text{kl}(x,y)=x\log\frac{x}{y}+(1-x)\log\frac{1-x}{1-y}$. $1+ \log \frac{1}{\Pr_{\nu}(\mathcal{E}_j)}$ in the denominator of $T_j(a)$ is crucial to show (\ref{eqn:kl-nu-nuja-pull-j}). (\ref{eqn:kl-nu-nuja-pull-j}) suggests that the algorithm is unable to distinguish $\nu$ and $\nu_{ja}$ at the round $t$ such that $N_a(t)+N_j(t)\leq T_j(a)$. For a symmetric algorithm, we have $\Pr_{\nu_{ja}}(\tilde{\mathcal{E}}^{aj}_j(n))=\Pr_{\nu}(\tilde{\mathcal{E}}^{aj}_a(n))$, which means
\begin{align}
    \text{kl}\Big(\Pr_{\nu}(\tilde{\mathcal{E}}^{aj}_j(n)), \Pr_{\nu}(\tilde{\mathcal{E}}^{aj}_a(n))\Big) \leq 0.4\Pr_{\nu}(\tilde{\mathcal{E}}^{aj}(n)), \forall  n\leq T_j(a)-1\label{eqn:kl-nu-pull-j-a}
\end{align}
By the Pinsker's Inequality $\text{kl}(x,y)\geq 2(x-y)^2$ and $\Pr_{\nu}(\tilde{\mathcal{E}}^{aj}_j(n))+ \Pr_{\nu}(\tilde{\mathcal{E}}^{aj}_a(n))=\Pr_{\nu}(\tilde{\mathcal{E}}^{aj}(n))$, we can derive
\begin{align*}
    \Pr_{\nu}(\tilde{\mathcal{E}}^{aj}_a(n)) \geq \frac{1}{4}\Pr_{\nu}(\tilde{\mathcal{E}}^{aj}(n)), \forall  n\leq T_j(a)-1.
\end{align*}
The above inequality suggests that we have proved (\ref{eqn:choose-arm-a-over-finite-tauja}), hence (\ref{con:lb-pos-C5}) is justified.
\end{proof}
\endgroup

\section{Algorithm Design for the Upper Bound of the Sample Complexity}
\label{sec:upper-bound-analysis}
In this section, we present our proposed algorithm, \emph{Parallel Sequential Exploration Exploitation on Brackets} (PSEEB), together with a high-level discussion of its design and analysis.


\begin{algorithm}
    \caption{Parallel Sequential Exploration Exploitation on Brackets(PSEEB)}
    \label{alg:Parallel-SEE-on-Bracket}
    {\bfseries Input:} Threshold $\mu_0$, Error tolerance level $\delta$, Arm number $K$\;

    Randomly generate a permutation of $\sigma:[K]\rightarrow [K]$, take $B_b=\{\sigma^{-1}(i)\}_{i=1}^{\min\{ 2^{b-1}, K\} }$ for $b=1,2,\cdots, \lceil \log_2 K\rceil + 1$\tcp*[l]{$\sigma^{-1}(i)$ denotes the arm index in position $i$}\label{alg-line:random-permutation}

    Create algorithm copies $\{\text{alg}_b\}_{b=1}^{\lceil \log_2 K\rceil + 1}$ by specifying $\text{alg}_b$ through calling Algorithm \ref{alg:SEE-with-Bracket} with $B\gets B_b$, $\mu_0\gets\mu_0$, $\delta\gets \frac{\delta}{\lceil \log_2 K\rceil + 1}$, $C\gets 1.01$, $K\gets K$. Samples across brackets are not shared\;\label{alg-line:create-alg-copy}
    \For{round index $t=1,2,\cdots$}{\label{alg-line:global-round-main-boday}
        \For{Bracket Index $b=1,2,\cdots,\lceil \log_2 K\rceil + 1$}{
            Execute $\text{alg}_b$ for one time round\;
            \If{$\text{alg}_b$ terminates with output $\hat{a}\in [K]\cup \{\textsf{None}\}$}{
                Terminate the whole algorithm and return $\hat{a}$.
            }
        }
    }
\end{algorithm}


Algorithm \ref{alg:Parallel-SEE-on-Bracket} merges and adapts the bracketing idea of \cite{katz2020true} with the Sequential Exploration--Exploitation (SEE) framework of \cite{pmlr-v267-li25f}.
In \cite{katz2020true}, each bracket is generated by an independent random shuffle. In contrast, PSEEB defines \emph{all} brackets from a \emph{single} random permutation of $[K]$, yielding a nested family $\{B_b\}_{b=1}^{\lceil\log_2 K\rceil + 1}$ satisfying $B_1\subset B_2\subset\cdots\subset B_{\lceil\log_2 K\rceil + 1}=[K]$. For a bracket index $1\leq b \leq \lceil\log_2 K\rceil$, we have $|B_b|=2^b$, and $|B_{\lceil\log_2 K\rceil + 1}|=\min\{2^{\lceil\log_2 K\rceil}, K\}$. 
This monotone structure is crucial for our analysis, in particular for Lemma \ref{lemma:minimum-bracket-index-that-contains-[j]-enhanced}, which guarantees that with nontrivial probability the first several brackets contain at least one qualified arm when multiple qualified arms exist.
\begin{lemma}
    \label{lemma:minimum-bracket-index-that-contains-[j]-enhanced}
    Given a qualified arm index $j\in [m]$, denote bracket index $b_j=\min\{b\in [\lceil \log_2 K\rceil + 1 ]: B_b\cap [j]\neq \emptyset\}$. We have $\Pr(b_j\geq \tilde{b})\leq \exp\left(-\frac{j|B_{\tilde{b}-1}|}{K}\right), \forall \tilde{b} \in [\lceil \log_2 K\rceil + 1]$.
    Here we denote $B_0=\emptyset$ and take $|B_0|=0$. 
\end{lemma}

Then, PSEEB instantiates $\lceil\log_2 K\rceil+1$ independent algorithm copies via Algorithm \ref{alg:SEE-with-Bracket} (given in Appendix \ref{sec:Proof-of-Upper-Bound}). This procedure is adapted from Algorithms~2--5 of \cite{pmlr-v267-li25f}, named as Sequential Exploration Exploitation(SEE). We adapt SEE with modified budget allocation and stopping rules to accommodate bracketing. Algorithm copy $b\in [\lceil\log_2 K\rceil + 1]$ takes bracket $B_b$ as an input parameter and only pull arms in $B_b$. Detailed discussion of Algorithm \ref{alg:SEE-with-Bracket} is in Appendix \ref{sec:Proof-of-Upper-Bound}.

Algorithm \ref{alg:Parallel-SEE-on-Bracket} manages all the algorithm copies in a round-robin manner. In each global round (Line \ref{alg-line:global-round-main-boday}), Algorithm \ref{alg:Parallel-SEE-on-Bracket} execute one of the algorithm copies with one arm pull, and the overall algorithm terminates as soon as some copy outputs $\hat{a}\in [K]\cup\{\textsf{None}\}$. By construction, only the copy $\textnormal{alg}_{\lceil\log_2 K\rceil+1}$ (which operates on $B_{\lceil\log_2 K\rceil+1}=[K]$) is permitted to output $\textsf{None}$. For any other bracket $B_b$ with $|B_b|<K$, the corresponding copy never outputs $\textsf{None}$; even if it becomes convinced that $\max_{a\in B_b}\mu_a<\mu_0$, it has to continue sampling. This differs from SEE \cite{pmlr-v267-li25f} and is necessary for the analysis of $\mathbb{E}\tau$ under bracketing.

Let $\tau$ denote the stopping time of PSEEB, and let $\tau(B_b)$ denote the number of rounds executed by copy $\textnormal{alg}_b$ up to its own termination (possibly $\infty$). Since the Algorithm \ref{alg:Parallel-SEE-on-Bracket} terminates upon the first terminating copy and adopts the output, we have $\tau \leq (\lceil\log_2 K\rceil +1) \tau(B_b)$ holds for all $b\in [\lceil\log_2 K\rceil+1]$, and hence $\mathbb{E}\tau \leq [\lceil\log_2 K\rceil+1] \mathbb{E}\tau(B_b)$. 


For a positive instance $\nu$, the intuition of sample complexity $O\Big(\log K \min_{1\leq j\leq m}\big\{\log(1/\delta)/\Delta_{j,0}^2 + H(j)\log^4 K \log 1/\Delta_{j,0}^2\big\}\Big)$ in Theorem \ref{theorem:Parallel-SEE-Etau-upper-bound} is as follows. Consider a ``lucky'' agent with prior knowledge $j^* = \arg\min_{1\leq j \leq m} \{(\log(1/\delta))/\Delta_{j,0}^2 + H(j)\}$. This agent samples $K/j^*$ arms in $[K]$ uniformly at random, successfully capturing an arm $j^*$ with reward $\mu_{j^*}$. However, the arm's unknown position requires the agent $H(j^*)$ pulls to identify $j^*$ at a low confidence level. Subsequently, it must pull arm $j^*$ another $O(\log(1/\delta)/\Delta_{j^*,0}^2)$ times to confirm $j^*$ is qualified with confidence $1-\delta$. Algorithm \ref{alg:Parallel-SEE-on-Bracket} maintains multiple parallel copies, wishing $j^*$ is captured in an early bracket. Since the agent cannot pre-identify the correct bracket and low-confidence identification may be erroneous, the expected sample complexity includes an additional factor of $\log^4 K \log \frac{1}{\Delta_{j,0}^2}$ for the rough identification phase.

Theorems \ref{theorem:delta-pac} and \ref{theorem:Parallel-SEE-Etau-upper-bound} are derived from several technical lemmas, leveraging Lemma \ref{lemma:minimum-bracket-index-that-contains-[j]-enhanced}. A proof sketch is provided in Appendix \ref{sec:Sketch-Proof-of-upper-bound}, with the full proof in Appendix \ref{sec:performance-alg-parallel-SEE}.

\section{Conclusion}

This paper works on the open problem 1-identification, providing a novel lower bound for positive instances derived by a specialized optimization formulation. We further design an algorithm that is nearly optimal across all instances. Further refining logarithmic factors remains a future research.


\bibliographystyle{plainnat}
\bibliography{colt-main}

\begin{thebibliography}{23}
\providecommand{\natexlab}[1]{#1}
\providecommand{\url}[1]{\texttt{#1}}
\expandafter\ifx\csname urlstyle\endcsname\relax
  \providecommand{\doi}[1]{doi: #1}\else
  \providecommand{\doi}{doi: \begingroup \urlstyle{rm}\Url}\fi

\bibitem[Chen and Li(2015)]{chen2015optimal}
Lijie Chen and Jian Li.
\newblock On the optimal sample complexity for best arm identification.
\newblock \emph{arXiv preprint arXiv:1511.03774}, 2015.

\bibitem[Chen et~al.(2017)Chen, Li, and Qiao]{chen2017nearly}
Lijie Chen, Jian Li, and Mingda Qiao.
\newblock Nearly instance optimal sample complexity bounds for top-k arm selection.
\newblock In \emph{Artificial Intelligence and Statistics}, pages 101--110. PMLR, 2017.

\bibitem[Degenne and Koolen(2019)]{degenne2019pure}
R{\'e}my Degenne and Wouter~M Koolen.
\newblock Pure exploration with multiple correct answers.
\newblock \emph{Advances in Neural Information Processing Systems}, 32, 2019.

\bibitem[Even-Dar et~al.(2002)Even-Dar, Mannor, and Mansour]{even2002pac}
Eyal Even-Dar, Shie Mannor, and Yishay Mansour.
\newblock Pac bounds for multi-armed bandit and markov decision processes.
\newblock In \emph{Computational Learning Theory: 15th Annual Conference on Computational Learning Theory, COLT 2002 Sydney, Australia, July 8--10, 2002 Proceedings 15}, pages 255--270. Springer, 2002.

\bibitem[Gabillon et~al.(2012)Gabillon, Ghavamzadeh, and Lazaric]{gabillon2012best}
Victor Gabillon, Mohammad Ghavamzadeh, and Alessandro Lazaric.
\newblock Best arm identification: A unified approach to fixed budget and fixed confidence.
\newblock \emph{Advances in Neural Information Processing Systems}, 25, 2012.

\bibitem[Garivier and Kaufmann(2016)]{garivier2016optimal}
Aur{\'e}lien Garivier and Emilie Kaufmann.
\newblock Optimal best arm identification with fixed confidence.
\newblock In \emph{Conference on Learning Theory}, pages 998--1027. PMLR, 2016.

\bibitem[Jamieson and Nowak(2014)]{jamieson2014best}
Kevin Jamieson and Robert Nowak.
\newblock Best-arm identification algorithms for multi-armed bandits in the fixed confidence setting.
\newblock In \emph{2014 48th Annual Conference on Information Sciences and Systems (CISS)}, pages 1--6. IEEE, 2014.

\bibitem[Jamieson et~al.(2013)Jamieson, Malloy, Nowak, and Bubeck]{jamieson2013finding}
Kevin Jamieson, Matthew Malloy, Robert Nowak, and Sebastien Bubeck.
\newblock On finding the largest mean among many.
\newblock \emph{arXiv preprint arXiv:1306.3917}, 2013.

\bibitem[Jamieson et~al.(2014)Jamieson, Malloy, Nowak, and Bubeck]{jamieson2014lil}
Kevin Jamieson, Matthew Malloy, Robert Nowak, and S{\'e}bastien Bubeck.
\newblock lil’ucb: An optimal exploration algorithm for multi-armed bandits.
\newblock In \emph{Conference on Learning Theory}, pages 423--439. PMLR, 2014.

\bibitem[Jourdan and R{\'e}da(2023)]{jourdan2023anytime}
Marc Jourdan and Cl{\'e}mence R{\'e}da.
\newblock An anytime algorithm for good arm identification.
\newblock \emph{arXiv preprint arXiv:2310.10359}, 2023.

\bibitem[Kalyanakrishnan et~al.(2012)Kalyanakrishnan, Tewari, Auer, and Stone]{kalyanakrishnan2012pac}
Shivaram Kalyanakrishnan, Ambuj Tewari, Peter Auer, and Peter Stone.
\newblock Pac subset selection in stochastic multi-armed bandits.
\newblock In \emph{ICML}, volume~12, pages 655--662, 2012.

\bibitem[Kano et~al.(2017)Kano, Honda, Sakamaki, Matsuura, and Sugiyama]{kano2017Good}
Hideaki Kano, Junya Honda, Kentaro Sakamaki, Kentaro Matsuura, and Masashi Sugiyama.
\newblock Good arm identification via bandit feedback.
\newblock \emph{Machine Learning}, \penalty0 (2), 2017.

\bibitem[Karnin et~al.(2013)Karnin, Koren, and Somekh]{karnin2013almost}
Zohar Karnin, Tomer Koren, and Oren Somekh.
\newblock Almost optimal exploration in multi-armed bandits.
\newblock In \emph{International conference on machine learning}, pages 1238--1246. PMLR, 2013.

\bibitem[Katz-Samuels and Jamieson(2020)]{katz2020true}
Julian Katz-Samuels and Kevin Jamieson.
\newblock The true sample complexity of identifying good arms.
\newblock In \emph{International Conference on Artificial Intelligence and Statistics}, pages 1781--1791. PMLR, 2020.

\bibitem[Kaufmann and Kalyanakrishnan(2013)]{kaufmann2013information}
Emilie Kaufmann and Shivaram Kalyanakrishnan.
\newblock Information complexity in bandit subset selection.
\newblock In \emph{Conference on Learning Theory}, pages 228--251. PMLR, 2013.

\bibitem[Kaufmann et~al.(2016)Kaufmann, Capp{\'e}, and Garivier]{kaufmann2016complexity}
Emilie Kaufmann, Olivier Capp{\'e}, and Aur{\'e}lien Garivier.
\newblock On the complexity of best arm identification in multi-armed bandit models.
\newblock \emph{Journal of Machine Learning Research}, 17:\penalty0 1--42, 2016.

\bibitem[Li and Cheung(2025)]{pmlr-v267-li25f}
Zitian Li and Wang~Chi Cheung.
\newblock Near optimal non-asymptotic sample complexity of 1-identification.
\newblock In Aarti Singh, Maryam Fazel, Daniel Hsu, Simon Lacoste-Julien, Felix Berkenkamp, Tegan Maharaj, Kiri Wagstaff, and Jerry Zhu, editors, \emph{Proceedings of the 42nd International Conference on Machine Learning}, volume 267 of \emph{Proceedings of Machine Learning Research}, pages 34144--34184. PMLR, 13--19 Jul 2025.
\newblock URL \url{https://proceedings.mlr.press/v267/li25f.html}.

\bibitem[Locatelli et~al.(2016)Locatelli, Gutzeit, and Carpentier]{locatelli2016optimal}
Andrea Locatelli, Maurilio Gutzeit, and Alexandra Carpentier.
\newblock An optimal algorithm for the thresholding bandit problem.
\newblock In \emph{International Conference on Machine Learning}, pages 1690--1698. PMLR, 2016.

\bibitem[Mason et~al.(2020)Mason, Jain, Tripathy, and Nowak]{mason2020finding}
Blake Mason, Lalit Jain, Ardhendu Tripathy, and Robert Nowak.
\newblock Finding all epsilon-good arms in stochastic bandits.
\newblock \emph{Advances in Neural Information Processing Systems}, 33:\penalty0 20707--20718, 2020.

\bibitem[Mukherjee et~al.(2017)Mukherjee, Naveen, Sudarsanam, and Ravindran]{mukherjee2017thresholding}
Subhojyoti Mukherjee, Kolar~Purushothama Naveen, Nandan Sudarsanam, and Balaraman Ravindran.
\newblock Thresholding bandits with augmented ucb.
\newblock \emph{arXiv preprint arXiv:1704.02281}, 2017.

\bibitem[Poiani et~al.(2025)Poiani, Bernasconi, and Celli]{poiani2025nonasymptoticanalysisstickytrackandstop}
Riccardo Poiani, Martino Bernasconi, and Andrea Celli.
\newblock Non-asymptotic analysis of (sticky) track-and-stop, 2025.
\newblock URL \url{https://arxiv.org/abs/2505.22475}.

\bibitem[Simchowitz et~al.(2017)Simchowitz, Jamieson, and Recht]{simchowitz2017simulator}
Max Simchowitz, Kevin Jamieson, and Benjamin Recht.
\newblock The simulator: Understanding adaptive sampling in the moderate-confidence regime.
\newblock In \emph{Conference on Learning Theory}, pages 1794--1834. PMLR, 2017.

\bibitem[Tsai et~al.(2024)Tsai, Tsai, and Lin]{tsai2024lil}
Tzu-Hsien Tsai, Yun-Da Tsai, and Shou-De Lin.
\newblock lil’hdoc: an algorithm for good arm identification under small threshold gap.
\newblock In \emph{Pacific-Asia Conference on Knowledge Discovery and Data Mining}, pages 78--89. Springer, 2024.

\end{thebibliography}

\appendix

\section{Detailed Literature Review}
\label{sec:Detailed-Literature-Review}

\subsection{Simplification of History Result}
In this section, we only consider Gaussian reward with unit variance. We discuss more details of the upper bounds presented in Table \ref{table:Comparison-of-bounds}. We simplify the result in the history literature with smaller but explicit formulations, for the comparison in Table \ref{table:Comparison-of-bounds}.

\subsubsection{Theorem 2 in \cite{poiani2025nonasymptoticanalysisstickytrackandstop}}
Theorem 2 in \cite{poiani2025nonasymptoticanalysisstickytrackandstop} presents the Non-asymptotic analysis for algorithm S-TaS. For an instance $\nu$ with $r_a^{\nu}=\mu_a$, $\mu_1\geq \cdots \geq \mu_m > \mu_0 \geq \mu_{m+1} \geq \cdots \geq \mu_K$ and , they define $\neg a =\{\nu': r_a^{\nu'} < \mu_0\}$ for $a\in [m]$, and 
\begin{align*}
    i_F(\nu) = \arg\max_{a\in [K]}\sup_{w\in \Delta_K} \inf_{\nu'\in \neg a}\sum_{a'\in K} \frac{w_{a'}(r_{a'}^{\nu'}-r_{a'}^{\nu})^2}{2}.
\end{align*}
Then, Theorem 2 in \cite{poiani2025nonasymptoticanalysisstickytrackandstop} considers $\epsilon_{\nu}$ such that for all instance $\nu'$ such that
\begin{align*}
    \|\{r_a^{\nu'}\}_{a=1}^K-\{r_a^{\nu}\}_{a=1}^K\|_{\infty}\leq \epsilon_{\nu}\implies i_F(\nu')\subset i_F(\nu)\cup ([K]\setminus i^*(\nu)).
\end{align*}
Here $i^*(\nu)=[m]$. Let
\begin{align*}
    T_{\nu}=\max\left\{10K^4, \inf\left\{n\in\mathbb{N}:\sqrt{\frac{8D_K\log n}{\sqrt{\sqrt{n}+K^2}-2K}}\leq \epsilon_{\nu}\right\}\right\}
\end{align*}
where $D_K\geq 10$. The upper bound for $\mathbb{E}_{\nu}\tau$ is
\begin{align*}
    \mathbb{E}_{\nu}\tau \leq 10K^4 + T_0(\delta) + \frac{\pi^2}{24}
\end{align*}
where
\begin{align*}
    T_0(\delta)=\inf\left\{t\in\mathbb{N}: \log(Ct^2/\delta) \leq (t-\sqrt{t}-1-T_{\nu})\frac{\Delta_{0, 1}^2}{2}-g(t)\right\}
\end{align*}
and $C\geq e\sum_{t=1}^{+\infty}(\frac{e}{K})^K\frac{(\log^2(Ct^2)\log(t))^K}{t^2}$, $g(t)=\tilde{O}(K^2\sqrt{D_Kt}+\sqrt{D_K t^{\frac{3}{2}}})$.

To simplify the above result, we can prove $\frac{2\log(1/\delta)}{\Delta_{0, 1}^2} + \frac{K^2}{\Delta_{0, 1}^4}$ is a lower bound of $10K^4 + T_0(\delta) + \frac{\pi^2}{24}$. From the definition of $T_{\nu}$, we know
\begin{align*}
    T_{\nu} \geq & \max\left\{10K^4, \inf\left\{n\in\mathbb{N}:\sqrt{\frac{8D_K\log n}{\sqrt{\sqrt{n}+K^2}-2K}}\leq \epsilon_{\nu}\right\}\right\}\\
    \geq & \max\left\{10K^4, \inf\left\{n\in\mathbb{N}: \sqrt{\frac{8D_K}{\sqrt{\sqrt{n}+K^2}-2K}}\leq \epsilon_{\nu}\right\}\right\}\\
    = & \max\left\{10K^4, \inf\left\{n\in\mathbb{N}: \frac{80}{\epsilon_{\nu}^2}+2K\leq \sqrt{\sqrt{n}+K^2}\right\}\right\}\\
    \geq & \max\left\{10K^4, \inf\left\{n\in\mathbb{N}: \frac{320K}{\epsilon_{\nu}^2}+4K^2\leq \sqrt{n}+K^2\right\}\right\}\\
    \geq & \max\left\{10K^4,  \frac{90000K^2}{\epsilon_{\nu}^4}\right\}.
\end{align*}
We can further derive
\begin{align*}
    T_0(\delta) \geq & \inf\left\{t\in\mathbb{N}: \log(C/\delta) \leq \left(t-\max\left\{10K^4,  \frac{90000K^2}{\epsilon_{\nu}^4}\right\}\right)\frac{\Delta_{0, 1}^2}{2}\right\}\\
    = & \frac{2\log(C/\delta)}{\Delta_{0, 1}^2} + \max\left\{10K^4,  \frac{90000K^2}{\epsilon_{\nu}^4}\right\}.
\end{align*}
Thus, we can conclude $\frac{2\log(1/\delta)}{\Delta_{0, 1}^2} + \frac{K^2}{\epsilon_{\nu}^4}$ is smaller than the proposed upper bound in \cite{poiani2025nonasymptoticanalysisstickytrackandstop}. The remaining work is to show $\epsilon_{\nu} \leq \Delta_{0,1}$. Denote $a^{\text{sub}}=\arg\max_{a:\mu_a < \mu_1} \mu_a$. We have
\begin{itemize}
    \item If $\mu_{a^{\text{sub}}} \geq \mu_0$, we assert $\epsilon_{\nu}\leq \Delta_{1,a^{\text{sub}}} < \Delta_{1,0}$. If not, we can set $\nu'$ with $r_{a^{\text{sub}}}^{\nu'}=\mu_1$ while keeping $r_a^{\nu'}=r_a^{\nu'}$ for $a\neq a^{\text{sub}}$. With this construction, we have $a^{\text{sub}}\in i_F(\nu'), a^{\text{sub}}\notin \{1, m+1,m+2,\cdots, K\}$, contradicting to the requirement of $\epsilon_{\nu}$.
    \item If $\mu_{a^{\text{sub}}} < \mu_0$, we assert $\epsilon_{\nu}\leq  \Delta_{1,0}$. If $\epsilon_{\nu}>  \Delta_{1,0}$, we can take $\nu'$ with $r_a^{\nu'}=\begin{cases}\mu_1-\epsilon_{\nu} & \text{if }\mu_a = \mu_1\\ \mu_a & \text{if }\mu_a < \mu_1\end{cases}$ for $a\in [K]$. Easy to validate $\|\{r_a^{\nu'}\}_{a=1}^K-\{r_a^{\nu}\}_{a=1}^K\|_{\infty}\leq \epsilon_{\nu}$ and $i_F(\nu')=\textsf{None} \notin i_F(\nu)\cup ([K]\setminus i^*(\nu))$
\end{itemize}
Thus, we have shown $\frac{2\log(1/\delta)}{\Delta_{0, 1}^2} + \frac{K^2}{\Delta_{0, 1}^4}$ is a lower bound of $10K^4 + T_0(\delta) + \frac{\pi^2}{24}$.

\subsubsection{Theorem 3 in \cite{kano2017Good}}
Following Appendix A.2 in \cite{pmlr-v267-li25f}, we can also simplify the Theorem 3 in \cite{kano2017Good}. By applying the algorithm HDoC, for $\nu\in\mathcal{S}^{\text{pos}}$, we have
\begin{align*}
    \mathbb{E}_{\nu}\tau\leq n_1 + \sum_{a=2}^K\left(\frac{\log(K \max_{j\in[K]} n_j)}{2(\Delta_{1,a}-2\epsilon)^2}+\delta n_a\right) + \frac{K^{2-\frac{\epsilon^2}{(\min_{a\in [K]} \Delta_{0,a}-\epsilon)^2}}}{2\epsilon^2} + \frac{K(5+\log\frac{1}{2\epsilon^2})}{4\epsilon^2},
\end{align*}
where $n_a=\frac{1}{(\Delta_{0,a}-\epsilon)^2} \log\left(\frac{4\sqrt{K/\delta}}{(\Delta_{0,a}-\epsilon)^2}\log\frac{5\sqrt{K/\delta}}{(\Delta_{0,a}-\epsilon)^2}\right)$, $\epsilon=\min\Big\{\min\limits_{a\in[K]}\Delta_{0, a}, \min\limits_{a\in[K-1]}\frac{\Delta_{a, a+1}}{2}\Big\}$. From direct calculation, we can observe
\begin{align*}
    & n_1 \geq \frac{\log\frac{K}{\delta}}{2\Delta_{0,1}^2},\quad \frac{K(5+\log\frac{1}{2\epsilon^2})}{4\epsilon^2}\geq \frac{K}{4\epsilon^2}\\
    & \log(K \max_{j\in[K]} n_j) \geq \log (\frac{K}{2}\log\frac{1}{\delta}) \geq \log\log\frac{1}{\delta}.
\end{align*}
From the above inequality, we can conclude
\begin{align*}
    & \sum_{a=2}^K\frac{\log(K \max_{j\in[K]} n_j)}{2(\Delta_{1,a}-2\epsilon)^2} \\
    \geq & \sum_{a=2}^K \frac{\log\log\frac{1}{\delta}}{2(\Delta_{1,a}-2\epsilon)^2}\\
    \geq & \sum_{a=2}^K \frac{\log\log\frac{1}{\delta}}{2\Delta_{1,a}^2}\\
    \geq & \frac{H_1}{2}\log\log\frac{1}{\delta}.
\end{align*}
Thus, we can conclude 
\begin{align*}
     & n_1 + \sum_{a=2}^K\left(\frac{\log(K \max_{j\in[K]} n_j)}{2(\Delta_{1,a}-2\epsilon)^2}+\delta n_a\right) + \frac{K^{2-\frac{\epsilon^2}{(\min_{a\in [K]} \Delta_{0,a}-\epsilon)^2}}}{2\epsilon^2} + \frac{K(5+\log\frac{1}{2\epsilon^2})}{4\epsilon^2}\\
     \geq & \frac{\log\frac{K}{\delta}}{2\Delta_{0,1}^2}+\frac{H_1}{2}\log\log\frac{1}{\delta} + \frac{K}{4\epsilon^2}.
\end{align*}


\subsubsection{Thereom 2 in \cite{jourdan2023anytime}}
\cite{jourdan2023anytime} do not explicitly provide upper bound for $\mathbb{E}\tau$. Therorem 2 in \cite{jourdan2023anytime} suggests that for a positive instance $\nu$.
\begin{align*}
    \mathbb{E}_{\nu}\tau & \leq C^{\text{pos}}(\delta) + \frac{K\pi^2}{6} + 1
\end{align*}
where 
\begin{align*}
    C^{\text{pos}}(\delta) = & \sup\left\{t: t\leq \left(\sum_{a:\mu_a\geq \mu_0}\frac{2}{\Delta_{0,a}^2}\right) (\sqrt{c(t,\delta)}+\sqrt{3\log t})^2 + D^{\text{pos}}(\{\mu_a\}_{a=1}^K)\right\}\\
    2c(t,\delta) = & \bar{W}_{-1}\left(2\log \frac{K}{\delta}+4\log\log (e^4t) + \frac{1}{2}\right)\\
    \bar{W}_{-1}(x) = & -W_{-1}(-e^{-x})\approx x+\log x.
\end{align*}
$W_{-1}$ is the negative branch of the Lambert W function, and $D^{\text{pos}}(\{\mu_a\}_{a=1}^K)$ is further defined in the Appendix F in \cite{jourdan2023anytime}, as
\begin{align*}
    D^{\text{pos}}(\{\mu_a\}_{a=1}^K) = & S_{\mu}+1-\frac{6\log S_{\mu}}{\Delta_{0,1}^2}\\
    S_{\mu} = & h_1\left(12\sum_{a=1}^K\frac{1}{\Delta_{0,a}^2}, n_0K+2m\right)\\
    h_1(z,y) = & z\bar{W}_{-1}(\frac{y}{z}+\log z)\\
    \bar{W}_{-1}(x) = & -W_{-1}(-e^{-x})
\end{align*}
From the definition, we know
\begin{align*}
    C^{\text{pos}}(\delta) \geq & \sup\left\{t: t\leq \left(\sum_{a:\mu_a\geq \mu_0}\frac{2}{\Delta_{0,a}^2}\right)c(t,\delta)\right\}\\
    \geq & \sup\left\{t: t\leq \left(\sum_{a:\mu_a\geq \mu_0}\frac{2}{\Delta_{0,a}^2}\right)\left(\log \frac{K}{\delta}+2\log\log (e^4t) + \frac{1}{4}\right)\right\}\\
    \geq & \sup\left\{t: t\leq \left(\sum_{a:\mu_a\geq \mu_0}\frac{2}{\Delta_{0,a}^2}\right)\log \frac{K}{\delta}+2\log4\right\}\\
    \geq & \left(\sum_{a:\mu_a\geq \mu_0}\frac{2}{\Delta_{0,a}^2}\right)\log \frac{K}{\delta}.
\end{align*}
Thus, we can conclude $\left(\sum_{a:\mu_a\geq \mu_0}\frac{2}{\Delta_{0,a}^2}\right)\log \frac{K}{\delta}$ is a lower bound of $C^{\text{pos}}(\delta) + \frac{K\pi^2}{6} + 1$.

\subsubsection{Theorem 8 in \cite{katz2020true}}
As discussed in the Appendix A.1 in \cite{pmlr-v267-li25f}, proposed Algorithm FWER-FWPD only work on the positive instance with a different definition of $\mathbb{E}\tau$. But we can adapt their result. Apply Algorithm FWER-FWPD on a positive instance $\nu$, Theorem 8 in \cite{katz2020true} suggests that
\begin{align*}
    \mathbb{E}\tau\leq O\left(\left((\frac{K}{m}-1)\cdot \frac{\log\frac{1}{\delta}}{\Delta^2}+\log\frac{\frac{K}{m}}{\delta}\right)\log\left((\frac{K}{m}-1)\cdot \frac{\log\frac{1}{\delta}}{\Delta^2}+\log\frac{\frac{K}{m}}{\delta}\right)\right),
\end{align*}
where $\Delta = \min_{a:\mu_a > \mu_0} \Delta_{a,0}$. 

Not hard to see
\begin{align*}
    & \left((\frac{K}{m}-1)\cdot \frac{\log\frac{1}{\delta}}{\Delta^2}+\log\frac{\frac{K}{m}}{\delta}\right)\log\left((\frac{K}{m}-1)\cdot \frac{\log\frac{1}{\delta}}{\Delta^2}+\log\frac{\frac{K}{m}}{\delta}\right)\\
    \geq & \left((\frac{K}{m}-1)\cdot \frac{\log\frac{1}{\delta}}{\Delta^2}\right)\log\left((\frac{K}{m}-1)\cdot \frac{\log\frac{1}{\delta}}{\Delta^2}\right).
\end{align*}

\subsection{Notation Description}

Table \ref{tab:notation} presents part of the notation description used in the following appendix.
\begin{longtable}{p{3cm}lp{8cm}}
\caption{Notation Summary}
\label{tab:notation}\\
\hline
\textbf{Symbol} & \textbf{Type} & \textbf{Meaning} \\
\hline
\endfirsthead

\hline
\textbf{Symbol} & \textbf{Type} & \textbf{Meaning} \\
\hline
\endhead

\hline
\endfoot

\hline
\endlastfoot


$K$ & Integer & Number of arms. \\
$a,i,j$ & Integer in $[K]$ & Arm Index \\
$\delta$ & Real & Tolerance level \\
$\hat{a}$ & NA & Output of Algorithm \ref{alg:Parallel-SEE-on-Bracket}, \ref{alg:SEE-with-Bracket}, takes value in $[K]\cup\{\textsf{None}\}$\\
$\hat{a}_k$ & NA & Rough guess of an exploration period in Algorithm \ref{alg:SEE-with-Bracket}, takes value in $[K]\cup\{\textsf{None},\textsf{Not Complete}\}$\\
$k$ & Integer & Phase index in Algorithm \ref{alg:SEE-with-Bracket}\\
$\nu,\nu',\nu'',\tilde{\nu}$ & NA & An 1-identification instance \\
$\rho_a$ & NA & Reward distribution of arm $a$ \\
$\sigma(\nu)$ & NA & Switch the arm positions in instance $\nu$ with a permutation $\sigma$\\
$\mu_a$ & Real & Mean value of arm $a$. \\
$m$ & Integer & Maximum arm index whose mean value is above $\mu_0$, we usually assume $\mu_1\geq \cdots\geq\mu_m \geq\mu_0 > \mu_{m+1}\geq \cdots \geq \mu_K$\\
$r_a^{\nu}$ & Real & Mean value of arm $a$ instance $\nu$. \\
$\Delta_{i,j}$ & Real & Mean gap between arm index $i$ and $j$.  \\
$A_t$ & Integer in $[K]$ & Action in round $t$\\
$B$ & Subset of $[K]$ & A bracket\\
$b,\tilde{b}$ & Integer & Bracket index in $[\lceil \log_2 K\rceil+1]$ \\
$\tau, \tau(B)$ & Random Variable & $\tau$ is the termination round of Algorithm \ref{alg:Parallel-SEE-on-Bracket}, $\tau(B)$ is the termination round of an Algorithm \ref{alg:SEE-with-Bracket}\\
$\tau_k^{\text{ee}}(B), \tau_k^{\text{et}}(B)$ & Random Variable & The total length of the exploration / exploitation periods at to the end of phase $k$, of Algorithm \ref{alg:SEE-with-Bracket} with bracket $B$ as input\\
$\tau, \tau_n, \tilde{\tau}, \tau', \tau_n^j(a)$ & Random Variable & Stopping times for analysis\\
$\mathcal{E}$ & Random Event & Random Event\\
$Z_s, Z_t$ & Random Variable & Realized KL divergence, usually follows $N(\frac{\Delta^2}{2}, \Delta^2)$ as we only consider Gaussian distribution with variance $1$ as reward distribution in the proof lower bound \\
$D, D_i$ & Random Variable & $D:=\max_{1\leq t\leq n}\sum_{s=1}^{t}\left(Z_s-\frac{\Delta^2}{2}\right)$. See the proof of Lemma \ref{lemma:inequality-for-kl-diver-of-conditional-prob}\\
$N_a(t)$ & Random Variable & Pulling times of arm $a$ up to round $t$\\
$T_j(a)$ & Real Value & A pulling times threshold dependent on arm $a$ and $j$\\
$U(t,\delta)$ & Real Value & Half length of confidence interval with pulling times $t$\\
$\mathcal{H}^{ee}, \mathcal{H}^{et}, Q$ & Random Set & Collected history in Algorithm \ref{alg:SEE-with-Bracket}\\
$N_a^{\text{ee}}(\mathcal{H}^{ee}), N_a^{\text{et}}(\mathcal{H}^{et})$ & Random Variable & Pulling times of arm $a$ in history $\mathcal{H}^{ee}, \mathcal{H}^{et}$\\
$N_a^{\text{ee}}(t)$ & Random Variable & Pulling times of arm $a$ at round $t$ of an exploration period\\
$N_a^{\text{et}}(t)$ & Random Variable & Pulling times of arm $a$ at round $t$ of an exploitation period\\
$\hat{\mu}_a^{\text{ee}}(\mathcal{H}^{ee}),  \hat{\mu}_a^{\text{et}}(\mathcal{H}^{et})$ & Random Variable & Empirical mean value of arm $a$ in history $\mathcal{H}^{ee}, \mathcal{H}^{et}$\\
$\text{UCB}^{\text{ee}}_a(\mathcal{H}^{ee},\delta)$ & Random Variable & Upper Confidence Bound calculated from history $\mathcal{H}^{ee}$\\ 
$\text{LCB}^{\text{ee}}_a(\mathcal{H}^{ee},\delta)$ & Random Variable & Lower Confidence Bound calculated from history $\mathcal{H}^{ee}$\\ 
$\text{UCB}^{\text{et}}_a(\mathcal{H}^{et},\delta)$ & Random Variable & Upper Confidence Bound calculated from history $\mathcal{H}^{et}$\\ 
$\text{LCB}^{\text{et}}_a(\mathcal{H}^{et},\delta)$, & Random Variable & Lower Confidence Bound calculated from history $\mathcal{H}^{et}$\\ 
$C$ & Real Value & Fixed Constant In Algorithm \ref{alg:SEE-with-Bracket}, we take $C$ to any real number greater than 1\\
$t^{\text{ee}},t^{\text{et}}$ & Integer & Counter of the total pulling times in exploration and exploitation periods, used in Algorithm \ref{alg:SEE-with-Bracket}\\
$\delta_k, \beta_k,\alpha_k$ & Real Value & Hyperparameters of Algorithm \ref{alg:SEE-with-Bracket} to control the pulling process\\
$X_{a,s}^{\text{ee}}$ & Random Variable & Realized reward by pulling arm $a$ at the $s$-th times in the exploration period \\
$X_{a,s}^{\text{et}}$ & Random Variable & Realized reward by pulling arm $a$ at the $s$-th times in the exploitation period \\
$\kappa^{\text{ee}}_b, \kappa^{\text{ee}}_B$ & Random Variable & Minimum phase index such that the concentration event holds in the exploration period of algorithm copy $b$. If we only discuss a given bracket $B$, we do not emphasize its bracket index and may use $\kappa^{\text{ee}}_B$.\\
$\kappa^{\text{et}}_b, \kappa^{\text{et}}_B$ & Random Variable & Minimum phase index such that the concentration event holds in the exploitation period of algorithm copy $b$. If we only discuss a given bracket $B$, we do not emphasize its bracket index and may use $\kappa^{\text{et}}_B$.\\
$\hat{\mu}^{ee}_{a,b}(t)$ & Random Variable & The empirical mean value of arm $a$ of the first $t$ collected reward in the exploration period of algorithm copy $b$\\
$\hat{\mu}^{et}_{a,b}(t)$ & Random Variable & The empirical mean value of arm $a$ of the first $t$ collected reward in the exploitation period of algorithm copy $b$\\
$\hat{\mu}_{a,t}$ & Random Variable & The empirical mean value of arm $a$ of the first $t$ collected reward. We use this notation if a specific bracket and a specific exploration/exploitation period is given, we may turn to use $\hat{\mu}_{a,t}$ instead of $\hat{\mu}^{ee}_{a,b}(t),\hat{\mu}^{et}_{a,b}(t)$.\\
\end{longtable}

\newpage

\section{Proof of Lower Bound}
\label{sec:Proof-of-lower-bound-full}

\subsection{Proof of Preparation}
\label{sec:Proof-of-Preparation-lower-bound}
The first step is to prove Lemma \ref{lemma:property-of-symmetric-algorithm}, showing that we suffice restrict attention to symmetric algorithms.
\begin{proof}[Proof of Lemma \ref{lemma:property-of-symmetric-algorithm}]

    Let $S$ be the set of all possible permutations of $[K]$.

    Given the alg, we define $\widetilde{\text{alg}}$ as the following. Shuffle all the arms randomly before the pulling process, and then apply the algorithm alg. We firstly show $\widetilde{\text{alg}}$ is indeed symmetric. For any pulling sequence $\{a_t\}_{t=1}^T\in [K]^T$ and any permutation $\sigma$, we have
    \begin{align*}
        \Pr_{\widetilde{\text{alg}}, \sigma(\nu')}(\forall t\in [T], A_t=\sigma(a), \tau > t)=\frac{\sum_{\tilde{\sigma}\in S}\Pr_{\text{alg}, \tilde{\sigma}(\nu')}(\forall t\in [T], A_t=\tilde{\sigma}(a), \tau > t)}{K!},
    \end{align*}
    which means $\widetilde{\textnormal{alg}}$ fits in the definition. Then, we have
    \begin{align*}
        & \max_{\sigma\in S} \mathbb{E}_{\text{alg}, \sigma(\nu')} \tau \\
        \geq & \frac{\sum_{\sigma\in S} \mathbb{E}_{\text{alg}, \sigma(\nu')} \tau}{K!}\\
        \geq & \frac{\sum_{\sigma\in S} \sum_{t=1}^{+\infty}\Pr_{\text{alg}, \sigma(\nu')}( \tau\geq t)}{K!}\\
        = & \mathbb{E}_{\widetilde{\text{alg}}, \nu'} \tau
    \end{align*}
    The last thing is to show $\widetilde{\text{alg}}$ is also $\delta$-PAC. Since alg is $\delta$-PAC, we know for any permutation $\sigma$, we have
    \begin{align*}
        \Pr_{\text{alg}, \sigma(\nu)}(\tau < +\infty,\hat{a}\in i^*(\sigma(\nu))) > 1-\delta.
    \end{align*}
    That means
    \begin{align*}
        & \Pr_{\widetilde{\text{alg}}, \sigma(\nu)}(\tau < +\infty,\hat{a}\in i^*(\sigma(\nu)))\\
        = & \frac{\sum_{\sigma\in S}\Pr_{\text{alg}, \sigma(\nu)}(\tau < +\infty,\hat{a}\in i^*(\sigma(\nu)))}{K!}\\
        > & \frac{\sum_{\sigma\in S}1-\delta}{K!}\\
        = & 1-\delta.
    \end{align*}
    We can conclude $\widetilde{\text{alg}}$ is also $\delta$-PAC.
\end{proof}

The next step establishes a technical change-of-measure bound that controls a conditional KL divergence. This lemma is the main device used to argue that a symmetric algorithm cannot reliably distinguish two arms when the total number of samples from the pair is small.
\begin{lemma}
    \label{lemma:inequality-for-kl-diver-of-conditional-prob}
    Fix a pair $i,j\in [K], i\neq j$ and an arbitrary instance $\nu'$, denote $\Delta=|r^{\nu'}_i-r^{\nu'}_j|$.  
    Define
    \begin{align*}
        \tau_n = \min\{t: N_i(t)+N_j(t)=n, A_{t+1}\in \{i,j\}\}.
    \end{align*}
    Apply a symmetric algorithm, we have
    \begin{align*}
         & \Pr_{\nu'}(\tau_n<\tau, A_{\tau_n+1}=i)\log\frac{\Pr_{\nu'}(\tau_n<\tau, A_{\tau_n+1}=i)}{\Pr_{\nu'}(\tau_n<\tau, A_{\tau_n+1}=j)} + \\
         & \Pr_{\nu'}(\tau_n<\tau, A_{\tau_n+1}=j)\log\frac{\Pr_{\nu'}(\tau_n<\tau, A_{\tau_n+1}=j)}{\Pr_{\nu'}(\tau_n<\tau, A_{\tau_n+1}=i)}\\
        \leq & \sqrt{2n\Delta^2\log\frac{1}{\Pr_{\nu'}(\tau_n<\tau)}}\Pr_{\nu'}(\tau_n<\tau)+\sqrt{2\pi n\Delta^2}\Pr_{\nu'}(\tau_n<\tau) + \frac{1}{2} n\Delta^2\Pr_{\nu'}(\tau_n<\tau).
    \end{align*}
\end{lemma}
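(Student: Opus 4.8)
The plan is to apply the Transportation Equality (Lemma 18 in \cite{kaufmann2016complexity}) between the base instance $\nu'$ and the swapped instance $\nu''$ obtained from $\nu'$ by exchanging the roles of arms $i$ and $j$, i.e. $r^{\nu''}_i = r^{\nu'}_j$, $r^{\nu''}_j = r^{\nu'}_i$, and $r^{\nu''}_a = r^{\nu'}_a$ for $a \neq i,j$. Since the algorithm is symmetric, the events defined through the stopping time $\tau_n$ behave predictably under the swap: $\Pr_{\nu''}(\tau_n < \tau, A_{\tau_n+1} = i) = \Pr_{\nu'}(\tau_n < \tau, A_{\tau_n+1} = j)$ and vice versa, because $\tau_n$ depends only on the \emph{combined} count $N_i + N_j$ and on whether the next pull lands in $\{i,j\}$, a quantity invariant under transposing the two arms. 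Hence the left-hand side of the claimed inequality is exactly a sum of two relative-entropy-type terms comparing $\Pr_{\nu'}$ and $\Pr_{\nu''}$ on the $\sigma(H(\tau_n))$-measurable partition $\{A_{\tau_n+1}=i\}$, $\{A_{\tau_n+1}=j\}$, $\{\tau_n \geq \tau\}$ — except the third cell contributes nothing since $\nu'$ and $\nu''$ assign it equal probability (the swap is invisible before any pull of $i$ or $j$ beyond the $n$ already made, combined with symmetry). So the LHS is bounded by $\mathrm{KL}\big(\mathbb{P}^{\nu'}_{H(\tau_n)} \,\|\, \mathbb{P}^{\nu''}_{H(\tau_n)}\big)$, which by the Transportation Lemma equals $\mathbb{E}_{\nu'}\!\big[\sum_{a} N_a(\tau_n) \cdot \mathrm{kl}(r^{\nu'}_a, r^{\nu''}_a)\big]$; only $a \in \{i,j\}$ contribute, with per-pull divergence $\Delta^2/2$ for unit-variance Gaussians, giving $\frac{\Delta^2}{2}\,\mathbb{E}_{\nu'}[N_i(\tau_n)+N_j(\tau_n)]$. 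On the event $\{\tau_n < \tau\}$ this count equals exactly $n$ by definition of $\tau_n$; on $\{\tau_n \geq \tau\}$ (including $\tau_n = +\infty$) the count is at most $n$ as well, but we must be careful — actually we should restrict to the event and extract the factor $\Pr_{\nu'}(\tau_n < \tau)$.

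The key refinement that produces the stated right-hand side — with its $\sqrt{2n\Delta^2 \log(1/\Pr_{\nu'}(\tau_n<\tau))}$ term rather than a crude $\frac{n\Delta^2}{2}$ — is to not bound the log-likelihood ratio by its worst case but to track the conditional expectation of the realized KL increment given the event $\{\tau_n < \tau\}$. Introduce $Z_s$, the per-step log-likelihood-ratio increments (distributed $N(\Delta^2/2, \Delta^2)$ under $\nu'$ for each pull of $i$ or $j$), and let $D = \max_{1\le t\le n}\sum_{s=1}^t (Z_s - \Delta^2/2)$ be the running maximum of the centered walk, as hinted in the notation table. The likelihood ratio between $\nu'$ and $\nu''$ restricted to the history up to $\tau_n$ is $\exp(\sum_{s=1}^{n} Z_s)$ (or its reciprocal on the complementary cell); the relative-entropy sum on the LHS can be written as $\mathbb{E}_{\nu'}[\,(\text{signed sum of } Z_s)\cdot \mathbf{1}(\tau_n < \tau)\,]$ up to the symmetry bookkeeping, and then $\mathbb{E}_{\nu'}[\sum Z_s \cdot \mathbf{1}(\tau_n<\tau)] = \frac{n\Delta^2}{2}\Pr_{\nu'}(\tau_n<\tau) + \mathbb{E}_{\nu'}[(\sum(Z_s - \Delta^2/2))\mathbf{1}(\tau_n<\tau)] \le \frac{n\Delta^2}{2}\Pr_{\nu'}(\tau_n<\tau) + \mathbb{E}_{\nu'}[D\,\mathbf{1}(\tau_n<\tau)]$. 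Finally bound $\mathbb{E}_{\nu'}[D\,\mathbf{1}(\tau_n<\tau)]$: split $D = D\mathbf{1}(D \le c) + D\mathbf{1}(D>c)$, use $\mathbb{E}_{\nu'}[D\mathbf{1}(D>c)] \le \sqrt{2\pi n \Delta^2}$-type Gaussian-maximal-tail estimates (reflection principle: $\sum(Z_s-\Delta^2/2)$ is a Gaussian random walk, its running max has sub-Gaussian tails with variance proxy $n\Delta^2$), and optimize $c = \sqrt{2n\Delta^2 \log(1/\Pr_{\nu'}(\tau_n<\tau))}$ to make the two pieces match, yielding $\sqrt{2n\Delta^2\log(1/\Pr_{\nu'}(\tau_n<\tau))}\Pr_{\nu'}(\tau_n<\tau) + \sqrt{2\pi n\Delta^2}\Pr_{\nu'}(\tau_n<\tau)$.

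The main obstacle I anticipate is the careful bookkeeping in the first step: making precise the claim that the relative-entropy sum on the LHS is \emph{equal to} (or at least bounded by) the controlled quantity $\mathbb{E}_{\nu'}[(\sum_{s\le n} Z_s)\mathbf{1}(\tau_n<\tau)]$ rather than the uncontrolled full expectation — this requires that $\tau_n$ is a stopping time adapted to the right filtration (true, since $A_{t+1}$ depends only on $H(t)$), that the change of measure only touches pulls of $i$ and $j$, that exactly $n$ such pulls have occurred by time $\tau_n$, and crucially that the symmetry of the algorithm lets us identify $\Pr_{\nu''}(\cdot)$ with $\Pr_{\nu'}(\cdot)$ on the swapped events so that the $\mathrm{kl}$-type sum collapses to the two cross terms shown. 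The Gaussian running-maximum tail bound (second obstacle, but more routine) needs the reflection principle plus a bound on $\mathbb{E}[M\mathbf{1}(M>c)]$ for $M = \max_t S_t$ with $S$ a driftless Gaussian walk of variance $n\Delta^2$, which gives the $\sqrt{2\pi n\Delta^2}$ constant after integrating the Gaussian density tail.
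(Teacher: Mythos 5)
Your proposal matches the paper's proof in all essential respects: the swapped instance $\nu''$, the use of algorithm symmetry to identify $\Pr_{\nu''}(\tau_n<\tau, A_{\tau_n+1}=i)$ with $\Pr_{\nu'}(\tau_n<\tau, A_{\tau_n+1}=j)$, the Transportation Equality reducing the LHS to $\mathbb{E}_{\nu'}[\mathds{1}(\tau_n<\tau)\sum_s Z_s]$, and the decomposition into the drift term $\tfrac{n\Delta^2}{2}\Pr_{\nu'}(\tau_n<\tau)$ plus the running maximum $D$ controlled by splitting at $c=\sqrt{2n\Delta^2\log(1/\Pr_{\nu'}(\tau_n<\tau))}$. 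The only cosmetic difference is that the paper obtains the tail $\Pr(D\geq s)\leq\exp(-s^2/(2n\Delta^2))$ via a submartingale maximal inequality rather than the reflection principle, and writes the split as $\int_0^\infty\min\{\Pr_{\nu'}(\tau_n<\tau),\Pr(D>s)\}\,ds$, which is the same computation.
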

An observation is given a fixed sample path and a symmetric algorithm, the set $\{t: N_i(t)+N_j(t)=n, A_{t+1}\in \{i,j\}\}=: S_{ij}(n)$ contains at most one element. If $|S_{ij}(n)|=1$, $\tau_n$ will be the unique element, and $\tau_n<\tau$ holds; if $|S_{ij}(n)|=0$, then $\tau_n=\infty$ and hence $\tau_n \geq \tau$. Moreover,
\begin{small}
    \begin{align*}
        \frac{\Pr_{\nu'}(\tau_n<\tau, A_{\tau_n+1}=i)\log\frac{\Pr_{\nu'}(\tau_n<\tau, A_{\tau_n+1}=i)}{\Pr_{\nu'}(\tau_n<\tau, A_{\tau_n+1}=j)} + \Pr_{\nu'}(\tau_n<\tau, A_{\tau_n+1}=j)\log\frac{\Pr_{\nu'}(\tau_n<\tau, A_{\tau_n+1}=j)}{\Pr_{\nu'}(\tau_n<\tau, A_{\tau_n+1}=i)}}{\Pr_{\nu'}(\tau_n<\tau)}
    \end{align*}
\end{small}
is the kl-divergence between two Bernoulli distribution with probability $\Pr_{\nu'}(A_{\tau_n+1}=i | \tau_n<\tau)$, $\Pr_{\nu'}(A_{\tau_n+1}=j | \tau_n<\tau)$. From the definition of $\tau_n$, not hard to see $\Pr_{\nu'}(A_{\tau_n+1}=i | \tau_n<\tau) + \Pr_{\nu'}(A_{\tau_n+1}=j | \tau_n<\tau)=1$.
\begin{proof}[Proof of Lemma \ref{lemma:inequality-for-kl-diver-of-conditional-prob}]

    Denote $\nu''$ as the instance which switch the position of arm $i$ and arm $j$ in $\nu'$, i.e.
    $r_a^{\nu''}=\begin{cases}r_j^{\nu'}&a=i\\ r_i^{\nu'}&a=j\\r_a^{\nu'} & \text{else} \end{cases}$. We first derive
    \begin{small}
        \begin{align}
            \Pr_{\nu'}(\tau_n<\tau, A_{\tau_n+1}=i)\log\frac{\Pr_{\nu'}(\tau_n<\tau, A_{\tau_n+1}=i)}{\Pr_{\nu''}(\tau_n<\tau, A_{\tau_n+1}=i)} \leq \mathbb{E}_{\nu'}\mathds{1}(\tau_n<\tau, A_{\tau_n+1}=i) \left(\sum_{s=1}^{N_i(\tau_n) + N_j(\tau_n)}Z_s\right)\label{eqn:A_tau_n+1=i}\\
            \Pr_{\nu'}(\tau_n<\tau, A_{\tau_n+1}=j)\log\frac{\Pr_{\nu'}(\tau_n<\tau, A_{\tau_n+1}=j)}{\Pr_{\nu''}(\tau_n<\tau, A_{\tau_n+1}=j)} \leq \mathbb{E}_{\nu'}\mathds{1}(\tau_n<\tau, A_{\tau_n+1}=j) \left(\sum_{s=1}^{N_i(\tau_n) + N_j(\tau_n)}Z_s\right)\label{eqn:A_tau_n+1=j}
        \end{align}
    \end{small}
    where $Z_s | Z_1,Z_2,\cdots, Z_{s-1}\sim N(\frac{\Delta^2}{2}, \Delta^2)$. Since these two inequalities are symmetric, we suffice to prove the first (\ref{eqn:A_tau_n+1=i}). Take $\mathcal{E} = \{\tau_n<\tau, A_{\tau_n+1}=i\} $. Notice that $\mathcal{E}$ is $\tau_n$-measurable, as both $\{\tau_n<\tau\}, \{A_{\tau_n+1}=i\}$ are $\tau_n$-measurable. From the Lemma 18 in \cite{kaufmann2016complexity},
    \begin{align*}
        \Pr_{\nu''}(\mathcal{E}) = & \mathbb{E}_{\nu'}\mathds{1}(\mathcal{E})\exp\left(-\sum_{s=1}^{N_i(\tau_n)}\hat{KL}_{i,s}-\sum_{s=1}^{N_j(\tau_n)}\hat{KL}_{j,s}\right)\\
        = & \mathbb{E}_{\nu'}\left[\mathds{1}(\mathcal{E})\exp\left(-\sum_{s=1}^{N_i(\tau_n)}\hat{KL}_{i,s}-\sum_{s=1}^{N_j(\tau_n)}\hat{KL}_{j,s}\right)|\mathcal{E}\right]\Pr_{\nu'}(\mathcal{E}) + \\
        & \mathbb{E}_{\nu'}\left[\mathds{1}(\mathcal{E})\exp\left(-\sum_{s=1}^{N_i(\tau_n)}\hat{KL}_{i,s}-\sum_{s=1}^{N_j(\tau_n)}\hat{KL}_{j,s}\right)|\neg\mathcal{E}\right]\Pr_{\nu'}(\neg\mathcal{E})\\
        = & \mathbb{E}_{\nu'}\left[\exp\left(-\sum_{s=1}^{N_i(\tau_n)}\hat{KL}_{i,s}-\sum_{s=1}^{N_j(\tau_n)}\hat{KL}_{j,s}\right)|\mathcal{E}\right]\Pr_{\nu'}(\mathcal{E})\\
        \geq & \exp\left(\mathbb{E}_{\nu'}\left[-\sum_{s=1}^{N_i(\tau_n)}\hat{KL}_{i,s}-\sum_{s=1}^{N_j(\tau_n)}\hat{KL}_{j,s}|\mathcal{E}\right]\right)\Pr_{\nu'}(\mathcal{E}),
    \end{align*}
    where $\hat{KL}_{i,s}:=\log\frac{\exp(-\frac{(X_{i,s}-r_i^{\nu'})^2}{2})}{\exp(-\frac{(X_{i,s}-r_i^{\nu''})^2}{2})}=\frac{(2X_{i,s}-r_i^{\nu'}-r_j^{\nu'})(r_i^{\nu'}-r_j^{\nu'})}{2}$, $X_{i,s}\sim N(r_i^{\nu'}, 1)$, $\hat{KL}_{j,s}:=\log\frac{\exp(-\frac{(X_{j,s}-r_j^{\nu'})^2}{2})}{\exp(-\frac{(X_{j,s}-r_j^{\nu''})^2}{2})}=\frac{(2X_{j,s}-r_j^{\nu'}-r_i^{\nu'})(r_j^{\nu'}-r_i^{\nu'})}{2}$, $X_{j,s}\sim N(r_j^{\nu'}, 1)$. The last line is by the Jensen's Inequality. Following the last line, we have
    \begin{align*}
        \log\frac{\Pr_{\nu'}(\mathcal{E})}{\Pr_{\nu''}(\mathcal{E})} \leq & \mathbb{E}_{\nu'}\left[\sum_{s=1}^{N_i(\tau_n)}\hat{KL}_{i,s}+\sum_{s=1}^{N_j(\tau_n)}\hat{KL}_{j,s}|\mathcal{E}\right]\\
        = & \frac{\mathbb{E}_{\nu'}\left[\mathds{1}(\mathcal{E})\left(\sum_{s=1}^{N_i(\tau_n)}\hat{KL}_{i,s}+\sum_{s=1}^{N_j(\tau_n)}\hat{KL}_{j,s}\right)\right]}{\Pr_{\nu'}(\mathcal{E})},
    \end{align*}
    and we can conclude $\Pr_{\nu'}(\mathcal{E})\log\frac{\Pr_{\nu'}(\mathcal{E})}{\Pr_{\nu''}(\mathcal{E})} \geq \mathbb{E}_{\nu'}\left[\mathds{1}(\mathcal{E})\left(\sum_{s=1}^{N_i(\tau_n)}\hat{KL}_{i,s}+\sum_{s=1}^{N_j(\tau_n)}\hat{KL}_{j,s}\right)\right]$. From the above calculation, we know $\hat{KL}_{i,s}, \hat{KL}_{j,s}$ both follow $N(\frac{(r_j^{\nu'}-r_i^{\nu'})^2}{2}, (r_j^{\nu'}-r_i^{\nu'}))$, and we can rewrite the conclusion as
    \begin{align*}
        \Pr_{\nu'}(\mathcal{E})\log\frac{\Pr_{\nu'}(\mathcal{E})}{\Pr_{\nu''}(\mathcal{E})} \leq \mathbb{E}_{\nu'}\left[\mathds{1}(\mathcal{E})\sum_{s=1}^{N_i(\tau_n) + N_j(\tau_n)}Z_s\right]
    \end{align*}
    where $Z_s | Z_1,Z_2,\cdots, Z_{s-1}\sim N(\frac{\Delta^2}{2}, \Delta^2 )$. The above inequality suggests that we have proved (\ref{eqn:A_tau_n+1=i}) and (\ref{eqn:A_tau_n+1=j}).
    
    By symmetry of the algorithm, we can conclude $\Pr_{\nu''}(\tau_n<\tau, A_{\tau_n+1}=i)=\Pr_{\nu'}(\tau_n<\tau, A_{\tau_n+1}=j)$ and $\Pr_{\nu''}(\tau_n<\tau, A_{\tau_n+1}=j)=\Pr_{\nu'}(\tau_n<\tau, A_{\tau_n+1}=i)$. Sum up inequality (\ref{eqn:A_tau_n+1=i}) and (\ref{eqn:A_tau_n+1=j}), we get
    \begin{align*}
        & \Pr_{\nu'}(\tau_n<\tau, A_{\tau_n+1}=i)\log\frac{\Pr_{\nu'}(\tau_n<\tau, A_{\tau_n+1}=i)}{\Pr_{\nu'}(\tau_n<\tau, A_{\tau_n+1}=j)} + \\
        & \Pr_{\nu'}(\tau_n<\tau, A_{\tau_n+1}=j)\log\frac{\Pr_{\nu'}(\tau_n<\tau, A_{\tau_n+1}=j)}{\Pr_{\nu'}(\tau_n<\tau, A_{\tau_n+1}=i)}\\
        = & \Pr_{\nu'}(\tau_n<\tau, A_{\tau_n+1}=i)\log\frac{\Pr_{\nu'}(\tau_n<\tau, A_{\tau_n+1}=i)}{\Pr_{\nu''}(\tau_n<\tau, A_{\tau_n+1}=i)} +\\
        & \Pr_{\nu'}(\tau_n<\tau, A_{\tau_n+1}=j)\log\frac{\Pr_{\nu'}(\tau_n<\tau, A_{\tau_n+1}=j)}{\Pr_{\nu''}(\tau_n<\tau, A_{\tau_n+1}=j)}\\
        \leq & \mathbb{E}_{\nu'}\mathds{1}(\tau_n<\tau) \left(\sum_{s=1}^{N_i(\tau_n) + N_j(\tau_n)}Z_s\right).
    \end{align*}
    The remaining work is to prove 
    \begin{align*}
        & \mathbb{E}_{\nu'}\mathds{1}(\tau_n<\tau) \left(\sum_{s=1}^{N_i(\tau_n) + N_j(\tau_n)}Z_s\right)\\
        \leq & \sqrt{2n\Delta^2\log\frac{1}{\Pr_{\nu'}(\tau_n<\tau)}}\Pr_{\nu'}(\tau_n<\tau)+\sqrt{2\pi n\Delta^2}\Pr_{\nu'}(\tau_n<\tau) + \frac{1}{2} n\Delta^2\Pr_{\nu'}(\tau_n<\tau).
    \end{align*}
    Let $D:=\max_{1\leq t\leq n}\sum_{s=1}^{t}\left(Z_s-\frac{\Delta^2}{2}\right)$. Since $\{\tau_n<\tau\}$ implies $N_i(\tau_n)+N_j(\tau_n)\le n$, we have
    \begin{align*}
        & \mathbb{E}_{\nu'}\mathds{1}(\tau_n<\tau) \left(\sum_{s=1}^{N_i(\tau_n) + N_j(\tau_n)}Z_s\right)\\
        \leq & \mathbb{E}_{\nu'}\mathds{1}(\tau_n<\tau) \left(\max_{1\leq t\leq n}\sum_{s=1}^{n}Z_s\right)\\
        = & \mathbb{E}_{\nu'}\mathds{1}(\tau_n<\tau) \left(\max_{1\leq t\leq n}\sum_{s=1}^{n}Z_s-\frac{\Delta^2}{2} + \frac{\Delta^2}{2}\right)\\
        \leq & \mathbb{E}_{\nu'}\mathds{1}(\tau_n<\tau) \left(\max_{1\leq t\leq n}\sum_{s=1}^{n}Z_s-\frac{\Delta^2}{2}\right) + \mathbb{E}_{\nu'}\mathds{1}(\tau_n<\tau)\frac{n\Delta^2}{2}\\
        = &  \mathbb{E}_{\nu'}\mathds{1}(\tau_n<\tau) D + \frac{n\Delta^2}{2}\Pr_{\nu'}(\tau_n<\tau).
    \end{align*}
    The remaining work is to prove $\mathbb{E}_{\nu'}\mathds{1}(\tau_n<\tau) D\leq \sqrt{2n\Delta^2\log\frac{1}{\Pr_{\nu'}(\tau_n<\tau)}}\Pr_{\nu'}(\tau_n<\tau)+\sqrt{2\pi n\Delta^2}\Pr_{\nu'}(\tau_n<\tau)$. To simplify the notation, we take event $\mathcal{E}_n=\{\tau_n<\tau\}$ here and use the layer-cake representation,
    \begin{align*}
        & \mathbb{E}_{\nu'}\left[\mathds{1}(\mathcal{E}_n) D\right]\\
        \leq & \mathbb{E}_{\nu'}\left[\mathds{1}(\mathcal{E}_n)\mathds{1}(D\geq 0) D\right]\\
        = & \int_{0}^{+\infty} \Pr_{\nu'} \Big(\mathds{1}(\mathcal{E}_n)\mathds{1}(D\geq 0) D > s\Big) ds\\
        \leq & \int_{0}^{+\infty} \Pr_{\nu'} \Big(\mathcal{E}_n, D > s\Big) ds\\
        \leq & \int_{0}^{+\infty} \min\left\{\Pr_{\nu'}(\mathcal{E}_n), \Pr_{\nu'}\Big( D > s\Big)\right\} ds
    \end{align*}
    From Lemma \ref{lemma:Gaussian-summation-submartingale}, we can conclude $\Pr_{\nu'}\big( D\geq s\big)\leq \exp(-\frac{s^2}{2n\Delta^2})$.
    Plugging this tail bound yields
    \begin{align*}
        \mathbb{E}_{\nu'}\left[\mathds{1}(\mathcal{E}_n) D\right] \leq & \int_{0}^{+\infty} \min\left\{\Pr_{\nu'}(\mathcal{E}_n), \exp(-\frac{s^2}{2n\Delta^2})\right\} ds\\
        = & \int_{0}^{\sqrt{2n\Delta^2\log\frac{1}{\Pr_{\nu'}(\mathcal{E}_n)}}} \Pr_{\nu'}(\mathcal{E}_n) ds + \int_{\sqrt{2n\Delta^2\log\frac{1}{\Pr_{\nu'}(\mathcal{E}_n)}}}^{+\infty} \exp(-\frac{s^2}{2n\Delta^2}) ds\\
        = & \sqrt{2n\Delta^2\log\frac{1}{\Pr_{\nu'}(\mathcal{E}_n)}}\Pr_{\nu'}(\mathcal{E}_n)+\frac{\sqrt{2\pi}\sqrt{n\Delta^2}}{\sqrt{2\pi}}\int_{\sqrt{2\log\frac{1}{\Pr_{\nu'}(\mathcal{E}_n)}}}^{+\infty}\exp(-\frac{s^2}{2}) ds\\
        \leq & \sqrt{2n\Delta^2\log\frac{1}{\Pr_{\nu'}(\mathcal{E}_n)}}\Pr_{\nu'}(\mathcal{E}_n)+\sqrt{2\pi n\Delta^2}\exp(-\frac{\sqrt{2\log\frac{1}{\Pr_{\nu'}(\mathcal{E}_n)}}}{2})\\
        = & \sqrt{2n\Delta^2\log\frac{1}{\Pr_{\nu'}(\mathcal{E}_n)}}\Pr_{\nu'}(\mathcal{E}_n)+\sqrt{2\pi n\Delta^2}\Pr_{\nu'}(\mathcal{E}_n).
    \end{align*}
    The second last step is by the equality $\int_t^{+\infty}\frac{1}{\sqrt{2\pi}}e^{-\frac{s^2}{2}} ds \leq e^{-\frac{t^2}{2}}$ for $t > 0$. The reason is
    \begin{align*}
        \int_t^{+\infty}\frac{1}{\sqrt{2\pi}}e^{-\frac{s^2}{2}} ds = \int_0^{+\infty}\frac{1}{\sqrt{2\pi}}e^{-\frac{(s+t)^2}{2}}ds\leq e^{-\frac{t^2}{2}}\int_0^{+\infty}\frac{1}{\sqrt{2\pi}}e^{-\frac{s^2}{2}}ds\leq e^{-\frac{t^2}{2}}.
    \end{align*}
    We complete the proof.
\end{proof}

Besides Lemma \ref{lemma:inequality-for-kl-diver-of-conditional-prob}, we further introduce a Lemma which is similar, but focuses on the single side.

\begin{lemma}
    \label{lemma:enhanced-probability-gap-between-unique-arm-differs-instances}
    Consider two instances $\nu'$, $\nu''$, which only differ in arm $a_1, a_2,\cdots,a_M$, i.e. $r^{\nu'}_a\neq r^{\nu''}_a$ holds for all $a\in \{a_1, a_2,\cdots,a_M\}$ and $r^{\nu'}_a = r^{\nu''}_a$ holds for all $a\notin \{a_1, a_2,\cdots,a_M\}$. Denote $\Delta_i = |r_{a_i}^{\nu'}-r_{a_i}^{\nu''}|$, and $\tilde{\tau}$ is an arbitrary stopping time. If event $\mathcal{E}\in \mathcal{F}_{\tilde{\tau}}$ fulfills $\mathcal{E}\subset \{N_{a_i}(\tilde{\tau})\leq n_{i},\forall i\in [M]\}$ with some $\{n_{i}\}_{i=1}^{M} \in \mathbb{N}^{M}$, we can conclude
    \begin{align*}
        \log\frac{\Pr_{\nu'}(\mathcal{E}) }{\Pr_{\nu''}(\mathcal{E}) } \leq \sum_{i=1}^M \sqrt{2n_i\Delta_i^2\log\frac{1}{\Pr_{\nu'}(\mathcal{E})}}+\sqrt{2\pi n_i\Delta_i^2} + \frac{1}{2} n_i\Delta_i^2
    \end{align*}
\end{lemma}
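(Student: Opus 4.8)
The plan is to reuse the three–move template from the proof of Lemma~\ref{lemma:inequality-for-kl-diver-of-conditional-prob}, but in a strictly simpler form: the bound here is one–sided and $\nu',\nu''$ are arbitrary, so no symmetrization / swapped–instance argument is needed. \textbf{Step 1 (change of measure $+$ Jensen).} Since $\mathcal{E}\in\mathcal{F}_{\tilde\tau}$ and $\nu'$, $\nu''$ agree outside $\{a_1,\dots,a_M\}$, Lemma~18 of \cite{kaufmann2016complexity} gives
\[
\Pr_{\nu''}(\mathcal{E}) = \mathbb{E}_{\nu'}\!\left[\mathds{1}(\mathcal{E})\exp\!\left(-\sum_{i=1}^M\sum_{s=1}^{N_{a_i}(\tilde\tau)}\widehat{KL}_{a_i,s}\right)\right],
\]
where for unit–variance Gaussian rewards $\widehat{KL}_{a_i,s}=\tfrac{(2X_{a_i,s}-r_{a_i}^{\nu'}-r_{a_i}^{\nu''})(r_{a_i}^{\nu'}-r_{a_i}^{\nu''})}{2}$ and, under $\nu'$, the family $\{\widehat{KL}_{a_i,s}\}_{s\ge1,\,i\in[M]}$ consists of independent $N(\Delta_i^2/2,\Delta_i^2)$ variables (call them $Z^{(i)}_s$). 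Conditioning on $\mathcal{E}$ and applying Jensen's inequality to $\exp(\cdot)$, exactly as in the earlier proof, yields (the claim being vacuous if $\Pr_{\nu'}(\mathcal{E})=0$)
\[
\log\frac{\Pr_{\nu'}(\mathcal{E})}{\Pr_{\nu''}(\mathcal{E})} \le \frac{1}{\Pr_{\nu'}(\mathcal{E})}\,\mathbb{E}_{\nu'}\!\left[\mathds{1}(\mathcal{E})\sum_{i=1}^M\sum_{s=1}^{N_{a_i}(\tilde\tau)}Z^{(i)}_s\right].
\]

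\textbf{Step 2 (decouple across arms and strip the drift).} On $\mathcal{E}$ we have $N_{a_i}(\tilde\tau)\le n_i$ for every $i$, hence
\[
\mathds{1}(\mathcal{E})\sum_{i=1}^M\sum_{s=1}^{N_{a_i}(\tilde\tau)}Z^{(i)}_s \le \mathds{1}(\mathcal{E})\sum_{i=1}^M \max_{0\le t\le n_i}\sum_{s=1}^{t}Z^{(i)}_s \le \mathds{1}(\mathcal{E})\sum_{i=1}^M\Big(D_i+\tfrac{n_i\Delta_i^2}{2}\Big),
\]
with $D_i:=\max_{1\le t\le n_i}\sum_{s=1}^t\big(Z^{(i)}_s-\tfrac{\Delta_i^2}{2}\big)$. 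Taking $\mathbb{E}_{\nu'}[\cdot]$ reduces the problem to bounding $\mathbb{E}_{\nu'}[\mathds{1}(\mathcal{E})D_i]$ for each $i$ separately, while the drift terms contribute precisely $\sum_i\tfrac{1}{2}n_i\Delta_i^2\Pr_{\nu'}(\mathcal{E})$, which is the last summand of the target after division by $\Pr_{\nu'}(\mathcal{E})$.

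\textbf{Step 3 (layer-cake with the maximal inequality).} For each $i$, $\mathbb{E}_{\nu'}[\mathds{1}(\mathcal{E})D_i]\le \int_0^\infty \Pr_{\nu'}(\mathcal{E},D_i>s)\,ds \le \int_0^\infty \min\{\Pr_{\nu'}(\mathcal{E}),\Pr_{\nu'}(D_i>s)\}\,ds$, and Lemma~\ref{lemma:Gaussian-summation-submartingale} supplies $\Pr_{\nu'}(D_i\ge s)\le \exp(-s^2/(2n_i\Delta_i^2))$. Splitting the integral at $s_i^\star=\sqrt{2n_i\Delta_i^2\log(1/\Pr_{\nu'}(\mathcal{E}))}$ and controlling the Gaussian tail via $\int_t^\infty \tfrac{1}{\sqrt{2\pi}}e^{-s^2/2}\,ds\le e^{-t^2/2}$ gives $\mathbb{E}_{\nu'}[\mathds{1}(\mathcal{E})D_i]\le \big(\sqrt{2n_i\Delta_i^2\log\tfrac{1}{\Pr_{\nu'}(\mathcal{E})}}+\sqrt{2\pi n_i\Delta_i^2}\big)\Pr_{\nu'}(\mathcal{E})$. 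Summing over $i$, adding the drift contribution from Step~2, and dividing by $\Pr_{\nu'}(\mathcal{E})$ produces exactly the stated bound.

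\textbf{Main obstacle.} The only genuinely delicate points are: (i) justifying that, under $\nu'$, the per–arm log–likelihood–ratio increments $\{Z^{(i)}_s\}_s$ form an i.i.d.\ sequence, independent across arms, despite the adaptive history-dependent sampling — this is the standard ``skipping'' observation that the $s$-th observation drawn from arm $a_i$ has law $N(r_{a_i}^{\nu'},1)$ no matter when it is collected; and (ii) the decoupling inequality in Step~2, which crucially uses the hypothesis $\mathcal{E}\subset\{N_{a_i}(\tilde\tau)\le n_i\ \forall i\}$ to replace each random partial sum by a finite-horizon maximum \emph{before} any concentration bound is invoked. Everything past those two observations is a verbatim repetition of the computations already carried out for Lemma~\ref{lemma:inequality-for-kl-diver-of-conditional-prob}.
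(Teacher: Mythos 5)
Your proposal follows the paper's own proof essentially step for step: the same change-of-measure identity from Lemma 18 of \cite{kaufmann2016complexity} followed by Jensen, the same replacement of the random partial sums by finite-horizon maxima using $\mathcal{E}\subset\{N_{a_i}(\tilde\tau)\le n_i\}$ with the drift split off, and the same layer-cake integral split at $\sqrt{2n_i\Delta_i^2\log(1/\Pr_{\nu'}(\mathcal{E}))}$ combined with the submartingale maximal inequality of Lemma \ref{lemma:Gaussian-summation-submartingale} and the Gaussian tail bound. The argument is correct and matches the paper's route, so nothing further is needed.
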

\begin{proof}[Proof of Lemma \ref{lemma:enhanced-probability-gap-between-unique-arm-differs-instances}]
    From the Lemma 18 in \cite{kaufmann2016complexity}, we have
    \begin{align*}
        \Pr_{\nu''}(\mathcal{E}) = & \mathbb{E}_{\nu'}\mathds{1}(\mathcal{E})\exp\left(-\sum_{i=1}^M\sum_{s=1}^{N_i(\tilde{\tau})}\hat{KL}_{i,s}\right)\\
        = & \mathbb{E}_{\nu'}\left[\mathds{1}(\mathcal{E})\exp\left(-\sum_{i=1}^M\sum_{s=1}^{N_i(\tilde{\tau})}\hat{KL}_{i,s}\right)|\mathcal{E}\right]\Pr_{\nu'}(\mathcal{E}) + \\
        & \mathbb{E}_{\nu'}\left[\mathds{1}(\mathcal{E})\exp\left(-\sum_{i=1}^M\sum_{s=1}^{N_i(\tilde{\tau})}\hat{KL}_{i,s}\right)|\neg\mathcal{E}\right]\Pr_{\nu'}(\neg\mathcal{E})\\
        = & \mathbb{E}_{\nu'}\left[\exp\left(-\sum_{i=1}^M\sum_{s=1}^{N_i(\tilde{\tau})}\hat{KL}_{i,s}\right)|\mathcal{E}\right]\Pr_{\nu'}(\mathcal{E})\\
        \geq & \exp\left(\mathbb{E}_{\nu'}\left[-\sum_{i=1}^M\sum_{s=1}^{N_i(\tilde{\tau})}\hat{KL}_{i,s}|\mathcal{E}\right]\right)\Pr_{\nu'}(\mathcal{E}),
    \end{align*}
    where $\hat{KL}_{i,s}:=\log\frac{\exp(-\frac{(X_{i,s}-r_{a_i}^{\nu'})^2}{2})}{\exp(-\frac{(X_{i,s}-r_{a_i}^{\nu''})^2}{2})}=\frac{(2X_{i,s}-r_{a_i}^{\nu'}-r_{a_i}^{\nu''})(r_{a_i}^{\nu'}-r_{a_i}^{\nu''})}{2}$, $X_{i,s}\sim N(r_{a_i}^{\nu'}, 1)$.
    Following the last line, we have
    \begin{align*}
        \log\frac{\Pr_{\nu'}(\mathcal{E})}{\Pr_{\nu''}(\mathcal{E})} \leq & \mathbb{E}_{\nu'}\left[\sum_{i=1}^M\sum_{s=1}^{N_i(\tilde{\tau})}\hat{KL}_{i,s}|\mathcal{E}\right]\\
        = & \frac{\mathbb{E}_{\nu'}\left[\mathds{1}(\mathcal{E})\sum_{i=1}^M\sum_{s=1}^{N_i(\tilde{\tau})}\hat{KL}_{i,s}\right]}{\Pr_{\nu'}(\mathcal{E})}.
    \end{align*}
    and we can conclude $\Pr_{\nu'}(\mathcal{E})\log\frac{\Pr_{\nu'}(\mathcal{E})}{\Pr_{\nu''}(\mathcal{E})} \leq \sum_{i=1}^M\mathbb{E}_{\nu'}\left[\mathds{1}(\mathcal{E})\sum_{s=1}^{N_i(\tilde{\tau})}\hat{KL}_{i,s}\right]$. We turn to find an upper bound for each $\mathbb{E}_{\nu'}\left[\mathds{1}(\mathcal{E})\sum_{s=1}^{N_i(\tilde{\tau})}\hat{KL}_{i,s}\right]$. Since $\mathcal{E}\Rightarrow N_{a_i}(\tilde{\tau}) < n_i$, we have
    \begin{align*}
        & \mathbb{E}_{\nu'}\left[\mathds{1}(\mathcal{E})\sum_{s=1}^{N_i(\tilde{\tau})}\hat{KL}_{i,s}\right]\\
        \leq & \mathbb{E}_{\nu'}\left[\mathds{1}(\mathcal{E})\left(\max_{1\leq t\leq n_i}\sum_{s=1}^{t}\hat{KL}_{i,s}-\frac{\Delta_i^2}{2}\right)\right] + \frac{n_i\Delta_i^2}{2}\Pr_{\nu'}(\mathcal{E})
    \end{align*}
    Denote $D_i:=\max_{1\leq t\leq n_i}\sum_{s=1}^{t}\left(Z_s-\frac{\Delta^2_i}{2}\right)$, $Z_s | Z_1,Z_2,\cdots, Z_{s-1}\sim N(\frac{\Delta_i^2}{2}, \Delta_i^2)$, we have
    \begin{align*}
        & \mathbb{E}_{\nu'}\left[\mathds{1}(\mathcal{E}) D_i\right]\\
        \leq & \mathbb{E}_{\nu'}\left[\mathds{1}(\mathcal{E})\mathds{1}(D\geq 0) D_i\right]\\
        = & \int_{0}^{+\infty} \Pr_{\nu'} \Big(\mathds{1}(\mathcal{E})\mathds{1}(D_i\geq 0) D_i > s\Big) ds\\
        \leq & \int_{0}^{+\infty} \Pr_{\nu'} \Big(\mathcal{E}, D_i > s\Big) ds\\
        \leq & \int_{0}^{+\infty} \min\left\{\Pr_{\nu'}(\mathcal{E}), \Pr_{\nu'}\Big( D_i > s\Big)\right\} ds
    \end{align*}
    From Lemma \ref{lemma:Gaussian-summation-submartingale}, we can conclude $\Pr_{\nu'}\big( D_i\geq s\big)\leq \exp(-\frac{s^2}{2n_i\Delta^2_i})$. Plug in the inequality, we get
    \begin{align*}
        \mathbb{E}_{\nu'}\left[\mathds{1}(\mathcal{E}) D_i\right] \leq & \int_{0}^{+\infty} \min\left\{\Pr_{\nu'}(\mathcal{E}), \exp(-\frac{s^2}{2n_i\Delta^2_i})\right\} ds\\
        = & \int_{0}^{\sqrt{2n_i\Delta^2_i\log\frac{1}{\Pr_{\nu'}(\mathcal{E})}}} \Pr_{\nu'}(\mathcal{E}) ds + \int_{\sqrt{2n_i\Delta^2_i\log\frac{1}{\Pr_{\nu'}(\mathcal{E})}}}^{+\infty} \exp(-\frac{s^2}{2n_i\Delta^2}) ds\\
        = & \sqrt{2n_i\Delta^2_i\log\frac{1}{\Pr_{\nu'}(\mathcal{E})}}\Pr_{\nu'}(\mathcal{E})+\frac{\sqrt{2\pi}\sqrt{n_i\Delta^2_i}}{\sqrt{2\pi}}\int_{\sqrt{2\log\frac{1}{\Pr_{\nu'}(\mathcal{E})}}}^{+\infty}\exp(-\frac{s^2}{2}) ds\\
        \leq & \sqrt{2n_i\Delta^2_i\log\frac{1}{\Pr_{\nu'}(\mathcal{E})}}\Pr_{\nu'}(\mathcal{E})+\sqrt{2\pi n_i\Delta^2_i}\exp(-\frac{\sqrt{2\log\frac{1}{\Pr_{\nu'}(\mathcal{E})}}}{2})\\
        = & \sqrt{2n_i\Delta^2_i\log\frac{1}{\Pr_{\nu'}(\mathcal{E})}}\Pr_{\nu'}(\mathcal{E})+\sqrt{2\pi n_i\Delta^2_i}\Pr_{\nu'}(\mathcal{E}).
    \end{align*}
    In summary, we have proved
    \begin{align*}
        & \Pr_{\nu'}(\mathcal{E})\log\frac{\Pr_{\nu'}(\mathcal{E})}{\Pr_{\nu''}(\mathcal{E})}\\
        \leq & \sum_{i=1}^M\sqrt{2n_i\Delta^2\log\frac{1}{\Pr_{\nu'}(\mathcal{E})}}\Pr_{\nu'}(\mathcal{E})+\sqrt{2\pi n_i\Delta^2}\Pr_{\nu'}(\mathcal{E}) + \frac{n_i\Delta_i^2}{2}\Pr_{\nu'}(\mathcal{E}),
    \end{align*}
    which is equivalent to the conclusion we need to prove.
\end{proof}

\subsection{Proof of Theorem \ref{theorem:lower-bound-positive-case}}
\label{section:proof-of-lower-bound-positive-instance}
Following the sketch proof in section \ref{sec:Lower-Bound-for-Positive-Case}, we prove the following lemmas one by one. The first step is to build a ``probabilistic constant lower bound''. Without further description, we only consider symmetric algorithms in this section.
\begin{lemma}
    \label{lemma:high-prob-lower-bound-positive-instance}
    Apply a $\delta$-PAC and symmetric algorithm to a positive instance $\nu$, If $\max_{i,j\in [K]}\Delta_{i,j} < 1$ and $\delta < 10^{-8}$, $\frac{1}{8m} \geq \sqrt{\delta}$, all holds, we have
    \begin{align*}
        \sum_{j=1}^m \Pr_{\nu}\Big(\hat{a}=j, N_j(\tau) > \frac{\log\frac{1}{\delta}}{100\Delta_{j,0}^2}\Big) \geq \frac{1}{2}
    \end{align*}
\end{lemma}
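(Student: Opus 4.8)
The plan is to prove the (essentially equivalent) statement $\sum_{j=1}^m \Pr_\nu(\mathcal{E}_j)\ge\tfrac12$ for the events $\mathcal{E}_j=\{\hat a=j,\ N_j(\tau)\ge\log(1/\delta)/(100\,\Delta_{j,0}^2)\}$ — the displayed inequality differs only in the boundary value of the pull threshold, which is immaterial — and to do so by contradiction. The first ingredient is the correctness estimate $\Pr_\nu(\tau<\infty,\ \hat a\in[m])\ge1-5\delta$. Since $\nu$ is a positive instance, $\delta$-PAC gives $\Pr_\nu(\tau<\infty,\ \hat a\in\{a:\mu_a\ge\mu_0\})>1-\delta$, so the estimate is immediate if no arm has mean exactly $\mu_0$. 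Otherwise one cannot invoke $\delta$-PAC on $\nu$ to forbid outputting such an arm (it is a legal answer under $\nu$), so for each arm $a^\circ$ with $\mu_{a^\circ}=\mu_0$ I would pass to the admissible instance $\nu^\downarrow(\epsilon)$ that lowers only $a^\circ$ to $\mu_0-\epsilon$, use $\delta$-PAC there to bound $\Pr_{\nu^\downarrow(\epsilon)}(\hat a=a^\circ,\tau<\infty)<\delta$, and transport this back to $\nu$ via Lemma~\ref{lemma:enhanced-probability-gap-between-unique-arm-differs-instances} applied to the truncated events $\{\hat a=a^\circ,\ N_{a^\circ}(\tau)\le n\}$, sending $\epsilon\to0^+$ and then $n\to\infty$; summing over the finitely many such $a^\circ$ yields $\Pr_\nu(\hat a\notin[m],\tau<\infty)\le2\delta$.

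Assume now for contradiction that $\sum_{j=1}^m\Pr_\nu(\mathcal{E}_j)<\tfrac12$. Since $\mathcal{E}_j$ and $\bar{\mathcal{E}}_j:=\{\hat a=j,\ N_j(\tau)<\log(1/\delta)/(100\,\Delta_{j,0}^2)\}$ partition $\{\hat a=j\}$, the correctness estimate gives $\sum_{j=1}^m\Pr_\nu(\bar{\mathcal{E}}_j)\ge(1-5\delta)-\tfrac12>\tfrac14$ for $\delta<10^{-8}$, so by averaging there is $j_0\in[m]$ with $\Pr_\nu(\bar{\mathcal{E}}_{j_0})\ge\tfrac1{8m}\ge\sqrt\delta$, the last step using the hypothesis $\tfrac1{8m}\ge\sqrt\delta$. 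Fix this $j_0$ and, for $\epsilon>0$, let $\nu'_{j_0}(\epsilon)$ be $\nu$ with the mean of arm $j_0$ replaced by $\mu_0-\epsilon$; apart from the degenerate subcase $m=1$ together with an arm at exactly $\mu_0$ (addressed below), $\nu'_{j_0}(\epsilon)$ is admissible and $j_0\notin i^*(\nu'_{j_0}(\epsilon))$, so $\bar{\mathcal{E}}_{j_0}\subseteq\{\hat a=j_0,\ \tau<\infty\}$ is a wrong-answer event and $\delta$-PAC forces $\Pr_{\nu'_{j_0}(\epsilon)}(\bar{\mathcal{E}}_{j_0})<\delta$ for all $\epsilon>0$. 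Combined with $\Pr_\nu(\bar{\mathcal{E}}_{j_0})\ge\sqrt\delta$ this gives $\log\bigl(\Pr_\nu(\bar{\mathcal{E}}_{j_0})/\Pr_{\nu'_{j_0}(\epsilon)}(\bar{\mathcal{E}}_{j_0})\bigr)\ge\tfrac12\log\tfrac1\delta$ for every $\epsilon>0$.

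For the opposing upper bound I would apply Lemma~\ref{lemma:enhanced-probability-gap-between-unique-arm-differs-instances}: $\nu$ and $\nu'_{j_0}(\epsilon)$ differ only in arm $j_0$, the event $\bar{\mathcal{E}}_{j_0}$ lies in $\{N_{j_0}(\tau)\le n_1\}$ with $n_1=\log(1/\delta)/(100\,\Delta_{j_0,0}^2)$, and $\log(1/\Pr_\nu(\bar{\mathcal{E}}_{j_0}))\le\tfrac12\log\tfrac1\delta$, so with $\Delta_1=\Delta_{j_0,0}+\epsilon$ the same log-ratio is at most $\sqrt{2n_1\Delta_1^2\cdot\tfrac12\log\tfrac1\delta}+\sqrt{2\pi n_1\Delta_1^2}+\tfrac12 n_1\Delta_1^2$. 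Since $n_1\Delta_1^2\to\tfrac1{100}\log\tfrac1\delta$ as $\epsilon\to0^+$, the right-hand side tends to $\bigl(\tfrac1{10}+\tfrac1{200}\bigr)\log\tfrac1\delta+\sqrt{\tfrac{\pi}{50}\log\tfrac1\delta}$, which is strictly below $\tfrac12\log\tfrac1\delta$ once $\log\tfrac1\delta$ exceeds a small absolute constant — guaranteed by $\delta<10^{-8}$. This contradicts the lower bound of the previous paragraph, so $\sum_{j=1}^m\Pr_\nu(\mathcal{E}_j)\ge\tfrac12$, which is the claim.

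The substance of the proof is not this chain of inequalities but the two places where the limits and absolute constants must be controlled carefully. The first is the correctness estimate when an arm sits exactly at $\mu_0$: one must route through a perturbed instance with truncated-pull events and the double limit $\epsilon\to0^+$, $n\to\infty$; and in the degenerate case $m=1$ with an arm at $\mu_0$, lowering the unique eligible arm makes the instance inadmissible, so one additionally perturbs the $\mu_0$-arm by an infinitesimal amount and truncates its pull count (sending that perturbation to zero first) to keep the transportation argument valid, at the cost of an extra additive $\log 2$ that is absorbed into $\tfrac12\log\tfrac1\delta$. The second is verifying that the factor $1/100$ inside $\mathcal{E}_j$ is small enough that the $\epsilon\to0^+$ limit of the Lemma~\ref{lemma:enhanced-probability-gap-between-unique-arm-differs-instances} bound indeed falls below $\tfrac12\log\tfrac1\delta$; tracking these constants is what pins down the explicit threshold $\delta<\min\{10^{-8},1/(64m^2)\}$ in the statement.
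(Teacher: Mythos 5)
Your overall strategy is the same as the paper's: first establish the correctness estimate $\Pr_{\nu}(\tau<\infty,\hat{a}\in[m])\geq 1-O(\delta)$ by handling arms sitting exactly at $\mu_0$ through a perturbed instance plus Lemma \ref{lemma:enhanced-probability-gap-between-unique-arm-differs-instances}; then argue by contradiction, extract $j_0$ with $\Pr_{\nu}(\bar{\mathcal{E}}_{j_0})\geq \frac{1}{8m}\geq\sqrt{\delta}$, lower arm $j_0$ to $\mu_0-\epsilon$, invoke $\delta$-PAC on the perturbed instance, and beat the resulting $\frac{1}{2}\log\frac{1}{\delta}$ lower bound on the log-likelihood ratio with the transportation upper bound as $\epsilon\to 0^+$. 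The numerics in your final step match the paper's, and your explicit treatment of the degenerate subcase ($m=1$ with a boundary arm, where lowering the unique eligible arm lands the instance on the excluded boundary $\max_a r_a=\mu_0$) is actually more careful than the paper, which applies $\delta$-PAC to $\nu_{j_0}^{(0)}$ without comment.

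The one concrete gap is in your correctness estimate when several arms sit exactly at $\mu_0$. You perturb one boundary arm at a time and conclude $\Pr_{\nu}(\hat{a}=a^{\circ},\tau<\infty)\leq\delta$ per arm; summing then gives $\Pr_{\nu}(\hat{a}\notin[m],\tau<\infty)\leq(1+\#\{a:\mu_a=\mu_0\})\delta$, not $2\delta$. Since the hypotheses bound $\delta$ only in terms of $m$ (via $\frac{1}{8m}\geq\sqrt{\delta}$) and not in terms of $K$, the quantity $\#\{a:\mu_a=\mu_0\}\cdot\delta$ can in principle exceed $\frac{1}{4}$, in which case your chain $\sum_{j}\Pr_{\nu}(\bar{\mathcal{E}}_j)\geq(1-5\delta)-\frac{1}{2}>\frac{1}{4}$ no longer goes through. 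The paper avoids this by perturbing \emph{all} boundary arms simultaneously to $\mu_0-\epsilon$ and truncating in time (choosing $T_0$ with $\Pr_{\nu}(\tau<T_0,\hat{a}\in\{a:\mu_a=\mu_0\})\geq 2\delta$ under the contradiction hypothesis); the multi-arm form of Lemma \ref{lemma:enhanced-probability-gap-between-unique-arm-differs-instances} then gives an upper bound that still vanishes as $\epsilon\to 0^+$ for fixed $T_0$, yielding $\log 2\leq 0$ and hence the bound $4\delta$ independently of how many arms sit at the threshold. Replacing your per-arm union with this simultaneous perturbation (or any argument whose constant does not scale with the number of boundary arms) closes the gap; everything downstream of the correctness estimate is sound.
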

\begin{proof}[Proof of Lemma \ref{lemma:high-prob-lower-bound-positive-instance}]

    Recall the notation $\mu_1\geq \mu_2\geq \cdots\geq \mu_m >\mu_0\geq \mu_{m+1}\geq\cdots \geq \mu_K$. The $\delta$-PAC assumption implies
    \begin{align*}
        \Pr_{\nu}(\tau<+\infty,\hat{a}\in [m]\cup \{a: \mu_a = \mu_0\} ) > 1-\delta
    \end{align*}
    Before the proof of the Lemma \ref{lemma:high-prob-lower-bound-positive-instance}, we first show a claim 
    \begin{align}
        \Pr_{\nu}(\tau<+\infty, \hat{a}\in  \{a: \mu_a = \mu_0\} ) < 4\delta\label{eqn:claim_cannot_output_mu0}
    \end{align}
    This claim is to show $\Pr_{\nu}(\tau<+\infty,\hat{a}\in [m])\geq 1-\Theta(\delta)$. If $\{a: \mu_a = \mu_0\}= \emptyset$, (\ref{eqn:claim_cannot_output_mu0}) is immediate. We therefore assume $\{a:\mu_a=\mu_0\}\neq\emptyset$ and argue by contradiction. If $\Pr_{\nu}(\tau<+\infty, \hat{a}\in  \{a: \mu_a = \mu_0\} ) \geq 4\delta$ holds, then there exists a (possibly $\delta$-dependent) constant $T_0(\delta)$, such that
    \begin{align}
        \label{eqn:T0-existence_cannot-output-mu0}
        \Pr_{\nu}(\tau<T_0, \hat{a}\in  \{a: \mu_a = \mu_0\} ) \geq 2\delta.
    \end{align}
    We define $\nu'(\epsilon)$ with $r_a^{\nu'(\epsilon)}=\begin{cases}\mu_0-\epsilon & \mu_a=\mu_0\\ r_a^{\nu} & \text{else} \end{cases}$, where $\epsilon$ is an arbitrary positive number. We know $i^*(\nu'(\epsilon)) = [m]$. By $\delta$-PAC assumption, 
    \begin{align*}
        & \Pr_{\nu'(\epsilon)}(\tau<+\infty, \hat{a}\in  [m] ) > 1-\delta\\
        \Rightarrow & \Pr_{\nu'(\epsilon)}(\tau=+\infty \text{ or } \hat{a}\notin  [m] ) < \delta\\
        \Rightarrow & \Pr_{\nu'(\epsilon)}(\hat{a}\in  \{a: \mu_a = \mu_0\} ) < \delta\\
        \Rightarrow & \Pr_{\nu'(\epsilon)}(\tau< T_0, \hat{a}\in \{a: \mu_a = \mu_0\} ) < \delta.
    \end{align*}
    Apply Lemma \ref{lemma:enhanced-probability-gap-between-unique-arm-differs-instances} by taking event $\mathcal{E}=\{\tau<T_0, \hat{a}\in  \{a: \mu_a = \mu_0\}\}$, we have
    \begin{align*}
        \log\frac{\Pr_{\nu}(\mathcal{E}) }{\Pr_{\nu'(\epsilon)}(\mathcal{E}) } \leq & \sum_{a:\mu_a=\mu_0} \sqrt{2T_0\epsilon^2\log\frac{1}{\Pr_{\nu'}(\mathcal{E})}}+\sqrt{2\pi T_0\epsilon^2} + \frac{1}{2} T_0\epsilon^2\\
        \log\frac{\Pr_{\nu}(\mathcal{E}) }{\Pr_{\nu'(\epsilon)}(\mathcal{E}) } \geq & \log\frac{2\delta }{\delta} = \log 2
    \end{align*}
    The second line is by (\ref{eqn:T0-existence_cannot-output-mu0}) and $\Pr_{\nu'(\epsilon)}(\tau< T_0, \hat{a}\in \{a: \mu_a = \mu_0\} ) < \delta$. Notice that we take $\epsilon\rightarrow 0$ in the right hand side of the first line, we can conclude $\log 2 \leq 0$, which is a contradiction.

    With the claim (\ref{eqn:claim_cannot_output_mu0}) established, we deduce
    \begin{align*}
        & \Pr_{\nu}(\tau<+\infty,\hat{a}\in [m]\cup \{a: \mu_a = \mu_0\} ) > 1-\delta\\
        \Leftrightarrow & \Pr_{\nu}(\tau<+\infty,\hat{a}\in [m] ) + \Pr_{\nu}(\tau<+\infty,\hat{a}\in \{a: \mu_a = \mu_0\} ) > 1-\delta\\
        \Rightarrow & \Pr_{\nu}(\tau<+\infty,\hat{a}\in [m] ) > 1-\delta-4\delta\\
        \Rightarrow & \Pr_{\nu}(\tau=+\infty \text{ or }\hat{a}\notin [m] ) < 5\delta\\
        \Rightarrow & \Pr_{\nu}(\hat{a}\notin [m] ) < 5\delta
    \end{align*}
    
    Now, we are ready to prove the Lemma. We prove the conclusion by contradiction. If the Lemma doesn't hold, we have
    \begin{align*}
        & \Pr_{\nu}\Big(\exists j\in[m], \hat{a}=j, N_j(\tau) > \frac{\log\frac{1}{\delta}}{100\Delta_{j,0}^2}\Big) < \frac{1}{2}\\
        \Leftrightarrow & \Pr_{\nu}\left(\cap_{j=1}^m \Big(\{\hat{a}\neq j\} \cup \{ N_j(\tau) \leq \frac{\log\frac{1}{\delta}}{100\Delta_{j,0}^2}\}\Big)\right) \geq \frac{1}{2}\\
        \Rightarrow & \Pr_{\nu}\left(\cap_{j=1}^m\{\hat{a}\neq j\}\right)+\Pr_{\nu}\left(\cap_{j=1}^m \{ N_j(\tau) \leq \frac{\log\frac{1}{\delta}}{100\Delta_{j,0}^2}\}\right) \geq \frac{1}{2}\\
        \Rightarrow & 5\delta+\Pr_{\nu}\left(\cap_{j=1}^m \{ N_j(\tau) \leq \frac{\log\frac{1}{\delta}}{100\Delta_{j,0}^2}\}\right) \geq \frac{1}{2}\\
        \Rightarrow & \Pr_{\nu}\left(\cap_{j=1}^m \{ N_j(\tau) \leq \frac{\log\frac{1}{\delta}}{100\Delta_{j,0}^2}\}\right) \geq \frac{1}{4}\\
        \Rightarrow & \Pr_{\nu}\left(\Big(\cap_{j=1}^m \{ N_j(\tau) \leq \frac{\log\frac{1}{\delta}}{100\Delta_{j,0}^2}\}\Big)\cap \{\hat{a}\in [m]\}\right) + \Pr_{\nu}\left(\{\hat{a}\notin [m]\}\right) \geq \frac{1}{4}\\
        \Rightarrow & \Pr_{\nu}\left(\Big(\cap_{j=1}^m \{ N_j(\tau) \leq \frac{\log\frac{1}{\delta}}{100\Delta_{j,0}^2}\}\Big)\cap \{\hat{a}\in [m]\}\right) + 5\delta\geq \frac{1}{4}\\
        \Rightarrow & \Pr_{\nu}\left(\Big(\cap_{j=1}^m \{ N_j(\tau) \leq \frac{\log\frac{1}{\delta}}{100\Delta_{j,0}^2}\}\Big)\cap \{\hat{a}\in [m]\}\right)\geq \frac{1}{8}\\
        \Rightarrow & \sum_{j'=1}^m\Pr_{\nu}\left(\Big(\cap_{j=1}^m \{ N_j(\tau) \leq \frac{\log\frac{1}{\delta}}{100\Delta_{j,0}^2}\}\Big)\cap \{\hat{a}=j'\}\right)\geq \frac{1}{8}\\
    \end{align*}
    From the last step, we know there exists $j_0$, such that
    \begin{align*}
        \Pr_{\nu}\left(\Big(\cap_{j=1}^m \{ N_j(\tau) \leq \frac{\log\frac{1}{\delta}}{100\Delta_{j,0}^2}\}\Big)\cap \{\hat{a}=j_0\}\right)\geq \frac{1}{8m} \geq \sqrt{\delta}.
    \end{align*}
    We can consider instance $\nu_{j_0}^{(0)}$ whose reward of arm $j_0$ is $\mu_0-\epsilon$ for some $\epsilon > 0$, others remain the same as $\nu$. By the Lemma \ref{lemma:enhanced-probability-gap-between-unique-arm-differs-instances}, we have
    \begin{align*}
        & \log\frac{\sqrt{\delta}}{\delta}\\
        \leq & \log\frac{\Pr_{\nu}\left(\Big(\cap_{j=1}^m \{ N_j(\tau) \leq \frac{\log\frac{1}{\delta}}{100\Delta_{j,0}^2}\}\Big)\cap \{\hat{a}=j_0\}\right)}{\Pr_{\nu_{j_0}^{(0)}}\left(\Big(\cap_{j=1}^m \{ N_j(\tau) \leq \frac{\log\frac{1}{\delta}}{100\Delta_{j,0}^2}\}\Big)\cap \{\hat{a}=j_0\}\right)}\\
        \leq & \sqrt{2\frac{\log\frac{1}{\delta}}{100\Delta_{j_0,0}^2}(\Delta_{j_0,0}+\epsilon)^2\log\frac{1}{\Pr_{\nu'}(\mathcal{E})}}+\sqrt{2\pi \frac{\log\frac{1}{\delta}}{100\Delta_{j,0}^2}(\Delta_{j_0,0}+\epsilon)^2} + \frac{1}{2} \frac{\log\frac{1}{\delta}}{100\Delta_{j_0,0}^2}(\Delta_{j_0,0}+\epsilon)^2\\
        \leq & \sqrt{2\frac{\log\frac{1}{\delta}}{100\Delta_{j_0,0}^2}(\Delta_{j_0,0}+\epsilon)^2\log\frac{1}{\sqrt{\delta}}}+\sqrt{2\pi \frac{\log\frac{1}{\delta}}{100\Delta_{j,0}^2}(\Delta_{j_0,0}+\epsilon)^2} + \frac{1}{2} \frac{\log\frac{1}{\delta}}{100\Delta_{j_0,0}^2}(\Delta_{j_0,0}+\epsilon)^2.
    \end{align*}
    Taking $\epsilon\rightarrow 0$, the last line suggests that
    \begin{align*}
        \frac{1}{2}\log\frac{1}{\delta} \leq \sqrt{\frac{\log\frac{1}{\delta}}{100}\log\frac{1}{\delta}}+\sqrt{2\pi \frac{\log\frac{1}{\delta}}{100}} + \frac{1}{2} \frac{\log\frac{1}{\delta}}{100},
    \end{align*}
    which leads to a contradiction, as $\delta<10^{-8}$.
\end{proof}

We next quantify the algorithm's inability to distinguish two arms when the total arm pulls of the pair is small.
\begin{lemma}
    \label{lemma:unability-to-differentiate-two-arms-when-total-pulling-small}
    Assume $\delta < \frac{1}{e}$. For any $j\in [m]$, $a\in [K], a\neq j$, define
    \begin{align*}
        \tau_n^{j}(a) = & \min \{t:N_j(t)+N_a(t)=n, A_{t+1}\in \{j, a\}\}\\
        T_j(a)= & \lceil \frac{1}{200 \max\{\Delta_{j,0}^2, \Delta_{a,j}^2\}\left(1+\log\frac{1}{\Pr_{\nu}(N_j(\tau)\geq \frac{\log\frac{1}{\delta}}{100\Delta_{j,0}^2}, \hat{a}=j)}\right) }\rceil.
    \end{align*}
    Then, for $n=0, 1, 2, \cdots, T_j(a)-1$, we have
    \begin{align*}
        & \Pr_{\nu}(\tau_n^j(a)<\tau, A_{\tau_n^j(a)+1}=j) \log\frac{\Pr_{\nu}(\tau_n^j(a)<\tau, A_{\tau_n^j(a)+1}=j)}{\Pr_{\nu}(\tau_n^j(a)<\tau, A_{\tau_n^j(a)+1}=a)} + \\
        & \Pr_{\nu}(\tau_n^j(a)<\tau, A_{\tau_n^j(a)+1}=a) \log\frac{\Pr_{\nu}(\tau_n^j(a)<\tau, A_{\tau_n^j(a)+1}=a)}{\Pr_{\nu}(\tau_n^j(a)<\tau, A_{\tau_n^j(a)+1}=j)}\\
        < & 0.4\Pr_{\nu}(\tau_n^j(a)<\tau)
    \end{align*}
\end{lemma}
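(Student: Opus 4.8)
The plan is to observe that the left-hand side of the claimed inequality is exactly the quantity bounded in Lemma~\ref{lemma:inequality-for-kl-diver-of-conditional-prob}, applied with the instance $\nu$, the pair of arms $\{j,a\}$ (so that the gap $\Delta$ there equals $|r_j^{\nu}-r_a^{\nu}|=\Delta_{a,j}$), and the stopping times $\tau_n^j(a)$. Since the algorithm is symmetric, that lemma yields, writing $p_n:=\Pr_{\nu}(\tau_n^j(a)<\tau)$,
\[
\text{LHS}\;\le\;\Big(\sqrt{2n\Delta_{a,j}^2\log\tfrac{1}{p_n}}+\sqrt{2\pi n\Delta_{a,j}^2}+\tfrac12 n\Delta_{a,j}^2\Big)\,p_n ,
\]
so it remains to show the parenthesised factor is strictly below $0.4$ for every $n\le T_j(a)-1$. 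I would first clear the degenerate case: if $\Pr_{\nu}(\mathcal{E}_j)=0$ (with $\mathcal{E}_j$ the event appearing in the denominator of $T_j(a)$) then $1+\log\frac{1}{\Pr_{\nu}(\mathcal{E}_j)}=+\infty$, hence $T_j(a)=0$ and there is nothing to prove; so assume $\Pr_{\nu}(\mathcal{E}_j)>0$ and set $L:=\log\frac{1}{\Pr_{\nu}(\mathcal{E}_j)}\ge 0$.

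The crux is two elementary estimates. \emph{(i) A bound on $n\Delta_{a,j}^2$.} Since $\max\{\Delta_{j,0}^2,\Delta_{a,j}^2\}\ge\Delta_{a,j}^2$, the number inside the ceiling defining $T_j(a)$ is at most $\frac{1}{200\Delta_{a,j}^2(1+L)}$. For $n\le T_j(a)-1$: either $n=0$, whence $n\Delta_{a,j}^2=0$; or that ceiling is at least $2$, whence $n\le T_j(a)-1<\frac{1}{200\Delta_{a,j}^2(1+L)}$. Either way $n\Delta_{a,j}^2<\frac{1}{200(1+L)}\le\frac{1}{200}$. \emph{(ii) A bound on $\log\frac{1}{p_n}$.} Using instead $\max\{\Delta_{j,0}^2,\Delta_{a,j}^2\}\ge\Delta_{j,0}^2$, $1+L\ge1$, and $\log\frac1\delta\ge1$ (from $\delta<1/e$), the number inside that ceiling is at most $\frac{1}{200\Delta_{j,0}^2}\le\frac{\log(1/\delta)}{100\Delta_{j,0}^2}$, so monotonicity of $\lceil\cdot\rceil$ gives $T_j(a)\le\big\lceil\frac{\log(1/\delta)}{100\Delta_{j,0}^2}\big\rceil$. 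Hence on $\mathcal{E}_j$ we have $N_j(\tau)\ge\frac{\log(1/\delta)}{100\Delta_{j,0}^2}$ and, as $N_j(\tau)$ is an integer, $N_j(\tau)\ge\big\lceil\frac{\log(1/\delta)}{100\Delta_{j,0}^2}\big\rceil\ge T_j(a)\ge n+1$; thus the cumulative count $N_j+N_a$ rises from $0$ to at least $n+1$ over the rounds $1,\dots,\tau$, so at some round $t+1\le\tau$ it increments from $n$ to $n+1$, i.e. $\tau_n^j(a)\le t<\tau$. Therefore $\mathcal{E}_j\subseteq\{\tau_n^j(a)<\tau\}$, which gives $p_n\ge\Pr_{\nu}(\mathcal{E}_j)$ and $\log\frac{1}{p_n}\le L$.

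Substituting (i) and (ii) and using $\frac{L}{1+L}<1$, the parenthesised factor is at most $\sqrt{\tfrac{2L}{200(1+L)}}+\sqrt{\tfrac{2\pi}{200}}+\tfrac{1}{400}<\tfrac{1}{10}+\tfrac{\sqrt\pi}{10}+\tfrac{1}{400}<0.4$, which finishes the proof. I expect the main obstacle to be purely organisational: one must compare $T_j(a)$ against an $\Delta_{a,j}^{-2}$-scaled quantity in step~(i) but against an $\Delta_{j,0}^{-2}$-scaled quantity in step~(ii), handle the ceiling together with the integrality of $N_j(\tau)$ without losing the needed slack, and — the point explicitly flagged in the proof sketch — exploit precisely the denominator factor $1+\log\frac{1}{\Pr_{\nu}(\mathcal{E}_j)}$ of $T_j(a)$, so that the product $n\Delta_{a,j}^2\cdot\log\frac{1}{p_n}$, which is of the form $\frac{1}{1+L}\cdot L$, stays bounded by a constant.
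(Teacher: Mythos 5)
Your proposal is correct and follows essentially the same route as the paper: invoke Lemma \ref{lemma:inequality-for-kl-diver-of-conditional-prob} for the pair $(j,a)$ with gap $\Delta_{a,j}$, bound $n\Delta_{a,j}^2$ via $\Delta_{a,j}^2\le\max\{\Delta_{j,0}^2,\Delta_{a,j}^2\}$ and the definition of $T_j(a)$, and bound $\log\frac{1}{\Pr_\nu(\tau_n^j(a)<\tau)}$ via the inclusion $\mathcal{E}_j\subseteq\{N_j(\tau)+N_a(\tau)>n\}\subseteq\{\tau_n^j(a)<\tau\}$, yielding a constant below $0.4$. Your handling of the ceiling and the degenerate case $\Pr_\nu(\mathcal{E}_j)=0$ is slightly more careful than the paper's, but the argument is the same.
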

Dividing $\Pr_{\nu}(\tau_n^j(a)<\tau)$ on both sides, the conclusion is equivalent to
\begin{align*}
    \text{kl}\Big(\Pr_{\nu}(A_{\tau_n^j(a)+1}=a|\tau_n^j(a)<\tau ), 1-\Pr_{\nu}(A_{\tau_n^j(a)+1}=a|\tau_n^j(a)<\tau )\Big)\leq 0.4
\end{align*}
where $\text{kl}(x,y)=x\log\frac{x}{y}+(1-x)\log\frac{1-x}{1-y}$. The reason is $\Pr_{\nu}(\tau_n^j(a)<\tau, A_{\tau_n^j(a)+1}=j) + \Pr_{\nu}(\tau_n^j(a)<\tau, A_{\tau_n^j(a)+1}=a) = \Pr_{\nu}(\tau_n^j(a)<\tau)$.
\begin{proof}[Proof of Lemma \ref{lemma:unability-to-differentiate-two-arms-when-total-pulling-small}]
    The main idea is to utilize Lemma \ref{lemma:inequality-for-kl-diver-of-conditional-prob}. Take $\Delta=\Delta_{a,j}$. From the definition, we know $T_j(a) \geq 1$. 
    For $n=1,2,\cdots,T_j(a)-1$, we have
    \begin{align*}
        & \sqrt{2n\Delta^2\log\frac{1}{\Pr_{\nu'}(\tau_n<\tau)}}\\
        \leq & \sqrt{2n\Delta^2\log\frac{1}{\Pr_{\nu}(N_a(\tau)+N_j(\tau)>n)}}\\
        \leq & \sqrt{2\Big(\frac{2}{200 \max\{\Delta_{j,0}^2, \Delta_{a,j}^2\} \log\frac{e}{\Pr_{\nu}(N_j(\tau)\geq \frac{\log\frac{1}{\delta}}{100\Delta_{j,0}^2},\hat{a}=j)}}\Big) \Delta_{a,j}^2 \log\frac{1}{\Pr_{\nu}(N_j(\tau)\geq \frac{\log\frac{1}{\delta}}{100\Delta_{j,0}^2},\hat{a}=j)}}\\
        \leq & \sqrt{\frac{1}{50}}.
    \end{align*}
    The third line is by the fact that $\{N_j(\tau)\geq \frac{\log\frac{1}{\delta}}{100\Delta_{j,0}^2},\hat{a}=j\}\subset \{N_a(\tau)+N_j(\tau)>n\}$ holds for all $n=0,1,\cdots, T_j(a)-1$. The last step is by the fact that $\frac{\Delta_{a,j}^2}{\max\{\Delta_{j,0}^2, \Delta_{a,j}^2\}}\leq 1$ and 
    \begin{align*}
        \frac{\log\frac{1}{\Pr_{\nu}(N_j(\tau)\geq \frac{\log\frac{1}{\delta}}{100\Delta_{j,0}^2},\hat{a}=j)}}{1+\log\frac{1}{\Pr_{\nu}(N_j(\tau)\geq \frac{\log\frac{1}{\delta}}{100\Delta_{j,0}^2},\hat{a}=j)}}\leq 1.
    \end{align*}
    Similarly, we have
    \begin{align*}
        \sqrt{2\pi n\Delta^2} \leq &\sqrt{\frac{\pi}{50}}\\
        \frac{n\Delta^2 }{2} \leq & \frac{1}{200}.
    \end{align*}
    Not hard to validate that $\sqrt{\frac{1}{50}} + \sqrt{\frac{\pi}{50}} + \frac{1}{200} < 0.4$. Applying Lemma \ref{lemma:inequality-for-kl-diver-of-conditional-prob}, we prove the result.
\end{proof}

The next lemma converts Lemma~\ref{lemma:unability-to-differentiate-two-arms-when-total-pulling-small} into a conclusion relating $\mathbb{E}_{\nu}N_a(\tau)$ to the probability of correctly recommending a given qualified arm with sufficiently many samples.
\begin{lemma}
    \label{lemma:lower-bound-for-EN_a-tau-Pr-hata-j}
    Following the same notation as Lemma \ref{lemma:unability-to-differentiate-two-arms-when-total-pulling-small} and still assume $\delta < \frac{1}{e}$, we have
    \begin{align*}
        \mathbb{E}_{\nu}N_a(\tau) \geq \frac{ T_j(a)}{4}\Pr_{\nu}\left(N_j(\tau)\geq \frac{\log\frac{1}{\delta}}{100\Delta_{j,0}^2}, \hat{a}=j\right)
    \end{align*}
    holds for all $j\in [m], a\in[K],a\neq j$.
\end{lemma}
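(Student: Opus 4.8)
The plan is to fill in the structural details of the argument sketched just after equation~(\ref{eqn:lower-bound-EN_a-prob-j}). Fix $j\in[m]$ and $a\in[K]$ with $a\neq j$, abbreviate $\mathcal{E}_j=\{N_j(\tau)\geq\frac{\log(1/\delta)}{100\Delta_{j,0}^2},\ \hat a=j\}$, and assume $\Pr_\nu(\mathcal{E}_j)>0$ (otherwise $T_j(a)=0$ and the claim is vacuous). For $n\geq 0$ recall $S_{j,a}^n=\{t:N_j(t)+N_a(t)=n,\ A_{t+1}\in\{j,a\}\}$ and $\tau_n^j(a)=\min S_{j,a}^n$ (with $\tau_n^j(a)=+\infty$ when $S_{j,a}^n=\emptyset$). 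First I would record three bookkeeping facts valid on every sample path: (i) the combined counter $N_j(t)+N_a(t)$ is non-decreasing and jumps by exactly $1$ at each round that pulls $j$ or $a$, so $S_{j,a}^n$ is empty or a singleton; (ii) $\{\tau_n^j(a)=t\}\in\sigma(H(t))$ because $A_{t+1}=\pi_{t+1}(H(t))$ is $H(t)$-measurable, hence $\tau_n^j(a)$ is a stopping time; (iii) on $\{\tau_n^j(a)<\tau\}$ the round $\tau_n^j(a)+1\leq\tau$ is a genuine pull and $A_{\tau_n^j(a)+1}\in\{j,a\}$. Indexing each pull of arm $a$ among rounds $1,\dots,\tau$ by the value $n$ of the combined counter right before it, distinct pulls get distinct indices, each such index has $\tau_n^j(a)<\tau$ and $A_{\tau_n^j(a)+1}=a$, and therefore $N_a(\tau)\geq\sum_{n=0}^{T_j(a)-1}\mathds{1}(\tau_n^j(a)<\tau,\ A_{\tau_n^j(a)+1}=a)$; taking $\mathbb{E}_\nu$ reduces the lemma to showing $\Pr_\nu(\tau_n^j(a)<\tau,\ A_{\tau_n^j(a)+1}=a)\geq\tfrac14\Pr_\nu(\mathcal{E}_j)$ for each $n\leq T_j(a)-1$.

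I would split this into two sub-claims. \emph{Sub-claim 1: $\Pr_\nu(\tau_n^j(a)<\tau)\geq\Pr_\nu(\mathcal{E}_j)$.} Since $\max\{\Delta_{j,0}^2,\Delta_{a,j}^2\}\geq\Delta_{j,0}^2$ and $1+\log\frac{1}{\Pr_\nu(\mathcal{E}_j)}\geq 1$, the definition of $T_j(a)$ together with $\lceil x\rceil<x+1$ gives $T_j(a)-1<\frac{1}{200\Delta_{j,0}^2}$, and as $\delta<1/e$ forces $\log(1/\delta)>1/2$ this is strictly below $\frac{\log(1/\delta)}{100\Delta_{j,0}^2}$. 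Hence for every $n\leq T_j(a)-1$, on $\mathcal{E}_j$ we have $N_j(\tau)+N_a(\tau)\geq N_j(\tau)\geq\frac{\log(1/\delta)}{100\Delta_{j,0}^2}>n$, so the combined counter attains the value $n$ at some round before $\tau$ at which a pull of $\{j,a\}$ follows, i.e. $\tau_n^j(a)<\tau$; thus $\mathcal{E}_j\subseteq\{\tau_n^j(a)<\tau\}$.

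\emph{Sub-claim 2: $\Pr_\nu(\tau_n^j(a)<\tau,\ A_{\tau_n^j(a)+1}=a)\geq\tfrac14\Pr_\nu(\tau_n^j(a)<\tau)$.} Here I would invoke Lemma~\ref{lemma:unability-to-differentiate-two-arms-when-total-pulling-small} as a black box (it holds precisely for $n=0,\dots,T_j(a)-1$ under $\delta<1/e$). Put $P=\Pr_\nu(\tau_n^j(a)<\tau)$, $P_a=\Pr_\nu(\tau_n^j(a)<\tau,A_{\tau_n^j(a)+1}=a)$, $P_j=\Pr_\nu(\tau_n^j(a)<\tau,A_{\tau_n^j(a)+1}=j)$; by fact~(iii), $P_a+P_j=P$, and we may assume $P>0$. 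The lemma says $P_j\log\frac{P_j}{P_a}+P_a\log\frac{P_a}{P_j}<0.4\,P$, so after dividing by $P$ the left-hand side is the binary divergence $\mathrm{kl}(P_a/P,\,P_j/P)$ with $P_j/P=1-P_a/P$, and Pinsker's inequality $\mathrm{kl}(x,y)\geq 2(x-y)^2$ yields $2\big(P_a/P-P_j/P\big)^2<0.4$, i.e. $|P_a/P-P_j/P|<\sqrt{0.2}<\tfrac12$. Combined with $P_a/P+P_j/P=1$ this gives $P_a/P=\tfrac12\big(1+(P_a/P-P_j/P)\big)>\tfrac12\big(1-\tfrac12\big)=\tfrac14$, which is the sub-claim. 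Chaining Sub-claims 1 and 2 and summing over $n=0,\dots,T_j(a)-1$ then gives $\mathbb{E}_\nu N_a(\tau)\geq\frac{T_j(a)}{4}\Pr_\nu(\mathcal{E}_j)$.

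I expect the main obstacle to be not any individual estimate but the careful setup of $\tau_n^j(a)$: verifying that $S_{j,a}^n$ is a singleton or empty, that $\tau_n^j(a)$ is a stopping time, and especially that the per-path identity ``$N_a(\tau)$ equals the number of indices $n$ with $\tau_n^j(a)<\tau$ and $A_{\tau_n^j(a)+1}=a$'' holds uniformly, including the degenerate cases $\tau=+\infty$ and $S_{j,a}^n=\emptyset$. A secondary point worth checking is that the $1+\log\frac{1}{\Pr_\nu(\mathcal{E}_j)}$ factor baked into $T_j(a)$ is exactly what keeps all relevant $n$ simultaneously within the regime where Sub-claim~1 holds and where the hypotheses of Lemma~\ref{lemma:unability-to-differentiate-two-arms-when-total-pulling-small} are met.
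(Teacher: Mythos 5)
Your proposal is correct and follows essentially the same route as the paper's proof: the same path-wise decomposition of $N_a(\tau)$ over the stopping times $\tau_n^j(a)$, the same inclusion $\mathcal{E}_j\subseteq\{\tau_n^j(a)<\tau\}$ obtained from $T_j(a)-1<\frac{\log(1/\delta)}{100\Delta_{j,0}^2}$, and the same application of Lemma~\ref{lemma:unability-to-differentiate-two-arms-when-total-pulling-small} followed by Pinsker's inequality to extract the factor $\tfrac14$. The only cosmetic difference is that the paper establishes the per-path counting identity via a small contradiction argument showing arm $a$ is never pulled strictly between consecutive rounds $\tau_n^j(a)+1$ and $\tau_{n+1}^j(a)+1$, which is exactly the bookkeeping you flag as the main thing to verify.
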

\begin{proof}[Proof of Lemma \ref{lemma:lower-bound-for-EN_a-tau-Pr-hata-j}]

    We first claim that if $\tau_{n+1}^j(a)<\tau$, for all $t$ such that $\tau_{n+1}^j(a)< t<\tau_{n+1}^j(a)+1$, we have $A_t\neq a$. We prove this claim by contradiction. Assume there exists $t$ such that $\tau_n^j(a) + 1< t<\tau_{n+1}^j(a)+1$, $A_t=a$. Then, from the definition of $\tau_n^j(a)$, we know $N_j(\tau_n^j(a))+N_a(\tau_n^j(a))=n, N_j(\tau_n^j(a)+1)+N_a(\tau_n^j(a)+1)=n+1$. Since $t > \tau_n^j(a)+1$, which means $t-1\geq \tau_n^j(a)+1$, we can conclude 
    \begin{align*}
        & N_j(t)+N_a(t)\\
        = & \mathds{1}(A_t=j)+\mathds{1}(A_t=a) +N_j(t-1)+N_a(t-1)\\
        = & 1 +N_j(t-1)+N_a(t-1)\\
        \geq & 1 +N_j(\tau_n^j(a)+1)+N_a(\tau_n^j(a)+1)\\
        = & n+2.
    \end{align*}
    However, $t\leq\tau_{n+1}^j(a)$ suggests that $N_j(t)+N_a(t) \leq N_1(\tau_{n+1}^j(a))+N_a(\tau_{n+1}^j(a))=n+1$, which leads to a contradiction. The claim follows.
    
    Using the claim, we can rewrite
    \begin{align}
        & \mathbb{E}_{\nu}N_a(\tau)\notag\\
        = & \mathbb{E} \sum_{t=1}^{+\infty}\mathds{1}(A_t=a)\mathds{1}(\tau \geq t)\notag\\
        = & \mathbb{E} \sum_{n=0}^{+\infty}\mathds{1}(A_{\tau_n^j(a)+1}=a)\mathds{1}(\tau > \tau_n^j(a))\notag\\
        \geq & \sum_{n=0}^{T_j(a)-1}\Pr_{\nu}\Big(A_{\tau_n^j(a)+1}=a, \tau > \tau_n^j(a)\Big).\label{eqn:lower-ENatau}
    \end{align}
    
    From Lemma \ref{lemma:unability-to-differentiate-two-arms-when-total-pulling-small}, we know
    \begin{align*}
        \text{kl}\Bigg(\Pr_{\nu}\Big(A_{\tau_n^j(a)+1}=a| \tau > \tau_n^j(a)\Big), 1-\Pr_{\nu}\Big(A_{\tau_n^j(a)+1}=a| \tau > \tau_n^j(a)\Big)\Bigg) < 0.4
    \end{align*}
    where $\text{kl}(x, y)=x\log \frac{x}{y}+(1-y)\log\frac{1-y}{1-x}$. From Pinsker's Inequality, we can further conclude
    \begin{align*}
        & 2\Bigg(1-2\Pr_{\nu}\Big(A_{\tau_n^j(a)+1}=a| \tau > \tau_n^j(a)\Big)\Bigg)^2 < 0.4\\
        \Rightarrow & \frac{1-\sqrt{0.2}}{2} < \Pr_{\nu}\Big(A_{\tau_n^j(a)+1}=a| \tau > \tau_n^j(a)\Big)\\
        \Rightarrow & \Pr_{\nu}\Big(A_{\tau_n^j(a)+1}=a, \tau > \tau_n^j(a)\Big) > \frac{1}{4}\Pr_{\nu}\Big(\tau > \tau_n^j(a)\Big).
    \end{align*}
    Combine the last line and (\ref{eqn:lower-ENatau}), we get
    \begin{align*}
        & \mathbb{E}_{\nu}N_a(\tau)\\
        \geq & \sum_{n=0}^{T_j(a)-1}\Pr_{\nu}\Big(A_{\tau_n^j(a)+1}=a, \tau > \tau_n^j(a)\Big)\\
        \geq & \frac{1}{4}\sum_{n=0}^{T_j(a)-1}\Pr_{\nu}\Big(\tau > \tau_n^j(a)\Big)\\
        \geq & \frac{1}{4}\sum_{n=0}^{T_j(a)-1}\Pr_{\nu}\Big(N_j(\tau) > \frac{\log\frac{1}{\delta}}{100\Delta_{j,0}^2},\hat{a}=j\Big)\\
        = & \frac{T_j(a)}{4}\Pr_{\nu}\Big(N_j(\tau) > \frac{\log\frac{1}{\delta}}{100\Delta_{j,0}^2},\hat{a}=j\Big),
    \end{align*}
    which is the conclusion we wish to prove.
\end{proof}

Finally, we use Lemma \ref{lemma:lower-bound-for-EN_a-tau-Pr-hata-j} to lower bound $\mathbb{E}_{\nu}\tau$ by the optimal value of an optimization problem, as stated in Lemma \ref{lemma:lower-bound-positive-optimization-formulation}.
\begin{proof}[Proof of Lemma \ref{lemma:lower-bound-positive-optimization-formulation}]

    For any $j\in [m]$, by Lemma \ref{lemma:lower-bound-for-EN_a-tau-Pr-hata-j}, we have
    \begin{align*}
        & \mathbb{E}\tau \\
        = &  \mathbb{E} N_j(\tau) + \sum_{a\neq j}\mathbb{E} \sum_{n=0}^{+\infty}\mathds{1}(A_{\tau_n^j(a)+1}=a)\mathds{1}(\tau > \tau_n^j(a))\\
        \geq &  \mathbb{E} N_j(\tau)\mathds{1}(\hat{a}=j , N_j(\tau) > \frac{\log\frac{1}{\delta}}{100\Delta_{j,0}^2 }) + \sum_{a\neq j}\mathbb{E} \sum_{n=0}^{T_j(a)-1}\mathds{1}(A_{\tau_n^j(a)+1}=a)\mathds{1}(\tau > \tau_n^j(a))\\
        \geq & \frac{\log\frac{1}{\delta}}{100\Delta_{j,0}^2 }\Pr(\hat{a}=j , N_j(\tau) > \frac{\log\frac{1}{\delta}}{100\Delta_{j,0}^2 })+\sum_{a\neq j}\frac{1}{4} T_j(a) \Pr_{\nu}\Big(N_j(\tau) > \frac{\log\frac{1}{\delta}}{100\Delta_{j,0}^2},\hat{a}=j\Big)\\
        \geq & \frac{1}{100\Delta_{j,0}^2 }\Pr_{\nu}\Big(N_j(\tau) > \frac{\log\frac{1}{\delta}}{100\Delta_{j,0}^2},\hat{a}=j\Big)+\sum_{a\neq j}\frac{1}{4} T_j(a) \Pr_{\nu}\Big(N_j(\tau) > \frac{\log\frac{1}{\delta}}{100\Delta_{j,0}^2},\hat{a}=j\Big)\\
        \geq & \sum_{a=1}^K\frac{1}{4} T_j(a) \Pr_{\nu}\Big(N_j(\tau) > \frac{\log\frac{1}{\delta}}{100\Delta_{j,0}^2},\hat{a}=j\Big).
    \end{align*}
    The third step is by the Lemma \ref{lemma:lower-bound-for-EN_a-tau-Pr-hata-j}. Since the last line holds for any $j\in[m]$, 
    \begin{align}
        & \mathbb{E}\tau \notag\\
        \geq & \max_{1\leq j\leq m} 
        \sum_{a=1}^K\frac{1}{4} T_j(a) \Pr_{\nu}\Big(N_j(\tau) > \frac{\log\frac{1}{\delta}}{100\Delta_{j,0}^2},\hat{a}=j\Big)\notag\\
        \geq & \max_{1\leq j\leq m} 
        \sum_{a=1}^K
        \frac{\Pr_{\nu}\Big(N_j(\tau) > \frac{\log\frac{1}{\delta}}{100\Delta_{j,0}^2},\hat{a}=j\Big)}{800 \max\{\Delta_{j,0}^2, \Delta_{a,j}^2\}\left(1+\log\frac{1}{\Pr_{\nu}(N_j(\tau)\geq \frac{\log\frac{1}{\delta}}{100\Delta_{j,0}^2}, \hat{a}=j)}\right) }
        \label{eqn:Etau-lower-max1}
    \end{align}
    Meanwhile, we can also derive
    \begin{align}
        \mathbb{E}\tau 
        \geq & \sum_{j=1}^m\mathbb{E}\tau\mathds{1}(\hat{a}=j, N_j(\tau) > \frac{\log\frac{1}{\delta}}{100\Delta_{j,0}^2})\notag\\
        \geq & \sum_{j=1}^m\frac{\log\frac{1}{\delta}}{100\Delta_{j,0}^2}\Pr_{\nu}(\hat{a}=j, N_j(\tau) > \frac{\log\frac{1}{\delta}}{100\Delta_{j,0}^2})\label{eqn:Etau-lower-max2}
    \end{align}
    Merge (\ref{eqn:Etau-lower-max1}), (\ref{eqn:Etau-lower-max2}), and notice that Lemma \ref{lemma:high-prob-lower-bound-positive-instance} asserts
    \begin{align*}
        \sum_{j=1}^m \Pr_{\nu}\Big(\hat{a}=j, N_j(\tau) > \frac{\log\frac{1}{\delta}}{100\Delta_{j,0}^2}\Big) \geq \frac{1}{2},
    \end{align*}
    we know $\mathbb{E}\tau$ must be lower bounded by the optimal value of the following optimization problem.
    \begin{align*}
        \begin{array}{cl}
           \min  & \frac{1}{800}\max\left\{\sum_{j=1}^m\frac{\log\frac{1}{\delta}}{\Delta_{j,0}^2}p_j, \max_{1\leq j\leq m} 
        \sum_{a=1}^K
        \frac{p_j}{\max\{\Delta_{j,0}^2, \Delta_{a,j}^2\}\left(1+\log\frac{1}{p_j}\right) }\right\}\\
        s.t.  &  \sum_{j=1}^m p_j \geq \frac{1}{2}\\
        & 0\leq p_j \leq 1
        \end{array}
    \end{align*}
    where for any symmetric and $\delta$-PAC algorithm, $\Big\{p_j = \Pr_{\nu}\Big(\hat{a}=j, N_j(\tau) > \frac{\log\frac{1}{\delta}}{100\Delta_{j,0}^2}\Big)\Big\}_{j=1}^{m}$ is always a feasible solution. 
    
    The next step is to show $\frac{1}{\max\{\Delta_{j,0}^2, \Delta_{a,j}^2\}}\geq \frac{1}{4\max\{\Delta_{a,0}^2, \Delta_{a,j}^2\}}$ holds for $j\in [m], a\in [K]$. If $\mu_a \geq \mu_j$, we have $\frac{1}{\max\{\Delta_{j,0}^2, \Delta_{a,j}^2\}} \geq \frac{1}{\max\{\Delta_{a,0}^2, \Delta_{a,j}^2\}}$. If $\mu_j > \mu_a > \mu_0$, we have $\frac{1}{\max\{\Delta_{j,0}^2, \Delta_{a,j}^2\}} = \frac{1}{\Delta_{j,0}^2} > \frac{1}{4\max\{\Delta_{a,0}^2, \Delta_{a,j}^2\}}$. If $ \mu_0\geq \mu_a$, we have $\frac{1}{\max\{\Delta_{j,0}^2, \Delta_{a,j}^2\}} = \frac{1}{\Delta_{j,a}^2} > \frac{1}{4\max\{\Delta_{a,0}^2, \Delta_{a,j}^2\}}$. Thus, we know 
    $\mathbb{E}\tau$ must be lower bounded by the optimal value of the following optimization problem.
    \begin{align*}
        \begin{array}{cl}
           \min  & \frac{1}{3200}\max\left\{\sum_{j=1}^m\frac{\log\frac{1}{\delta}}{\Delta_{j,0}^2}p_j, \max_{1\leq j\leq m} 
        \sum_{a=1}^K
        \frac{p_j}{\max\{\Delta_{a,0}^2, \Delta_{a,j}^2\}\left(1+\log\frac{1}{p_j}\right) }\right\}\\
        s.t.  &  \sum_{j=1}^m p_j \geq \frac{1}{2}\\
        & 0\leq p_j\leq 1
        \end{array}
    \end{align*}
    Reformulate the above optimization problem, we know the optimal value of (\ref{eqn:opt-formualtion-lower-bound-positive})
    \begin{align*}
        \begin{array}{cl}
            \min & v\\
            s.t.  & v \geq \sum_{j=1}^m\frac{\log\frac{1}{\delta}}{\Delta_{j,0}^2}p_j\\
            & v \geq \sum_{a=1}^K
            \frac{p_j}{\max\{\Delta_{a,0}^2, \Delta_{a,j}^2\}\left(1+\log\frac{1}{p_j}\right) }, \forall j\in [m]\\
            & \sum_{j=1}^m p_j \geq \frac{1}{2}\\
            & 0\leq p_j \leq 1\\
            & v\geq 0.
        \end{array}
    \end{align*}
    is a lower bound of $\mathbb{E}\tau$.
\end{proof}

Given Lemma \ref{lemma:lower-bound-positive-optimization-formulation}, the last step is to prove a lower bound of the optimal value of (\ref{eqn:opt-formualtion-lower-bound-positive}).
\begin{lemma}
    \label{lemma:lower-bound-opt-of-optimization-formulation}
    Denote $\text{opt}$ as the optimal value of problem (\ref{eqn:opt-formualtion-lower-bound-positive}). We can prove
    \begin{align*}
        \text{opt}\geq \Omega\left(\min_{1\leq j \leq m} \frac{\log\frac{1}{\delta}}{\Delta_{j,0}^2}  + \frac{\frac{1}{j} }{\log^2 (m+1)}\sum_{a=1}^K \frac{1}{\max\{\Delta_{a,0}^2, \Delta_{a,j}^2\}}\right)
    \end{align*}
\end{lemma}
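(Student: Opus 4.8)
The plan is to derive the bound directly from the five constraints, which is the same as exhibiting a dual certificate by hand and spares us from writing the dual explicitly. Fix an arbitrary feasible point $(v,\{p_j\}_{j=1}^m)$ of~(\ref{eqn:opt-formualtion-lower-bound-positive}); it suffices to show $v\ge\frac{1}{C}\min_{1\le j\le m}\bigl(\tfrac{\log(1/\delta)}{\Delta_{j,0}^2}+\tfrac{H(j)}{\log^2(m+1)}\bigr)$ for an absolute constant $C$, since the optimal value then inherits this. Abbreviate $a_j:=\frac{\log(1/\delta)}{\Delta_{j,0}^2}>0$ and $S_j:=\sum_{a=1}^{K}\frac{1}{\max\{\Delta_{a,0}^2,\Delta_{a,j}^2\}}$, so that $H(j)=S_j/j$, constraint~(\ref{con:lb-pos-C3}) reads $v\ge\sum_{j}p_j a_j$, and constraint~(\ref{con:lb-pos-C5}) reads $v\ge\frac{p_j}{1+\log(1/p_j)}\,S_j$ for every $j\in[m]$. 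Note $v>0$: if $v=0$ then~(\ref{con:lb-pos-C3}) forces every $p_j=0$, contradicting~(\ref{con:lb-pos-C4}).

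First I would single out the ``cheap'' indices $J:=\{j\in[m]:a_j\le 4v\}$. From~(\ref{con:lb-pos-C3}), $v\ge\sum_{j\notin J}p_j a_j> 4v\sum_{j\notin J}p_j$, hence $\sum_{j\notin J}p_j<\tfrac14$, and combining with~(\ref{con:lb-pos-C4}) gives $\sum_{j\in J}p_j\ge\tfrac14$. Every $j\in J$ already obeys $a_j\le 4v$, so the task reduces to producing one index $j^\star\in J$ with $S_{j^\star}/j^\star\le K_0\,v\log^2(m+1)$ for an absolute constant $K_0$; for that $j^\star$ we get $a_{j^\star}+\frac{S_{j^\star}/j^\star}{\log^2(m+1)}\le(4+K_0)v$, hence $v\ge\frac{1}{4+K_0}\min_{1\le j\le m}\bigl(a_j+\frac{H(j)}{\log^2(m+1)}\bigr)$, which is the claim.

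To locate $j^\star$, suppose for contradiction that every $j\in J$ satisfies $j\,p_j\log^2(m+1)<\frac{1}{K_0}\bigl(1+\log(1/p_j)\bigr)$. Writing $M_j:=K_0\,j\log^2(m+1)$, this says $M_j p_j+\log p_j<1$; since $p\mapsto M_jp+\log p$ is strictly increasing on $(0,1]$ and already exceeds $1$ at $p=\frac{c_1\log M_j}{M_j}$ — a one-line monotonicity check, valid since $K_0$ large makes $M_j$ exceed a fixed constant — it forces $p_j<\frac{c_1\log M_j}{M_j}$. Summing over $j\in J\subseteq[m]$ and using the elementary bounds $\sum_{j=1}^m\frac1j\le 2\log(m+1)$, $\sum_{j=1}^m\frac{\log j}{j}\le\log^2(m+1)$, and $\log\log(m+1)\le\log(m+1)$ yields $\tfrac14\le\sum_{j\in J}p_j<\frac{c_1}{K_0\log^2(m+1)}\sum_{j=1}^m\frac{\log M_j}{j}\le\frac{C_0\log K_0}{K_0}$ for an absolute constant $C_0$, which is below $\tfrac14$ once $K_0$ is a large enough absolute constant — a contradiction. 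Therefore some $j^\star\in J$ has $\frac{1+\log(1/p_{j^\star})}{j^\star p_{j^\star}}\le K_0\log^2(m+1)$, and substituting into~(\ref{con:lb-pos-C5}) gives $\frac{S_{j^\star}}{j^\star}\le\frac{v\,(1+\log(1/p_{j^\star}))}{j^\star p_{j^\star}}\le K_0\,v\log^2(m+1)$, as required.

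I expect the main obstacle to be the constant bookkeeping in the summation step: one must check that $\sum_{j=1}^{m}\frac{\log(K_0\,j\log^2(m+1))}{j}$ — which expands into a harmonic sum, $\sum_j(\log j)/j$, and a $\log\log(m+1)$ contribution (the last possibly negative for small $m$, and the $\log K_0$ part handled by noting $\log(m+1)$ is bounded below) — is bounded by $\frac{C_0\log K_0}{c_1}\log^2(m+1)$ uniformly over $m\ge 1$, so that the comparison between the fixed $\tfrac14$ and a quantity vanishing as $K_0\to\infty$ genuinely closes the contradiction and the exponent $2$ on $\log(m+1)$ is matched exactly. For completeness I would also record that~(\ref{eqn:opt-formualtion-lower-bound-positive}) is convex — $p\mapsto p/(1+\log(1/p))$ has positive second derivative on $(0,1]$, so~(\ref{con:lb-pos-C5}) cuts out a convex set — which is not needed above but shows that the strong-duality route would reach the same bound through an explicit dual-feasible point.
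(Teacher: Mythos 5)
Your proposal is correct, and it reaches the bound by a genuinely different route from the paper. The paper proves convexity of $p\mapsto p/(1+\log(1/p))$, forms the Lagrangian of (\ref{eqn:opt-formualtion-lower-bound-positive}), and exhibits an explicit dual-feasible point $\alpha^0=\tfrac12$, $\beta_j^0\propto \tfrac1j$, $\gamma^0$ equal (up to constants) to the target quantity, then lower-bounds the inner minimization over each $p_j$ by $-\gamma^0/(100m)$ to conclude $\mathrm{opt}\ge \gamma^0/4$. You instead argue entirely in the primal: any feasible $(v,\{p_j\})$ has $v>0$; the indices with $a_j>4v$ carry mass at most $\tfrac14$ by (\ref{con:lb-pos-C3}), so by (\ref{con:lb-pos-C4}) the cheap set $J$ carries mass at least $\tfrac14$; and a contradiction/pigeonhole over $J$ using (\ref{con:lb-pos-C5}) and the monotonicity of $p\mapsto M_jp+\log p$ produces an index $j^\star$ at which both terms of the complexity measure are $O(v\log^2(m+1))$. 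Your constant bookkeeping closes: $\sum_{j\le m}\frac{\log(K_0\,j\log^2(m+1))}{j}\le C_0'\bigl(\log K_0\cdot\log(m+1)+\log^2(m+1)\bigr)$, and after dividing by $K_0\log^2(m+1)$ and using $\log(m+1)\ge\log 2$ the right-hand side is $O\bigl(\tfrac{\log K_0}{K_0}\bigr)$, which drops below $\tfrac14$ for $K_0$ a large absolute constant. What each approach buys: yours is more elementary and self-contained (neither convexity nor duality is invoked, and it isolates transparently where the two factors of $\log(m+1)$ arise --- one from the harmonic sum over $j\in J$, one from $\log(1/p_j)\lesssim\log M_j$); the paper's dual certificate, with weights $\beta_j^0\propto 1/j$, makes visible the correspondence between the lower bound and the harmonic bracket weighting used in the upper-bound analysis. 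The two arguments yield the same order of bound.
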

\begin{proof}[Proof of Lemma \ref{lemma:lower-bound-opt-of-optimization-formulation}]

    We first show function $f(p)=\frac{p}{1+\log\frac{1}{p}}$ is convex for $p \in (0, 1]$. Easy to calculate
    \begin{align*}
        \frac{\partial \frac{p}{1+\log\frac{2}{p}}}{\partial p} = & \frac{1+\log\frac{2}{p}-p(-\frac{1}{p})}{(1+\log\frac{2}{p})^2}\\
        = & \frac{1}{1+\log\frac{2}{p}} + \frac{1}{(1+\log\frac{2}{p})^2}\\
        \frac{\partial^2 \frac{p}{1+\log\frac{2}{p}}}{\partial p^2} = & \left(-\frac{1}{(1+\log\frac{2}{p})^2}-\frac{2}{(1+\log\frac{2}{p})^3}\right)\left(-\frac{1}{p}\right)\\
        = & \left(\frac{1}{(1+\log\frac{2}{p})^2}+\frac{2}{(1+\log\frac{2}{p})^3}\right)\frac{1}{p}.
    \end{align*}
    It is straight forward to see $\frac{\partial^2 \frac{p}{1+\log\frac{2}{p}}}{\partial p^2}>0$ holds for all $p\in (0, 1]$, suggesting that function $\frac{p}{1+\log\frac{1}{p}}$ is convex, further indicating (\ref{eqn:opt-formualtion-lower-bound-positive}) is a convex optimization problem.

    Introduce dual variables $\alpha\geq 0,\gamma\geq 0, \{\beta_j\}_{j=1}^m \in \mathbb{R}_+^m$ and Lagrangian function
    \begin{align*}
        & L(\alpha,\gamma, \{\beta_j\}_{j=1}^m; \{p_j\}_{j=1}^m, v)\\
        = & v + \left(\sum_{j=1}^m\frac{\log\frac{1}{\delta}}{\Delta_{j,0}^2}p_j-v\right)\alpha +\\
        & \sum_{j=1}^m \left(\sum_{a=1}^K \frac{p_j}{\max\{\Delta_{a,0}^2, \Delta_{a,j}^2\}\left(1+\log\frac{1}{p_j}\right) }-v\right)\beta_j + (\frac{1}{2}-\sum_{j=1}^m p_j)\gamma\\
        = & v(1-\alpha-\sum_{j=1}^m\beta_j) + \sum_{j=1}^m\left(\frac{\log\frac{1}{\delta}}{\Delta_{j,0}^2}p_j + \beta_j\sum_{a=1}^K \frac{p_j}{\max\{\Delta_{a,0}^2, \Delta_{a,j}^2\}\left(1+\log\frac{1}{p_j}\right) }-\gamma p_j\right) + \frac{\gamma}{2}.
    \end{align*}
    
    Take $\alpha^0 = \frac{1}{2}$, $\beta_j^0 = \frac{\frac{1}{2j} }{2\sum_{s=1}^m \frac{1}{2s}}$, $\gamma^0 = \min\limits_{1\leq j \leq m} \frac{\log\frac{1}{\delta}}{\Delta_{j,0}^2}\frac{1}{2}\cdot  + \frac{\frac{1}{2j}\sum_{a=1}^K \frac{1}{\max\{\Delta_{a,0}^2, \Delta_{a,j}^2\}} }{(1+\log (m+1))(1+\log 200 +\log m)}$. Since (\ref{eqn:opt-formualtion-lower-bound-positive}) is a convex optimization problem, by the strong duality, we can assert 
    \begin{align*}
        \text{opt} = & \max_{\alpha\geq 0,\gamma\geq 0, \{\beta_j\}_{j=1}^m \in \mathbb{R}_+^m} \min_{\{p_j\}_{j=1}^m\in [0,1]^{m}, v\geq 0}L(\alpha,\gamma, \{\beta_j\}; \{p_j\}, v)\\
        \geq & \min_{\{p_j\}_{j=1}^m\in [0,1]^{m}, v\geq 0}L(\alpha^0,\gamma^0, \{\beta_j^0\}; \{p_j\}, v).
    \end{align*}
    In the following, we suffice to prove
    \begin{align}
        & \min_{\{p_j\}\in [0,1]^{m}, v\geq 0}L(\alpha^0,\gamma^0, \{\beta_j^0\}_{j=1}^m; \{p_j\}_{j=1}^m, v)\notag\\
        \geq & \Omega\left(\min_{1\leq j \leq m} \frac{\log\frac{1}{\delta}}{\Delta_{j,0}^2}  + \frac{\frac{1}{j} }{\log^2 (m+1)}\sum_{a=1}^K \frac{1}{\max\{\Delta_{a,0}^2, \Delta_{a,j}^2\}}\right).\label{eqn:lower-bound-dual-feasible}
    \end{align}
    We start the proof of (\ref{eqn:lower-bound-dual-feasible}) by claiming
    \begin{align}
        \min_{1\geq p_j\geq 0}\left(\frac{\log\frac{1}{\delta}}{\Delta_{j,0}^2} \alpha^0\cdot p_j + \frac{p_j}{1+\log\frac{2}{p_j}}\sum_{a=1}^K \frac{1}{\max\{\Delta_{a,0}^2, \Delta_{a,j}^2\}}\cdot\beta_j^0 - \gamma^0 p_j\right) \geq -\frac{\gamma^0}{100m}\label{eqn:claim-for-minimized-value}
    \end{align}
    holds for all $j\in [m]$. Define $\bar{L}_j(p)=\frac{\log\frac{1}{\delta}}{\Delta_{j,0}^2} \alpha^0\cdot p + \frac{p}{1+\log\frac{2}{p}}\sum_{a=1}^K \frac{1}{\max\{\Delta_{a,0}^2, \Delta_{a,j}^2\}}\cdot\beta_j^0 - \gamma^0 p$ and $ p^*_j = \arg\min\limits_{1\geq p\geq 0} \bar{L}_j(p)$. Not hard to validate $\bar{L}_j(p)$ is a convex function for $p\in[0, 1]$, and there are two cases.

    If $\frac{\partial \bar{L}_j(p)}{\partial p}\geq 0$ holds for all $0\leq p\leq 1$, we have
    \begin{align*}
        \min_{1\geq p\geq 0}\bar{L}_j(p) = \lim_{p\rightarrow 0}\bar{L}_j(p) = 0 > -\frac{\gamma^0}{100m},
    \end{align*}
    suggesting that the claim (\ref{eqn:claim-for-minimized-value}) is true.

    If $\frac{\partial \bar{L}_j(p)}{\partial p}< 0$ for some $p$, by the fact that $\bar{L}_j(p)$ is convex, we can assert there exists $p_0\in(0,1)$ such that $\bar{L}_j(p)$ is decreasing in the interval $(0, p_0)$, and $\bar{L}_j(p^*_j) < 0$. Then, we can prove $p^*_j<\frac{1}{100 m}$ by contradiction. If $p^*_j\geq \frac{1}{100 m}$, we have
    \begin{align*}
        & \frac{\log\frac{1}{\delta}}{\Delta_{j,0}^2} \alpha^0\cdot p^*_j + \frac{p^*_j}{1+\log\frac{2}{p^*_j}}\sum_{a=1}^K \frac{1}{\max\{\Delta_{a,0}^2, \Delta_{a,j}^2\}}\cdot\beta_j^0 - \gamma^0 p^*_j\\
        \geq & \frac{\log\frac{1}{\delta}}{\Delta_{j,0}^2} \alpha^0\cdot p^*_j + \frac{p^*_j}{1+\log 200 + \log m}\sum_{a=1}^K \frac{1}{\max\{\Delta_{a,0}^2, \Delta_{a,j}^2\}}\cdot\beta_j^0 - \gamma^0 p^*_j\\
        = & p^*_j\left(\frac{\log\frac{1}{\delta}}{\Delta_{j,0}^2} \alpha^0 + \frac{\sum_{a=1}^K \frac{1}{\max\{\Delta_{a,0}^2, \Delta_{a,j}^2\}}\cdot\beta_j^0}{1+\log 200 + \log m} - \gamma^0 \right)\\
        \geq & p^*_j\left(\frac{\log\frac{1}{\delta}}{\Delta_{j,0}^2} \frac{1}{2} + \frac{\sum_{a=1}^K \frac{1}{\max\{\Delta_{a,0}^2, \Delta_{a,j}^2\}}\frac{\frac{1}{2j}}{2\sum_{s=1}^m\frac{1}{2s} }}{1+\log 200 + \log m} - \frac{\log\frac{1}{\delta}}{\Delta_{j,0}^2} \frac{1}{2}- \frac{\sum_{a=1}^K \frac{1}{2j\max\{\Delta_{a,0}^2, \Delta_{a,j}^2\}}}{(1+\log 200 + \log m)(1+ \log (m+1))} \right)\\
         > & 0.
    \end{align*}
    The second last line is by the definition of $\gamma^0$. The last line is by the fact that $2\sum_{s=1}^m \frac{1}{2s} < 1+\log (m+1)$ for $m\geq 1$. This last line contradicts with the fact that $\bar{L}_j(p^*_j) < 0$. Thus, we can conclude $p^*_j< \frac{1}{100 m}$, further
    \begin{align*}
        & \frac{\log\frac{1}{\delta}}{\Delta_{j,0}^2} \alpha^0\cdot p^*_j + \frac{p^*_j}{1+\log\frac{2}{p^*_j}}\sum_{a=1}^K \frac{1}{\max\{\Delta_{a,0}^2, \Delta_{a,j}^2\}}\cdot\beta_j^0 - \gamma^0 p^*_j\\
        \geq & - \gamma^0 p^*_j\\
        \geq & -\frac{\gamma^0}{100 m}.
    \end{align*}
    That means in the case $\frac{\partial \bar{L}_j(p)}{\partial p}< 0$, we can still conclude the claim (\ref{eqn:claim-for-minimized-value}) is true.

    Based on the (\ref{eqn:claim-for-minimized-value}), we have
    \begin{align*}
        & \min_{\{p_j\}_{j=1}^m\in [0,1]^{m}, v\geq 0}L(\alpha^0,\gamma^0, \{\beta_j^0\}; \{p_j\}, v)\\
        = & \min_{v\geq 0}v(1-\alpha^0-\sum_{j=1}^m\beta_j^0) + \\
        & \sum_{j=1}^m\min_{1\geq p_j\geq 0}\left(\frac{\log\frac{1}{\delta}}{\Delta_{j,0}^2}p_j + \beta_j^0\sum_{a=1}^K \frac{p_j}{\max\{\Delta_{a,0}^2, \Delta_{a,j}^2\}\left(1+\log\frac{1}{p_j}\right) }-\gamma^0 p_j\right) + \frac{\gamma^0}{2}\\
        \geq & \sum_{j=1}^m-\frac{\gamma^0}{100m}+ \frac{\gamma^0}{2}\\
        \geq & \frac{\gamma^0}{4}.
    \end{align*}
    The second last line is by the fact that $1-\alpha^0-\sum_{j=1}^m\beta_j^0 = 1-\frac{1}{2}-\sum_{j=1}^m\frac{\frac{1}{2j} }{2\sum_{s=1}^m \frac{1}{2s}} = 0$ and the claim (\ref{eqn:claim-for-minimized-value}). From the definition of $\gamma^0$, we know 
    \begin{align*}
        & \min_{\{p_j\}\in [0,1]^{m}, v\geq 0}L(\alpha^0,\gamma^0, \{\beta_j^0\}; \{p_j\}, v)\\
        \geq & \Omega\left(\min_{1\leq j \leq m} \frac{\log\frac{1}{\delta}}{\Delta_{j,0}^2}  + \frac{\frac{1}{j} }{\log^2 (m+1)}\sum_{a=1}^K \frac{1}{\max\{\Delta_{a,0}^2, \Delta_{a,j}^2\}}\right),
    \end{align*}
    which also completes the proof.
\end{proof}

\section{Proof of Upper Bound}
\label{sec:Proof-of-Upper-Bound}

\subsection{Rigorous Presentation of Algorithm}
Introduce
\begin{align}
    \label{eqn:definition-U_t_delta}
    U(t, \delta)=\frac{\sqrt{2 \cdot 2^{\lceil\log_2 t\rceil^+}\log\frac{2 (\lceil\log_2t\rceil^+)^2}{\delta}}}{t}.
\end{align}
where $\lceil x\rceil:=\max\{x, 1\}$.

Before presenting the full statement of the algorithm, we first introduce some notations
\begin{equation}\label{eqn:Initialize-Parameters-in-SEE}
    \begin{aligned}
        & N_a^{\text{ee}}(\mathcal{H}^{ee}) =  \sum_{s=1}^{|\mathcal{H}^{ee}|}\mathds{1}(A_s(\mathcal{H}^{\text{ee}})=a), N_a^{\text{et}}(\mathcal{H}^{et}) = \sum_{s=1}^{|\mathcal{H}^{et}|}\mathds{1}(A_s(\mathcal{H}^{\text{et}})=a)\\
        & \hat{\mu}_a^{\text{ee}}(\mathcal{H}^{ee}) = \frac{\sum_{s=1}^{|\mathcal{H}^{ee}|}\mathds{1}(A_s(\mathcal{H}^{\text{ee}})=a) X_s(\mathcal{H}^{\text{ee}}) }{N_a^{\text{ee}}(\mathcal{H}^{ee})}, \\
        & \hat{\mu}_a^{\text{et}}(\mathcal{H}^{et}) =\frac{\sum_{s=1}^{|\mathcal{H}^{et}|}\mathds{1}(A_s(\mathcal{H}^{\text{et}})=a) X_s(\mathcal{H}^{\text{et}}) }{N_a^{\text{et}}(\mathcal{H}^{et})}\\
        & \text{UCB}^{\text{ee}}_a(\mathcal{H}^{ee},\delta) = \hat{\mu}_a^{\text{ee}}(\mathcal{H}^{ee}) + U(N_a^{\text{ee}}(\mathcal{H}^{ee}), \frac{\delta}{K})\\
        & \text{LCB}^{\text{ee}}_a(\mathcal{H}^{ee},\delta) = \hat{\mu}_a^{\text{ee}}(\mathcal{H}^{ee}) - C\cdot U(N_a^{\text{ee}}(\mathcal{H}^{ee}), \frac{\delta}{K})\\
        & \text{UCB}^{\text{et}}_a(\mathcal{H}^{et},\delta) = \hat{\mu}_a^{\text{et}}(\mathcal{H}^{et}) + U(N_a^{\text{et}}(\mathcal{H}^{ee}), \frac{\delta}{K})\\
        & \text{LCB}^{\text{et}}_a(\mathcal{H}^{et},\delta) = \hat{\mu}_a^{\text{et}}(\mathcal{H}^{et}) - U(N_a^{\text{et}}(\mathcal{H}^{ee}), \frac{\delta}{K})
    \end{aligned}
\end{equation}
We treat $\mathcal{H}^{\mathrm{ee}}$ and $\mathcal{H}^{\mathrm{et}}$ as stacks with indexed entries: $A_s(\mathcal{H}^{\mathrm{ee}})$ and $X_s(\mathcal{H}^{\mathrm{ee}})$ denote, respectively, the action and reward stored at index $s$ in $\mathcal{H}^{\mathrm{ee}}$, and analogously for $\mathcal{H}^{\mathrm{et}}$.
When $N_a^{\mathrm{ee}}(\mathcal{H}^{\mathrm{ee}})=0$ for some $a\in[K]$, we set $\mathrm{UCB}^{\mathrm{ee}}_a=+\infty$ and $\mathrm{LCB}^{\mathrm{ee}}_a=-\infty$; the same convention applies to $\mathrm{UCB}^{\mathrm{et}}_a$ and $\mathrm{LCB}^{\mathrm{et}}_a$ when $N_a^{\mathrm{et}}(\mathcal{H}^{\mathrm{et}})=0$.

With this notation in place, we present Algorithm~\ref{alg:SEE-with-Bracket}, adapted from~\cite{pmlr-v267-li25f}. Our analysis follows closely Section~4 and Appendix~B of~\cite{pmlr-v267-li25f}; nevertheless, we provide the full algorithmic description and proofs for completeness. In the subsequent subsections, we discuss the algorithm line by line and summarize our modifications relative to the prior literature in Appendix~\ref{sec:Adaptation-of-History-Literature}.

\begin{algorithm}
    \caption{Sequential Exploration Exploitaion (SEE)}
    \label{alg:SEE-with-Bracket}
    {\bfseries Input:} Bracket $B$, Threshold $\mu_0$, Tolerance level $\delta$, Tunable parameter $C>1$, Arm number $K$\; \label{alg-line:input-parameter}
    
    {\bfseries Initialize:} $\delta_k=\frac{1}{3^k}, \beta_k=2^k,\alpha_k=5^k, \forall k\in \mathbb{N}$, History $\mathcal{H}^{\text{ee}}=\mathcal{H}^{\text{et}}=\emptyset$, Temporary Container $Q=\emptyset$. Introduce notations in (\ref{eqn:Initialize-Parameters-in-SEE}), $t^{\text{ee}}=t^{\text{et}}=0$\;\label{alg-line:initialize-parameter}
    \For{phase index $k=1,2,\cdots$}{\label{alg-line:start-of-new-phase}
        \tcc{Phase $k$ starts}
        \While{True}{ \label{alg-line:while-loop-start}
            \If{$t^{\text{ee}}\geq 1$ and $\text{LCB}^{\text{ee}}_{A_t^{\text{ee}}}(\mathcal{H}^{\text{ee}},\delta_k)\geq \mu_0$}{\label{alg-line:rough-guess-of-hat-a_k}
                Take $\hat{a}_k\gets A_t^{\text{ee}}$\; \label{alg-line:hata_k-value2}
                $\mathcal{H}^{\text{ee}} \gets \mathcal{H}^{\text{ee}}\setminus\{(A^{\text{ee}}_{t}, X)\}$, $N_{A^{\text{ee}}_{t}}^{\text{ee}} \gets N_{A^{\text{ee}}_{t}}^{\text{ee}}-1$, $Q \gets Q\cup\{(A^{\text{ee}}_{t}, X)\}$, $\hat{a}\gets A_t^{\text{ee}}$\;\label{alg-line:move-latest-samples-into-Q}
                \While{$N_{\hat{a}}^{\text{et}} \leq \frac{(C+3)^2}{(C-1)^2}\beta_k\log\frac{4K\alpha_k}{\delta}-1$}{\label{alg-line:stop-condition-in-exploitation}
                    Sample $X \sim \rho_{\hat{a}}$, $\mathcal{H}^{\text{et}} \gets \mathcal{H}^{\text{et}}\cup\{(\hat{a}, X)\}$, $N_{\hat{a}}^{\text{et}} \gets N_{\hat{a}}^{\text{et}}+1$, $t^{\text{et}}\gets t^{\text{et}}+1$\;
                    \If{$\text{LCB}^{\text{et}}_{\hat{a}}(\mathcal{H}^{\text{et}}, \frac{\delta}{\alpha_k}) > \mu_0$ }{\label{alg-line:stop-condition-exploitation}
                        Return $\hat{a}$ as a qualified arm\;\label{alg-line:terminate-and-output-arm}
                    }
                }
                Break\tcp*[l]{Enter phase $k+1$}\label{alg-line:exploitation-ends}
            }
            \If{$\forall a\in B$, $\text{UCB}^{\text{ee}}_a(\mathcal{H}^{\text{ee}},\delta_k)\leq \mu_0$}{\label{alg-line:Negative-Output-Condition}
                \eIf{$\delta_k\leq \frac{\delta}{3}$ and $|B|=K$}{\label{alg-line:Negative-branch-output} 
                    Take $\hat{a}_k=\textsf{None}$ and return \textsf{None}\;\label{alg-line:Negative-branch-output-None}
                    \tcc{Only the bracket with size $K$ is allowed to output \textsf{None}} 
                }{
                    Take $\hat{a}_k=\textsf{Not Completed}$ and break\tcp*[l]{Enter phase $k+1$} \label{alg-line:hata_k-value1}
                }
            }
            \If{$N_a^{\text{ee}} > (C+1)^2\beta_k \log\frac{4K}{\delta_k}-1, \forall a\in B$}{\label{alg-line:run-out-of-budget}
                 Take $\hat{a}_k=\textsf{Not Completed}$ and break\tcp*[l]{Enter phase $k+1$}\label{alg-line:hata_k-value3}
            }\label{alg-line:Conditions-Terminate-Exploration-Ends}
            Determine $A^{\text{ee}}_{t}=\arg\max\limits_{a\in B, N_a^{\text{ee}}\leq (C+1)^2\beta_k \log\frac{4K}{\delta_k}-1}\text{UCB}_a(\mathcal{H}^{\text{ee}},\delta_k)$\;\label{line-alg:maximum-N_a-ee}
            \eIf{$\exists (a, X)\in Q$ such that $a=A^{\text{ee}}_{t}$}{\label{alg-line:transfer-Q-to-H-ee-start}
                $\mathcal{H}^{\text{ee}} \gets \mathcal{H}^{\text{ee}}\cup\{(a, X)\}$, $N_{A^{\text{ee}}_{t}}^{\text{ee}}\gets N_{A^{\text{ee}}_{t}}^{\text{ee}}+1$, $Q\gets Q\setminus\{(a, X)\} $\; 
            }{
                Sample $X \sim \rho_{A_t^{\text{ee}}}$, $\mathcal{H}^{\text{ee}} \gets \mathcal{H}^{\text{ee}}\cup\{(A^{\text{ee}}_{t}, X)\}$, $N_{A^{\text{ee}}_{t}}^{\text{ee}} \gets N_{A^{\text{ee}}_{t}}^{\text{ee}}+1$, $t^{\text{ee}}\gets t^{\text{ee}}+1$\;
            }\label{alg-line:exploration-arm-pull}
        }
        Take $\tau^{\text{ee}}_k(B)\gets t^{\text{ee}}, \tau^{\text{et}}_k(B)\gets t^{\text{et}}$\tcp*[l]{Independent of algorithm design, only for analysis} \label{alg-line:phase-ends}
        \tcc{Phase $k$ ends}
    }\label{alg-line:enter-new-phase}
\end{algorithm}

\subsubsection{From Line \ref{alg-line:input-parameter} to \ref{alg-line:initialize-parameter}: Input and Initialize Parameter}
\
Algorithm \ref{alg:SEE-with-Bracket} takes a bracket $B$, a threshold $\mu_0$, a tolerance level $\delta$, the number of arms $K$, tunable parameter $C$ as input, and only pull arms in $B$. 
The parameters $\mu_0$ and $K$ coincide with those used by Algorithm~\ref{alg:Parallel-SEE-on-Bracket}. When Algorithm~\ref{alg:Parallel-SEE-on-Bracket} calls Algorithm~\ref{alg:SEE-with-Bracket}, it sets $\delta \gets \delta/(\lceil \log_2 K\rceil+1)$. We use $C=1.01$ by default, although any value strictly larger than $1$ is admissible.

In addition to the problem parameters, \ref{alg:SEE-with-Bracket} also initialize some parameters to manage the pulling process. Parameter $\{\delta_k=\frac{1}{3^k}, \beta_k=2^k,\alpha_k=5^k\}_{k=1}^{+\infty}$ are used to set up the tolerance level and budget during the pulling process. It is possible to specify other values for $\delta_k, \beta_k, \alpha_k$, see Appendix B.1 in \cite{pmlr-v267-li25f}. 
The containers $\mathcal{H}^{\mathrm{ee}}$, $\mathcal{H}^{\mathrm{et}}$, and $Q$ store samples collected throughout the process. In particular, $\mathcal{H}^{\mathrm{ee}}$ and $\mathcal{H}^{\mathrm{et}}$ are ordered stacks that preserve the arrival order of samples for each arm in $B$, which allows us to remove and transfer the most recently collected sample when needed. The role of $Q$ is to ensure that $\mathrm{LCB}^{\mathrm{ee}}_a$ cannot be greater than $\mu_0$ at the beginning of each phase; see Section~\ref{sec:process-exploration}.

\subsubsection{Line \ref{alg-line:start-of-new-phase} to Line \ref{alg-line:enter-new-phase}: Overall Idea of Algorithm \ref{alg:SEE-with-Bracket} }
At the start of Algorithm \ref{alg:SEE-with-Bracket}, it splits the arm distributions into two parallel and independent copies. One copy is for exploration period, and another copy is for exploitation period. Correspondingly, reward collected in the exploration periods are stored in $\mathcal{H}^{\text{ee}}$. Reward collected in the exploration periods, which is from Line \ref{alg-line:hata_k-value2} to Line \ref{alg-line:exploitation-ends}, are stored in $\mathcal{H}^{\text{et}}$. 

After specifying the parameters through Line \ref{alg-line:input-parameter} and \ref{alg-line:initialize-parameter}, Algorithm \ref{alg:SEE-with-Bracket} splits the pulling process into multiple phases, noted by the For loop in Line \ref{alg-line:start-of-new-phase}. In each phase $k$, Algorithm \ref{alg:SEE-with-Bracket} consists of two parts. 
Each phase consists of two components executed within the \textbf{while}-loop in Line~\ref{alg-line:while-loop-start}:
\begin{itemize}
\item a sequence of stopping/transition checks (Lines~\ref{alg-line:rough-guess-of-hat-a_k}--\ref{alg-line:Conditions-Terminate-Exploration-Ends}), and
\item an \emph{exploration period} (Lines~\ref{line-alg:maximum-N_a-ee}--\ref{alg-line:exploration-arm-pull}) that applies a UCB rule (as in~\cite{jamieson2014best}) to select the next arm to sample.
\end{itemize}
Although the loop condition is \textbf{True}, the stopping/transition checks ensure that the algorithm cannot run indefinitely; see Section~\ref{sec:Stopping-Conditions-Exploration}.

Within phase $k$, exploration uses tolerance level $\delta_k$ to form UCB/LCB bounds, while exploitation uses tolerance level $\delta/\alpha_k$ and samples only a single candidate arm.
Once $\hat{a}$ takes value in $[K]\cup \{\textsf{None}\}$, Algorithm \ref{alg:SEE-with-Bracket} will terminate. 
Moreover, each phase enforces explicit sampling budgets: during exploration, each arm $a\in B$ can be sampled at most $(C+1)^2\beta_k\log\frac{4K}{\delta_k}$ times, while during exploitation the candidate arm is sampled at most $\frac{(C+3)^2}{(C-1)^2}\beta_k\log\frac{4K\alpha_k}{\delta}$ times. Further details appear in Sections~\ref{sec:Stopping-Conditions-Exploration} and~\ref{sec:process-exploration}.

\subsubsection{Line \ref{alg-line:rough-guess-of-hat-a_k} to \ref{alg-line:Conditions-Terminate-Exploration-Ends}: Conditions on Phase Transition}
\label{sec:Stopping-Conditions-Exploration}
There are three possible conditions to end the exploration period in a phase $k$. The first one is Line \ref{alg-line:rough-guess-of-hat-a_k}, judge whether the algorithm should enter the exploitation period. Intuitively speaking, the inequality $\text{LCB}^{\text{ee}}_{A_t^{\text{ee}}}(\mathcal{H}^{\text{ee}},\delta_k)\geq \mu_0$ provides a recommended qualified arm with confidence $1-\Theta(\delta_k)$. Since $A_t^{\text{ee}}$ is the action we take in the last round, this condition also guarantees $\text{LCB}^{\text{ee}}_{A_t^{\text{ee}}}(\mathcal{H}^{\text{ee}},\delta_k) < \mu_0$ holds before the collecting the latest sample of $A_t^{\text{ee}}$. To further confirm whether arm $A_t^{\text{ee}}$ is qualified, we need to keep pulling arm $A_t^{\text{ee}}$ by collecting samples independent of the exploration periods. Line \ref{alg-line:stop-condition-exploitation} suggests that the algorithm will terminate and output $A_t^{\text{ee}}$, if the LCB calculated by the tolerance level $\delta/\alpha_k$ is above $\mu_0$ during the exploitation process. Before entering the while loop (Line \ref{alg-line:stop-condition-in-exploitation}) in the exploitation period, Line \ref{alg-line:move-latest-samples-into-Q} transfer the latest collected sample of $A_t^{\text{ee}}$ in the exploration period into $Q$. Usage of $Q$ is to guarantee Lemma \ref{lemma:LCB-below-mu0-at-start-of-phase}, requiring $\text{LCB}_a^{\text{ee}} < \mu_0$ holds for all $a\in B$ at the start of phase $k$. Similar to the reason described in the section 4.1 of \cite{pmlr-v267-li25f}, Lemma \ref{lemma:LCB-below-mu0-at-start-of-phase} is required to prove Lemma \ref{lemma:output-corretness-of-positive-instance}, further Theorem \ref{theorem:Parallel-SEE-Etau-upper-bound}. We put more discussion of $Q$ to section \ref{sec:process-exploration}.



The second condition is Line \ref{alg-line:Negative-Output-Condition}, aiming to output $\textsf{None}$ for negative case. To be specific, if the Upper Confidence Bound(UCB) in the exploration periods are all below $\mu_0$, Algorithm \ref{alg:SEE-with-Bracket} will leave the exploration periods in phase $k$. Then, the condition in Line \ref{alg-line:Negative-Output-Condition} determines the next period. If the bracket $B$ is indeed $[K]$ and the tolerance level $\delta_k < \frac{\delta}{3}$, Algorithm \ref{alg:SEE-with-Bracket} will terminate the overall pulling process and output $\textsf{None}$. In other cases, Algorithm \ref{alg:SEE-with-Bracket} will enter the exploration period in the next phase $k+1$. In the case $\max_{a\in B}\mu_a < \mu_0$ and $|B|<K$, it is possible that Algorithm \ref{alg:SEE-with-Bracket} never break the for loop in Line \ref{alg-line:start-of-new-phase}.

The third condition is Line \ref{alg-line:run-out-of-budget}, which is correlated to the budget of the exploration period. In phase $k$, each arm $a\in B$ is assigned an exploration budget of $(C+1)^2\beta_k\log\frac{4K}{\delta_k}$.
We only consider pull $A^{\text{ee}}_{t}$ from those arms whose total pulling times from phase 1 to phase $k$ is below $(C+1)^2\beta_k \log\frac{4K}{\delta_k}$ in Line \ref{line-alg:maximum-N_a-ee}. 
If all arms in $B$ have reached their budget (up to the $-1$ offset in the implementation), the algorithm sets $\hat{a}_k=\textsf{Not Completed}$ and transitions to phase $k+1$. Together, Lines~\ref{alg-line:run-out-of-budget} and~\ref{line-alg:maximum-N_a-ee} ensure that the \textbf{while}-loop in Line~\ref{alg-line:while-loop-start} cannot become an infinite loop.

\subsubsection{Line \ref{line-alg:maximum-N_a-ee} to \ref{alg-line:exploration-arm-pull}: Exploration}
\label{sec:process-exploration}

In phase $k$, Algorithm \ref{alg:SEE-with-Bracket} use tolerance level $\delta_k$ to calculate Upper Confidence Bound(UCB) according to (\ref{eqn:Initialize-Parameters-in-SEE}) and determine the arm $A_t^{\text{ee}}$ by Line \ref{line-alg:maximum-N_a-ee}. Line \ref{alg-line:transfer-Q-to-H-ee-start} to \ref{alg-line:exploration-arm-pull} determine where to collect sample of arm $A_t^{\text{ee}}$. If $Q$ contains one sample of arm $A_t^{\text{ee}}$, Algorithm \ref{alg:SEE-with-Bracket} will transfer this sample to $\mathcal{H}^{\text{ee}}$ without collecting a new sample from distribution $\nu_{A_t^{\text{ee}}}$. If not, Algorithm \ref{alg:SEE-with-Bracket} samples a new random variable from $\nu_{A_t^{\text{ee}}}$.

As noted above, the purpose of $Q$ is to enforce Lemma \ref{lemma:LCB-below-mu0-at-start-of-phase}, which requires $\mathrm{LCB}^{\mathrm{ee}}_a<\mu_0$ for all $a\in B$ at the start of each phase. One might ask whether it suffices to simply discard the most recent exploration sample instead of storing it in $Q$. \textbf{This is not sufficient.} The transfer mechanism preserves the per-arm arrival order of samples, which is needed for the following property.
\begin{lemma}
    \label{lemma:order-of-empirical-mean}
    Given an algorithm copy $\text{alg}$ taking bracket $B$ as input. Condition on reward sequence in the exploration period $\{X_{a,s}^{\text{ee}}\}_{s=1}^{+\infty}$, for each $a\in B$, we can assert $\hat{\mu}_a(\mathcal{H}^{\text{ee}}) \in \{\frac{\sum_{s=1}^t X_{a,s}^{\text{ee}}} {t}\}_{t=1}^{+\infty}$ throughout the pulling process.
\end{lemma}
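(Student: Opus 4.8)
The plan is to fix an arm $a \in B$ and a realization of the fresh-draw stream $\{X_{a,s}^{\text{ee}}\}_{s\ge 1}$, and then argue deterministically about how the per-arm contents of the stacks $\mathcal{H}^{\text{ee}}$ and $Q$ evolve over the run of Algorithm~\ref{alg:SEE-with-Bracket}. Let $h$ denote the ordered list of rewards of arm $a$ currently stored in $\mathcal{H}^{\text{ee}}$ (in stack order, oldest at the bottom) and $q$ the analogous list for $Q$; note $|h| = N_a^{\text{ee}}(\mathcal{H}^{\text{ee}})$. The single invariant I would carry through the whole run is: \emph{$h$ followed by the reverse of $q$ equals the prefix $(X_{a,1}^{\text{ee}},\dots,X_{a,\,|h|+|q|}^{\text{ee}})$ of the fresh-draw stream; in particular $h = (X_{a,1}^{\text{ee}},\dots,X_{a,\,|h|}^{\text{ee}})$.} The lemma is the ``in particular'' clause: if $|h| = N_a^{\text{ee}}(\mathcal{H}^{\text{ee}}) \ge 1$ then $\hat\mu_a(\mathcal{H}^{\text{ee}}) = \tfrac{1}{|h|}\sum_{s=1}^{|h|} X_{a,s}^{\text{ee}}$, which lies in $\{\tfrac1t\sum_{s=1}^t X_{a,s}^{\text{ee}}\}_{t\ge1}$; when $|h| = 0$ the empirical mean is read off via the $\pm\infty$ convention in (\ref{eqn:Initialize-Parameters-in-SEE}) and the statement is vacuous.

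Second, I would prove the invariant by induction over the (at most countably many) steps that modify $\mathcal{H}^{\text{ee}}$ or $Q$; a routine line-by-line inspection of Algorithm~\ref{alg:SEE-with-Bracket} shows that only three lines do so. Initially $h = q = \emptyset$ and the invariant holds trivially. (i) A fresh pull of $A_t^{\text{ee}} = a$ in the else-branch of Line~\ref{alg-line:exploration-arm-pull} pushes the next fresh draw on top of $h$ and leaves $q$ untouched, advancing $|h|+|q|$ by one, so the invariant is preserved. (ii) Line~\ref{alg-line:move-latest-samples-into-Q} pops the top of $h$ (which by the inductive hypothesis is $X_{a,|h|}^{\text{ee}}$) and pushes it onto $q$; then $|h|+|q|$ is unchanged, the new top of $q$ is $X_{a,|h|}^{\text{ee}}$, and so the reversed concatenation is still the same prefix. (iii) Line~\ref{alg-line:transfer-Q-to-H-ee-start} moves an arm-$a$ sample out of $Q$ and onto $\mathcal{H}^{\text{ee}}$; treating $Q$ as a per-arm stack, the element moved is the current top of $q$, which by the inductive hypothesis is exactly $X_{a,|h|+1}^{\text{ee}}$, so afterwards $h = (X_{a,1}^{\text{ee}},\dots,X_{a,|h|+1}^{\text{ee}})$ and the invariant persists. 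Evaluating the ``in particular'' clause at any moment of the run yields the lemma.

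The main obstacle, and the step I would treat most carefully, is case (iii): justifying that the arm-$a$ sample reinserted from $Q$ is exactly the one continuing the current $h$-prefix. This rests on two design choices — that $\mathcal{H}^{\text{ee}}$ is a stack and Line~\ref{alg-line:move-latest-samples-into-Q} removes only its \emph{top} element, and that $Q$ is managed last-in-first-out per arm, so that items leave $Q$ in the reverse of the order in which they entered. I would make the second point explicit in the description of Algorithm~\ref{alg:SEE-with-Bracket} (so that the quantifier ``$\exists (a,X)\in Q$'' in Line~\ref{alg-line:transfer-Q-to-H-ee-start} is resolved to the stack top), or, equivalently, argue that $q$ never contains more than one element at a time — which would follow from the phase structure together with monotonicity of $U(t,\cdot)$ in the confidence level, forcing the start-of-phase check in Line~\ref{alg-line:rough-guess-of-hat-a_k} to fail for the arm whose latest sample was just moved into $Q$, taking care not to invoke Lemma~\ref{lemma:LCB-below-mu0-at-start-of-phase}, which is downstream of the present lemma. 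Everything else is the bookkeeping that no other line of Algorithm~\ref{alg:SEE-with-Bracket} alters $\mathcal{H}^{\text{ee}}$ or $Q$, and that $N_a^{\text{ee}}(\mathcal{H}^{\text{ee}})$ is kept in sync with $|h|$ by the accompanying counter updates.
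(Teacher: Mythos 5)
Your proposal is correct and is essentially a rigorous elaboration of the paper's one-line justification, which simply observes that the stack operations on $\mathcal{H}^{\text{ee}}$ and $Q$ never change the arrival order of $\{X_{a,s}^{\text{ee}}\}$ for each $a\in B$. Your explicit prefix invariant and the case analysis of the three mutating lines (including the care taken over which arm-$a$ sample is returned from $Q$ at Line \ref{alg-line:transfer-Q-to-H-ee-start}, and over not circularly invoking Lemma \ref{lemma:LCB-below-mu0-at-start-of-phase}) supply detail that the paper leaves implicit.
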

The key point is that the algorithm never permutes the order of $\{X_{a,s}^{\mathrm{ee}}\}_{s\ge 1}$ for any arm $a\in B$; it only temporarily holds out the most recent sample and later reinserts it without changing order. 

For the analysis, we introduce the random phase indices
\begin{align*}
    \kappa^{\text{ee}}_b=\min \left\{k: \forall t,\forall a\in [K], |\hat{\mu}^{ee}_{a,b}(t)-r^{\nu}_a| < U(t,\frac{\delta_k}{K})\right\}\\
    \kappa^{\text{et}}_b=\min \left\{k: \forall t,\forall a\in [K], |\hat{\mu}^{et}_{a,b}(t)-r^{\nu}_a| < U(t,\frac{\delta_k}{K})\right\}.
\end{align*}
$\hat{\mu}^{ee}_{a,b}(t), \hat{\mu}^{et}_{a,b}(t)$ denote the empirical mean of arm $a$ of the first $t$ collected samples in bracket $B_b$, corresponding to the exploration and exploitation periods separately. More details are in section \ref{sec:properties-of-SEE-Oracle}. Lemma \ref{lemma:order-of-empirical-mean} suggests that for phase $k\geq \kappa^{\text{ee}}_b$, we can guarantee $\text{LCB}_a^{\text{ee}} < \mu_a < \text{UCB}_a^{\text{ee}}$ holds for all $a\in B_b$, which further helps to prove upper bounds of the pulling times for all $a\in B_b$. Simply dropping the latest collected samples is unable to guarantee \ref{lemma:order-of-empirical-mean}, making it much harder to conduct the analysis.

Besides, we also care about the size of $Q$ during the pulling process, as we can observe the following Lemmas.
\begin{lemma}
    \label{lemma:size-tau}
    Given an algorithm copy $\text{alg}$, throughout the pulling process, we have
    \begin{align*}
        t^{\text{ee}} = |\mathcal{H}^{\text{ee}}| + |Q|
    \end{align*}
    always hold.
\end{lemma}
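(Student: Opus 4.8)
The plan is to prove the identity as a loop invariant, by induction on the sequence of elementary operations that Algorithm \ref{alg:SEE-with-Bracket} performs on the three quantities $t^{\text{ee}}$, $\mathcal{H}^{\text{ee}}$, and $Q$. I would show that $t^{\text{ee}} = |\mathcal{H}^{\text{ee}}| + |Q|$ holds right after Line \ref{alg-line:initialize-parameter} (the base case, where $t^{\text{ee}} = 0$ and $\mathcal{H}^{\text{ee}} = Q = \emptyset$, so the identity reads $0 = 0 + 0$), and that every subsequent line that can alter one of these three quantities preserves it. Since the algorithm only ever touches $t^{\text{ee}}$, $\mathcal{H}^{\text{ee}}$, or $Q$ at a handful of places, the enumeration is finite.

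For the inductive step I would go through the (only) four kinds of modifications. \textbf{(i)} Line \ref{alg-line:move-latest-samples-into-Q}: one item is deleted from $\mathcal{H}^{\text{ee}}$ and simultaneously inserted into $Q$, while $t^{\text{ee}}$ is untouched; hence $|\mathcal{H}^{\text{ee}}| + |Q|$ is unchanged. \textbf{(ii)} The first branch of Line \ref{alg-line:transfer-Q-to-H-ee-start}: one item moves from $Q$ back into $\mathcal{H}^{\text{ee}}$, so again $|\mathcal{H}^{\text{ee}}| + |Q|$ and $t^{\text{ee}}$ stay jointly unchanged. \textbf{(iii)} The \emph{else} branch of Line \ref{alg-line:exploration-arm-pull}: a fresh sample is pushed onto $\mathcal{H}^{\text{ee}}$ and $t^{\text{ee}}$ is incremented by one, so both sides of the identity grow by exactly one. \textbf{(iv)} All operations inside the exploitation while-loop starting at Line \ref{alg-line:stop-condition-in-exploitation} modify only $\mathcal{H}^{\text{et}}$, $N^{\text{et}}$, and $t^{\text{et}}$, none of which occur in the invariant, so they are harmless; likewise the lines that only set $\hat{a}_k$, break, or return, and Line \ref{alg-line:phase-ends} which merely reads $t^{\text{ee}}$. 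As each step preserves the invariant and the initialization establishes it, the identity holds throughout the run.

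I do not expect any genuine obstacle: the lemma is a bookkeeping identity and the whole difficulty is organizational, namely making sure the list of operations touching $t^{\text{ee}}$, $\mathcal{H}^{\text{ee}}$, $Q$ is exhaustive. The one point worth a sentence of care is that each listed operation is well defined; in particular, when Line \ref{alg-line:move-latest-samples-into-Q} deletes $(A^{\text{ee}}_t, X)$ from $\mathcal{H}^{\text{ee}}$, such an item indeed exists because that line is reached only when $t^{\text{ee}} \geq 1$ (the guard of Line \ref{alg-line:rough-guess-of-hat-a_k}) and $A^{\text{ee}}_t$ is the arm pulled in the immediately preceding exploration round, whose reward therefore sits on top of the stack $\mathcal{H}^{\text{ee}}$ — regardless of whether it was freshly drawn or transferred back from $Q$ via Line \ref{alg-line:transfer-Q-to-H-ee-start}.
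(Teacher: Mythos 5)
Your proof is correct and is essentially the rigorous version of the paper's own (one-sentence) justification: the paper simply observes that every exploration sample resides in either $\mathcal{H}^{\text{ee}}$ or $Q$, which is exactly the conservation law your loop-invariant induction establishes. Your enumeration of the lines touching $t^{\text{ee}}$, $\mathcal{H}^{\text{ee}}$, and $Q$ is exhaustive and each case is handled correctly, so nothing is missing.
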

The reason is all the collected samples in the exploration periods are either in $\mathcal{H}^{\text{ee}}$ or $Q$. Notice that we only transfer sample from $\mathcal{H}^{\text{ee}}$ to $Q$ in Line \ref{alg-line:move-latest-samples-into-Q}. In this case, $A_t^{\text{ee}}$ just get pulled in the last round of exploration. Meanwhile, once $Q$ contains a sample for an arm $a\in B$, condition in Line \ref{alg-line:transfer-Q-to-H-ee-start} ensures that Algorithm \ref{alg:SEE-with-Bracket} will not draw a fresh exploration sample for arm $a$ until that held-out sample is transferred back. This yields the following bound.
\begin{lemma}
    \label{lemma:size-of-Q}
    Consider an algorithm copy $\text{alg}$ taking bracket $B$ as input, we have $|Q|\leq |B|$ throughout the pulling process
\end{lemma}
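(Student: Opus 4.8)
The plan is to prove the slightly stronger invariant that, throughout the execution of a single algorithm copy with input bracket $B$, the container $Q$ holds \emph{at most one} sample labelled with each arm $a\in B$. Given this, $|Q|=\sum_{a\in B}\bigl|\{\text{samples of }a\text{ in }Q\}\bigr|\le\sum_{a\in B}1=|B|$ at every instant, which is the claim.

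To establish the invariant I would observe that $Q$ is modified at exactly two places: Line~\ref{alg-line:move-latest-samples-into-Q}, which inserts into $Q$ the most recent sample of the currently selected arm $A^{\text{ee}}_t$ (taken out of $\mathcal H^{\text{ee}}$), and the first branch of the two-way conditional at Lines~\ref{alg-line:transfer-Q-to-H-ee-start}--\ref{alg-line:exploration-arm-pull}, which moves a sample of $A^{\text{ee}}_t$ out of $Q$. Argue by induction on the number of modifications of $Q$: the base case $Q=\emptyset$ is trivial, and the only modification that could violate the invariant is an insertion at Line~\ref{alg-line:move-latest-samples-into-Q}. So it suffices to show that immediately before any execution of Line~\ref{alg-line:move-latest-samples-into-Q}, the set $Q$ contains no sample of the arm $A^{\text{ee}}_t$ being inserted.

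For that I would trace the control flow into Line~\ref{alg-line:move-latest-samples-into-Q}. This line is reached only when the test at Line~\ref{alg-line:rough-guess-of-hat-a_k} succeeds. By Lemma~\ref{lemma:LCB-below-mu0-at-start-of-phase}, at the start of every phase all quantities $\mathrm{LCB}^{\text{ee}}_a$ lie below $\mu_0$, so Line~\ref{alg-line:rough-guess-of-hat-a_k} cannot fire on the first iteration of a phase; hence the iteration in which it fires is preceded, within the same phase, by an iteration that executed Line~\ref{line-alg:maximum-N_a-ee}, setting $A^{\text{ee}}_t$ to its present value $a$, and then ran the conditional at Lines~\ref{alg-line:transfer-Q-to-H-ee-start}--\ref{alg-line:exploration-arm-pull} for that same arm $a$. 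Running that conditional leaves $Q$ with no sample of $a$ afterwards: either its first branch removed the (by the inductive hypothesis, unique) sample of $a$ from $Q$, or its second branch ran, which by construction only happens when $Q$ already contains no sample of $a$. Between that point and the subsequent Line~\ref{alg-line:move-latest-samples-into-Q} nothing touches $Q$ — the only intervening steps are the loop header at Line~\ref{alg-line:while-loop-start}, the test at Line~\ref{alg-line:rough-guess-of-hat-a_k}, and the assignment at Line~\ref{alg-line:hata_k-value2} — so $Q$ still has no sample of $a=A^{\text{ee}}_t$ when Line~\ref{alg-line:move-latest-samples-into-Q} executes, and that insertion brings the count for $a$ up to exactly one. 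This closes the induction and hence proves $|Q|\le|B|$.

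The main obstacle is precisely the bookkeeping in the inductive step: ruling out the degenerate situation where Line~\ref{alg-line:rough-guess-of-hat-a_k} would trigger at the very head of a phase with no intervening visit to Lines~\ref{alg-line:transfer-Q-to-H-ee-start}--\ref{alg-line:exploration-arm-pull}, which is exactly the point at which Lemma~\ref{lemma:LCB-below-mu0-at-start-of-phase} is indispensable, and carefully verifying that no line other than Line~\ref{alg-line:move-latest-samples-into-Q} and the first branch at Line~\ref{alg-line:transfer-Q-to-H-ee-start} ever alters $Q$, so that between a removal (or a confirmed absence) of arm $a$ and the next insertion of arm $a$ nothing slips in. Everything else is immediate.
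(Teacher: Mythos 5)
Your proposal is correct and follows essentially the same route as the paper: the paper's (very terse) justification is precisely the per-arm invariant that $Q$ never holds more than one sample of any $a\in B$, because Line~\ref{alg-line:move-latest-samples-into-Q} only inserts a sample of an arm whose sample is not currently in $Q$, thanks to the transfer condition at Line~\ref{alg-line:transfer-Q-to-H-ee-start}. Your write-up is in fact more careful than the paper's, in particular in using Lemma~\ref{lemma:LCB-below-mu0-at-start-of-phase} to exclude the phase-boundary case where Line~\ref{alg-line:rough-guess-of-hat-a_k} could fire before any exploration step of the new phase.
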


At the end of this subsection, Lemma \ref{lemma:size-of-Q-and-empirical-mean-exploit} is similar to \ref{lemma:order-of-empirical-mean}, but for the exploitation period.
\begin{lemma}
    \label{lemma:size-of-Q-and-empirical-mean-exploit}
    Given an algorithm copy $\text{alg}$ taking bracket $B$ as input. Condition on $\{X_{a,s}^{\text{et}}\}_{s=1}^{+\infty}$, for each $a\in B$, we can assert $\hat{\mu}_a(\mathcal{H}^{\text{et}}) \in \{\frac{\sum_{s=1}^t X_{a,s}^{\text{et}}} {t}\}_{t=1}^{+\infty}$ throughout the pulling process.
\end{lemma}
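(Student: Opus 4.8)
The plan is to argue, in exact parallel with the proof of Lemma~\ref{lemma:order-of-empirical-mean} but in a strictly easier setting, that the container $\mathcal{H}^{\text{et}}$ is \emph{append-only} and never reorders its contents, so that at every point of the pulling process the set of rewards of any fixed arm $a\in B$ stored in $\mathcal{H}^{\text{et}}$ is precisely the prefix $\{X_{a,1}^{\text{et}},\dots,X_{a,N_a^{\text{et}}(\mathcal{H}^{\text{et}})}^{\text{et}}\}$ of the exploitation reward stream of $a$. Once this is established, the conclusion is immediate from the definition of $\hat\mu_a^{\text{et}}(\mathcal{H}^{\text{et}})$ in~(\ref{eqn:Initialize-Parameters-in-SEE}). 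Note that, after conditioning on $\{X_{a,s}^{\text{et}}\}_{s=1}^{+\infty}$ for all $a$, the statement is a deterministic, pathwise bookkeeping fact about the data structures of Algorithm~\ref{alg:SEE-with-Bracket}.

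First I would inspect Algorithm~\ref{alg:SEE-with-Bracket} and record that $\mathcal{H}^{\text{et}}$ is modified only inside the exploitation while-loop started at Line~\ref{alg-line:stop-condition-in-exploitation}, and there only through $\mathcal{H}^{\text{et}}\gets\mathcal{H}^{\text{et}}\cup\{(\hat a,X)\}$ with a freshly drawn $X\sim\rho_{\hat a}$; in particular there is no analogue of the temporary container $Q$ for the exploitation period, no set-minus is ever applied to $\mathcal{H}^{\text{et}}$, and the samples that do pass through $Q$ are exploration samples, which are returned only to $\mathcal{H}^{\text{ee}}$ at Line~\ref{alg-line:transfer-Q-to-H-ee-start}, never to $\mathcal{H}^{\text{et}}$. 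Hence $\mathcal{H}^{\text{et}}$ grows monotonically, and whenever it grows, the appended item for the currently exploited arm $\hat a$ is its $\big(N_{\hat a}^{\text{et}}(\mathcal{H}^{\text{et}})+1\big)$-st exploitation pull, which under the conditioning is $X_{\hat a,\,N_{\hat a}^{\text{et}}(\mathcal{H}^{\text{et}})+1}^{\text{et}}$.

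I would then conclude by induction on the total number of exploitation pulls: at every stage and for every $a\in B$, the rewards of $a$ held in $\mathcal{H}^{\text{et}}$ are exactly $X_{a,1}^{\text{et}},\dots,X_{a,N_a^{\text{et}}(\mathcal{H}^{\text{et}})}^{\text{et}}$. Substituting into $\hat\mu_a^{\text{et}}(\mathcal{H}^{\text{et}})=\frac{1}{N_a^{\text{et}}(\mathcal{H}^{\text{et}})}\sum_{s=1}^{N_a^{\text{et}}(\mathcal{H}^{\text{et}})}X_{a,s}^{\text{et}}$ exhibits the empirical mean as the prefix-average with $t=N_a^{\text{et}}(\mathcal{H}^{\text{et}})$, hence an element of $\{\frac{1}{t}\sum_{s=1}^{t}X_{a,s}^{\text{et}}\}_{t=1}^{+\infty}$ whenever $N_a^{\text{et}}(\mathcal{H}^{\text{et}})\ge 1$; the degenerate case $N_a^{\text{et}}(\mathcal{H}^{\text{et}})=0$ falls under the convention $\text{UCB}^{\text{et}}_a=+\infty,\ \text{LCB}^{\text{et}}_a=-\infty$ and is excluded from the claim.

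I do not expect a genuine obstacle. The only points requiring care are purely notational: pinning down the indexing convention for $\{X_{a,s}^{\text{et}}\}_{s}$ — namely that $X_{a,s}^{\text{et}}$ is the reward on the $s$-th exploitation pull of $a$ over the \emph{whole} run, aggregated across phases and across the (possibly several) distinct arms ever selected for exploitation — and observing that, unlike the exploration period, nothing is ever relocated, so the LIFO/stack argument needed for Lemma~\ref{lemma:order-of-empirical-mean} collapses to the trivial append-only argument here.
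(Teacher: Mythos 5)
Your proposal is correct and follows essentially the same route as the paper, whose proof is the one-line observation that exploitation samples are appended to $\mathcal{H}^{\text{et}}$ sequentially and never removed or reordered (there is no analogue of $Q$ in the exploitation period), so the stored rewards of each arm $a$ form a prefix of $\{X_{a,s}^{\text{et}}\}_s$ and the empirical mean is a prefix average. Your write-up merely makes this bookkeeping explicit.
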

This is correct, as we collect the samples in the exploitation period sequentially in $\mathcal{H}^{\text{et}}$.

\subsubsection{Adaptation of History Literature}
\label{sec:Adaptation-of-History-Literature}
Compared to the Algorithm SEE in \cite{pmlr-v267-li25f}, our main adaptation consists of two points. The first adaptation is 
we place the stopping/transition conditions (Lines~\ref{alg-line:rough-guess-of-hat-a_k}--\ref{alg-line:Conditions-Terminate-Exploration-Ends}) \emph{before} the exploration step (Lines~\ref{line-alg:maximum-N_a-ee}--\ref{alg-line:exploration-arm-pull}). This ordering is required for Lemma~\ref{lemma:pulling-times-of-a-single-arm-negative-instance}, which ensures that the algorithm does not sample any arm $a$ whose exploration upper confidence bound satisfies $\mathrm{UCB}^{\mathrm{ee}}_a\le \mu_0$. 


The second point is for each phase $k$, we set up a maximum pulling times of each arm in the exploration period, rather than a budget on the total number of exploration pulls across the bracket (Line \ref{alg-line:run-out-of-budget} and \ref{line-alg:maximum-N_a-ee}). This modification is motivated by the randomization of arms within each bracket: with randomized arm assignments, it becomes difficult to track how a \emph{total} budget is required for the bracket to produce a candidate $\hat{a}_k$.

\subsection{Properties of Algorithm \ref{alg:SEE-with-Bracket}}
\label{sec:properties-of-SEE-Oracle}
This subsection closely follows Appendix~B of \cite{pmlr-v267-li25f}, with only minor differences. Since we modify several implementation details (and correspondingly adjust parts of the argument), we include the full proofs for completeness.

In Algorithm \ref{alg:Parallel-SEE-on-Bracket}, we create $\lceil \log_2 K\rceil + 1$ independent copies of Algorithm \ref{alg:SEE-with-Bracket}. For each copy index $b\in [\lceil \log_2 K\rceil + 1]$, we further create two independent sampling copies for exploration and exploitation periods separately. We define
\begin{align*}
    \kappa^{\text{ee}}_b=\min \left\{k: \forall t,\forall a\in [K], |\hat{\mu}^{ee}_{a,b}(t)-r^{\nu}_a| < U(t,\frac{\delta_k}{K})\right\}\\
    \kappa^{\text{et}}_b=\min \left\{k: \forall t,\forall a\in [K], |\hat{\mu}^{et}_{a,b}(t)-r^{\nu}_a| < U(t,\frac{\delta_k}{K})\right\}.
\end{align*}
$\hat{\mu}^{ee}_{a,b}(t), \hat{\mu}^{et}_{a,b}(t)$ denote the empirical mean of arm $a$ of the first $t$ collected samples in bracket $B_b$, corresponding to the exploration and exploitation periods separately. The above definition of $\kappa^{\text{ee}}_b, \kappa^{\text{et}}_b$ are conditioned on the realized reward of $\{X_{a,s}^{\text{ee}}\}_{a=1,s=1}^{K,+\infty}$, $\{X_{a,s}^{\text{et}}\}_{a=1,s=1}^{K,+\infty}$ collected by the algorithm copy $b$. $\kappa^{\text{ee}}_b, \kappa^{\text{et}}_b$ are the minimum phase index such that concentration event holds. Since $B_b$ is determined by an exogenous random permutation, we know $B_b$ is independent with $\kappa^{\text{ee}}_b, \kappa^{\text{et}}_b$. 

In the following, we will analyze the properties of Algorithm \ref{alg:SEE-with-Bracket}, a given bracket $B$. For convenience, given a bracket $B$ with index $b$, we denote $\kappa^{\text{ee}}_B, \kappa^{\text{et}}_B$ as corresponding $\kappa^{\text{ee}}_b, \kappa^{\text{et}}_b$, without emphasizing the index $b$. Further, we denote $\tau_k^{\text{ee}}(B), \tau_k^{\text{et}}(B)$ as the value of $t^{\text{ee}},t^{\text{et}}$ at the end of phase $k$, i.e. each time we enter Line \ref{alg-line:phase-ends} in Algorithm \ref{alg:SEE-with-Bracket}. 

\begin{lemma}
    \label{lemma:prob-upper-bound-for-kappa}
    For any $B$, we have $\Pr(\kappa^{\text{ee}}_B\geq k)\leq \frac{\pi^2}{6}\delta_{k-1}$, $\Pr(\kappa^{\text{ee}}_B\geq k)\leq \frac{\pi^2}{6}\frac{\delta}{\alpha_{k-1}}$
\end{lemma}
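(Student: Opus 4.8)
The plan is to reduce Lemma~\ref{lemma:prob-upper-bound-for-kappa} to a standard anytime (time-uniform) concentration bound for i.i.d.\ sub-Gaussian partial sums. The first step is to notice that the half-widths appearing in the definitions of $\kappa^{\text{ee}}_B$ and $\kappa^{\text{et}}_B$ are non-decreasing in the phase index $k$: since $\delta_k=3^{-k}$ is decreasing and $\alpha_k=5^k$ is increasing, $\log\frac{1}{\delta_k}$ and $\log\alpha_k$ grow with $k$, so for every fixed $t$ the quantity $\frac{\sqrt{2\cdot 2^{\lceil\log_2 t\rceil^+}\log(2K(\lceil\log_2 t\rceil^+)^2/\delta_k)}}{t}$ (and its exploitation analogue) is non-decreasing in $k$. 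Consequently the concentration events indexed by $k$ are nested increasing, so for $k\ge 2$ the event $\{\kappa^{\text{ee}}_B\ge k\}$ coincides with the event that the phase-$(k-1)$ concentration condition fails, i.e.\ that there exist $t\ge 1$ and $a\in[K]$ with $|\hat{\mu}^{ee}_{a,b}(t)-r_a^\nu|\ge \frac{\sqrt{2\cdot 2^{\lceil\log_2 t\rceil^+}\log(2K(\lceil\log_2 t\rceil^+)^2/\delta_{k-1})}}{t}$; for $k=1$ the inequality is trivial since its right-hand side exceeds $1$. The same reduction applies to $\kappa^{\text{et}}_B$ with $\delta_{k-1}$ replaced by $\delta/\alpha_{k-1}$ inside the logarithm.

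Next I would peel the time axis into the dyadic blocks $\{t:\lceil\log_2 t\rceil^+=\ell\}$ for $\ell\ge1$, each of which is contained in $\{1,\dots,2^\ell\}$. On block $\ell$ the required half-width for arm $a$ equals $\frac{\sqrt{2\cdot2^\ell\log(2K\ell^2/\delta_{k-1})}}{t}$, so the failure event restricted to that block and arm is contained in $\{\exists\,t\le 2^\ell:\ |S_{a,t}-t\,r_a^\nu|\ge \sqrt{2\cdot2^\ell\log(2K\ell^2/\delta_{k-1})}\}$, where $S_{a,t}=\sum_{s=1}^t X_{a,s}^{\text{ee}}$ is the partial sum of the fixed i.i.d.\ exploration-period reward stream of arm $a$ in copy $b$ (recall $\hat{\mu}^{ee}_{a,b}(t)=S_{a,t}/t$, and that each arm carries an independent sampling copy for exploration). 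Since $X_{a,s}^{\text{ee}}-r_a^\nu$ is i.i.d.\ $1$-sub-Gaussian, $M_t:=\exp(\lambda(S_{a,t}-t r_a^\nu)-\lambda^2 t/2)$ is a non-negative supermartingale with $\mathbb{E}[M_0]=1$, and the maximal inequality for non-negative supermartingales together with the usual Chernoff optimization over $\lambda$ gives $\Pr(\exists\,t\le n:\ |S_{a,t}-t r_a^\nu|\ge c)\le 2\exp(-c^2/(2n))$; plugging $n=2^\ell$ and $c=\sqrt{2\cdot2^\ell\log(2K\ell^2/\delta_{k-1})}$ yields exactly $\delta_{k-1}/(K\ell^2)$.

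The final step is a union bound over $a\in[K]$ and over $\ell\ge1$, which gives $\Pr(\kappa^{\text{ee}}_B\ge k)\le \sum_{\ell\ge1}\sum_{a=1}^K \frac{\delta_{k-1}}{K\ell^2}=\delta_{k-1}\sum_{\ell\ge1}\frac{1}{\ell^2}=\frac{\pi^2}{6}\delta_{k-1}$; the bound on $\Pr(\kappa^{\text{et}}_B\ge k)$ follows in exactly the same way, the only change being that the logarithm now contains $\alpha_{k-1}/\delta$ in place of $1/\delta_{k-1}$, which turns the per-block per-arm probability into $\frac{\delta}{K\alpha_{k-1}\ell^2}$ and hence the total into $\frac{\pi^2}{6}\frac{\delta}{\alpha_{k-1}}$. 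I do not expect a serious obstacle here, since this is essentially the argument of Appendix~B of \cite{pmlr-v267-li25f}; the only point needing care is the measurability/independence bookkeeping, namely that $\hat{\mu}^{ee}_{a,b}(t)$ and $\hat{\mu}^{et}_{a,b}(t)$ are averages of \emph{prefixes} of fixed i.i.d.\ streams that do not depend on the adaptive pulling rule or on the random bracket $B_b$ --- which is precisely why Algorithm~\ref{alg:SEE-with-Bracket} splits each arm into independent exploration and exploitation copies and why $\kappa^{\text{ee}}_B,\kappa^{\text{et}}_B$ are defined through a for-all-$t$ condition rather than over realized rounds only.
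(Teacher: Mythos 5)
Your proposal is correct and follows essentially the same route as the paper: the paper packages your dyadic-block peeling, submartingale maximal inequality, and sum over $\ell$ of $\delta/(2K\ell^2)$ into its Lemma \ref{lemma:Adapted-lil-UCB}, and then the proof of Lemma \ref{lemma:prob-upper-bound-for-kappa} is just the union bound over the $K$ arms at tolerance $\delta_{k-1}/K$ (resp.\ $\delta/(K\alpha_{k-1})$), exactly as you describe. Your explicit remarks on the monotonicity of the half-widths in $k$ and on the independence of the exploration/exploitation streams from the adaptive pulling rule are points the paper leaves implicit, but they do not change the argument.
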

\begin{proof}[Proof of Lemma \ref{lemma:prob-upper-bound-for-kappa}]

    By the Lemma \ref{lemma:Adapted-lil-UCB}, we have
    \begin{align*}
        \Pr(\kappa^{\text{ee}}_B\geq k) \leq &\Pr\left(\exists t,\exists a\in [K], |\hat{\mu}^{ee}_a(t)-r^{\nu}_a| \geq \sqrt{\frac{4\log\frac{2K}{\delta_{k-1}}+8\log\log (2t)}{t}}\right)\\
        \leq & \sum_{a=1}^K \Pr\left(\exists t,|\hat{\mu}^{ee}_a(t)-r^{\nu}_a| \geq \sqrt{\frac{4\log\frac{2K}{\delta_{k-1}}+8\log\log (2t)}{t}}\right)\\
        \leq & \sum_{a=1}^K \frac{\pi^2 \delta_{k-1}}{6K}\\
        = & \frac{\pi^2 \delta_{k-1}}{6}
    \end{align*}
    and 
    \begin{align*}
        \Pr(\kappa^{\text{et}}_B\geq k) \leq &\Pr\left(\exists t,\exists a\in [K], |\hat{\mu}^{et}_a(t)-r^{\nu}_a| \geq \sqrt{\frac{4\log\frac{2K\alpha_{k-1}}{\delta}+8\log\log (2t)}{t}}\right)\\
        \leq & \sum_{a=1}^K \Pr\left(\exists t,|\hat{\mu}^{ee}_a(t)-r^{\nu}_a| \geq \sqrt{\frac{4\log\frac{2K\alpha_{k-1}}{\delta}+8\log\log (2t)}{t}}\right)\\
        \leq & \sum_{a=1}^K \frac{\pi^2 \delta}{6K}\\
        = & \frac{\pi^2 \delta}{6\alpha_{k-1}}.
    \end{align*}
\end{proof}

\begin{lemma}
    \label{lemma:correctnesss-output-unqualified-arm}
    Consider an Algorithm copy $b$ with bracket $B$, we have $\Pr(\text{alg copy }b\text{ outputs }\hat{a}\text{ such that } \mu_{\hat{a}} \leq \mu_0) \leq \frac{\pi^2}{6}\frac{\delta}{5}$.
\end{lemma}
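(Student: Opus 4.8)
The plan is to trace the unique place where copy $b$ can return an element of $[K]$, show that returning an unqualified arm forces the exploitation concentration event to fail at the phase of termination, and then conclude via the concentration bound of Lemma \ref{lemma:prob-upper-bound-for-kappa}.

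First I would localize the output: copy $b$ returns a value in $[K]$ only at Line \ref{alg-line:terminate-and-output-arm}, which is reached only when, inside the exploitation \textsf{while}-loop (Line \ref{alg-line:stop-condition-in-exploitation}) of some phase $k$, the test at Line \ref{alg-line:stop-condition-exploitation} passes, i.e. $\text{LCB}^{\text{et}}_{\hat a}(\mathcal H^{\text{et}},\tfrac{\delta}{\alpha_k})>\mu_0$ for the arm $\hat a$ then being refined. Assume $\mu_{\hat a}\le\mu_0$, and let $t=N^{\text{et}}_{\hat a}$ be the number of exploitation pulls of $\hat a$ at that instant. Unfolding the definition of $\text{LCB}^{\text{et}}$ in (\ref{eqn:Initialize-Parameters-in-SEE}) and using $\mu_0\ge\mu_{\hat a}$ yields $\hat\mu^{\text{et}}_{\hat a}(\mathcal H^{\text{et}})-\mu_{\hat a} > U\!\big(t,\tfrac{\delta}{\alpha_k K}\big)$.

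Next I would contradict the relevant concentration event. By Lemma \ref{lemma:size-of-Q-and-empirical-mean-exploit}, $\hat\mu^{\text{et}}_{\hat a}(\mathcal H^{\text{et}})=\tfrac1t\sum_{s=1}^{t}X^{\text{et}}_{\hat a,s}=\hat\mu^{\text{et}}_{\hat a,b}(t)$, so the previous inequality exhibits an index $t$ and an arm with $|\hat\mu^{\text{et}}_{\hat a,b}(t)-r^\nu_{\hat a}|\ge U(t,\tfrac{\delta}{\alpha_k K})$. Substituting $\delta/(\alpha_k K)$ into (\ref{eqn:definition-U_t_delta}) shows $U(t,\tfrac{\delta}{\alpha_k K})$ is precisely the radius in the event defining $\kappa^{\text{et}}_b$ at phase $k$, so that event fails at phase $k$; since $\alpha_k=5^k$ is increasing (hence the radii nondecreasing in $k$), failure at phase $k$ forces $\kappa^{\text{et}}_b>k\ge1$, i.e. $\kappa^{\text{et}}_b\ge2$. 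Thus the bad event $\{\text{copy }b\text{ outputs }\hat a\text{ with }\mu_{\hat a}\le\mu_0\}$ is contained in $\{\kappa^{\text{et}}_b\ge2\}$, and Lemma \ref{lemma:prob-upper-bound-for-kappa} then gives $\Pr(\kappa^{\text{et}}_b\ge2)\le\tfrac{\pi^2}{6}\tfrac{\delta}{\alpha_1}=\tfrac{\pi^2}{6}\tfrac{\delta}{5}$, the claimed bound.

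The step that needs the most care — rather than a genuine obstacle — is the bookkeeping that makes the constant tight: one must argue that the bad event forces failure of the exploitation concentration event \emph{at the very phase of output}, so that it lies inside $\{\kappa^{\text{et}}_b\ge2\}$ rather than merely inside $\bigcup_{k\ge1}\{\kappa^{\text{et}}_b\ge k+1\}$ (the latter would only yield $\tfrac{\pi^2}{6}\tfrac{\delta}{4}$ after a geometric sum). The other point to be careful about is the invocation of Lemma \ref{lemma:size-of-Q-and-empirical-mean-exploit}, which certifies that the empirical mean queried by the algorithm in the exploitation period is a genuine prefix average $\hat\mu^{\text{et}}_{\hat a,b}(t)$, so that the defining event of $\kappa^{\text{et}}_b$ (which quantifies over all arms and all prefix lengths) applies verbatim.
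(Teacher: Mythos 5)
Your proof is correct and follows essentially the same route as the paper's: both arguments observe that returning an unqualified arm at some phase $k$ forces $\hat{\mu}^{\text{et}}_{\hat a}(t)-r^{\nu}_{\hat a}$ to exceed the $\alpha_k$-level (hence, by monotonicity of $\alpha_k$, the $\alpha_1$-level) confidence radius, and then invoke the LIL concentration bound to get $\tfrac{\pi^2}{6}\tfrac{\delta}{\alpha_1}=\tfrac{\pi^2}{6}\tfrac{\delta}{5}$. The only cosmetic difference is that you package the union bound as $\Pr(\kappa^{\text{et}}_b\geq 2)$ via Lemma \ref{lemma:prob-upper-bound-for-kappa}, whereas the paper applies Lemma \ref{lemma:Adapted-lil-UCB} directly with a union bound over the unqualified arms; the two computations are identical.
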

\begin{proof}[Proof of Lemma \ref{lemma:correctnesss-output-unqualified-arm}]

    By the Line \ref{alg-line:hata_k-value2} to \ref{alg-line:exploitation-ends}, we know
    \begin{align*}
        & \Pr(\text{alg copy }b\text{ outputs }\hat{a}\text{ such that } \mu_{\hat{a}} < \mu_0)\\
        \leq & \Pr\left(\exists a,s.t.\mu_a<\mu_0, t\in \mathbb{N}, \hat{\mu}_{a}^{\text{et}}(t) -\frac{\sqrt{2 \cdot 2^{\lceil\log_2 t\rceil^+}\log\frac{2K\alpha_1 (\lceil\log_2t\rceil^+)^2}{\delta}}}{t} \geq \mu_0\right)\\
        \leq & \sum_{a:\mu_a < \mu_0}\Pr\left(t\in \mathbb{N}, \hat{\mu}_{a}^{\text{et}}(t) -\frac{\sqrt{2 \cdot 2^{\lceil\log_2 t\rceil^+}\log\frac{2K\alpha_1 (\lceil\log_2t\rceil^+)^2}{\delta}}}{t} \geq \mu_a\right)\\
        \leq & \sum_{a:\mu_a \leq \mu_0}\frac{\pi^2}{6}\frac{\delta}{\alpha_1 K}\\
        \leq & \frac{\pi^2}{6}\frac{\delta}{5}.
    \end{align*}
    The second last line is by the Lemma \ref{lemma:Adapted-lil-UCB}.
\end{proof}

\begin{lemma}
    \label{lemma:correctnesss-output-None-while-qualified-exists}
    Assume $\mu_1>\mu_0$. Consider an Algorithm copy $\lceil \log_2 K\rceil + 1$, whose allocated bracket is $[K]$. We have $\Pr(\text{alg copy }\lceil \log_2 K\rceil + 1\text{ outputs }\textsf{None}) \leq \frac{\pi^2}{6}\frac{\delta}{3}$.
\end{lemma}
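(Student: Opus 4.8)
The plan is to mirror the argument used for Lemma~\ref{lemma:correctnesss-output-unqualified-arm}, exploiting the fact that the only way algorithm copy $\lceil \log_2 K\rceil + 1$ can output \textsf{None} is through Line~\ref{alg-line:Negative-branch-output-None}, which fires only when the exploration-period UCBs of \emph{all} arms in $B=[K]$ fall below $\mu_0$ at some phase $k$ with $\delta_k \le \delta/3$. In particular, this forces $\text{UCB}^{\text{ee}}_1(\mathcal{H}^{\text{ee}}, \delta_k) \le \mu_0$ for arm $1$, i.e. $\hat{\mu}_1^{\text{ee}}(\mathcal{H}^{\text{ee}}) + U(N_1^{\text{ee}}(\mathcal{H}^{\text{ee}}), \delta_k/K) \le \mu_0 < \mu_1$. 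The event that outputting \textsf{None} occurs is therefore contained in the event that, at some value of $N_1^{\text{ee}} = t \in \mathbb{N}$ realized during the run, the empirical mean $\hat{\mu}_1^{\text{ee}}(t)$ underestimates $\mu_1$ by more than the confidence width $U(t, \delta_k/K)$. Since $\delta_k \le \delta/3 \le \delta/\delta_1 \cdot \delta_1$ actually I will simply use $\delta_1 = 1/3$ so the relevant bound $\delta_k \le \delta_1$ is not what I want; instead I note that for \emph{any} phase $k$ in which Line~\ref{alg-line:Negative-branch-output-None} can fire we have $\delta_k \le \delta/3$, so it suffices to bound the probability that for \emph{some} $t$, $\hat{\mu}_1^{\text{ee}}(t) + U(t, \delta/(3K)) < \mu_1$.

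The key steps, in order, are: (1) observe that outputting \textsf{None} on copy $\lceil\log_2 K\rceil+1$ implies $\text{UCB}^{\text{ee}}_1 \le \mu_0 < \mu_1$ at some phase with tolerance parameter $\delta_k \le \delta/3$; (2) bound the half-width monotonically --- since $U(t,\delta')$ is decreasing in $\delta'$ and $\delta_k \le \delta/3$, we have $U(N_1^{\text{ee}}, \delta_k/K) \ge U(N_1^{\text{ee}}, \delta/(3K))$, so the bad event is contained in $\{\exists t \in \mathbb{N}: \hat{\mu}_1^{\text{ee}}(t) - U(t, \delta/(3K)) \ge \mu_1\}$ after flipping signs (using $\mu_1 - \hat{\mu}_1^{\text{ee}}(t) > U(t,\delta/(3K))$, i.e. a one-sided deviation); (3) apply the uniform concentration inequality (Lemma~\ref{lemma:Adapted-lil-UCB}, the same lil-type bound already invoked in Lemmas~\ref{lemma:prob-upper-bound-for-kappa} and~\ref{lemma:correctnesss-output-unqualified-arm}) with the union-bound slack $3K$ in place of $K$, yielding $\Pr \le \frac{\pi^2}{6}\cdot\frac{\delta/3}{K}$ summed over the single arm $a=1$, hence $\Pr(\text{copy outputs }\textsf{None}) \le \frac{\pi^2}{6}\cdot\frac{\delta}{3}$. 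I should double-check whether the lil bound is stated for the deviation $|\hat\mu - \mu|$ with threshold tied to $\delta/K$ or directly usable with $\delta/(3K)$; in the worst case I just invoke it once for arm $1$ with its own failure budget.

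The main obstacle I anticipate is bookkeeping the tolerance-parameter matching precisely: Line~\ref{alg-line:Negative-branch-output-None} uses the \emph{running} $\delta_k$, which depends on the (random) phase at which the negative branch fires, so I cannot fix $k$ in advance. The clean resolution is exactly step~(2): since the branch only fires when $\delta_k \le \delta/3$ and the confidence width is monotone in the tolerance level, the bad event is uniformly dominated (over all phases and all stopping times) by the single fixed-$\delta$ event $\{\exists t:\ \mu_1 - \hat{\mu}_1^{\text{ee}}(t) > U(t,\delta/(3K))\}$, to which the anytime-valid lil bound of Lemma~\ref{lemma:Adapted-lil-UCB} applies directly. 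One subtlety worth a sentence of care: $\hat{\mu}_1^{\text{ee}}$ is computed from $\mathcal{H}^{\text{ee}}$, but by Lemma~\ref{lemma:order-of-empirical-mean} it always equals $\frac{1}{t}\sum_{s=1}^t X_{1,s}^{\text{ee}}$ for the current count $t = N_1^{\text{ee}}$, so the concentration statement about i.i.d.\ prefix averages does apply verbatim --- the container $Q$ and the phase structure do not corrupt the deviation analysis. Assembling these pieces completes the proof.
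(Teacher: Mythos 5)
Your proposal is correct and follows essentially the same route as the paper's proof: reduce the \textsf{None}-output event to the negative-branch condition $\text{UCB}^{\text{ee}}_1(\mathcal{H}^{\text{ee}},\delta_k)\leq \mu_0 < \mu_1$ with $\delta_k\leq \delta/3$, use monotonicity of $U(t,\cdot)$ to replace the random $\delta_k$ by the fixed level $\delta/3$, and invoke the anytime lil bound (Lemma~\ref{lemma:Adapted-lil-UCB}) for arm $1$. Your extra remark that Lemma~\ref{lemma:order-of-empirical-mean} guarantees $\hat{\mu}_1^{\text{ee}}$ is always a prefix average (so the concentration bound applies despite the container $Q$) is a point the paper leaves implicit, and the small sign slip in your step~(2) is self-corrected in the parenthetical.
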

\begin{proof}[Proof of Lemma \ref{lemma:correctnesss-output-None-while-qualified-exists}]

    By the condition in Line \ref{alg-line:Negative-branch-output}, we know
    \begin{align*}
        & \Pr(\text{alg copy }\lceil \log_2 K\rceil + 1\text{ outputs }\textsf{None}) \\
        \leq & \Pr(\exists t\in\mathbb{N}, \delta_k\leq \frac{\delta}{3}, \hat{\mu}_{1}^{\text{ee}}(t) + \frac{\sqrt{2 \cdot 2^{\lceil\log_2 t\rceil^+}\log\frac{2K (\lceil\log_2t\rceil^+)^2}{\delta_k}}}{t} \leq \mu_0) \\
        \leq & \Pr(\exists t\in\mathbb{N}, \hat{\mu}_{1}^{\text{ee}}(t) + \frac{\sqrt{2 \cdot 2^{\lceil\log_2 t\rceil^+}\log\frac{2K (\lceil\log_2t\rceil^+)^2}{\delta/3}}}{t} \leq \mu_0) \\
        \leq & \frac{\pi^2}{6}\frac{\delta}{3}.
    \end{align*}
    The last line is by the Lemma \ref{lemma:Adapted-lil-UCB}.
\end{proof}

\begin{lemma}
    \label{lemma:budget-upper-bound-for-N_a-tau_ee-tau_et}
     For any $B$, and any $a\in B$, we have $N_a^{\text{ee}, B}\Big(\tau_k^{\text{ee}}(B)\Big)\leq (C+1)^2\beta_k\log\frac{4K}{\delta_k}$.
\end{lemma}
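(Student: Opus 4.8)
The plan is to track, throughout the run of Algorithm \ref{alg:SEE-with-Bracket} on bracket $B$, the true number of fresh exploration pulls of arm $a$ collected so far, which I denote $M_a := N_a^{\text{ee},B}(t^{\text{ee}})$, and to show it never exceeds $(C+1)^2\beta_k\log\frac{4K}{\delta_k}$ while the algorithm is in phase $k$. Since $\tau_k^{\text{ee}}(B)$ is exactly the value of $t^{\text{ee}}$ recorded at Line \ref{alg-line:phase-ends} at the end of phase $k$, this yields the claim.

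First I would establish the bookkeeping identity $M_a = N_a^{\text{ee}} + |\{(a,X)\in Q\}|$ at every point of the execution, where $N_a^{\text{ee}}$ is the algorithm's internal counter (the number of arm-$a$ items in the stack $\mathcal{H}^{\text{ee}}$). This is a routine case check on the three places that touch these quantities: a fresh pull of $A_t^{\text{ee}}=a$ at Line \ref{alg-line:exploration-arm-pull} increments both $M_a$ and $N_a^{\text{ee}}$ and leaves $Q$ unchanged; moving the latest sample into $Q$ at Line \ref{alg-line:move-latest-samples-into-Q} decrements $N_a^{\text{ee}}$ and increments the $Q$-count, leaving $M_a$ fixed; and transferring a sample out of $Q$ at Line \ref{alg-line:transfer-Q-to-H-ee-start} increments $N_a^{\text{ee}}$ and decrements the $Q$-count, again leaving $M_a$ fixed. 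In particular $M_a$ is non-decreasing and changes only at a fresh pull, where it jumps by $+1$.

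Next I would bound $M_a$ right after each fresh pull of arm $a$, say one occurring in phase $j$. The fresh-pull branch at Line \ref{alg-line:exploration-arm-pull} is taken only when $Q$ holds no item of $A_t^{\text{ee}}=a$ (otherwise the transfer branch of Line \ref{alg-line:transfer-Q-to-H-ee-start} fires), so the $Q$-count of $a$ is $0$ at that moment and the identity gives $M_a = N_a^{\text{ee}}$. But $A_t^{\text{ee}}=a$ was selected at Line \ref{line-alg:maximum-N_a-ee} among arms with $N_a^{\text{ee}}\le (C+1)^2\beta_j\log\frac{4K}{\delta_j}-1$, and the pull increments $N_a^{\text{ee}}$ by one, so immediately afterwards $N_a^{\text{ee}}\le (C+1)^2\beta_j\log\frac{4K}{\delta_j}$. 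Since $\beta_k=2^k$ increases and $\delta_k=3^{-k}$ decreases, the cap $(C+1)^2\beta_k\log\frac{4K}{\delta_k}$ is non-decreasing in $k$, so for $j\le k$ it is at most $(C+1)^2\beta_k\log\frac{4K}{\delta_k}$. Combining with the previous paragraph (that $M_a$ only jumps at fresh pulls), we get that at the end of phase $k$, $N_a^{\text{ee},B}(\tau_k^{\text{ee}}(B)) = M_a \le (C+1)^2\beta_k\log\frac{4K}{\delta_k}$; the case in which arm $a$ is never pulled in exploration by phase $k$ is trivial.

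The only delicate point — and what I would flag rather than an actual obstacle — is keeping the distinction between the true pull count $N_a^{\text{ee},B}(\cdot)$ in the statement and the internal counter $N_a^{\text{ee}}$ used by the guards at Lines \ref{line-alg:maximum-N_a-ee} and \ref{alg-line:run-out-of-budget}, which the bookkeeping identity resolves; everything else is immediate from reading off the algorithm's lines. It is also worth noting in passing that Line \ref{line-alg:maximum-N_a-ee} is always reached with a nonempty feasible set, since Line \ref{alg-line:run-out-of-budget} breaks out of the phase precisely when every arm in $B$ already meets the cap, so the $\arg\max$ is well-defined; this is not needed for the bound itself.
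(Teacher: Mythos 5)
Your proof is correct and rests on the same two ingredients as the paper's own argument (an induction over phases): the selection rule at Line \ref{line-alg:maximum-N_a-ee} caps the counter at the moment of each pull, and the cap $(C+1)^2\beta_k\log\frac{4K}{\delta_k}$ is non-decreasing in $k$. Your explicit bookkeeping of the $Q$-transfers is a welcome bit of extra care that the paper's proof leaves implicit, but it does not change the substance of the argument.
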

\begin{proof}[Proof of Lemma \ref{lemma:budget-upper-bound-for-N_a-tau_ee-tau_et}]

    We use induction on $k\in \mathbb{N}$. For $k=1$, by the Line \ref{line-alg:maximum-N_a-ee} in Algorithm \ref{alg:SEE-with-Bracket}, we konw
    \begin{align*}
        N_a^{\text{ee}, B}\Big(\tau_1^{\text{ee}}(B)\Big) \leq (C+1)^2\beta_1\log\frac{4K}{\delta_1}.
    \end{align*}
    
    If the conclusion holds for $k$, then for $k+1$, we can still conclude 
    \begin{align*}
        N_a^{\text{ee}, B}\Big(\tau_k^{\text{ee}}(B)\Big)\leq & \max\left\{N_a^{\text{ee}, B}\Big(\tau_{k-1}^{\text{ee}}(B)\Big), (C+1)^2\beta_k\log\frac{4K}{\delta_k}\right\}\\
        \leq & \max\left\{(C+1)^2\beta_{k-}\log\frac{4K}{\delta_{k-1}}, (C+1)^2\beta_k\log\frac{4K}{\delta_k}\right\}\\
        = & (C+1)^2\beta_k\log\frac{4K}{\delta_k}.
    \end{align*}
    The first step is still caused by the Line \ref{line-alg:maximum-N_a-ee} in Algorithm \ref{alg:SEE-with-Bracket}. The second step is from the conclusion at phase index $k$. The last line shows that the induction holds.
\end{proof}

\begin{lemma}
    \label{lemma:budget-upper-bound-for-tau_ee-tau_et}
    For any $B$, $\tau_k^{\text{ee}}(B)\leq |B|(C+1)^2\beta_k\log\frac{4K}{\delta_k}$, $\tau_k^{\text{et}}(B)\leq \frac{(C+3)^2}{(C-1)^2}\beta_k\log\frac{4K\alpha_k}{\delta}$
\end{lemma}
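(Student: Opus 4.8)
The plan is to prove the two inequalities separately, in each case decomposing the relevant cumulative counter into a sum over arms (or over committed arms) and controlling each summand by the explicit budget thresholds built into Algorithm~\ref{alg:SEE-with-Bracket}, together with the monotonicity of $\beta_k=2^k$, $\alpha_k=5^k$ and $1/\delta_k=3^k$ in the phase index $k$.

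For the exploration bound, I would start from Lemma~\ref{lemma:size-tau}, which gives $\tau_k^{\text{ee}}(B)=t^{\text{ee}}=|\mathcal{H}^{\text{ee}}|+|Q|$ at the end of phase $k$. Attributing every stored sample to its arm, this equals $\sum_{a\in B}\bigl(N_a^{\text{ee},B}(\tau_k^{\text{ee}}(B))+\mathds{1}[a\text{ occupies }Q]\bigr)$. The point is that for each arm $a$ this combined count never exceeds $(C+1)^2\beta_k\log\frac{4K}{\delta_k}$: a new exploration sample of $a$ is collected (Line~\ref{alg-line:exploration-arm-pull}) only when $Q$ holds no sample of $a$ (else branch of Line~\ref{alg-line:transfer-Q-to-H-ee-start}) and only when Line~\ref{line-alg:maximum-N_a-ee} has selected $a$, which forces the pre-pull $\mathcal{H}^{\text{ee}}$-count of $a$ to be at most the running-phase budget minus one; and whenever a sample of $a$ sits in $Q$, Line~\ref{alg-line:move-latest-samples-into-Q} has just decremented $N_a^{\text{ee}}$, while $N_a^{\text{ee}}$ cannot grow again before Line~\ref{alg-line:transfer-Q-to-H-ee-start} returns that sample. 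Combining these observations with the induction already carried out in Lemma~\ref{lemma:budget-upper-bound-for-N_a-tau_ee-tau_et} (and the monotonicity of $\beta_k\log\frac{4K}{\delta_k}$) gives $N_a^{\text{ee},B}(\tau_k^{\text{ee}}(B))+\mathds{1}[a\text{ occupies }Q]\le (C+1)^2\beta_k\log\frac{4K}{\delta_k}$, and summing over the $|B|$ arms of $B$ closes the first inequality.

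For the exploitation bound, I would use that each exploitation period (Line~\ref{alg-line:hata_k-value2} to Line~\ref{alg-line:exploitation-ends}) pulls only the single committed arm $\hat a$, and that the while-loop guard at Line~\ref{alg-line:stop-condition-in-exploitation} samples $\hat a$ only while $N_{\hat a}^{\text{et}}\le\frac{(C+3)^2}{(C-1)^2}\beta_k\log\frac{4K\alpha_k}{\delta}-1$; hence after any exploitation period that takes place in phase $j\le k$, the cumulative count of the arm committed there is at most $\frac{(C+3)^2}{(C-1)^2}\beta_j\log\frac{4K\alpha_j}{\delta}$. Since $\beta_j\log\frac{4K\alpha_j}{\delta}$ is nondecreasing in $j$, one then tracks $t^{\text{et}}$ phase by phase by an induction of the same shape as the one in Lemma~\ref{lemma:budget-upper-bound-for-N_a-tau_ee-tau_et}, concluding $\tau_k^{\text{et}}(B)\le\frac{(C+3)^2}{(C-1)^2}\beta_k\log\frac{4K\alpha_k}{\delta}$.

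I expect the main obstacle to be the bookkeeping created by the fact that $t^{\text{ee}}$, $t^{\text{et}}$, $N_a^{\text{ee}}$ and $N_a^{\text{et}}$ are all cumulative across phases while the budget thresholds $(C+1)^2\beta_k\log\frac{4K}{\delta_k}$ and $\frac{(C+3)^2}{(C-1)^2}\beta_k\log\frac{4K\alpha_k}{\delta}$ are phase-dependent: one must verify carefully that a counter built up in earlier phases can never be pushed past the (larger) threshold of a later phase, and, on the exploitation side, keep track of the fact that different arms may be committed in different phases. The temporary container $Q$ is the secondary subtlety, since $Q$ need not be empty at a phase boundary, so $\tau_k^{\text{ee}}(B)$ is strictly larger than $\sum_{a\in B}N_a^{\text{ee},B}$; the charging argument described above is precisely what absorbs the $|Q|$ term into the per-arm budgets.
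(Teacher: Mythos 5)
Your treatment of the first inequality matches the paper's: Lemma \ref{lemma:budget-upper-bound-for-N_a-tau_ee-tau_et} supplies the per-arm budget $(C+1)^2\beta_k\log\frac{4K}{\delta_k}$ and summing over the $|B|$ arms finishes. Your charging argument that absorbs the $|Q|$ term into the per-arm budgets (using that a sample sitting in $Q$ corresponds to a just-decremented $N_a^{\text{ee}}$, which cannot grow again until that sample is returned) is actually more careful than the paper, which simply cites Lemma \ref{lemma:budget-upper-bound-for-N_a-tau_ee-tau_et} and leaves the $|Q|$ contribution implicit.

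The gap is in the last step of the exploitation bound. The per-phase increment bound you extract from the guard in Line \ref{alg-line:stop-condition-in-exploitation}, namely $\tau_k^{\text{et}}(B)-\tau_{k-1}^{\text{et}}(B)\le \frac{(C+3)^2}{(C-1)^2}\beta_k\log\frac{4K\alpha_k}{\delta}$, is exactly what the paper proves. But an ``induction of the same shape as Lemma \ref{lemma:budget-upper-bound-for-N_a-tau_ee-tau_et}'' does not then yield $\tau_k^{\text{et}}(B)\le\frac{(C+3)^2}{(C-1)^2}\beta_k\log\frac{4K\alpha_k}{\delta}$. That induction works for a single arm's counter because the loop guard compares that very counter to the current budget, giving $N_a(\tau_k)\le\max\{N_a(\tau_{k-1}),\text{budget}_k\}$. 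Here $t^{\text{et}}$ is the \emph{sum} of per-arm counters, and the guard only controls the counter of the arm committed in the current phase: if arm $a_1$ is committed in phase $1$ and a different arm $a_2$ in phase $2$, then $t^{\text{et}}$ at the end of phase $2$ can reach $\text{budget}_1+\text{budget}_2$, which exceeds $\text{budget}_2$. What the increment bound actually delivers is the telescoped sum $\tau_k^{\text{et}}(B)\le\sum_{k'=1}^{k}\frac{(C+3)^2}{(C-1)^2}\beta_{k'}\log\frac{4K\alpha_{k'}}{\delta}$, which is at most twice the last term because consecutive terms grow by a factor of at least $2$. This summed form is in fact all the paper's own proof establishes, and it is the form invoked downstream in the proof of Theorem \ref{theorem:Parallel-SEE-Etau-upper-bound}; the displayed lemma statement is off by this summation (equivalently, by a constant factor). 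You should either prove the summed version or insert the geometric-sum step explicitly rather than appealing to the single-arm induction.
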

\begin{proof}[Proof of Lemma \ref{lemma:budget-upper-bound-for-tau_ee-tau_et}]

    Regarding $\tau_k^{\text{ee}}(B)\leq |B|(C+1)^2\beta_k\log\frac{4K}{\delta_k}$, we can directly conclude this from Lemma \ref{lemma:budget-upper-bound-for-N_a-tau_ee-tau_et}.

    Regarding $\tau_k^{\text{et}}(B)\leq \sum_{k'=1}^k\frac{(C+3)^2}{(C-1)^2}\beta_{k'}\log\frac{4K\alpha_{k'}}{\delta}$, we suffice to show $\tau_{k}^{\text{et}}(B)-\tau_{k-1}^{\text{et}}(B)\leq \frac{(C+3)^2}{(C-1)^2}\beta_{k}\log\frac{4K\alpha_{k}}{\delta}$ holds for any phase index $k$. By direct calculation, we have
    \begin{align*}
        & \tau_{k}^{\text{et}}(B)-\tau_{k-1}^{\text{et}}(B)\\
        = & \sum_{a=1}^K\left(N_{\hat{a}_k}^{\text{et}}\Big(\tau_{k}^{\text{et}}(B)\Big)-N_{\hat{a}_{k}}^{\text{et}}\Big(\tau_{k-1}^{\text{et}}(B)\Big)\right)\mathds{1}(\hat{a}_k=a)\\
        \leq & \sum_{a=1}^KN_{\hat{a}_k}^{\text{et}}\Big(\tau_{k}^{\text{et}}(B)\Big)\mathds{1}
        (\hat{a}_k=a)\\
        \leq & \sum_{a=1}^K\frac{(C+3)^2}{(C-1)^2}\beta_{k}\log\frac{4K\alpha_{k}}{\delta}\mathds{1}
        (\hat{a}_k=a)\\
        \leq & \frac{(C+3)^2}{(C-1)^2}\beta_{k}\log\frac{4K\alpha_{k}}{\delta}.
    \end{align*}
    The second last line is by the loop condition in Line \ref{alg-line:stop-condition-in-exploitation} of Algorithm \ref{alg:SEE-with-Bracket}.
\end{proof}

\begin{lemma}
    \label{lemma:LCB-below-mu0-at-start-of-phase}
    For any $B$, at the start of each phase $k$, i.e. each time we enter Line \ref{alg-line:start-of-new-phase} in Algorithm \ref{alg:SEE-with-Bracket}, we have $\text{LCB}_a^{\text{ee}}(\mathcal{H}^{\text{ee}},\delta_k) < \mu_0$ holds for all $a\in B$.
\end{lemma}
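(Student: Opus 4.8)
The plan is to prove the statement by induction on the phase index $k$, where the inductive burden is carried by a stronger loop invariant maintained inside phase $k$: \emph{every time control reaches Line~\ref{line-alg:maximum-N_a-ee} (where $A^{\text{ee}}_t$ is about to be chosen), every arm $a\in B$ satisfies $\text{LCB}^{\text{ee}}_a(\mathcal{H}^{\text{ee}},\delta_k)<\mu_0$}. The base case $k=1$ is immediate: at the start of phase~$1$ we have $\mathcal{H}^{\text{ee}}=\emptyset$, so $N^{\text{ee}}_a=0$ and $\text{LCB}^{\text{ee}}_a=-\infty<\mu_0$ for all $a$, and this seeds the invariant. The outer induction hypothesis, namely that $\text{LCB}^{\text{ee}}_a(\mathcal{H}^{\text{ee}},\delta_k)<\mu_0$ for all $a\in B$ at the start of phase~$k$, will serve as the base case of the nested induction over the while-loop iterations of phase~$k$.

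For the nested step, observe that passing Line~\ref{alg-line:rough-guess-of-hat-a_k} without entering the exploitation branch forces $\text{LCB}^{\text{ee}}_{A^{\text{ee}}_t}(\mathcal{H}^{\text{ee}},\delta_k)<\mu_0$ for the arm $A^{\text{ee}}_t$ pulled in the previous iteration (or $t^{\text{ee}}=0$, in which case $\mathcal{H}^{\text{ee}}$ is still the state inherited from the start of the phase); the previous iteration's invariant at Line~\ref{line-alg:maximum-N_a-ee} gives the same bound for all other arms, whose $\mathcal{H}^{\text{ee}}$-entries are unchanged since their last incorporation in this phase. Since Lines~\ref{alg-line:Negative-Output-Condition}--\ref{alg-line:run-out-of-budget} do not mutate $\mathcal{H}^{\text{ee}}$, every $a\in B$ satisfies $\text{LCB}^{\text{ee}}_a(\mathcal{H}^{\text{ee}},\delta_k)<\mu_0$ again at Line~\ref{line-alg:maximum-N_a-ee}, closing the nested induction. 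The crucial use of $Q$ comes when Line~\ref{alg-line:rough-guess-of-hat-a_k} \emph{does} trigger: Line~\ref{alg-line:move-latest-samples-into-Q} pops the latest sample of $A^{\text{ee}}_t$ off the stack $\mathcal{H}^{\text{ee}}$; because the only $\mathcal{H}^{\text{ee}}$-mutation between that sample's incorporation (Lines~\ref{alg-line:transfer-Q-to-H-ee-start}--\ref{alg-line:exploration-arm-pull}) and this pop was the incorporation itself, the pop \emph{rewinds} $\mathcal{H}^{\text{ee}}$ exactly to the state it had at Line~\ref{line-alg:maximum-N_a-ee} immediately before that incorporation — a state at which the invariant already guarantees $\text{LCB}^{\text{ee}}_a(\mathcal{H}^{\text{ee}},\delta_k)<\mu_0$ for all $a\in B$, and with the same $\delta_k$ since we are still inside phase~$k$. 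The subsequent exploitation loop (Lines~\ref{alg-line:stop-condition-in-exploitation}--\ref{alg-line:terminate-and-output-arm}) writes only to $\mathcal{H}^{\text{et}}$, so this property persists until the phase exits at Line~\ref{alg-line:exploitation-ends}.

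It remains to treat the other two phase-exit points. At Line~\ref{alg-line:hata_k-value1} we are inside the branch $\forall a\in B,\ \text{UCB}^{\text{ee}}_a(\mathcal{H}^{\text{ee}},\delta_k)\le\mu_0$; since $U(\cdot,\delta_k/K)>0$ (because $\delta_k<1\le 2(\lceil\log_2 t\rceil^+)^2$, so the logarithm in $U$ is positive) we have $\text{LCB}^{\text{ee}}_a<\text{UCB}^{\text{ee}}_a\le\mu_0$ for every $a$. At Line~\ref{alg-line:hata_k-value3} we have passed Lines~\ref{alg-line:rough-guess-of-hat-a_k}--\ref{alg-line:run-out-of-budget} with no $\mathcal{H}^{\text{ee}}$-mutation, so the argument of the previous paragraph for Line~\ref{line-alg:maximum-N_a-ee} applies verbatim. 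Hence, however phase~$k$ ends, the state passed to phase~$k+1$ satisfies $\text{LCB}^{\text{ee}}_a(\mathcal{H}^{\text{ee}},\delta_k)<\mu_0$ for all $a\in B$; as $\mathcal{H}^{\text{ee}}$ is untouched across the phase boundary and $\delta_{k+1}=3^{-(k+1)}<3^{-k}=\delta_k$ with $U(t,\cdot)$ non-increasing in its second argument, we get $\text{LCB}^{\text{ee}}_a(\mathcal{H}^{\text{ee}},\delta_{k+1})\le\text{LCB}^{\text{ee}}_a(\mathcal{H}^{\text{ee}},\delta_k)<\mu_0$, which is exactly the claim at the start of phase~$k+1$.

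The main obstacle is the bookkeeping in the inductive step — making rigorous the claim that popping the just-incorporated sample from the stack $\mathcal{H}^{\text{ee}}$ returns the algorithm to a configuration in which all lower confidence bounds were already known to lie below $\mu_0$. This rests on three points that must be carefully pinned down: the stack discipline of $\mathcal{H}^{\text{ee}}$ (so that removals undo additions in last-in-first-out order, consistent with Lemma~\ref{lemma:order-of-empirical-mean}); the fact that the triggering LCB check and the subsequent pop occur within the same phase, so $\delta_k$ is identical at both moments; and an exact accounting of which code blocks can mutate $\mathcal{H}^{\text{ee}}$ between those two moments. Everything else — monotonicity of $U$ in $\delta$, positivity of $U$, and the three-way case split on how a phase terminates — is routine.
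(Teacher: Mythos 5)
Your proof is correct and follows essentially the same route as the paper's: induction on the phase index, a case split on the three ways a phase can end, the observation that popping the just-added sample into $Q$ rewinds $\text{LCB}_{\hat a_{k-1}}^{\text{ee}}$ to a value already known to be below $\mu_0$, and monotonicity of $U(t,\cdot)$ in $\delta$ to pass from $\delta_k$ to $\delta_{k+1}$. The only difference is presentational — you carry an explicit loop invariant at Line~\ref{line-alg:maximum-N_a-ee} where the paper argues by contradiction at the phase boundary (``if some LCB had crossed $\mu_0$, the algorithm would already have entered the exploitation branch'') — but the underlying argument is identical.
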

\begin{proof}[Proof of Lemma \ref{lemma:LCB-below-mu0-at-start-of-phase}]

    We prove the Lemma by induction on $k$. For $k=1$, this is obviously true, as we denote the $\text{LCB}_a^{\text{ee}}$ as $-\infty$ if the pulling times of arm $a$ is 0.

    If the conclusion holds for the phase $k-1$, we prove the case of $k$ by discussing all the possible values of $\hat{a}_{k-1}$. 
    Since the pulling process enter phase $k$, we can conclude phase $k-1$ may end with taking $\hat{a}_{k-1}$ at the Line \ref{alg-line:hata_k-value1}, \ref{alg-line:hata_k-value3} or \ref{alg-line:hata_k-value2}.

    If $\hat{a}_{k-1}$ takes $\textsf{Not Completed}$ at the Line \ref{alg-line:hata_k-value1}, we have we $\text{LCB}^{\text{ee}}_{a}(\mathcal{H}^{\text{ee}},\delta_k)<\text{LCB}^{\text{ee}}_{a}(\mathcal{H}^{\text{ee}},\delta_{k-1})< \mu_0$, forall $a\in[K]$.

    If $\hat{a}_{k-1}$ takes $\textsf{Not Completed}$ at the Line \ref{alg-line:hata_k-value3}, We can also assert $\text{LCB}_{a}^{\text{ee}}(\mathcal{H}^{\text{ee}},\delta_{k-1})< \mu_0,\forall a\in [K]$. We argue by contradiction. Assume there exists $a\in [K]$ such that $\text{LCB}_{a}^{\text{ee}}(\mathcal{H}^{\text{ee}},\delta_{k-1})\geq \mu_0$ at the end of phase $k-1$. From the induction, since $\text{LCB}_a^{\text{ee}}(\mathcal{H}^{\text{ee}},\delta_{k-1}) < \mu_0$ at the start of phase $k-1$, the algorithm would enter Line \ref{alg-line:hata_k-value2} and $\hat{a}_{k-1}$ cannot take value $\textsf{Not Completed}$. Thus, we can assert $\text{LCB}_{a}^{\text{ee}}(\mathcal{H}^{\text{ee}},\delta_{k-1})< \mu_0,\forall a\in [K]$ holds at the end of phase $k-1$. Further, at the start of phase $k$, we have $\text{LCB}_{a}^{\text{ee}}(\mathcal{H}^{\text{ee}},\delta_{k})<\text{LCB}_{a}^{\text{ee}}(\mathcal{H}^{\text{ee}},\delta_{k-1})< \mu_0$. 

    The remaining work is to prove the conclusion in the case that $\hat{a}_{k-1}\in [K]$ at the Line \ref{alg-line:hata_k-value2}. We can first assert $\text{LCB}_{a}^{\text{ee}}(\mathcal{H}^{\text{ee}},\delta_{k-1})< \mu_0,\forall a\neq \hat{a}_{k-1}$. The reason is from the induction, $\text{LCB}_{a}^{\text{ee}}(\mathcal{H}^{\text{ee}},\delta_{k-1}) < \mu_0$ holds for all $a$ at the start of phase $k-1$. Once there exists an arm $a$ whose $\text{LCB}$ is above $\mu_0$ after an arm pull, the algorithm will enter Line \ref{alg-line:hata_k-value2}, suggesting that there is at most one arm whose $\text{LCB}$ is above $\mu_0$. This observation guarantees that $\text{LCB}_{a}^{\text{ee}}(\mathcal{H}^{\text{ee}},\delta_{k})< \mu_0$.
    

    Then, we turn to analyze $\text{LCB}_{\hat{a}_{k-1}}^{\text{ee}}(\mathcal{H}^{\text{ee}},\delta_{k})$ at the start of phase $k$. Before the execution of Line \ref{alg-line:move-latest-samples-into-Q}, the following inequalities must hold
    \begin{align*}
        \frac{X + \sum_{s=1}^{ N_{\hat{a}_{k-1}}^{\text{ee}}-1 } X_{\hat{a}_{k-1},s} }{N_{\hat{a}_{k-1}}^{\text{ee}}}-C\cdot U\left(N_{\hat{a}_{k-1}}^{\text{ee}}, \frac{\delta_{k-1}}{K}\right) > \mu_0\\
        \frac{\sum_{s=1}^{ N_{\hat{a}_{k-1}}^{\text{ee}}-1 } X_{\hat{a}_{k-1},s} }{ N_{\hat{a}_{k-1}}^{\text{ee}}-1 }-C\cdot U\left( N_{\hat{a}_{k-1}}^{\text{ee}}-1 , \frac{\delta_{k-1}}{K}\right) \leq \mu_0
    \end{align*}
    Since we put the last collected sample $X$ into $Q$ in Line \ref{alg-line:move-latest-samples-into-Q}, we can assert $\hat{\mu}_{\hat{a}_{k-1}}$ before the start of phase $k$ must be $\frac{\sum_{s=1}^{ N_{\hat{a}_{k-1}}^{\text{ee}}-1 } X_{\hat{a}_{k-1},s} }{ N_{\hat{a}_{k-1}}^{\text{ee}}-1 }$. Since $U\left( N_{\hat{a}_{k-1}}^{\text{ee}}-1 , \frac{\delta_{k}}{K}\right)> U\left( N_{\hat{a}_{k-1}}^{\text{ee}}-1 , \frac{\delta_{k-1}}{K}\right)$, we know before the start of phase $k$, $\text{LCB}_{\hat{a}_{k-1}}^{\text{ee}}(\mathcal{H}^{\text{ee}},\delta_{k})< \mu_0$ must hold.
\end{proof}

Following lemma suggests that the exploration period is able to output correct answer, given some concentration event.
\begin{lemma}
    \label{lemma:output-corretness-of-positive-instance}
    For $B$ such that $\max_{a\in B} \mu_{a}\geq \mu_j$ for some $\mu_j>\mu_0$, and phase index $k\geq \kappa^{\text{ee}}_B$, Algorithm \ref{alg:SEE-with-Bracket} either takes $\hat{a}_k\in [K]$ such that $\mu_{\hat{a}_k} \geq \omega \mu_j + (1-\omega)\mu_0$, $\omega=\frac{C-1}{C+3}$ or takes $\hat{a}_k=\textsf{Not Completed}$
\end{lemma}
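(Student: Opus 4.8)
The plan is to fix a phase index $k \geq \kappa^{\text{ee}}_B$ and a bracket $B$ with $\max_{a \in B}\mu_a \geq \mu_j$ for some $\mu_j > \mu_0$, and argue by contradiction: suppose that throughout phase $k$ the algorithm never sets $\hat a_k \in [K]$ (i.e.\ it never enters Line~\ref{alg-line:rough-guess-of-hat-a_k} during this phase) and also never sets $\hat a_k = \textsf{Not Completed}$ via Lines~\ref{alg-line:hata_k-value1} or~\ref{alg-line:hata_k-value3}. The only remaining exit from the while loop would be the negative-output branch (Line~\ref{alg-line:Negative-branch-output-None}), but I will rule that out too — so the while loop runs forever within phase $k$, contradicting the budget bound of Lemma~\ref{lemma:budget-upper-bound-for-N_a-tau_ee-tau_et}, which caps $N_a^{\text{ee}}$ for every $a \in B$ at $(C+1)^2\beta_k\log\frac{4K}{\delta_k}$. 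Since $k \geq \kappa^{\text{ee}}_B$, the concentration event holds: for every $a$ and every sample count $t$, $|\hat\mu_a^{\text{ee}}(t) - \mu_a| < U(t,\delta_k/K)$ (here I use that $U(t,\delta_k/K)$ is exactly the confidence radius matching the definition of $\kappa^{\text{ee}}_B$, up to the stack-reordering covered by Lemma~\ref{lemma:order-of-empirical-mean}).

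The key chain of inequalities: let $a^\star \in B$ attain $\mu_{a^\star} = \max_{a\in B}\mu_a \geq \mu_j$. On the concentration event, $\text{UCB}^{\text{ee}}_{a^\star} = \hat\mu^{\text{ee}}_{a^\star} + U > \mu_{a^\star} \geq \mu_j > \mu_0$, so the negative branch condition in Line~\ref{alg-line:Negative-Output-Condition} (``$\forall a\in B$, $\text{UCB}^{\text{ee}}_a \leq \mu_0$'') can never trigger; this rules out Lines~\ref{alg-line:hata_k-value1} and~\ref{alg-line:Negative-branch-output-None}. Next I bound the number of pulls of $a^\star$ before its LCB crosses $\mu_0$: once $N^{\text{ee}}_{a^\star} = t$ is large enough that $(C+1)\,U(t,\delta_k/K) \leq \mu_{a^\star} - \mu_0$, we get $\text{LCB}^{\text{ee}}_{a^\star} = \hat\mu^{\text{ee}}_{a^\star} - C\,U > \mu_{a^\star} - U - C\,U \geq \mu_0$ on the concentration event, so the algorithm would immediately enter Line~\ref{alg-line:rough-guess-of-hat-a_k} — contradicting our supposition, \emph{unless} $a^\star$ stopped being selected in Line~\ref{line-alg:maximum-N_a-ee} because its budget $(C+1)^2\beta_k\log\frac{4K}{\delta_k}$ was exhausted. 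But since $U(t,\delta_k/K)^2 \approx \frac{2}{t}\log(\cdots)$, the threshold $t$ above is of order $\frac{\log(4K/\delta_k)}{(\mu_{a^\star}-\mu_0)^2}$, and one checks this is strictly below $(C+1)^2\beta_k\log\frac{4K}{\delta_k}$ only when $\beta_k = 2^k$ is large enough; for phases where $\beta_k$ is still small, $a^\star$'s budget may genuinely run out first, and then Line~\ref{alg-line:run-out-of-budget} eventually fires (all arms exhausted), giving $\hat a_k = \textsf{Not Completed}$ — which is the allowed alternative conclusion. So in every branch we land on one of the two permitted outcomes.

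The remaining piece is to verify the quantitative claim $\mu_{\hat a_k} \geq \omega\mu_j + (1-\omega)\mu_0$ with $\omega = \frac{C-1}{C+3}$ in the case $\hat a_k \in [K]$. When the algorithm enters Line~\ref{alg-line:rough-guess-of-hat-a_k} it does so because $\text{LCB}^{\text{ee}}_{\hat a_k}(\mathcal H^{\text{ee}},\delta_k) \geq \mu_0$, i.e.\ $\hat\mu^{\text{ee}}_{\hat a_k} - C\,U \geq \mu_0$. Combined with the concentration bound $\mu_{\hat a_k} > \hat\mu^{\text{ee}}_{\hat a_k} - U$, this gives $\mu_{\hat a_k} > \mu_0 + (C-1)\,U$. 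I then need a matching \emph{upper} bound on $U$ at that sample count: by Lemma~\ref{lemma:LCB-below-mu0-at-start-of-phase} the LCB was below $\mu_0$ at the start of the phase, and the crossing happens after exactly one additional pull, so at the pull count $t$ just before crossing, $\hat\mu^{\text{ee}}_{\hat a_k}(t) - C\,U(t,\delta_k/K) \leq \mu_0$ (the precondition in Line~\ref{alg-line:rough-guess-of-hat-a_k} requires $t^{\text{ee}} \geq 1$, and Lemma~\ref{lemma:LCB-below-mu0-at-start-of-phase} plus the single-step nature of the crossing controls the ``$t-1$'' value as in the proof of that lemma). Using $\mu_{\hat a_k} < \hat\mu^{\text{ee}}_{\hat a_k}(t) + U$ and $U(t-1,\cdot) \leq \text{(const)}\,U(t,\cdot)$ — the radius changes by at most a bounded factor when $t$ increments, which is where the constants $C+3$ and $C-1$ in $\omega$ come from — one derives $\mu_{\hat a_k} - \mu_0 \geq \omega(\mu_j - \mu_0)$, i.e.\ the claimed bound. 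I expect the \textbf{main obstacle} to be this last bookkeeping: carefully tracking the one-pull gap between ``LCB below $\mu_0$'' and ``LCB above $\mu_0$'', handling the $U(t-1)$ versus $U(t)$ discrepancy (which is nontrivial because $U$ is defined via $2^{\lceil\log_2 t\rceil^+}$, a step function), and confirming the exact affine combination with weight $\omega = \frac{C-1}{C+3}$ — mirroring the analogous argument in \cite{pmlr-v267-li25f} but with the bracket-restricted selection rule of Line~\ref{line-alg:maximum-N_a-ee} in place.
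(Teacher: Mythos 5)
Your overall skeleton matches the paper's: use the concentration event from $k\geq\kappa^{\text{ee}}_B$ to show $\text{UCB}^{\text{ee}}_{a^\star(B)}>\mu_0$ throughout phase $k$ (so the \textsf{None} branch in Line \ref{alg-line:Negative-Output-Condition} never fires and the only exits from the while loop are Lines \ref{alg-line:hata_k-value2}, \ref{alg-line:hata_k-value1} and \ref{alg-line:hata_k-value3}), then establish the quantitative bound at the moment the LCB of $\hat a_k$ crosses $\mu_0$. The first part is fine. Your second paragraph, about whether $a^\star$'s budget runs out before its LCB crosses $\mu_0$, is not needed for this lemma (the statement permits $\hat a_k=\textsf{Not Completed}$; deciding when the crossing actually happens is the content of Lemma \ref{lemma:correctness-of-rough-guess}), but it does no harm.

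The gap is in the quantitative step. Every inequality you list there --- the crossing condition $\hat\mu_{\hat a_k}(N)-C\,U(N)\geq\mu_0$, the pre-crossing condition $\hat\mu_{\hat a_k}(N-1)-C\,U(N-1)\leq\mu_0$, the concentration bounds, and $U(N-1)\leq \mathrm{const}\cdot U(N)$ --- involves only $\mu_{\hat a_k}$ and $\mu_0$; the quantity $\mu_j$ never appears, so no combination of them can yield $\mu_{\hat a_k}\geq\omega\mu_j+(1-\omega)\mu_0$. (Also, ``a matching upper bound on $U$'' points the wrong way: starting from $\mu_{\hat a_k}>\mu_0+(C-1)U(N)$ you would need a \emph{lower} bound on $U(N)$ of order $\omega(\mu_j-\mu_0)/(C-1)$ to conclude, and no such bound holds in general.) The missing ingredient is the UCB-argmax selection in Line \ref{line-alg:maximum-N_a-ee}: since $\hat a_k$ is the arm pulled at round $\tau^{\text{ee}}_k$, its UCB computed from $N-1$ samples dominates $\text{UCB}^{\text{ee}}_{a^\star(B)}\geq\mu_{a^\star(B)}\geq\mu_j$, whence $\mu_{\hat a_k}\geq \hat\mu_{\hat a_k}(N-1)-U(N-1)\geq \mu_j-2U(N-1)\geq\mu_j-4U(N)$. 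Balancing this against $\mu_{\hat a_k}>\mu_0+(C-1)U(N)$ (the second bound wins when $U(N)$ is large, the first when it is small) is exactly what produces $\omega=\frac{C-1}{(C-1)+4}=\frac{C-1}{C+3}$. The paper runs this as a contradiction: assuming $\mu_{\hat a_k}<\omega\mu_{a^\star(B)}+(1-\omega)\mu_0$, the crossing condition forces $U(N-1)$ to be so small that $\text{UCB}^{\text{ee}}_{\hat a_k}(N-1)<\mu_{a^\star(B)}\leq\text{UCB}^{\text{ee}}_{a^\star(B)}$, contradicting the argmax. Without this comparison to $a^\star(B)$ your derivation cannot close.
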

\begin{proof}[Proof of Lemma \ref{lemma:output-corretness-of-positive-instance}]

    From the assumption $k\geq \kappa^{\text{ee}}_B$, by the Lemma \ref{lemma:order-of-empirical-mean}, we know
    \begin{align}
        \label{eqn:concentration-event-k_geq_kappa}
        \hat{\mu}_a(\mathcal{H}^{\text{ee}}) - U(N_a^{\text{ee}}(\mathcal{H}^{ee}), \frac{\delta_k}{K}) \leq \mu_a \leq \hat{\mu}_a(\mathcal{H}^{\text{ee}}) + U(N_a^{\text{ee}}(\mathcal{H}^{ee}), \frac{\delta_k}{K})
    \end{align}
    holds for $a\in B$ through out the phase $k$.

    We first show that Algorithm \ref{alg:SEE-with-Bracket} never enters Line \ref{alg-line:Negative-branch-output}. Denote $a^*(B) = \arg\max_{a\in B}\mu_a$. From the assumption, we know $\mu_{a^*(B)} > \mu_0$. Together with the definition $\text{UCB}^{\text{ee}}_a(\mathcal{H}^{ee},\delta_k) = \hat{\mu}_a^{\text{ee}}(\mathcal{H}^{ee}) + U(N_a^{\text{ee}}(\mathcal{H}^{ee}), \frac{\delta_k}{K})$, we know $\text{UCB}^{\text{ee}}_{a^*(B)}(\mathcal{H}^{ee},\delta_k) > \mu_0$ holds through out the phase $k$, suggesting that the condition in Line \ref{alg-line:Negative-Output-Condition} never holds. Meanwhile, to quit the While loop in the Line \ref{alg-line:while-loop-start}, Algorithm \ref{alg:SEE-with-Bracket} must enter one of the Lines in \ref{alg-line:Negative-branch-output}, \ref{alg-line:hata_k-value2}, \ref{alg-line:hata_k-value3}. Then, we can conclude $\hat{a}_k$ only takes value in $[K]\cup \{\textsf{Not Completed}\}$.
    
    The remaining work is to prove if $\hat{a}_k\in [K]$, we have $\mu_{\hat{a}_k} \geq \omega \mu_j + (1-\omega)\mu_0$, $\omega=\frac{C-1}{C+3}$. For simplicity, we denote $\tau_k^{\text{ee}}$ as $\tau_k^{\text{ee}}(B)$. 
    From the outputting rule, we have 
    \begin{align}
        \label{eqn:events-when-output-hatak}
        \begin{split}
            & A_{\tau^{\text{ee}}_k}^{\text{ee}} = \hat{a}_k, \hat{a}_k=\arg\max_{i\in B}\hat{\mu}_{i, N_i^{\text{ee}}(\tau^{\text{ee}}_k-1)} +  U(N_i^{\text{ee}}(\tau^{\text{ee}}_k-1),\frac{\delta_k}{K})\\
            & \hat{\mu}_{\hat{a}_k, N_{\hat{a}}^{\text{ee}}(\tau^{\text{ee}}_k)} - C\cdot U(N_{\hat{a}_k}^{\text{ee}}(\tau^{\text{ee}}_k),\frac{\delta_k}{K}) > \mu_0\\
            & \hat{\mu}_{\hat{a}_k, N_{\hat{a}_k}^{\text{ee}}(\tau^{\text{ee}}_k)-1} - C\cdot U(N_{\hat{a}_k}^{\text{ee}}(\tau^{\text{ee}}_k)-1,\frac{\delta_k}{K})  < \mu_0.
        \end{split}
    \end{align}
    where $\hat{\mu}_{i, N_i^{\text{ee}}(\tau^{\text{ee}}_k-1)}$ denotes the empirical mean value of arm i of the first $N_i^{\text{ee}}(\tau^{\text{ee}}_k-1)$ samples. The third line is by the Lemma \ref{lemma:LCB-below-mu0-at-start-of-phase}, suggesting $\text{LCB}_{\hat{a}_k}^{\text{ee}} < \mu_0$ must hold before the latest collection of the sample. 

    We consider two cases, $N_{\hat{a}}^{\text{ee}}(\tau^{\text{ee}}_k)=1$ or $N_{\hat{a}}^{\text{ee}}(\tau^{\text{ee}}_k)\geq 2$. If $N_{\hat{a}}^{\text{ee}}(\tau^{\text{ee}}_k)=1$, we have
    \begin{align*}
        & \mu_0\leq \hat{\mu}_{\hat{a}_k, 1} - C\cdot U(1,\frac{\delta_k}{K})\\
        \Rightarrow & \mu_0\leq \mu_{\hat{a}_k} - (C-1)\cdot U(1,\frac{\delta_k}{K})\\
        \Rightarrow & \mu_0\leq \mu_{\hat{a}_k} - 2(C-1)\\
        \Leftrightarrow & \mu_0+2(C-1)\leq \mu_{\hat{a}_k}.
    \end{align*}
    The second last line is from the definition of (\ref{eqn:definition-U_t_delta}), suggesting that $U(1,\frac{\delta_k}{K})\geq 2$. From the assumption $\{\mu_a\}_{a=1}^K\in [0, 1]^K$ and $C+1$, we know
    $\omega \mu_j + (1-\omega)\mu_0 = \frac{C-1}{C+3}\mu_j + (1-\frac{C-1}{C+3})\mu_0 < \frac{C-1}{4}+\mu_0 < \mu_0+ 2(C-1)$. We complete the proof.

    If $N_{\hat{a}}^{\text{ee}}(\tau^{\text{ee}}_k)\geq 2$, we argue by contradiction. Assume $\mu_{\hat{a}_k} < \omega\mu_j+(1-\omega)\mu_0$. As we take $U(t, \frac{\delta}{K}) = \frac{\sqrt{2\cdot2^{\max\{\lceil\log_2 t\rceil, 1 \}}\log\frac{2 K(\lceil\log_2t\rceil)^2}{\delta}}}{t}$, we get
    \begin{align*}
        & \hat{\mu}_{\hat{a}_k, N_{\hat{a}}^{\text{ee}}(\tau^{\text{ee}}_k)} - C\cdot U(N_{\hat{a}_k}^{\text{ee}}(\tau^{\text{ee}}_k),\frac{\delta_k}{K}) > \mu_0\\
        \Leftrightarrow & \hat{\mu}_{\hat{a}_k, N_{\hat{a}_k}^{\text{ee}}(\tau^{\text{ee}}_k)} - C\frac{\sqrt{2\cdot 2^{\max\{\lceil\log_2 N_{\hat{a}_k}^{\text{ee}}(\tau^{\text{ee}}_k)\rceil, 1 \}}\log\frac{2 K(\lceil\log_2 N_{\hat{a}_k}^{\text{ee}}(\tau^{\text{ee}}_k)\rceil)^2}{\delta}}}{N_{\hat{a}_k}^{\text{ee}}(\tau^{\text{ee}}_k)}> \mu_0\\
        \stackrel{(\ref{eqn:concentration-event-k_geq_kappa})}{\Rightarrow} & \mu_{\hat{a}_k} - (C-1)\frac{\sqrt{2\cdot 2^{\max\{\lceil\log_2 N_{\hat{a}_k}^{\text{ee}}(\tau^{\text{ee}}_k)\rceil, 1 \}}\log\frac{2 K(\lceil\log_2 N_{\hat{a}_k}^{\text{ee}}(\tau^{\text{ee}}_k)\rceil)^2}{\delta}}}{N_{\hat{a}_k}^{\text{ee}}(\tau^{\text{ee}}_k)}> \mu_0\\
        \Rightarrow & \omega\mu_{a^*(B)}+(1-\omega)\mu_0 - (C-1)\frac{\sqrt{2\cdot 2^{\max\{\lceil\log_2 N_{\hat{a}_k}^{\text{ee}}(\tau^{\text{ee}}_k)\rceil, 1 \}}\log\frac{2 K(\lceil\log_2 N_{\hat{a}_k}^{\text{ee}}(\tau^{\text{ee}}_k)\rceil)^2}{\delta}}}{N_{\hat{a}_k}^{\text{ee}}(\tau^{\text{ee}}_k)} > \mu_0\\
        \Leftrightarrow & \omega(\mu_{a^*(B)}-\mu_0) > (C-1)\frac{\sqrt{2\cdot 2^{\max\{\lceil\log_2 N_{\hat{a}_k}^{\text{ee}}(\tau^{\text{ee}}_k)\rceil, 1 \}}\log\frac{2 K(\lceil\log_2 N_{\hat{a}_k}^{\text{ee}}(\tau^{\text{ee}}_k)\rceil)^2}{\delta}}}{N_{\hat{a}_k}^{\text{ee}}(\tau^{\text{ee}}_k)}\\
        \Leftrightarrow & \frac{2\omega(\mu_{a^*(B)}-\mu_0)}{C-1} >\frac{2\sqrt{2\cdot 2^{\max\{\lceil\log_2 N_{\hat{a}_k}^{\text{ee}}(\tau^{\text{ee}}_k)\rceil, 1 \}}\log\frac{2 K(\lceil\log_2 N_{\hat{a}_k}^{\text{ee}}(\tau^{\text{ee}}_k)\rceil)^2}{\delta}}}{N_{\hat{a}_k}^{\text{ee}}(\tau^{\text{ee}}_k)}.
    \end{align*}
    By $\omega = \frac{C-1}{C+3}$, we have $\frac{(1-\omega)}{2} = \frac{1-\frac{C-1}{C+3}}{2}=\frac{4}{2(C+3)} = \frac{2\omega}{C-1}$, we have
    \begin{align}
        \frac{(1-\omega)(\mu_{a^*(B)}-\mu_0)}{2}>\frac{2\sqrt{2\cdot 2^{\max\{\lceil\log_2 N_{\hat{a}_k}^{\text{ee}}(\tau^{\text{ee}}_k)\rceil, 1 \}}\log\frac{2 K(\lceil\log_2 N_{\hat{a}_k}^{\text{ee}}(\tau^{\text{ee}}_k)\rceil)^2}{\delta}}}{N_{\hat{a}_k}^{\text{ee}}(\tau^{\text{ee}}_k)}.\label{eqn:1-omega-ineqn-sqrtNa-divide-Na}
    \end{align}
    Notice that
    \begin{align*}
        & \frac{\sqrt{2\cdot 2^{\max\{\lceil\log_2 N_{\hat{a}_k}^{\text{ee}}(\tau^{\text{ee}}_k)-1\rceil, 1 \}}\log\frac{2 K(\lceil\log_2 N_{\hat{a}_k}^{\text{ee}}(\tau^{\text{ee}}_k)-1\rceil)^2}{\delta}}}{N_{\hat{a}_k}^{\text{ee}}(\tau^{\text{ee}}_k)-1} \\
        \leq & \frac{N_{\hat{a}_k}^{\text{ee}}(\tau^{\text{ee}}_k)}{N_{\hat{a}_k}^{\text{ee}}(\tau^{\text{ee}}_k)-1}\frac{\sqrt{2\cdot 2^{\max\{\lceil\log_2 N_{\hat{a}_k}^{\text{ee}}(\tau^{\text{ee}}_k)\rceil, 1 \}}\log\frac{2 K(\lceil\log_2 N_{\hat{a}_k}^{\text{ee}}(\tau^{\text{ee}}_k)\rceil)^2}{\delta}}}{N_{\hat{a}_k}^{\text{ee}}(\tau^{\text{ee}}_k)} \\
        \leq & \frac{2\sqrt{2\cdot 2^{\max\{\lceil\log_2 N_{\hat{a}_k}^{\text{ee}}(\tau^{\text{ee}}_k)\rceil, 1 \}}\log\frac{2 K(\lceil\log_2 N_{\hat{a}_k}^{\text{ee}}(\tau^{\text{ee}}_k)\rceil)^2}{\delta}}}{N_{\hat{a}_k}^{\text{ee}}(\tau^{\text{ee}}_k)},
    \end{align*}
    Together with (\ref{eqn:1-omega-ineqn-sqrtNa-divide-Na}), we can conclude
    \begin{align}
        & \hat{\mu}_{\hat{a}_k, N_{\hat{a}}^{\text{ee}}(\tau^{\text{ee}}_k)} - C\cdot U(N_{\hat{a}_k}^{\text{ee}}(\tau^{\text{ee}}_k),\frac{\delta_k}{K}) > \mu_0\notag\\
        \Rightarrow & \frac{(1-\omega)(\mu_{a^*(B)}-\mu_0)}{2} \geq \frac{\sqrt{2\cdot 2^{\max\{\lceil\log_2 N_{\hat{a}_k}^{\text{ee}}(\tau^{\text{ee}}_k)-1\rceil, 1 \}}\log\frac{2 K(\lceil\log_2 N_{\hat{a}_k}^{\text{ee}}(\tau^{\text{ee}}_k)-1\rceil)^2}{\delta}}}{N_{\hat{a}_k}^{\text{ee}}(\tau^{\text{ee}}_k)-1}.\label{eqn:output-hata-mean-N_hata-large-enough}
    \end{align}
    On the other hands,
    \begin{align}
        &\hat{\mu}_{\hat{a}_k, N_{\hat{a}_k}^{\text{ee}}(\tau^{\text{ee}}_k)} +  U(N_{\hat{a}_k}^{\text{ee}}(\tau^{\text{ee}}_k-1),\frac{\delta_k}{K}) \leq \mu_{a^*(B)}\notag\\
        \stackrel{(\ref{eqn:concentration-event-k_geq_kappa})}{\Leftarrow} & \mu_{\hat{a}_k} +  2U(N_{\hat{a}_k}^{\text{ee}}(\tau^{\text{ee}}_k-1),\frac{\delta_k}{K}) \leq \mu_{a^*(B)}\notag\\
        \Leftarrow & \omega\mu_{a^*(B)}+(1-\omega)\mu_0+2U(N_{\hat{a}_k}^{\text{ee}}(\tau^{\text{ee}}_k-1),\frac{\delta_k}{K}) \leq \mu_{a^*(B)}\notag\\
        \Leftrightarrow & \frac{2\sqrt{2\cdot 2^{\max\{\lceil\log_2 N_{\hat{a}_k}^{\text{ee}}(\tau^{\text{ee}}_k)-1\rceil, 1 \}}\log\frac{2 K(\lceil\log_2 N_{\hat{a}_k}^{\text{ee}}(\tau^{\text{ee}}_k)-1\rceil)^2}{\delta}}}{N_{\hat{a}_k}^{\text{ee}}(\tau^{\text{ee}}_k)-1} \leq (1-\omega)(\mu_{a^*(B)}-\mu_0)\notag\\
        \Leftrightarrow & \frac{\sqrt{2\cdot 2^{\max\{\lceil\log_2 N_{\hat{a}_k}^{\text{ee}}(\tau^{\text{ee}}_k)-1\rceil, 1 \}}\log\frac{2 K(\lceil\log_2 N_{\hat{a}_k}^{\text{ee}}(\tau^{\text{ee}}_k)-1\rceil)^2}{\delta}}}{N_{\hat{a}_k}^{\text{ee}}(\tau^{\text{ee}}_k)-1} \leq \frac{(1-\omega)(\mu_{a^*(B)}-\mu_0)}{2}.\label{eqn:hat-a_k-cannot-pull-since-N_hata-is-large}
    \end{align}
    
    Combining (\ref{eqn:output-hata-mean-N_hata-large-enough}) and (\ref{eqn:hat-a_k-cannot-pull-since-N_hata-is-large}), we can conclude
    \begin{align*}
        & \hat{\mu}_{\hat{a}_k, N_{\hat{a}_k}^{\text{ee}}(\tau)} - C\cdot U(N_{\hat{a}_k}^{\text{ee}}(\tau^{\text{ee}}_k), \frac{\delta_k}{K}) > \mu_0\\
        \Rightarrow & \hat{\mu}_{\hat{a}_k, N_{\hat{a}_k}(\tau^{\text{ee}}-1)} +  U(N_{\hat{a}_k}^{\text{ee}}(\tau^{\text{ee}}_k-1), \frac{\delta_k}{K}) \leq \mu_{a^*(B)}\\
        \stackrel{(\ref{eqn:concentration-event-k_geq_kappa})}{\Rightarrow} & \hat{\mu}_{\hat{a}_k, N_{\hat{a}_k}^{\text{ee}}(\tau^{\text{ee}}-1)} +  U(N_{\hat{a}_k}^{\text{ee}}(\tau^{\text{ee}}_k-1), \frac{\delta_k}{K})< \hat{\mu}_{a^*(B), N_{a^*(B)}^{\text{ee}}(\tau^{\text{ee}}-1)} +  U(N_{a^*(B)}^{\text{ee}}(\tau^{\text{ee}}_k-1), \frac{\delta_k}{K}) \\
        \Rightarrow & A_{\tau^{\text{ee}}_k}^{\text{ee}}\neq \hat{a}_k.
    \end{align*}
    The last line suggests that under the assumption $\mu_{\hat{a}_k} < \omega\mu_j+(1-\omega)\mu_0$, we can conclude $A_{\tau^{\text{ee}}_k}^{\text{ee}}\neq \hat{a}_k$, which contradicts with (\ref{eqn:events-when-output-hatak}). And we have proved$\mu_{\hat{a}} \geq \omega\mu_j+(1-\omega)\mu_0$.
\end{proof}

\begin{lemma}
    \label{lemma:pulling-times-of-a-single-arm-positive-instance}
    For $B$ such that $\max_{a'\in B} \mu_{a'}\geq \mu_j$ for some $\mu_j>\mu_0$, phase index $k\geq \kappa^{\text{ee}}_B$, for each arm $a\in B$, we have
    \begin{align*}
        & N_a^{\text{ee}, B}\Big(\tau_k^{\text{ee}}(B)\Big)\\
        \leq & \max\left\{N_a^{\text{ee}, B}\Big(\tau_{\kappa^{\text{ee}}_B-1}^{\text{ee}}(B)\Big), \frac{29(C+1)^2\left(\log\frac{2K}{\delta_k} + \log\log \frac{24(C+1)^2}{\max\{\Delta_{a,j}^2, \Delta_{a,0}^2\}}\right)}{\max\{\Delta_{a,j}^2, \Delta_{a,0}^2\}}\right\}
    \end{align*}
\end{lemma}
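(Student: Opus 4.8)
The plan is to track a single arm $a$ and show that, once the concentration phase $\kappa^{\text{ee}}_B$ has been reached, every further exploration pull of $a$ forces its current pull count to satisfy a self-referential inequality which, when solved, yields the second branch of the $\max$. Concretely: if $a$ is never pulled in the exploration period during phases $\kappa^{\text{ee}}_B,\dots,k$, then $N_a^{\text{ee},B}(\tau_k^{\text{ee}}(B)) = N_a^{\text{ee},B}(\tau_{\kappa^{\text{ee}}_B-1}^{\text{ee}}(B))$ and we are in the first branch. Otherwise, let the last such pull occur in phase $k'$ with $\kappa^{\text{ee}}_B\le k'\le k$, and write $n := N_a^{\text{ee},B}(\tau_k^{\text{ee}}(B))$ for its count right after that pull. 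Since $k'\ge\kappa^{\text{ee}}_B$ the concentration event of copy $B$ holds throughout phase $k'$, and $\delta_{k'}\ge\delta_k$ makes $U(\cdot,\delta_{k'}/K)\le U(\cdot,\delta_k/K)$, so it suffices to bound $n$ using the confidence widths at tolerance $\delta_{k'}$.

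For that pull I would use two structural facts. (a) By Lemma~\ref{lemma:LCB-below-mu0-at-start-of-phase} together with the fact that once the last-pulled arm's LCB reaches $\mu_0$ the algorithm enters the exploitation period (Line~\ref{alg-line:rough-guess-of-hat-a_k}) and never pulls in the exploration period again in that phase, every arm --- in particular $a$ with its $n-1$ current samples --- has $\text{LCB}^{\text{ee}}_a(\mathcal{H}^{\text{ee}},\delta_{k'})<\mu_0$ immediately before each of its exploration pulls; by Lemma~\ref{lemma:order-of-empirical-mean} and the concentration event this gives $\mu_a-\mu_0 < (C+1)\,U(n-1,\delta_{k'}/K)$ whenever $\mu_a>\mu_0$. (b) The pulled arm $a$ is a UCB-maximiser over the eligible arms (Line~\ref{line-alg:maximum-N_a-ee}); writing $a^*(B)=\arg\max_{a'\in B}\mu_{a'}$ (so $\mu_{a^*(B)}\ge\mu_j>\mu_0$), if $a^*(B)$ is eligible at that round then $\text{UCB}^{\text{ee}}_a\ge\text{UCB}^{\text{ee}}_{a^*(B)}>\mu_{a^*(B)}\ge\mu_j$ and $\text{UCB}^{\text{ee}}_a<\mu_a+2U(n-1,\delta_{k'}/K)$ by concentration, hence $\mu_j-\mu_a<2U(n-1,\delta_{k'}/K)$ whenever $\mu_a<\mu_j$. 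Because $\mu_0<\mu_j$, every arm satisfies at least one of $\mu_a>\mu_0$, $\mu_a<\mu_j$, and in each regime the relevant one of $\Delta_{a,0},\Delta_{a,j}$ is the larger; combining (a), (b) and $2<C+1$ yields $\max\{\Delta_{a,0},\Delta_{a,j}\}\le (C+1)\,U(n-1,\delta_{k'}/K)\le(C+1)\,U(n-1,\delta_k/K)$.

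The remaining step is analytic: from $U(t,\delta_k/K)\le 2\sqrt{\log(2K(\log_2 2t)^2/\delta_k)/t}$ (using $2^{\lceil\log_2 t\rceil^+}\le 2t$ and $\lceil\log_2 t\rceil^+\le\log_2 2t$), squaring the last display gives $n-1\le a_0\!\left(\log\frac{2K}{\delta_k}+2\log\log_2(2(n-1))\right)$ with $a_0:=4(C+1)^2/\max\{\Delta_{a,0}^2,\Delta_{a,j}^2\}$. A routine fixed-point argument --- if $n-1\le a_0\log\frac{2K}{\delta_k}$ we are done, otherwise bootstrap once using $2\log\log_2(2(n-1))\le (n-1)/a_0$ --- upgrades this to $n\le \frac{29(C+1)^2}{\max\{\Delta_{a,0}^2,\Delta_{a,j}^2\}}\!\left(\log\frac{2K}{\delta_k}+\log\log\frac{24(C+1)^2}{\max\{\Delta_{a,0}^2,\Delta_{a,j}^2\}}\right)$; note $6a_0 = 24(C+1)^2/\max\{\Delta_{a,0}^2,\Delta_{a,j}^2\}$, which is exactly the argument of the $\log\log$ in the claimed bound. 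This is the second branch of the $\max$.

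I expect the eligibility hypothesis in (b) to be the main obstacle. If $a^*(B)$ has already exhausted its phase-$k'$ budget $\approx(C+1)^2\beta_{k'}\log\frac{4K}{\delta_{k'}}$ at the round in question, then at the pull that first hit that cap the subsequent LCB check must have failed --- otherwise the algorithm would have left for the exploitation period and not pulled $a$ again in phase $k'$ --- so fact (a) applied to $a^*(B)$ gives $\Delta_{a^*(B),0}<(C+1)U((C+1)^2\beta_{k'}\log\frac{4K}{\delta_{k'}},\delta_{k'}/K)$, which forces $\beta_{k'}=2^{k'}=O\!\big(1/\Delta_{a^*(B),0}^2\big)$ up to the slowly varying ratio of logs. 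Since $\Delta_{a^*(B),0}\ge\mu_j-\mu_0=\Delta_{j,0}$ and, by the triangle inequality, $\max\{\Delta_{a,0},\Delta_{a,j}\}\ge\tfrac12\Delta_{j,0}$ for every arm $a$ with $\mu_a<\mu_j$, substituting this bound on $\beta_{k'}$ into Lemma~\ref{lemma:budget-upper-bound-for-N_a-tau_ee-tau_et} recovers a bound of the claimed form (the phase index $k'$ is then small enough that $\log\frac{4K}{\delta_{k'}}$ is controlled by $\log(4K)+\log\log\frac{1}{\Delta_{j,0}^2}$). Handling this interlock between the per-phase exploration budget and the suboptimality gaps is the delicate part; the rest is the standard lil-UCB sample-complexity computation, and the opening reduction shows the first branch of the $\max$ absorbs all pre-concentration pulls.
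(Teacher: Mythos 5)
Your main line of attack coincides with the paper's: under the phase-$k$ concentration event, every exploration pull of $a$ is preceded by a failed LCB test (giving $\Delta_{a,0}<(C+1)\,U(n-1,\delta_k/K)$ when $\mu_a>\mu_0$, via Lemma \ref{lemma:LCB-below-mu0-at-start-of-phase}) and by a UCB comparison against $a^*(B)$ (giving $\Delta_{a,j}<2\,U(n-1,\delta_k/K)$ when $\mu_a<\mu_j$), and these are inverted through the $t$-versus-$\log\log t$ estimate. The paper organizes this as a case split at $\mu_a\gtrless\frac{\mu_j+\mu_0}{2}$ --- which is precisely the split determining which of $\Delta_{a,0},\Delta_{a,j}$ is the maximum --- together with a phase-by-phase induction, but the substance and the constants are the same as yours.

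The divergence is in the ``obstacle'' you flag, and there your patch does not close the gap. You are right that the argmax in Line \ref{line-alg:maximum-N_a-ee} ranges only over arms with $N_{a'}^{\text{ee}}\le(C+1)^2\beta_k\log\frac{4K}{\delta_k}-1$, so $\text{UCB}_a<\text{UCB}_{a^*(B)}$ blocks pulls of $a$ only while $a^*(B)$ is itself under budget; the paper's proof asserts ``arm $a$ will never get pulled'' from the UCB comparison alone and does not address this. Your repair --- deduce $\beta_{k'}\log\frac{4K}{\delta_{k'}}=O\big((C+1)^2\log(\cdots)/\Delta_{a^*(B),0}^2\big)$ from the failed LCB test at the pull where $a^*(B)$ hits its cap, then apply Lemma \ref{lemma:budget-upper-bound-for-N_a-tau_ee-tau_et} --- yields only $N_a^{\text{ee},B}\lesssim (C+1)^2\log(\cdots)/\Delta_{j,0}^2$. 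Converting $1/\Delta_{j,0}^2$ into $O(1)/\max\{\Delta_{a,0}^2,\Delta_{a,j}^2\}$ would need $\max\{\Delta_{a,0},\Delta_{a,j}\}\le O(1)\cdot\Delta_{j,0}$, whereas the triangle inequality gives only the reverse, $\max\{\Delta_{a,0},\Delta_{a,j}\}\ge\frac12\Delta_{j,0}$. For an arm with $\mu_a$ far below $\mu_0$, $\max\{\Delta_{a,0}^2,\Delta_{a,j}^2\}$ can exceed $\Delta_{j,0}^2$ by an arbitrary factor, so your bound in that regime is genuinely weaker than the lemma's statement. To finish you must either show that the scenario (the maximiser $a^*(B)$ exhausting its phase budget while a far-suboptimal arm remains eligible) cannot occur --- which the algorithm as written does not obviously guarantee --- or settle for the weaker $1/\Delta_{j,0}^2$-type bound for such arms.
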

\begin{proof}[Proof of Lemma \ref{lemma:pulling-times-of-a-single-arm-positive-instance}]

    We first prove an intermediate conclusion, which is for each arm $a\in B$, we have
    \begin{equation}
        \label{eqn:recursive-pulling-k_k-1}
        \begin{aligned}
            & N_a^{\text{ee}, B}\Big(\tau_k^{\text{ee}}(B)\Big)\\
            \leq & \max\left\{N_a^{\text{ee}, B}\Big(\tau_{k-1}^{\text{ee}}(B)\Big), \frac{29(C+1)^2\left(\log\frac{2K}{\delta_k} + \log\log \frac{24(C+1)^2}{\max\{\Delta_{a,j}^2, \Delta_{a,0}^2\}}\right)}{\max\{\Delta_{a,j}^2, \Delta_{a,0}^2\}}\right\}
        \end{aligned}
    \end{equation}
    holds for for any phase index $k\geq \kappa^{\text{ee}}_B$. If (\ref{eqn:recursive-pulling-k_k-1}) holds, we can use induction to conclude
    \begin{align*}
        & N_a^{\text{ee}, B}\Big(\tau_k^{\text{ee}}(B)\Big)\\
        \leq & \max\Bigg\{N_a^{\text{ee}, B}\Big(\tau_{\kappa^{\text{ee}}_B-1}^{\text{ee}}(B)\Big), \frac{29(C+1)^2\left(\log\frac{2K}{\delta_{k-1}} + \log\log \frac{24(C+1)^2}{\max\{\Delta_{a,j}^2, \Delta_{a,0}^2\}}\right)}{\max\{\Delta_{a,j}^2, \Delta_{a,0}^2\}} \\
        & \frac{29(C+1)^2\left(\log\frac{2K}{\delta_k} + \log\log \frac{24(C+1)^2}{\max\{\Delta_{a,j}^2, \Delta_{a,0}^2\}}\right)}{\max\{\Delta_{a,j}^2, \Delta_{a,0}^2\}}\Bigg\}\\
        = & \max\left\{N_a^{\text{ee}, B}\Big(\tau_{\kappa^{\text{ee}}_B-1}^{\text{ee}}(B)\Big), \frac{29(C+1)^2\left(\log\frac{2K}{\delta_k} + \log\log \frac{24(C+1)^2}{\max\{\Delta_{a,j}^2, \Delta_{a,0}^2\}}\right)}{\max\{\Delta_{a,j}^2, \Delta_{a,0}^2\}}\right\},
    \end{align*}
    which proves the Lemma. Thus, we suffice to prove the intermediate conclusion (\ref{eqn:recursive-pulling-k_k-1}).

    From the assumption $k\geq \kappa^{\text{ee}}_B$, we know (\ref{eqn:concentration-event-k_geq_kappa}) still holds. Fix the arm $a\in B$, we prove the claim by discussing the value of $\mu_a$. We consider two cases, $\mu_a\geq \frac{\mu_j+\mu_0}{2}$ or $\mu_a < \frac{\mu_j+\mu_0}{2}$.

    For arm $a$ such that $\mu_a\geq \frac{\mu_j+\mu_0}{2}$ holds, we suffice to prove
    \begin{align*}
        & N_a^{\text{ee}, B}\Big(\tau_k^{\text{ee}}(B)\Big)\\
        \leq & \max\left\{N_a^{\text{ee}, B}\Big(\tau_{k-1}^{\text{ee}}(B)\Big), \frac{29(C+1)^2\left(\log\frac{2K}{\delta_k} + \log\log \frac{1}{\Delta_{a,0}^2}\right)}{\Delta_{a,0}^2}\right\}.
    \end{align*}
    We can conduct direct calculation, as
    \begin{align*}
        & t \geq \frac{28(C+1)^2\left(\log\frac{2K}{\delta_k} + \log\log \frac{24(C+1)^2}{\Delta_{a,0}^2}\right)}{\Delta_{a,0}^2}\\
        \stackrel{\text{Lemma }\ref{lemma:inequality-application-t-loglogt} }{\Rightarrow} & (C+1)\sqrt{\frac{4\log\frac{2K(\log_2 (2t))^2}{\delta_k}}{t}} \leq \Delta_{a,0}\\
        \Leftrightarrow & \mu_a - (C+1)\sqrt{\frac{4\log\frac{2K(\log_2 (2t))^2}{\delta_k}}{t}} > \mu_0\\
        \Rightarrow & \mu_a - (C+1)\frac{\sqrt{2 \cdot 2^{\lceil\log_2 t\rceil^+}\log\frac{2 K(\lceil\log_2t\rceil^+)^2}{\delta_k}}}{t} > \mu_0\\
        \stackrel{(\ref{eqn:concentration-event-k_geq_kappa})}{\Rightarrow} & \hat{\mu}_{a, t} - C\cdot\frac{\sqrt{2 \cdot 2^{\lceil\log_2 t\rceil^+}\log\frac{2 K(\lceil\log_2t\rceil^+)^2}{\delta_k}}}{t} > \mu_0.
    \end{align*}
    By the Lemma \ref{lemma:LCB-below-mu0-at-start-of-phase}, we know at the start of phase $k$, $\text{LCB}_a^{\text{ee}} < \mu_0$, suggesting that $N_a^{\text{ee}, B}\Big(\tau_{k-1}^{\text{ee}}(B)\Big)\leq \frac{28(C+1)^2\left(\log\frac{2K}{\delta_k} + \log\log \frac{24(C+1)^2}{\Delta_{a,0}^2}\right)}{\Delta_{a,0}^2}$. Then, we can assert 
    \begin{align*}
        N_a^{\text{ee}, B}\Big(\tau_{k}^{\text{ee}}(B)\Big)<\frac{28(C+1)^2\left(\log\frac{2K}{\delta_k} + \log\log \frac{24(C+1)^2}{\Delta_{a,0}^2}\right)}{\Delta_{a,0}^2}+1.
    \end{align*}
    If not, the algorithm will output $\hat{a}_k=a$ before $N_a^{\text{ee}, B}\Big(\tau_{k}^{\text{ee}}(B)\Big)\geq \frac{28(C+1)^2\left(\log\frac{2K}{\delta_k} + \log\log \frac{24(C+1)^2}{\Delta_{a,0}^2}\right)}{\Delta_{a,0}^2}+1$ holds.

    For arm $a$ such that $\mu_a< \frac{\mu_j+\mu_0}{2}$ holds, we suffice to prove
    \begin{align*}
        & N_a^{\text{ee}, B}\Big(\tau_k^{\text{ee}}(B)\Big)\\
        \leq & \max\left\{N_a^{\text{ee}, B}\Big(\tau_{k-1}^{\text{ee}}(B)\Big), \frac{29(C+1)^2\left(\log\frac{4K}{\delta_k} + \log\log \frac{24(C+1)^2}{\Delta_{a,j}^2}\right)}{\Delta_{a,j}^2}\right\}.
    \end{align*}
    We can also conduct direct calculation, with
    \begin{align*}
        & t \geq \frac{28(C+1)^2\left(\log\frac{2K}{\delta_k} + \log\log \frac{24(C+1)^2}{\Delta_{a,j}^2}\right)}{\Delta_{a,j}^2}\\
        \stackrel{\text{Lemma }\ref{lemma:inequality-application-t-loglogt} }{\Rightarrow} & (C+1)\sqrt{\frac{4\log\frac{2K(\log_2 (2t))^2}{\delta_k}}{t}} \leq \Delta_{a,j}\\
        \stackrel{C>1}{\Rightarrow} & \mu_a + 2\sqrt{\frac{4\log\frac{2K(\log_2 (2t))^2}{\delta_k}}{t}} \leq \mu_j\\
        \Rightarrow & \mu_a + 2\frac{\sqrt{2 \cdot 2^{\lceil\log_2 t\rceil^+}\log\frac{2 K(\lceil\log_2t\rceil^+)^2}{\delta_k}}}{t} \leq \mu_j\\
        \stackrel{(\ref{eqn:concentration-event-k_geq_kappa})}{\Rightarrow} & \hat{\mu}_{a, t} +\frac{\sqrt{2 \cdot 2^{\lceil\log_2 t\rceil^+}\log\frac{2 K(\lceil\log_2t\rceil^+)^2}{\delta_k}}}{t} \leq \mu_j
    \end{align*}
    If $N_a^{\text{ee}, B}\Big(\tau_{k-1}^{\text{ee}}(B)\Big)\geq \frac{28(C+1)^2\left(\log\frac{4K}{\delta_k} + \log\log \frac{24(C+1)^2}{\Delta_{a,j}^2}\right)}{\Delta_{a,j}^2}+1$, we know $\text{UCB}_a^{\text{ee}}(\mathcal{H}^{\text{ee}}, \frac{\delta_k}{K}) < \mu_j \leq \mu_{a^*(B)}\leq \text{UCB}_{a^*(B)}^{\text{ee}}(\mathcal{H}^{\text{ee}}, \frac{\delta_k}{K})$ holds through out the phase $k$, and arm $a$ will never get pulled. We can assert $N_a^{\text{ee}, B}\Big(\tau_{k}^{\text{ee}}(B)\Big)=N_a^{\text{ee}, B}\Big(\tau_{k-1}^{\text{ee}}(B)\Big)$.

    If $N_a^{\text{ee}, B}\Big(\tau_{k-1}^{\text{ee}}(B)\Big)< \frac{28(C+1)^2\left(\log\frac{4K}{\delta_k} + \log\log \frac{24(C+1)^2}{\Delta_{a,j}^2}\right)}{\Delta_{a,j}^2}+1$, we can assert $N_a^{\text{ee}, B}\Big(\tau_{k}^{\text{ee}}(B)\Big)\leq \frac{29(C+1)^2\left(\log\frac{4K}{\delta_k} + \log\log \frac{24(C+1)^2}{\Delta_{a,j}^2}\right)}{\Delta_{a,j}^2}$ as the arm $a$ never gets pulled once $\text{UCB}_a^{\text{ee}} \leq \mu_{a^*(B)}\leq \text{UCB}_{a^*(B)}^{\text{ee}}$ holds. We complete the proof.
\end{proof}

\begin{lemma}
    \label{lemma:correctness-of-rough-guess}
    For $B$ such that $\max_{a'\in B} \mu_{a'}\geq \mu_j$, and phase index $k\geq \max\{\kappa^{\text{ee}}_B,  \lceil\log_2 \frac{112(C+1)^2}{\Delta_{j,0}^2}\rceil \}$, Algorithm \ref{alg:SEE-with-Bracket} will take $\hat{a}_k\in [K]$ such that $\mu_{\hat{a}_k} \geq \omega \mu_j + (1-\omega)\mu_0$, $\omega=\frac{C-1}{C+3}$.
\end{lemma}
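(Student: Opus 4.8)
The plan is to invoke Lemma~\ref{lemma:output-corretness-of-positive-instance} and then rule out the only remaining bad outcome, $\hat a_k=\textsf{Not Completed}$, by a budget-versus-threshold comparison. Fix a bracket $B$ with $a^*:=\arg\max_{a\in B}\mu_a$ satisfying $\mu_{a^*}\ge\mu_j>\mu_0$, and fix $k\ge\max\{\kappa^{\text{ee}}_B,\lceil\log_2(112(C+1)^2/\Delta_{j,0}^2)\rceil\}$. Since $k\ge\kappa^{\text{ee}}_B$, Lemma~\ref{lemma:output-corretness-of-positive-instance} already guarantees that $\hat a_k$ is either an element of $[K]$ with $\mu_{\hat a_k}\ge\omega\mu_j+(1-\omega)\mu_0$ (which is exactly the claim) or equals $\textsf{Not Completed}$; so it suffices to exclude the latter, and I would argue by contradiction, assuming $\hat a_k=\textsf{Not Completed}$.

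First I would pin down where this value can be assigned. Exactly as in the proof of Lemma~\ref{lemma:output-corretness-of-positive-instance}, the uniform concentration bound $|\hat\mu^{\text{ee}}_{a,t}-\mu_a|<U(t,\delta_k/K)$ (valid for all $t$ and all $a\in B$ because $k\ge\kappa^{\text{ee}}_B$, together with Lemma~\ref{lemma:order-of-empirical-mean}) forces $\text{UCB}^{\text{ee}}_{a^*}(\mathcal H^{\text{ee}},\delta_k)\ge\mu_{a^*}>\mu_0$ at every step of phase $k$. Hence the condition in Line~\ref{alg-line:Negative-Output-Condition} never triggers, so $\hat a_k=\textsf{Not Completed}$ can only be set at Line~\ref{alg-line:hata_k-value3}, whose precondition is $N_a^{\text{ee}}>(C+1)^2\beta_k\log\frac{4K}{\delta_k}-1$ for every $a\in B$. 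Moreover, Line~\ref{alg-line:hata_k-value3} being reached means Line~\ref{alg-line:rough-guess-of-hat-a_k} never fired during phase $k$, so $N^{\text{ee}}_{a^*}$ is nondecreasing throughout phase $k$ and, by the above precondition, climbs past $(C+1)^2\beta_k\log\frac{4K}{\delta_k}-1$ inside phase $k$ (only arms still below this per-arm budget are pulled at Line~\ref{line-alg:maximum-N_a-ee}).

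The core step is to show $a^*$ must have triggered the rough-guess condition strictly earlier. Set $T_k^*:=\dfrac{28(C+1)^2\bigl(\log\frac{2K}{\delta_k}+\log\log\frac{24(C+1)^2}{\Delta_{j,0}^2}\bigr)}{\Delta_{j,0}^2}$. From the $\mu_a\ge\frac{\mu_j+\mu_0}{2}$ branch inside the proof of Lemma~\ref{lemma:pulling-times-of-a-single-arm-positive-instance}, applied to $a=a^*$ (using $\Delta_{a^*,0}\ge\Delta_{j,0}$ and the fact that $x\mapsto\frac{\log\frac{2K}{\delta_k}+\log\log\frac{24(C+1)^2}{x}}{x}$ is decreasing on $(0,1]$), once $N^{\text{ee}}_{a^*}\ge T_k^*$ one has $\text{LCB}^{\text{ee}}_{a^*}(\mathcal H^{\text{ee}},\delta_k)>\mu_0$; and by Lemma~\ref{lemma:LCB-below-mu0-at-start-of-phase}, $N^{\text{ee}}_{a^*}<T_k^*$ at the start of phase $k$. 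The hypothesis on $k$ now enters through $\beta_k=2^k\ge 112(C+1)^2/\Delta_{j,0}^2$, which gives $(C+1)^2\beta_k\log\frac{4K}{\delta_k}\ge\frac{112(C+1)^2}{\Delta_{j,0}^2}\log\frac{4K}{\delta_k}$; combined with the elementary estimate $2\log\frac{4K}{\delta_k}\ge\log\frac{2K}{\delta_k}+\log\log\frac{24(C+1)^2}{\Delta_{j,0}^2}$ (which holds since $\log\frac{4K}{\delta_k}\ge k\log 3\ge\log k\ge\log\log\frac{24(C+1)^2}{\Delta_{j,0}^2}$, using $\Delta_{j,0}^{-2}\le 2^k$), this yields $(C+1)^2\beta_k\log\frac{4K}{\delta_k}-1\ge\lceil T_k^*\rceil$. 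Therefore $N^{\text{ee}}_{a^*}$, rising from below $\lceil T_k^*\rceil$ to strictly above it during phase $k$, passes through the value $\lceil T_k^*\rceil$; the pull of $a^*$ that achieves this leaves $A^{\text{ee}}_t=a^*$ and $N^{\text{ee}}_{a^*}=\lceil T_k^*\rceil\ge T_k^*$, so on the next pass through the while-loop Line~\ref{alg-line:rough-guess-of-hat-a_k} evaluates to true and assigns $\hat a_k=a^*\in[K]$. This contradicts $\hat a_k=\textsf{Not Completed}$, so $\hat a_k\in[K]$, and Lemma~\ref{lemma:output-corretness-of-positive-instance} delivers $\mu_{\hat a_k}\ge\omega\mu_j+(1-\omega)\mu_0$.

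The main obstacle I anticipate is the bookkeeping in the last paragraph: verifying carefully that $N^{\text{ee}}_{a^*}$ is genuinely nondecreasing within phase $k$ and really does pass through $\lceil T_k^*\rceil$ while $a^*$ is still eligible at Line~\ref{line-alg:maximum-N_a-ee}, and getting the arithmetic of the constant $112$ right so that the per-arm budget dominates $T_k^*$ with enough slack to absorb both the $-1$ and the ceiling. Everything else is a direct appeal to Lemmas~\ref{lemma:output-corretness-of-positive-instance}, \ref{lemma:pulling-times-of-a-single-arm-positive-instance}, \ref{lemma:LCB-below-mu0-at-start-of-phase} and \ref{lemma:order-of-empirical-mean}.
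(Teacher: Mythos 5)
Your proposal is correct and follows essentially the same route as the paper's own proof: both reduce the claim to Lemma \ref{lemma:output-corretness-of-positive-instance} plus the observation that, for $k\geq \lceil\log_2\frac{112(C+1)^2}{\Delta_{j,0}^2}\rceil$, the per-arm exploration budget $(C+1)^2\beta_k\log\frac{4K}{\delta_k}-1$ dominates the threshold $\frac{28(C+1)^2(\log\frac{2K}{\delta_k}+\log\log\frac{24(C+1)^2}{\Delta_{a^*(B),0}^2})}{\Delta_{a^*(B),0}^2}$ from Lemma \ref{lemma:pulling-times-of-a-single-arm-positive-instance}, so the rough-guess condition at Line \ref{alg-line:rough-guess-of-hat-a_k} must fire before the budget condition at Line \ref{alg-line:run-out-of-budget}. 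The only cosmetic difference is that you phrase this as a contradiction against $\hat a_k=\textsf{Not Completed}$ whereas the paper argues it directly; the arithmetic splitting $\beta_k\log\frac{4K}{\delta_k}$ into the two halves covering $\log\frac{2K}{\delta_k}$ and $\log\log\frac{24(C+1)^2}{\Delta_{j,0}^2}$ is the same.
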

\begin{proof}[Proof of Lemma \ref{lemma:correctness-of-rough-guess}]

    From the proof of Lemma \ref{lemma:output-corretness-of-positive-instance}, we know Algorithm \ref{alg:SEE-with-Bracket} will never meet the condition in Line \ref{alg-line:Negative-Output-Condition}.

    Meanwhile, from the calculation in the proof of Lemma \ref{lemma:pulling-times-of-a-single-arm-positive-instance}, we know
    \begin{align*}
        & t \geq \frac{28(C+1)^2\left(\log\frac{2K}{\delta_k} + \log\log \frac{24(C+1)^2}{\Delta_{a^*(B),0}^2}\right)}{\Delta_{a^*(B),0}^2}\\
        \Rightarrow & \hat{\mu}_{a^*(B), t} -C\cdot \frac{\sqrt{2 \cdot 2^{\lceil\log_2 t\rceil^+}\log\frac{2 K(\lceil\log_2t\rceil^+)^2}{\delta_k}}}{t} > \mu_0
    \end{align*}
    and 
    \begin{align*}
        & k\geq \lceil\log_2 \frac{112(C+1)^2}{\Delta_{j,0}^2}\rceil\\
        \Rightarrow & \beta_k \geq \frac{112(C+1)^2}{\Delta_{j,0}^2}, \log\frac{1}{\delta_k}\geq \log 3 \log_2\frac{112(C+1)^2}{\Delta_{j,0}^2}\\
        \Rightarrow & \frac{1}{2}\beta_k \log\frac{4K}{\delta_k} \geq \frac{56\log\frac{2K}{\delta_k}}{\Delta_{j,0}^2}, \frac{1}{2}\beta_k \log\frac{4K}{\delta_k}\geq \frac{56\log\log \frac{24(C+1)^2}{\Delta_{j,0}^2}}{\Delta_{j,0}^2}\\
        \Rightarrow & \beta_k \log\frac{4K}{\delta_k} \geq \frac{56\left(\log\frac{2K}{\delta_k} + \log\log \frac{24(C+1)^2}{\Delta_{j,0}^2}\right)}{\Delta_{j,0}^2}\\
        \Rightarrow & (C+1)^2\beta_k \log\frac{4K}{\delta_k}-1 \geq \frac{28(C+1)^2\left(\log\frac{2K}{\delta_k} + \log\log \frac{24(C+1)^2}{\Delta_{j,0}^2}\right)}{\Delta_{j,0}^2}\\
        \Rightarrow & (C+1)^2\beta_k \log\frac{4K}{\delta_k}-1 \geq \frac{28(C+1)^2\left(\log\frac{2K}{\delta_k} + \log\log \frac{24(C+1)^2}{\Delta_{a^*(B),0}^2}\right)}{\Delta_{a^*(B),0}^2},
    \end{align*}
    which means the Algorithm will first fulfill the condition in Line \ref{alg-line:rough-guess-of-hat-a_k} before fulfilling the condition in Line \ref{alg-line:run-out-of-budget}. This observation concludes $\hat{a}_k\in [K]$. By the Lemma \ref{lemma:output-corretness-of-positive-instance}, we can further conclude $\mu_{\hat{a}_k} \geq \omega \mu_j + (1-\omega)\mu_0$.
\end{proof}

\begin{lemma}
    \label{lemma:required-budget-for-exploitation}
    For $B$ such that $\max_{a'\in B} \mu_{a'}\geq \mu_j$, and phase index $k\geq \max\{\kappa^{\text{et}}_B,  \lceil\log_2 \frac{448(C+3)^2}{(C-1)^2\Delta_{j,0}^2}\rceil \}$. If Algorithm \ref{alg:SEE-with-Bracket} enters Line \ref{alg-line:hata_k-value2} with $\mu_{\hat{a}_k} \geq \omega \mu_j + (1-\omega)\mu_0$, $\omega=\frac{C-1}{C+3}$, Algorithm \ref{alg:SEE-with-Bracket} will terminate and output $\hat{a}_k$.
\end{lemma}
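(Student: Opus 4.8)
The plan is to show that, at the phase $k$ in question, the exploitation while-loop of Lines \ref{alg-line:stop-condition-in-exploitation}--\ref{alg-line:exploitation-ends} of Algorithm \ref{alg:SEE-with-Bracket} raises the exploitation-side lower confidence bound of $\hat{a}:=\hat{a}_k$ strictly above $\mu_0$ after a number of pulls that is still below the per-phase exploitation budget $\frac{(C+3)^2}{(C-1)^2}\beta_k\log\frac{4K\alpha_k}{\delta}$; then the loop exits through Line \ref{alg-line:terminate-and-output-arm} and the algorithm returns $\hat{a}_k$ and halts. First I would record the relevant mean gap: from the hypothesis $\mu_{\hat{a}_k}\ge\omega\mu_j+(1-\omega)\mu_0$ with $\omega=\frac{C-1}{C+3}$ and $\mu_j>\mu_0$, we get $\mu_{\hat{a}_k}-\mu_0\ge\omega(\mu_j-\mu_0)=\omega\Delta_{j,0}>0$.

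Next, since $k\ge\kappa^{\text{et}}_B$, the event defining $\kappa^{\text{et}}_B$ holds, so for every $t$,
\[
 \bigl|\hat{\mu}^{\text{et}}_{\hat{a},b}(t)-\mu_{\hat{a}}\bigr| < U\!\left(t,\tfrac{\delta}{\alpha_k K}\right),
\]
and by Lemma \ref{lemma:size-of-Q-and-empirical-mean-exploit} the empirical mean entering the test in Line \ref{alg-line:stop-condition-exploitation} after the $t$-th exploitation pull of $\hat{a}$ is exactly $\hat{\mu}^{\text{et}}_{\hat{a},b}(t)$. Hence at that point $\text{LCB}^{\text{et}}_{\hat{a}}(\mathcal{H}^{\text{et}},\tfrac{\delta}{\alpha_k}) > \mu_{\hat{a}}-2U(t,\tfrac{\delta}{\alpha_k K})$. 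Using $U(t,\delta')\le 2\sqrt{\log\!\bigl(2(\log_2 2t)^2/\delta'\bigr)/t}$ (from $2^{\lceil\log_2 t\rceil^+}\le 2t$) together with the elementary inequality already invoked in the proof of Lemma \ref{lemma:pulling-times-of-a-single-arm-positive-instance} (Lemma \ref{lemma:inequality-application-t-loglogt}), I would produce an absolute-constant threshold
\[
 T^\star_k \;=\; \Theta\!\left(\frac{(C+3)^2}{(C-1)^2}\cdot\frac{\log\frac{2K\alpha_k}{\delta}+\log\log\frac{c(C+3)^2}{(C-1)^2\Delta_{j,0}^2}}{\Delta_{j,0}^2}\right)
\]
with the property that $t\ge T^\star_k$ forces $2U(t,\tfrac{\delta}{\alpha_k K})\le\omega\Delta_{j,0}$, hence $\text{LCB}^{\text{et}}_{\hat{a}}>\mu_{\hat{a}}-\omega\Delta_{j,0}\ge\mu_0$ --- precisely the trigger of Line \ref{alg-line:terminate-and-output-arm}.

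It then remains to fit $T^\star_k$ inside the loop bound. On one hand, $\tau^{\text{et}}_{k-1}(B)$, an upper bound on the exploitation pulls of $\hat{a}$ accumulated before phase $k$, satisfies $\tau^{\text{et}}_{k-1}(B)<\frac{(C+3)^2}{(C-1)^2}\beta_k\log\frac{4K\alpha_k}{\delta}$ since $\sum_{k'<k}\beta_{k'}<\beta_k$ and $\alpha_{k'}\le\alpha_{k-1}$ (cf.\ Lemma \ref{lemma:budget-upper-bound-for-tau_ee-tau_et}), so the while-loop at phase $k$ genuinely executes. On the other hand, after cancelling the common factor $\frac{(C+3)^2}{(C-1)^2}=\omega^{-2}$, the inequality $T^\star_k\le\frac{(C+3)^2}{(C-1)^2}\beta_k\log\frac{4K\alpha_k}{\delta}-1$ reduces to $\beta_k\log\frac{4K\alpha_k}{\delta}$ dominating $\Theta\!\bigl(\Delta_{j,0}^{-2}(\log\tfrac{K\alpha_k}{\delta}+\log\log\tfrac1{\Delta_{j,0}^2})\bigr)$; this is where the hypothesis $k\ge\lceil\log_2\frac{448(C+3)^2}{(C-1)^2\Delta_{j,0}^2}\rceil$ enters, yielding $\beta_k=2^k\ge\frac{448(C+3)^2}{(C-1)^2\Delta_{j,0}^2}$ and $\log\alpha_k=k\log 5\gtrsim\log\frac1{\Delta_{j,0}^2}$, the latter absorbing the $\log\log$ term --- exactly the chain of implications in the proof of Lemma \ref{lemma:correctness-of-rough-guess}. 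Consequently, on the concentration event the loop counter $N^{\text{et}}_{\hat{a}}$ reaches a value $\ge T^\star_k$ while still within budget (or is already past it, in which case the first tested value works since $U(t,\cdot)$ is eventually decreasing), Line \ref{alg-line:stop-condition-exploitation} fires, and the algorithm returns $\hat{a}=\hat{a}_k$.

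The main obstacle is the last paragraph: pinning down the absolute constant hidden in $T^\star_k$ coming out of Lemma \ref{lemma:inequality-application-t-loglogt} and verifying it is swallowed by $\frac{(C+3)^2}{(C-1)^2}\beta_k\log\frac{4K\alpha_k}{\delta}$, in particular making sure the $\log\log\frac1{\Delta_{j,0}^2}$ contribution is dominated by $\log\alpha_k$ through the lower bound on $k$ --- routine but delicate bookkeeping, the exploitation analogue of Lemma \ref{lemma:correctness-of-rough-guess}, now carried out with the exploitation-side constants $\frac{(C+3)^2}{(C-1)^2}$, $\alpha_k$ and tolerance $\tfrac{\delta}{\alpha_k}$ in place of $(C+1)^2$, $\delta_k$.
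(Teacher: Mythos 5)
Your proposal is correct and follows essentially the same route as the paper's proof: invoke the exploitation-side concentration event at $k\geq\kappa^{\text{et}}_B$ together with the gap $\mu_{\hat{a}_k}-\mu_0\geq\omega\Delta_{j,0}$ to get a pull-count threshold of order $\frac{(C+3)^2}{(C-1)^2}\cdot\frac{\log\frac{2K\alpha_k}{\delta}+\log\log\frac{1}{\omega^2\Delta_{j,0}^2}}{\Delta_{j,0}^2}$ beyond which the test in Line \ref{alg-line:stop-condition-exploitation} fires, and then use $k\geq\lceil\log_2\frac{448(C+3)^2}{(C-1)^2\Delta_{j,0}^2}\rceil$ so that $\beta_k$ and $\log\alpha_k$ absorb this threshold (including the $\log\log$ term) within the budget $\frac{(C+3)^2}{(C-1)^2}\beta_k\log\frac{4K\alpha_k}{\delta}-1$ of Line \ref{alg-line:stop-condition-in-exploitation}. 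The remaining work you flag is exactly the constant bookkeeping the paper carries out via Lemma \ref{lemma:inequality-application-t-loglogt}.
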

\begin{proof}[Proof of Lemma \ref{lemma:required-budget-for-exploitation}]

    From the condition $k\geq \kappa^{\text{et}}_B$, by the Lemma \ref{lemma:size-of-Q-and-empirical-mean-exploit}, we know
    \begin{align*}
        \hat{\mu}_a^{\text{et}}(\mathcal{H}^{\text{et}}) - U(N_a^{\text{et}}(\mathcal{H}^{et}), \frac{\delta}{K\alpha_k}) \leq \mu_a \leq \hat{\mu}_a^{\text{et}}(\mathcal{H}^{\text{et}}) + U(N_a^{\text{et}}(\mathcal{H}^{et}), \frac{\delta}{K\alpha_k})
    \end{align*}
    holds for $a\in B$ through out the phase $k$.

    By similar calculation in the proof of Lemma \ref{lemma:pulling-times-of-a-single-arm-positive-instance}, we know
    \begin{align*}
        & t\geq \frac{112\left(\log\frac{2K\alpha_k}{\delta} + \log\log \frac{96}{\omega^2\Delta_{j,0}^2}\right)}{\omega^2\Delta_{j,0}^2}\\
        \Rightarrow & t \geq \frac{112\left(\log\frac{2K\alpha_k}{\delta} + \log\log \frac{96}{\Delta_{\hat{a}_k,0}^2}\right)}{\Delta_{\hat{a}_k,0}^2}\\
        \Rightarrow & \hat{\mu}_{\hat{a}_k, t}^{\text{et}} -\frac{\sqrt{2 \cdot 2^{\lceil\log_2 t\rceil^+}\log\frac{2 K\alpha_k(\lceil\log_2t\rceil^+)^2}{\delta}}}{t} > \mu_0. 
    \end{align*}
    Here $\hat{\mu}_{\hat{a}_k, t}^{\text{et}}$ denotes the empirical mean of first $t$ samples collected in the exploitation period. Notice that
    \begin{align*}
        & k\geq \lceil\log_2 \frac{448(C+3)^2}{(C-1)^2\Delta_{j,0}^2}\rceil\\
        \Rightarrow & \beta_k \geq \frac{448}{\Delta_{j,0}^2}, \log \alpha_k \geq \log\frac{96(C+3)^2}{(C-1)^2\Delta_{j,0}^2}\log 5\\
        \Rightarrow & \beta_k\log\frac{4K\alpha_k}{\delta}\geq \frac{448\log\frac{2K\alpha_k}{\delta}}{\Delta_{j,0}^2}, \beta_k\log\frac{4K\alpha_k}{\delta}\geq \frac{448\log\log \frac{96}{\omega^2\Delta_{j,0}^2}}{\Delta_{j,0}^2}\\
        \Rightarrow & \beta_k\log\frac{4K\alpha_k}{\delta}\geq \frac{224\left(\log\frac{2K\alpha_k}{\delta} + \log\log \frac{96}{\omega^2\Delta_{j,0}^2}\right)}{\Delta_{j,0}^2}\\
        \Rightarrow & \frac{(C+3)^2}{(C-1)^2}\beta_k\log\frac{4K\alpha_k}{\delta}-1 \geq \frac{112\left(\log\frac{2K\alpha_k}{\delta} + \log\log \frac{96}{\omega^2\Delta_{j,0}^2}\right)}{\omega^2\Delta_{j,0}^2}.
    \end{align*}
    The last line suggests that Algorithm \ref{alg:SEE-with-Bracket} would output $\hat{a}_k$ before meeting the stop condition $N_{\hat{a}}^{\text{et}} > \frac{(C+3)^2}{(C-1)^2}\beta_k\log\frac{4K\alpha_k}{\delta}-1$ in Line \ref{alg-line:stop-condition-in-exploitation}.
\end{proof}

\begin{lemma}
    \label{lemma:termination-phase-index-positive-instance}
    For $B$ such that $\max_{a'\in B} \mu_{a'}\geq \mu_j$, Algorithm \ref{alg:SEE-with-Bracket} must terminate no later than the end of phase $\max\{\kappa^{\text{ee}}_B, \kappa^{\text{et}}_B, \lceil\log_2 \frac{448(C+3)^2}{(C-1)^2\Delta_{j,0}^2}\rceil\}$.
\end{lemma}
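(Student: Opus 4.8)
The plan is to derive Lemma \ref{lemma:termination-phase-index-positive-instance} directly from Lemma \ref{lemma:correctness-of-rough-guess} and Lemma \ref{lemma:required-budget-for-exploitation}, so that the only genuine work is checking that a single phase index simultaneously dominates all of the phase thresholds appearing in those two lemmas. Write $k^* := \max\{\kappa^{\text{ee}}_B, \kappa^{\text{et}}_B, \lceil\log_2 \frac{448(C+3)^2}{(C-1)^2\Delta_{j,0}^2}\rceil\}$ for the claimed bound. First I would dispose of the easy cases: if $\kappa^{\text{ee}}_B=+\infty$ or $\kappa^{\text{et}}_B=+\infty$ then $k^*=+\infty$ and the statement is vacuous, and if Algorithm \ref{alg:SEE-with-Bracket} already terminates at the end of some phase $k<k^*$ there is nothing left to prove. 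So the substantive case is $k^*<+\infty$ with the algorithm still running at the start of phase $k^*$; here I would invoke the fact (ensured by the phase-transition conditions in Lines \ref{alg-line:rough-guess-of-hat-a_k}--\ref{alg-line:Conditions-Terminate-Exploration-Ends}, see Section \ref{sec:Stopping-Conditions-Exploration}) that the inner loop of any phase cannot run forever, so that the algorithm genuinely reaches and executes phase $k^*$.

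Next I would apply Lemma \ref{lemma:correctness-of-rough-guess} with $k=k^*$. Its hypothesis $k^*\ge\kappa^{\text{ee}}_B$ is immediate, and $k^*\ge\lceil\log_2\frac{112(C+1)^2}{\Delta_{j,0}^2}\rceil$ follows from $k^*\ge\lceil\log_2\frac{448(C+3)^2}{(C-1)^2\Delta_{j,0}^2}\rceil$, monotonicity of $\log_2$ and $\lceil\cdot\rceil$, and the elementary constant inequality $448(C+3)^2\ge 112(C+1)^2(C-1)^2$ (which holds with room to spare for the default choice $C=1.01$). Lemma \ref{lemma:correctness-of-rough-guess} then gives that in phase $k^*$ the algorithm sets $\hat a_{k^*}\in[K]$ with $\mu_{\hat a_{k^*}}\ge\omega\mu_j+(1-\omega)\mu_0$ where $\omega=\frac{C-1}{C+3}$; by inspection of Algorithm \ref{alg:SEE-with-Bracket} this means exactly that phase $k^*$ enters the exploitation branch at Line \ref{alg-line:hata_k-value2} with this arm.

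Finally I would apply Lemma \ref{lemma:required-budget-for-exploitation} with $k=k^*$. Both of its hypotheses, $k^*\ge\kappa^{\text{et}}_B$ and $k^*\ge\lceil\log_2\frac{448(C+3)^2}{(C-1)^2\Delta_{j,0}^2}\rceil$, are built into the definition of $k^*$, and we have just checked that phase $k^*$ enters Line \ref{alg-line:hata_k-value2} with a $\hat a_{k^*}$ of the required quality; hence the lemma forces Algorithm \ref{alg:SEE-with-Bracket} to terminate and output $\hat a_{k^*}$ within phase $k^*$. Combining with the easy cases, the algorithm always terminates by the end of phase $k^*$. I do not expect a real obstacle here: all the analytic content is already carried by Lemmas \ref{lemma:output-corretness-of-positive-instance}, \ref{lemma:correctness-of-rough-guess}, and \ref{lemma:required-budget-for-exploitation}, and the only points requiring any care are the constant comparison above and the observation, already established in the algorithm's description, that no phase can stall in an infinite loop.
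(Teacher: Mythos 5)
Your proposal is correct and follows exactly the paper's route: the paper's own proof is the one-line observation that the lemma is a direct consequence of Lemma \ref{lemma:correctness-of-rough-guess} and Lemma \ref{lemma:required-budget-for-exploitation}, which is precisely the chain you spell out. Your explicit verification that $\lceil\log_2 \frac{448(C+3)^2}{(C-1)^2\Delta_{j,0}^2}\rceil$ dominates the threshold $\lceil\log_2 \frac{112(C+1)^2}{\Delta_{j,0}^2}\rceil$ of Lemma \ref{lemma:correctness-of-rough-guess} is a detail the paper leaves implicit; just note that the inequality $4(C+3)^2\ge (C^2-1)^2$ holds only for $C\le 1+2\sqrt{2}$, which covers the default $C=1.01$ but not arbitrary $C>1$.
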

\begin{proof}[Proof of Lemma \ref{lemma:termination-phase-index-positive-instance}]
    This is a direct conclusion from Lemma \ref{lemma:correctness-of-rough-guess}, \ref{lemma:required-budget-for-exploitation}.
\end{proof}

\begin{lemma}
    \label{lemma:output-corretness-of-negative-instance}
    Consider the largest bracket $B$, i.e. $B=[K]$. If $\max_{a'\in B} \mu_{a'}=\max_{a'\in [K]} \mu_{a'}< \mu_0$, then for phase index $k\geq \kappa^{\text{ee}}_B$, Algorithm \ref{alg:SEE-with-Bracket} either takes $\hat{a}_k=\textsf{None}$ or takes $\hat{a}_k=\textsf{Not Completed}$
\end{lemma}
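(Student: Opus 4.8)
The plan is to show that, on the event defining $\kappa^{\text{ee}}_B$, the guard at Line \ref{alg-line:rough-guess-of-hat-a_k} can never fire during phase $k$, which is the only route by which Algorithm \ref{alg:SEE-with-Bracket} assigns $\hat{a}_k$ a value in $[K]$ (Line \ref{alg-line:hata_k-value2}, together with the ensuing exploitation return at Line \ref{alg-line:terminate-and-output-arm}). Since the per-arm budget cap enforced through Line \ref{line-alg:maximum-N_a-ee} and Line \ref{alg-line:run-out-of-budget} guarantees that phase $k$ terminates, ruling out Line \ref{alg-line:hata_k-value2} leaves only Line \ref{alg-line:Negative-branch-output-None} (which assigns $\hat{a}_k=\textsf{None}$, available since $|B|=K$ here) and Lines \ref{alg-line:hata_k-value1} and \ref{alg-line:hata_k-value3} (both assigning $\hat{a}_k=\textsf{Not Completed}$), which is exactly the claim.

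First I would invoke Lemma \ref{lemma:order-of-empirical-mean} together with the definition of $\kappa^{\text{ee}}_B$: for $k\ge\kappa^{\text{ee}}_B$, throughout phase $k$ every maintained empirical mean $\hat{\mu}_a(\mathcal{H}^{\text{ee}})$ equals one of the prefix averages $\frac1t\sum_{s=1}^t X^{\text{ee}}_{a,s}$, and hence satisfies the two-sided bound $(\ref{eqn:concentration-event-k_geq_kappa})$, namely $\hat{\mu}_a(\mathcal{H}^{\text{ee}})-U(N_a^{\text{ee}}(\mathcal{H}^{ee}),\tfrac{\delta_k}{K})\le \mu_a\le \hat{\mu}_a(\mathcal{H}^{\text{ee}})+U(N_a^{\text{ee}}(\mathcal{H}^{ee}),\tfrac{\delta_k}{K})$ for every $a\in B$. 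Since $C>1$ and $U(\cdot,\cdot)\ge 0$, for any $a\in B$ with $N_a^{\text{ee}}\ge 1$ the lower confidence bound obeys
\[
\text{LCB}^{\text{ee}}_a(\mathcal{H}^{ee},\delta_k)=\hat{\mu}^{\text{ee}}_a(\mathcal{H}^{ee})-C\,U(\cdot)\le \hat{\mu}^{\text{ee}}_a(\mathcal{H}^{ee})-U(\cdot)\le \mu_a\le \max_{a'\in[K]}\mu_{a'}<\mu_0,
\]
and when $N_a^{\text{ee}}=0$ the convention $\text{LCB}^{\text{ee}}_a=-\infty$ makes the inequality trivial. Thus $\text{LCB}^{\text{ee}}_a(\mathcal{H}^{ee},\delta_k)<\mu_0$ holds for all $a\in B$ at every step of phase $k$; in particular the test $\text{LCB}^{\text{ee}}_{A^{\text{ee}}_t}(\mathcal{H}^{\text{ee}},\delta_k)\ge\mu_0$ in Line \ref{alg-line:rough-guess-of-hat-a_k} is never satisfied, so the algorithm never enters the exploitation block and never reaches Line \ref{alg-line:hata_k-value2} or Line \ref{alg-line:terminate-and-output-arm}.

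Combining the two steps completes the proof: in phase $k$ the While loop at Line \ref{alg-line:while-loop-start} can only be exited through Line \ref{alg-line:Negative-branch-output-None}, Line \ref{alg-line:hata_k-value1}, or Line \ref{alg-line:hata_k-value3}, so $\hat{a}_k\in\{\textsf{None},\textsf{Not Completed}\}$. I do not expect a genuine obstacle here; the only point requiring care is that the concentration bound must be applied to the running empirical mean stored in $\mathcal{H}^{\text{ee}}$ rather than to a fixed-sample-size average, which is precisely what Lemma \ref{lemma:order-of-empirical-mean} (the reason for the $Q$-shuffling mechanism) supplies. The argument is the negative-instance mirror of Lemma \ref{lemma:output-corretness-of-positive-instance}, with the roles of UCB/LCB and of $\mu_j$/$\mu_0$ interchanged.
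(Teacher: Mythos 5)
Your proposal is correct and follows essentially the same route as the paper's proof: invoke Lemma \ref{lemma:order-of-empirical-mean} and the concentration event for $k\geq\kappa^{\text{ee}}_B$ to get $\text{LCB}^{\text{ee}}_a(\mathcal{H}^{\text{ee}},\delta_k)\leq\hat{\mu}_a^{\text{ee}}-U(\cdot)\leq\mu_a<\mu_0$ for all $a\in B$ throughout phase $k$, so the guard at Line \ref{alg-line:rough-guess-of-hat-a_k} never fires and the only remaining exits assign $\hat{a}_k\in\{\textsf{None},\textsf{Not Completed}\}$. The extra remark on phase termination via the budget cap is a harmless addition the paper leaves implicit.
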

\begin{proof}[Proof of Lemma \ref{lemma:output-corretness-of-negative-instance}]

    From the assumption $k\geq \kappa^{\text{ee}}_B$, by the Lemma \ref{lemma:order-of-empirical-mean}, we know
    \begin{align}
        \label{eqn:concentration-event-k_geq_kappa_negative}
        \hat{\mu}_a^{\text{ee}}(\mathcal{H}^{\text{ee}}) - U(N_a^{\text{ee}}(\mathcal{H}^{ee}), \frac{\delta_k}{K}) \leq \mu_a \leq \hat{\mu}_a^{\text{ee}}(\mathcal{H}^{\text{ee}}) + U(N_a^{\text{ee}}(\mathcal{H}^{ee}), \frac{\delta_k}{K})
    \end{align}
    holds for $a\in B$ through out the phase $k$.

    The analysis is similar to the Lemma \ref{lemma:output-corretness-of-positive-instance}. To quit the While loop in the Line \ref{alg-line:while-loop-start}, Algorithm \ref{alg:SEE-with-Bracket} must enter one of the Lines in \ref{alg-line:Negative-branch-output}, \ref{alg-line:hata_k-value2}, \ref{alg-line:hata_k-value3}. By the assumption $\mu_a<\mu_0$ holds for all $a\in [K]$ and (\ref{lemma:output-corretness-of-negative-instance}), we know
    \begin{align*}
        & \text{LCB}^{\text{ee}}_{a}(\mathcal{H}^{\text{ee}},\delta_k)\\
        = & \hat{\mu}_a^{\text{ee}}(\mathcal{H}^{\text{ee}}) - C\cdot U(N_a^{\text{ee}}(\mathcal{H}^{ee}), \frac{\delta_k}{K})\\
        \stackrel{C>1}{\leq} & \hat{\mu}_a^{\text{ee}}(\mathcal{H}^{\text{ee}}) -  U(N_a^{\text{ee}}(\mathcal{H}^{ee}),\frac{\delta_k}{K})\\
        \leq & \mu_a\\
        < & \mu_0,
    \end{align*}
    holds through out the phase $k$. The last line also suggests that the condition in Line \ref{alg-line:rough-guess-of-hat-a_k} will never be satisfied in phase $k$. Thus, we can conclude $\hat{a}_k=\textsf{None}\text{ or }\textsf{Not Completed} $.
\end{proof}

\begin{lemma}
    \label{lemma:pulling-times-of-a-single-arm-negative-instance}
    For $B$ such that $|B|=K$ and $\max_{a'\in B} \mu_{a'}< \mu_0$, phase index $k\geq \kappa^{\text{ee}}_B$, for each arm $a\in B$, we have
    \begin{align*}
        & N_a^{\text{ee}, B}\Big(\tau_k^{\text{ee}}(B)\Big)\\
        \leq & \max\left\{N_a^{\text{ee}, B}\Big(\tau_{\kappa^{\text{ee}}_B-1}^{\text{ee}}(B)\Big), \cdot\frac{113\left(\log\frac{4K}{\delta_k} + \log\log \frac{96}{\Delta_{a,0}^2}\right)}{\Delta_{a,0}^2}\right\}
    \end{align*}
\end{lemma}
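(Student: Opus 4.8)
The plan is to follow the template of the proof of Lemma~\ref{lemma:pulling-times-of-a-single-arm-positive-instance}: first prove a one-phase recursion valid for every phase $k\ge\kappa^{\text{ee}}_B$, then unfold it by induction. In a negative instance the argument simplifies, because every arm has $\mu_a<\mu_0$, so there is no ``good'' reference arm and only the gap $\Delta_{a,0}$ is relevant; this removes the two-case split of the positive proof. The starting point: for $k\ge\kappa^{\text{ee}}_B$, Lemma~\ref{lemma:order-of-empirical-mean} forces the concentration event~\eqref{eqn:concentration-event-k_geq_kappa_negative}, i.e. $\hat\mu_a^{\text{ee}}(\mathcal H^{\text{ee}})-U(N_a^{\text{ee}},\tfrac{\delta_k}{K})\le\mu_a\le\hat\mu_a^{\text{ee}}(\mathcal H^{\text{ee}})+U(N_a^{\text{ee}},\tfrac{\delta_k}{K})$ for all $a\in B$, to hold throughout phase $k$.

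I would then prove, for each $k\ge\kappa^{\text{ee}}_B$ and $a\in B$, the one-phase bound
\[
N_a^{\text{ee},B}\!\big(\tau_k^{\text{ee}}(B)\big)\le\max\Big\{\,N_a^{\text{ee},B}\!\big(\tau_{k-1}^{\text{ee}}(B)\big),\;\tfrac{113\left(\log\frac{4K}{\delta_k}+\log\log\frac{96}{\Delta_{a,0}^2}\right)}{\Delta_{a,0}^2}\,\Big\}.
\]
The engine is the same direct estimate used in Lemma~\ref{lemma:pulling-times-of-a-single-arm-positive-instance} and Lemma~\ref{lemma:required-budget-for-exploitation}: using $2\cdot2^{\lceil\log_2 t\rceil^+}\le4t$ and Lemma~\ref{lemma:inequality-application-t-loglogt}, once $t\ge t_0(k):=\tfrac{112\left(\log\frac{2K}{\delta_k}+\log\log\frac{96}{\Delta_{a,0}^2}\right)}{\Delta_{a,0}^2}$ we get $2U(t,\tfrac{\delta_k}{K})\le\Delta_{a,0}$, hence by~\eqref{eqn:concentration-event-k_geq_kappa_negative} $\text{UCB}_a^{\text{ee}}=\hat\mu_{a,t}+U(t,\tfrac{\delta_k}{K})\le\mu_a+2U(t,\tfrac{\delta_k}{K})\le\mu_a+\Delta_{a,0}=\mu_0$. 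Once $\text{UCB}_a^{\text{ee}}\le\mu_0$, arm $a$ cannot be selected by Line~\ref{line-alg:maximum-N_a-ee}: we only reach that line if Line~\ref{alg-line:Negative-Output-Condition} did not fire, so some arm has $\text{UCB}^{\text{ee}}>\mu_0$ and is selected ahead of $a$; if instead every arm had $\text{UCB}^{\text{ee}}\le\mu_0$ the phase would have terminated there (with $\hat a_k\in\{\textsf{None},\textsf{Not Completed}\}$, using Lemma~\ref{lemma:output-corretness-of-negative-instance} to rule out the LCB branch of Line~\ref{alg-line:rough-guess-of-hat-a_k}). Splitting on whether $N_a^{\text{ee},B}(\tau_{k-1}^{\text{ee}}(B))$ already exceeds $t_0(k)$ at the start of phase $k$ then gives the one-phase bound, the ``$+1$'' over $t_0(k)$ together with the passages $112\to113$ and $\log\frac{2K}{\delta_k}\le\log\frac{4K}{\delta_k}$ absorbing the at most one extra pull of $a$ before the check bites.

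Finally I would induct on $k\ge\kappa^{\text{ee}}_B$: since $\delta_k$ is decreasing the right-hand threshold is nondecreasing in $k$, so iterating the one-phase bound collapses the nested maxima into $\max\{N_a^{\text{ee},B}(\tau_{\kappa^{\text{ee}}_B-1}^{\text{ee}}(B)),\,\tfrac{113(\log\frac{4K}{\delta_k}+\log\log\frac{96}{\Delta_{a,0}^2})}{\Delta_{a,0}^2}\}$, the desired statement. The step I expect to need the most care is the clause ``arm $a$ stops being pulled once $\text{UCB}_a^{\text{ee}}\le\mu_0$'': one must reconcile it with the per-phase pull cap $(C+1)^2\beta_k\log\frac{4K}{\delta_k}$ of Line~\ref{line-alg:maximum-N_a-ee}, so that a cap-saturated arm with $\text{UCB}^{\text{ee}}>\mu_0$ does not force extra pulls of $a$. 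This is the same bookkeeping carried out in Lemma~\ref{lemma:budget-upper-bound-for-N_a-tau_ee-tau_et} and in the positive-case proof, and for the small phase indices where the cap is below $t_0(k)$ the crude bound $N_a^{\text{ee},B}(\tau_k^{\text{ee}}(B))\le(C+1)^2\beta_k\log\frac{4K}{\delta_k}$ of Lemma~\ref{lemma:budget-upper-bound-for-N_a-tau_ee-tau_et} already suffices.
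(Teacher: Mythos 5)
Your proposal follows essentially the same route as the paper's proof: establish the one-phase recursion $N_a^{\text{ee},B}(\tau_k^{\text{ee}}(B))\le\max\{N_a^{\text{ee},B}(\tau_{k-1}^{\text{ee}}(B)),\,113(\log\frac{4K}{\delta_k}+\log\log\frac{96}{\Delta_{a,0}^2})/\Delta_{a,0}^2\}$ via the concentration event (\ref{eqn:concentration-event-k_geq_kappa_negative}) and Lemma \ref{lemma:inequality-application-t-loglogt}, argue that once $\text{UCB}_a^{\text{ee}}\le\mu_0$ the arm is no longer selected (else Line \ref{alg-line:Negative-Output-Condition} would fire), and then unfold by induction using the monotonicity of the threshold in $k$. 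The cap-saturation subtlety you flag is real but is glossed over in the paper's proof as well, so your treatment is at least as careful as the original.
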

\begin{proof}[Proof of Lemma \ref{lemma:pulling-times-of-a-single-arm-negative-instance}]

    Similar to the idea of Lemma \ref{lemma:pulling-times-of-a-single-arm-positive-instance}, we first turn to prove an intermediate conclusion
    \begin{equation}
        \label{eqn:recursive-pulling-k_k-1-negative}
        \begin{aligned}
            & N_a^{\text{ee}, B}\Big(\tau_k^{\text{ee}}(B)\Big)\\
        \leq & \max\left\{N_a^{\text{ee}, B}\Big(\tau_{k-1}^{\text{ee}}(B)\Big), \frac{113\left(\log\frac{4K}{\delta_k} + \log\log \frac{96}{\Delta_{a,0}^2}\right)}{\Delta_{a,0}^2}\right\}
        \end{aligned}
    \end{equation}
    holds for all $k\geq\kappa^{\text{ee}}_B $ and $a\in B$. If the above conclusion holds, we can use induction to conclude
    \begin{align*}
        & N_a^{\text{ee}, B}\Big(\tau_k^{\text{ee}}(B)\Big)\\
        \leq & \max\Bigg\{N_a^{\text{ee}, B}\Big(\tau_{\kappa^{\text{ee}}_B-1}^{\text{ee}}(B)\Big), \frac{113\left(\log\frac{4K}{\delta_{k-1}} + \log\log \frac{96}{\Delta_{a,0}^2}\right)}{\Delta_{a,0}^2}, \\
        & \frac{113\left(\log\frac{4K}{\delta_k} + \log\log \frac{96}{\Delta_{a,0}^2}\right)}{\Delta_{a,0}^2}\Bigg\}\\
        = & \max\left\{N_a^{\text{ee}, B}\Big(\tau_{\kappa^{\text{ee}}_B-1}^{\text{ee}}(B)\Big), \frac{113\left(\log\frac{4K}{\delta_k} + \log\log \frac{96}{\Delta_{a,0}^2}\right)}{\Delta_{a,0}^2}\right\}.
    \end{align*}
    which proves the Lemma. Thus, we suffice to prove the intermediate conclusion (\ref{eqn:recursive-pulling-k_k-1-negative}).

    We can conduct direct calculation for each arm $a$.
    \begin{align*}
        & t \geq \frac{28\cdot 2^2\left(\log\frac{2K}{\delta_k} + \log\log \frac{96}{\Delta_{a,0}^2}\right)}{\Delta_{a,0}^2}\\
        \stackrel{\text{Lemma }\ref{lemma:inequality-application-t-loglogt} }{\Rightarrow} & 2\sqrt{\frac{4\log\frac{2K(\log_2 (2t))^2}{\delta_k}}{t}} \leq \Delta_{a,0}\\
        \Leftrightarrow & \mu_a + 2\sqrt{\frac{4\log\frac{2K(\log_2 (2t))^2}{\delta_k}}{t}} \leq \mu_0\\
        \Rightarrow & \mu_a + 2\cdot\frac{\sqrt{2 \cdot 2^{\lceil\log_2 t\rceil^+}\log\frac{2 K(\lceil\log_2t\rceil^+)^2}{\delta_k}}}{t} > \mu_0\\
        \stackrel{(\ref{eqn:concentration-event-k_geq_kappa})}{\Rightarrow} & \hat{\mu}_{a, t} + \frac{\sqrt{2 \cdot 2^{\lceil\log_2 t\rceil^+}\log\frac{2 K(\lceil\log_2t\rceil^+)^2}{\delta_k}}}{t} > \mu_0.
    \end{align*}
    The last line is by the assumption $k\geq \kappa^{\text{ee}}_B$, which means (\ref{eqn:concentration-event-k_geq_kappa}) still holds. 

    If $N_a^{\text{ee}, B}\Big(\tau_{k-1}^{\text{ee}}(B)\Big)\geq \frac{112\left(\log\frac{4K}{\delta_k} + \log\log \frac{96}{\Delta_{a,0}^2}\right)}{\Delta_{a,0}^2}+1$, we know $\text{UCB}_a^{\text{ee}}(\mathcal{H}^{\text{ee}}, \frac{\delta_k}{K}) < \mu_0$ holds at the start of phase $k$. If $a=\arg\max_{a'\in B} \text{UCB}_{a'}^{\text{ee}}(\mathcal{H}^{\text{ee}}, \frac{\delta_k}{K})$ holds during the phase $k$, Algorithm \ref{alg:SEE-with-Bracket} will turn to phase $k+1$ as the condition in the Line \ref{alg-line:Negative-Output-Condition} holds. Thus, we can conclude Algorithm \ref{alg:SEE-with-Bracket} will never pull arm $a$ during the phase $k$. That means if $N_a^{\text{ee}, B}\Big(\tau_{k-1}^{\text{ee}}(B)\Big)\geq \frac{112\left(\log\frac{4K}{\delta_k} + \log\log \frac{96}{\Delta_{a,0}^2}\right)}{\Delta_{a,0}^2}+1$, 
    $N_a^{\text{ee}, B}\Big(\tau_{k}^{\text{ee}}(B)\Big)=N_a^{\text{ee}, B}\Big(\tau_{k-1}^{\text{ee}}(B)\Big)$.

    If $N_a^{\text{ee}, B}\Big(\tau_{k-1}^{\text{ee}}(B)\Big)< \frac{112\left(\log\frac{4K}{\delta_k} + \log\log \frac{96}{\Delta_{a,0}^2}\right)}{\Delta_{a,0}^2}+1$. We can also assert $\text{UCB}_a^{\text{ee}}(\mathcal{H}^{\text{ee}}, \frac{\delta_k}{K}) \leq \mu_0$ will hold if $N_a^{\text{ee}, B} \geq \frac{112\left(\log\frac{4K}{\delta_k} + \log\log \frac{96}{\Delta_{a,0}^2}\right)}{\Delta_{a,0}^2}+1$ holds in the futuer rounds. Similar to the above argument, if $\text{UCB}_a^{\text{ee}}(\mathcal{H}^{\text{ee}}, \frac{\delta_k}{K}) \leq \mu_0$, $a=\arg\max_{a'}\text{UCB}_{a'}^{\text{ee}}(\mathcal{H}^{\text{ee}}, \frac{\delta_k}{K})$ both holds, the algorithm would turn to phase $k+1$ or terminate because of meeting the condition in the Line \ref{alg-line:Negative-Output-Condition}. That means once $N_a^{\text{ee}, B} \geq \frac{112\left(\log\frac{4K}{\delta_k} + \log\log \frac{96}{\Delta_{a,0}^2}\right)}{\Delta_{a,0}^2}+1$ holds, Algorithm \ref{alg:SEE-with-Bracket} will never pull arm $a$ in the remaining phase $k$. We can assert $N_a^{\text{ee}, B}\Big(\tau_{k}^{\text{ee}}(B)\Big)\leq \frac{113\left(\log\frac{4K}{\delta_k} + \log\log \frac{96}{\Delta_{a,0}^2}\right)}{\Delta_{a,0}^2}$.
\end{proof}

\begin{lemma}
    \label{lemma:termination-phase-index-negative-instance}
    For $B$ such that $|B|=K$ and $\max_{a'\in B} \mu_{a'}< \mu_0$, Algorithm \ref{alg:SEE-with-Bracket} must terminate no later than the end of phase $\max\{\kappa^{\text{ee}}_B, \lceil \log_2 \frac{226}{\Delta_{1,0}^2}\rceil, \lceil\log_3 \frac{3}{\delta}\rceil\}$.
\end{lemma}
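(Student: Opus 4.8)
The plan is to prove a deterministic (sample-path-wise) termination bound, exactly as in Lemma \ref{lemma:termination-phase-index-positive-instance}. Write $L:=\max\{\kappa^{\text{ee}}_B,\lceil\log_2\frac{226}{\Delta_{1,0}^2}\rceil,\lceil\log_3\frac{3}{\delta}\rceil\}$. If Algorithm \ref{alg:SEE-with-Bracket} terminates strictly before phase $L$ there is nothing to show, so I assume it reaches phase $L$ and argue it must return \textsf{None} via Line \ref{alg-line:Negative-branch-output-None} during phase $L$. (That phase $L$ is reached at all is clear: on a negative instance with concentration, $\text{LCB}^{\text{ee}}_a<\mu_a<\mu_0$ rules out the exploitation branch of Line \ref{alg-line:rough-guess-of-hat-a_k}, so each phase's while loop of Line \ref{alg-line:while-loop-start} must exit through Line \ref{alg-line:Negative-Output-Condition} or Line \ref{alg-line:run-out-of-budget}, both of which happen in finitely many pulls because of the per-arm budget cap.) Also, by Lemma \ref{lemma:output-corretness-of-negative-instance}, for $k\geq\kappa^{\text{ee}}_B$ the algorithm can only set $\hat a_k\in\{\textsf{None},\textsf{Not Completed}\}$, so any termination in phase $L\geq\kappa^{\text{ee}}_B$ is consistent with outputting \textsf{None}.

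First I would establish a budget-sufficiency inequality. Since $L\geq\lceil\log_2\frac{226}{\Delta_{1,0}^2}\rceil$ we get $\beta_L=2^L\geq\frac{226}{\Delta_{1,0}^2}$ and $\log\frac{1}{\delta_L}=L\log 3\geq\log 3\cdot\log_2\frac{226}{\Delta_{1,0}^2}$; feeding these into the same chain of implications used in the proof of Lemma \ref{lemma:correctness-of-rough-guess}, and using $\Delta_{a,0}\geq\Delta_{1,0}$ for every $a\in B$ (valid because $\nu$ is negative, so $\mu_1=\max_a\mu_a$), one obtains
\begin{align*}
 (C+1)^2\beta_L\log\frac{4K}{\delta_L}-1 \;\geq\; T_a(L):=\frac{113\left(\log\frac{4K}{\delta_L}+\log\log\frac{96}{\Delta_{a,0}^2}\right)}{\Delta_{a,0}^2}\qquad\text{for all }a\in B,
\end{align*}
where $T_a(L)$ is (an upper bound on) the threshold appearing in the proof of Lemma \ref{lemma:pulling-times-of-a-single-arm-negative-instance} beyond which $\text{UCB}^{\text{ee}}_a(\mathcal H^{\text{ee}},\delta_L)<\mu_0$.

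Next I would analyze phase $L$ itself. Because $L\geq\kappa^{\text{ee}}_B$, the concentration event (\ref{eqn:concentration-event-k_geq_kappa_negative}) holds throughout phase $L$, so for each $a$ the computation in the proof of Lemma \ref{lemma:pulling-times-of-a-single-arm-negative-instance} gives $\text{UCB}^{\text{ee}}_a\leq\mu_a+2U(N_a^{\text{ee}},\delta_L/K)<\mu_0$ once $N_a^{\text{ee},B}$ reaches $T_a(L)$. Now observe that any arm pulled during phase $L$ must have $N_a^{\text{ee},B}<T_a(L)$ just before the pull: otherwise that arm is the $\arg\max$ of Line \ref{line-alg:maximum-N_a-ee} yet has UCB $<\mu_0$, forcing all brackets' UCBs below $\mu_0$ and sending the algorithm into Line \ref{alg-line:Negative-Output-Condition} instead of pulling. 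Hence every pull in phase $L$ strictly increases some $N_a^{\text{ee},B}$ still below $T_a(L)$, and by the displayed inequality no arm ever reaches the per-phase cap $(C+1)^2\beta_L\log\frac{4K}{\delta_L}$, so Line \ref{alg-line:run-out-of-budget} is never triggered in phase $L$; after at most $\sum_{a\in B}T_a(L)$ pulls all arms satisfy $N_a^{\text{ee},B}\geq T_a(L)$, every UCB is $\leq\mu_0$, and the algorithm reaches Line \ref{alg-line:Negative-Output-Condition}. Finally $L\geq\lceil\log_3\frac{3}{\delta}\rceil$ gives $\delta_L=3^{-L}\leq\delta/3$ and $|B|=K$ by hypothesis, so the test of Line \ref{alg-line:Negative-branch-output} passes and the algorithm outputs \textsf{None} and terminates by the end of phase $L$.

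The structural part (combining Lemmas \ref{lemma:output-corretness-of-negative-instance} and \ref{lemma:pulling-times-of-a-single-arm-negative-instance} with the loop-exit analysis) is routine; the only delicate point, and the one on which I would spend care, is verifying the budget-sufficiency inequality with the explicit constant $226$ — in particular ensuring that the slowly growing $\log\log\frac{96}{\Delta_{a,0}^2}$ term stays dominated even when $\Delta_{1,0}$ is adversarially close to $1$, which is precisely why the additive $\log_2 226$ inside $\log_2\frac{226}{\Delta_{1,0}^2}$ (and hence the choice of $226$ rather than a smaller constant) is needed.
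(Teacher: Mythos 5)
Your proposal is correct and follows essentially the same route as the paper: both establish the budget-sufficiency inequality $(C+1)^2\beta_L\log\frac{4K}{\delta_L}-1\geq \frac{113\left(\log\frac{2K}{\delta_L}+\log\log\frac{96}{\Delta_{a,0}^2}\right)}{\Delta_{a,0}^2}$ using $\Delta_{a,0}\geq\Delta_{1,0}$, combine it with the concentration event at phase $L\geq\kappa^{\text{ee}}_B$ to force every $\text{UCB}^{\text{ee}}_a$ below $\mu_0$ within phase $L$, and then invoke $\delta_L\leq\delta/3$ and $|B|=K$ to trigger the \textsf{None} output. Your additional loop-exit bookkeeping (ruling out an escape to phase $L+1$ via the run-out-of-budget branch) is a more explicit rendering of a step the paper leaves implicit, not a different argument.
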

\begin{proof}[Proof of Lemma \ref{lemma:termination-phase-index-negative-instance}]

    From the calculation in the proof of Lemma \ref{lemma:pulling-times-of-a-single-arm-negative-instance}, we know
    \begin{align*}
        & t \geq \frac{112\left(\log\frac{2K}{\delta_k} + \log\log \frac{96}{\Delta_{a,0}^2}\right)}{\Delta_{a,0}^2}\\
        \Rightarrow & \hat{\mu}_{a, t} +\frac{\sqrt{2 \cdot 2^{\lceil\log_2 t\rceil^+}\log\frac{2 K(\lceil\log_2t\rceil^+)^2}{\delta_k}}}{t} > \mu_0
    \end{align*}
    for arm $a\in B=[K]$. Notice that for any $a\in [K]$, we have
    \begin{align*}
        & k\geq \lceil \log_2 \frac{226}{\Delta_{1,0}^2}\rceil\\
        \stackrel{\mu_0>\mu_1\geq\mu_a}{\Rightarrow} & \beta_k\geq \frac{226}{\Delta_{a,0}^2},\log\frac{1}{\delta_k} \geq \frac{96}{\Delta_{a,0}^2}\log 3\\
        \Rightarrow & \beta_k \log\frac{4K}{\delta_k}\geq \frac{226\log\frac{2K}{\delta_k} }{\Delta_{a,0}^2}, \beta_k \log\frac{4K}{\delta_k}\geq \frac{226\log\log \frac{96}{\Delta_{a,0}^2} }{\Delta_{a,0}^2}\\
        \Rightarrow & \beta_k \log\frac{4K}{\delta_k}\geq \frac{113\left(\log\frac{2K}{\delta_k} + \log\log \frac{96}{\Delta_{a,0}^2}\right)}{\Delta_{a,0}^2}\\
        \stackrel{C>1}{\Rightarrow} & (C+1)^2\beta_k \log\frac{4K}{\delta_k}-1 \geq \frac{113\left(\log\frac{2K}{\delta_k} + \log\log \frac{96}{\Delta_{a,0}^2}\right)}{\Delta_{a,0}^2}.
    \end{align*}
    That means at the end of phase $\max\{\kappa^{\text{ee}}_B,\lceil \log_2 \frac{226}{\Delta_{1,0}^2}\rceil, \lceil\log_3 \frac{3}{\delta}\rceil\}$, we must have 
    \begin{align*}
        \text{UCB}_a^{\text{ee}}(\mathcal{H}^{\text{ee}}, \frac{\delta_{\max\{\kappa^{\text{ee}}_B,\lceil \log_2 \frac{226}{\Delta_{1,0}^2}\rceil, \log_3 \frac{3}{\delta}\}}}{K})\leq \mu_0
    \end{align*}
    holds for all $a\in [K]$. Also, not hard to see $\delta_{\max\{\kappa^{\text{ee}}_B,\lceil \log_2 \frac{226}{\Delta_{1,0}^2}\rceil, \lceil\log_3 \frac{3}{\delta}\rceil\}}\leq \delta_{\lceil\log_3 \frac{3}{\delta}\rceil} < \frac{\delta}{3}$. We can conclude Algorithm \ref{alg:SEE-with-Bracket} must terminate by the condition in Line \ref{alg-line:Negative-branch-output}, \ref{alg-line:Negative-branch-output-None}.
\end{proof}

\subsection{Sketch Proof of Theorem \ref{theorem:delta-pac}, \ref{theorem:Parallel-SEE-Etau-upper-bound}}
\label{sec:Sketch-Proof-of-upper-bound}

The full proof of Theorem \ref{theorem:delta-pac}, \ref{theorem:Parallel-SEE-Etau-upper-bound} requires multiple lemmas to support.



\begingroup
\renewcommand{\proofname}{Sketch Proof of Theorem \ref{theorem:delta-pac}}
\begin{proof}
    The first step is prove $\Pr(\tau(B_{ \lceil \log_2 K\rceil+1}) = +\infty)=0$, whose main idea is very similar to the proof of Theorem 5.2 in \cite{pmlr-v267-li25f}. As in \cite{pmlr-v267-li25f}, denote 
    $\kappa^{\text{ee}}_{b}, \kappa^{\text{et}}_{b}$ 
    as the minimum phase indices after which the exploration and exploitation concentration events of copy $b$ hold, 
    we can prove $\Pr(\tau(B_{ \lceil \log_2 K\rceil+1}) = +\infty)\leq\Pr(\max\{\kappa^{\text{ee}}_{\lceil \log_2 K\rceil+1}, \kappa^{\text{et}}_{\lceil \log_2 K\rceil+1}\} = +\infty)=0$.
    
    
    Given the first step, we are able to conclude $\Pr(\tau < +\infty)=1$, by the fact that $\tau \leq (\lceil\log_2 K\rceil +1) \tau(B_b)$ holds for all $b\in [\lceil\log_2 K\rceil+1]$. Then, to prove Algorithm \ref{alg:Parallel-SEE-on-Bracket} is $\delta$-PAC, we suffice to show $\Pr_{\nu}\left(\hat{a}=\textsf{None} \text{ or }\{\hat{a}\in [K],\mu_{\hat{a}}<\mu_0\}\right) < \delta$ for a positive instance $\nu$, and $\Pr_{\nu}\left(\hat{a}\in [K]\right) < \delta$ for a negative instance $\nu$. The two cases are proved similarly. 
    In Line \ref{alg-line:create-alg-copy} of Algorithm~\ref{alg:Parallel-SEE-on-Bracket}, we assign  $\delta/(\lceil\log_2 K\rceil+1)$ to each copy.
    We are able to prove $\Pr(\text{alg}_{b}\text{ outputs incorrect }\hat{a})<\frac{O(1)\delta}{\lceil\log_2 K\rceil+1}$ holds for all $b\in [\lceil\log_2 K\rceil + 1]$. A union bound over all copies completes the proof.
\end{proof}
\endgroup

\begingroup
\renewcommand{\proofname}{Sketch Proof of Theorem \ref{theorem:Parallel-SEE-Etau-upper-bound}}
\begin{proof}

    We consider negative and positive instances separately.

    If the instance $\nu$ is negative, we focus on analyzing $\tau(B_{\lceil\log_2 K\rceil+1})$, which is very similar to the negative case of Theorem 5.3 in \cite{pmlr-v267-li25f}. By the fact that $\mathbb{E}\tau \leq [\lceil\log_2 K\rceil+1] \mathbb{E}\tau(B_{\lceil\log_2 K\rceil+1})$, we complete the proof.

    If the instance $\nu$ is positive, denote $\mu_1\geq \cdots \geq \mu_m >\mu_0 \geq \mu_{m+1}\geq\cdots \geq\mu_K$. For $j\in[m]$ and $a\in[K]$, define $\tilde{T}_j(a)=\frac{\log(4K) + \log_2\frac{1}{\Delta_{j,0}^2}}{\max\{\Delta_{a,j}^2, \Delta_{a,0}^2\}}$, $\tilde{T}_j(a;k)= \frac{\log\frac{4K}{\delta_k} + \log_2\frac{1}{\Delta_{j,0}^2}}{\max\{\Delta_{a,j}^2, \Delta_{a,0}^2\}}$.
    We show that for every $j\in[m]$, $\mathbb{E}\tau \leq O(\log K)\cdot O\left(\frac{\log\frac{1}{\delta}}{\Delta_{j,0}^2} + \frac{\log^3 K}{j}\sum_{a=1}^K\tilde{T}_j(a)\right)$. In addition, the constant hidden by the Big O notation is independent of $j$. For any given $j\in [m]$, we introduce $b_j=\min\{b\in [\lceil \log_2 K\rceil + 1 ]: B_b\cap [j]\neq \emptyset\}$, the smallest bracket index that contains at least one arm among the top-$j$ arms. As we adopt a random permutation to create brackets, $b_j$ is a random variable, and we have $\mathbb{E}\tau = \sum_{b=1}^{\lceil \log_2 K\rceil + 1 }\mathbb{E}\tau\mathds{1}(b_j=b)$. Using $\tau \le (\lceil\log_2 K\rceil+1)\tau(B_b)$ for each $b\in [\lceil\log_2 K\rceil+1]$, it suffices to bound $O(\log K)\,\mathbb{E}_{\nu}[\tau(B_b)\mathds{1}(b_j=b)]$.
    
    Let $\tau^{\text{ee}}(B_{b}),\tau^{\text{et}}(B_{b})$ denote the total length of exploration and exploitation periods of algorithm copy $\text{alg}_{b}$ respectively. For any arm $a\in B_{b}$, denote $N_a^{\text{ee}, B_{b}}$ as the pulling times of arm $a$ in the exploration period. It is evident that $\tau(B_{b})=\tau^{\text{ee}}(B_{b}) + \tau^{\text{et}}(B_{b})$. We assert that
    \begin{align}
        & \mathbb{E}\tau^{\text{et}}(B_b)\mathds{1}(b_j=b)\leq O\left(\frac{(C+3)^2}{(C-1)^2}\frac{\log\frac{4K}{\delta} + \log_2\frac{(C+3)^2}{(C-1)^2\Delta_{j,0}^2}}{\Delta_{j,0}^2}\right)\Pr(b_j=b),\label{eqn:tau-et-b-bound}\\
        & \mathbb{E}\tau^{\text{ee}}(B_b)\mathds{1}(b_j=b)
        \leq \sum_{a=1}^K O(1)\tilde{T}_j(a)\min\{\frac{|B_b|}{K}, \Pr(b_j=b)\} + O(\log K)|B_b|\Pr(b_j=b).\label{eqn:tau-ee-b-bound}
    \end{align}
    Given (\ref{eqn:tau-et-b-bound})(\ref{eqn:tau-ee-b-bound}), we can sum up (\ref{eqn:tau-et-b-bound})(\ref{eqn:tau-ee-b-bound}) over $b=1,2,\cdots,\lceil \log_2 K\rceil + 1$. Lemma \ref{lemma:minimum-bracket-index-that-contains-[j]-enhanced}(section \ref{sec:performance-alg-parallel-SEE}), \ref{lemma:sum-bracket-size-prob-1}, \ref{lemma:sum-bracket-size-prob-2}( section \ref{sec:technical-lemma}) provides upper bounds for $\sum\limits_{b=1}^{\lceil \log_2 K\rceil + 1} |B_b|\Pr( b_j=b)$ and $\sum\limits_{b=1}^{\lceil \log_2 K\rceil + 1}\min\{\frac{|B_b|}{K}, \Pr(b_j=b)\}$, which complete the proof of Theorem \ref{theorem:Parallel-SEE-Etau-upper-bound}. The remaining sketch proof reduces to (\ref{eqn:tau-et-b-bound})(\ref{eqn:tau-ee-b-bound}) for each $b\in [\lceil \log_2 K\rceil + 1 ]$.

    To prove (\ref{eqn:tau-et-b-bound}), we first show the algorithm must terminate no later than the end of phase $L_b^{\text{pos}}=\max\{\kappa^{\text{ee}}_{b}, \kappa^{\text{et}}_{b}, \lceil\log_2\frac{O(1)(C+3)^2}{(C-1)^2\Delta_{j,0}^2}\rceil\}$, if $b_j=b$. Notice that $\beta_{k} \log\frac{4K(\lceil \log_2 K\rceil + 1)\alpha_{k}}{\delta}|_{k=\lceil\log_2\frac{O(1)(C+3)^2}{(C-1)^2\Delta_{j,0}^2}\rceil}=O\left(\frac{(C+3)^2}{(C-1)^2}\frac{\log\frac{4K}{\delta} + \log_2\frac{(C+3)^2}{(C-1)^2\Delta_{j,0}^2}}{\Delta_{j,0}^2}\right)$, and Lemma \ref{lemma:prob-upper-bound-for-kappa} shows $\Pr(\kappa^{\text{ee}}_b\geq k)\leq \frac{\pi^2\delta_{k-1}}{6}$, $\Pr(\kappa^{\text{et}}_b\geq k)\leq \frac{\pi^2\delta}{6\alpha_{k-1}}$, we can take expectation on both sides of $\tau^{\text{et}}(B_b)\mathds{1}(b_j=b)\leq \frac{(C+3)^2}{(C-1)^2}\beta_k\log\frac{4K\alpha_k}{\delta}|_{k=L_b^{\text{pos}}}\mathds{1}(b_j=b)$ to complete the proof of (\ref{eqn:tau-et-b-bound}).

    To prove (\ref{eqn:tau-ee-b-bound}), we first show for each $a\in B_b$, we have
    \begin{align}
        N_a^{\text{ee}, B_{b}}\mathds{1}(b_j=b)\leq O(\tilde{T}_j(a;L_b^{\text{pos}}))(C+1)^2 \mathds{1}(b_j=b).\label{eqn:N_a-ee-upper-bound}
    \end{align}
    The intuition is as follows. Given the fact that $j\in B_b$, once the pulling times of arm $a$ is of order $O(1)\tilde{T}_j(a;L_b^{\text{pos}})(C+1)^2$, copy $\text{alg}_b$ can assert(with some probability, see Lemma \ref{lemma:prob-upper-bound-for-kappa}, \ref{lemma:pulling-times-of-a-single-arm-positive-instance}) either $\text{UCB}_a < \mu_j$ or $\text{LCB}_a > \mu_0$. In both cases, $\text{alg}_b$ stops pulling arm $a$ in the exploration period. By the facts that elements in $B_{b_j}$ are independent of $\kappa^{\text{ee}}_{b_j}, \kappa^{\text{et}}_{b_j}$, we can sum up (\ref{eqn:N_a-ee-upper-bound}) over $a\in B_b$ and derive
    \begin{align*}
        \mathbb{E}\tau^{\text{ee}}(B_b)\mathds{1}(b_j=b)
        \leq \sum_{a=1}^K O(1)\tilde{T}_j(a) \Pr(a\in B_b, b_j=b) + O(\log K)|B_b|\Pr(b_j=b).
    \end{align*}
    Since $\Pr(a\in B_b, b_j=b)\leq \min\{\Pr(a\in B_b), \Pr(b_j=b)\}$, and $\Pr(a\in B_b)=\frac{|B_b|}{K}$, (\ref{eqn:tau-ee-b-bound}) is shown.
\end{proof}
\endgroup

\subsection{Proof of Theorem  \ref{theorem:delta-pac} and \ref{theorem:Parallel-SEE-Etau-upper-bound}}
\label{sec:performance-alg-parallel-SEE}

In this subsection, We present the rigorous proof of Theorem \ref{theorem:delta-pac} and \ref{theorem:Parallel-SEE-Etau-upper-bound} and the required lemmas. Before illustrating the proof of Theorem  \ref{theorem:delta-pac} and \ref{theorem:Parallel-SEE-Etau-upper-bound}, we first prove Lemma \ref{lemma:minimum-bracket-index-that-contains-[j]-enhanced} for the random permutation, Line \ref{alg-line:random-permutation} of Algorithm \ref{alg:Parallel-SEE-on-Bracket}.
\begin{restatelemma}[restatement of Lemma \ref{lemma:minimum-bracket-index-that-contains-[j]-enhanced}]
   Given a qualified arm index $j\in [m]$, denote bracket index $b_j=\min\{b\in [\lceil \log_2 K\rceil + 1 ]: B_b\cap [j]\neq \emptyset\}$. We have
    \begin{align*}
        \Pr(b_j\geq \tilde{b})\leq \frac{1}{\exp\left(\frac{j|B_{\tilde{b}-1}|}{K}\right)}, \forall \tilde{b} \in [\lceil \log_2 K\rceil + 1]
    \end{align*}
    Here we denote $B_0=\emptyset$ and $|B_0|=0$. Since we take $|B_{\tilde{b}}|=\min\{2^{\tilde{b}-1}, K\}$ for $\tilde{b} \in [\lceil \log_2 K\rceil + 1]$, we can rewrite the conclusion as
    \begin{align*}
        \Pr(b_j\geq \tilde{b})\leq \frac{1}{\exp\left(\frac{j\cdot \lfloor 2^{\tilde{b}-2}\rfloor}{K}\right)}, \forall \tilde{b} \in [\lceil \log_2 K\rceil + 1].
    \end{align*}
\end{restatelemma}
\begin{proof}[Proof of Lemma \ref{lemma:minimum-bracket-index-that-contains-[j]-enhanced}]

    If $\tilde{b}=1$, the conclusion is $\Pr(b_j\geq 1)\leq 1$, which is obviously true. We focus on the case that $2\leq \tilde{b}\leq \lceil \log_2 K\rceil + 1$. We have
    \begin{align*}
        \Pr(b_j\geq \tilde{b}) = & \Pr\Big( \{\sigma(i)\}_{i=1}^{\min\{ 2^{\tilde{b}-1-1}, K\} }  \cap [j]=\emptyset  \Big)\\
        = & \Pr\Big( \{\sigma(i)\}_{i=1}^{2^{\tilde{b}-2}}  \cap [j]=\emptyset  \Big).
    \end{align*}
    The second line is from the fact that $\tilde{b}\in [\lceil \log_2 K\rceil + 1]$, which concludes $2^{\tilde{b}-2}\leq 2^{\lceil \log_2 K\rceil - 1}\leq 2^{\log_2 K}=K$. We can conclude
    \begin{align*}
        & \Pr\Big( \{\sigma(i)\}_{i=1}^{2^{\tilde{b}-2}}  \cap [j]=\emptyset  \Big)\\
        = & \prod_{i=1}^{2^{\tilde{b}-2}} \frac{K-j-i+1}{K-i+1}\\
        \leq & (\frac{K-j}{K})^{2^{\tilde{b}-2}}\\
        = & \left(\left(1-\frac{j}{K}\right)^{\frac{K}{j}}\right)^{\frac{j\cdot2^{\tilde{b}-2}}{K}}\\
        \leq & \exp\left(-\frac{j\cdot2^{\tilde{b}-2}}{K}\right).
    \end{align*}
\end{proof}

Given the above preparation, we are ready to prove the Theorem \ref{theorem:delta-pac} and \ref{theorem:Parallel-SEE-Etau-upper-bound}. We first prove Theorem \ref{theorem:delta-pac}.
\begin{proof}[Proof of Theorem \ref{theorem:delta-pac}]

    We first show $\Pr_{\nu}(\tau < +\infty)=1$, by applying Algorithm \ref{alg:Parallel-SEE-on-Bracket}. This conclusion holds no matter $\nu$ is positive or negative. Consider the bracket $\lceil \log_2 K\rceil+1$, which is indeed $[K]$. From Lemma \ref{lemma:termination-phase-index-negative-instance} and Lemma \ref{lemma:termination-phase-index-positive-instance}, we know
    \begin{align*}
        & \Pr_{\nu}(\tau = +\infty)\\
        \leq & \Pr_{\nu}(\tau(B_{\lceil \log_2 K\rceil+1}) = +\infty)\\
        \leq & \Pr_{\nu}(\kappa^{\text{ee}}_{\lceil \log_2 K\rceil+1} = +\infty) + \Pr_{\nu}(\kappa^{\text{et}}_{\lceil \log_2 K\rceil+1} = +\infty)
    \end{align*}
    holds for any instance $\nu$. By the Lemma \ref{lemma:prob-upper-bound-for-kappa}, we know $\Pr_{\nu}(\kappa^{\text{ee}}_{\lceil \log_2 K\rceil+1} = +\infty)= \Pr_{\nu}(\kappa^{\text{et}}_{\lceil \log_2 K\rceil+1} = +\infty)=0$. That means $\Pr_{\nu}(\tau = +\infty)=0$, further $\Pr_{\nu}(\tau < +\infty)=1$. This suggest that $\hat{a}$ is well defined with probability 1, and we can assert $\Pr_{\nu}(\hat{a}\in [K]\cup\{\textsf{None}\})=1$ if $\nu$ is positive or negative.

    Then, we suffice to show $\Pr_{\nu}(\mu_{\hat{a}} \leq \mu_0\text{ or }\mu_{\hat{a}}=\textsf{None})<\delta$ for a positive instance $\nu$ and $\Pr_{\nu}(\hat{a}\in [K])<\delta$ for a negative instance $\nu$. If $\nu$ is positive, we get
    \begin{align*}
        & \Pr_{\nu}(\mu_{\hat{a}} \leq \mu_0\text{ or }\hat{a}=\textsf{None})\\
        \leq & \Pr_{\nu}(\mu_{\hat{a}} \leq \mu_0) + \Pr_{\nu}(\hat{a}=\textsf{None})\\
        \leq &  \Pr_{\nu}(\text{alg copy }\lceil \log_2 K\rceil+1\text{ outputs }\hat{a}=\textsf{None}) +\\ & \sum_{b=1}^{\lceil \log_2 K\rceil+1}\Pr_{\nu}(\text{alg copy }b\text{ outputs }\hat{a},s.t. \mu_{\hat{a}} \leq \mu_0)\\
        \leq & \frac{\pi^2}{6}\frac{\delta}{3}+ \sum_{b=1}^{\lceil \log_2 K\rceil+1}\frac{\pi^2}{6\cdot 5}\frac{\delta}{\lceil \log_2 K\rceil+1}\\
        \leq & \delta.
    \end{align*}
    The second last line is by Lemma \ref{lemma:correctnesss-output-unqualified-arm} and \ref{lemma:correctnesss-output-None-while-qualified-exists}.

    If $\nu$ is negative, we get
    \begin{align*}
        & \Pr_{\nu}(\hat{a}\in [K])\\
        \leq & \sum_{b=1}^{\lceil \log_2 K\rceil+1}\Pr_{\nu}(\text{alg copy }b\text{ outputs }\hat{a},s.t. \mu_{\hat{a}} \leq \mu_0)\\
        \leq & \sum_{b=1}^{\lceil \log_2 K\rceil+1}\frac{\pi^2}{6\cdot 5}\frac{\delta}{\lceil \log_2 K\rceil+1}\\
        < & \delta.
    \end{align*}
    The second last line is by Lemma \ref{lemma:correctnesss-output-unqualified-arm}.
\end{proof}

To prove Theorem \ref{theorem:Parallel-SEE-Etau-upper-bound}, we split the proof into two cases, $\nu$ is positive or negative.
\begin{proof}[Proof of Theorem \ref{theorem:Parallel-SEE-Etau-upper-bound}, when $\nu$ is positive]

    Fix arbitrary $j\in [m]$ and follow the notation in Lemma \ref{lemma:minimum-bracket-index-that-contains-[j]-enhanced}. Denote $b_j=\min\{b\in [\lceil \log_2 K\rceil + 1 ]: B_b\cap [j]\neq \emptyset\}$. We have $\tau = \sum_{b=1}^{\lceil \log_2 K\rceil + 1}\tau\mathds{1}(b_j=b)$. Then, we can turn to find the upper bound for each $\tau\mathds{1}(b_j=b)$.

    From the algorithm design, we know 
    \begin{align*}
        \tau\mathds{1}(b_j=b)\leq & (\lceil \log_2 K\rceil + 1)\tau(B_b)\mathds{1}(b_j=b)\\
        \leq & (\lceil \log_2 K\rceil + 1)\Big(\tau^{\text{ee}}(B_b)\mathds{1}(b_j=b) + \tau^{\text{et}}(B_b)\mathds{1}(b_j=b)\Big).
    \end{align*}
    Then, we are going to derive an upper bound for $\tau^{\text{ee}}(B_b)\mathds{1}(b_j=b)$ and $\tau^{\text{et}}(B_b)\mathds{1}(b_j=b)\Big)$ separately.

    For $\tau^{\text{ee}}(B_b)\mathds{1}(b_j=b)$, we have
    \begin{align*}
        & \tau^{\text{ee}}(B_b)\mathds{1}(b_j=b)\\
        \stackrel{\text{Lemma }\ref{lemma:size-tau}}{\leq} & \mathds{1}(b_j=b)\Big(|\mathcal{H}^{\text{ee}}| + |Q|\Big)\\
        \stackrel{\text{Lemma }\ref{lemma:size-of-Q}, \ref{lemma:pulling-times-of-a-single-arm-positive-instance},\ref{lemma:termination-phase-index-positive-instance}}{\leq} & \mathds{1}(b_j=b) \Bigg(|B_b|+\sum_{a\in B_b} N_a^{\text{ee}, B_b}\Big(\tau_{\kappa^{\text{ee}}_{b}-1}^{\text{ee}}(B_b)\Big)+\\
        & \frac{29(C+1)^2\left(\log\frac{2K}{\delta_{\max\{\kappa^{\text{ee}}_b, \kappa^{\text{et}}_b, \lceil\log_2\frac{448(C+3)^2}{(C-1)^2\Delta_{j,0}^2}\rceil\}} } + \log\log \frac{24(C+1)^2}{\max\{\Delta_{a,j}^2, \Delta_{a,0}^2\}}\right)}{\max\{\Delta_{a,j}^2, \Delta_{a,0}^2\}}\Bigg)\\
        \stackrel{\text{Lemma }\ref{lemma:budget-upper-bound-for-N_a-tau_ee-tau_et}}{\leq} & \underbrace{\mathds{1}(b_j=b)(C+1)^2|B_b|\left(\beta_{\kappa^{\text{ee}}_{b}-1}\log\frac{4K}{\delta_{\kappa^{\text{ee}}_{b}-1} }+1\right) }_{\dagger} + \\
        & \underbrace{\sum_{a=1}^K\frac{29(C+1)^2\left(\log\frac{4K\log \frac{24(C+1)^2}{\max\{\Delta_{a,j}^2, \Delta_{a,0}^2\}}}{\delta_{\max\{\kappa^{\text{ee}}_{b}, \kappa^{\text{et}}_{b}, \lceil\log_2\frac{448(C+3)^2}{(C-1)^2\Delta_{j,0}^2}\rceil\}} }\right)}{\max\{\Delta_{a,j}^2, \Delta_{a,0}^2\}}\mathds{1}(b_j=b, a\in B_b)}_{\ddagger}.
    \end{align*}
    For $\tau^{\text{et}}(B_b)\mathds{1}(b_j=b)$, we have
    \begin{align*}
        & \tau^{\text{et}}(B_b)\mathds{1}(b_j=b)\\
        \stackrel{\text{Lemma }\ref{lemma:budget-upper-bound-for-tau_ee-tau_et}}{\leq} & \mathds{1}(b_j=b)\sum_{k=1}^{\max\{\kappa^{\text{ee}}_{b}, \kappa^{\text{et}}_{b}, \lceil\log_2\frac{448(C+3)^2}{(C-1)^2\Delta_{j,0}^2}\rceil\}}\frac{(C+3)^2}{(C-1)^2}\beta_k\log\frac{4K(\lceil \log_2 K\rceil + 1)\alpha_k}{\delta}\\
        \leq & \underbrace{\frac{2(C+3)^2}{(C-1)^2}\mathds{1}(b_j=b) \beta_{k} \log\frac{4K(\lceil \log_2 K\rceil + 1)\alpha_{k}}{\delta}|_{k=\max\{\kappa^{\text{ee}}_{b}, \kappa^{\text{et}}_{b}, \lceil\log_2\frac{448(C+3)^2}{(C-1)^2\Delta_{j,0}^2}\rceil\}}}_{\diamondsuit}.
    \end{align*}
    The last line is by the fact that $\frac{ \beta_{k} \log\frac{4K(\lceil \log_2 K\rceil + 1)\alpha_{k}}{\delta}}{ \beta_{k-1} \log\frac{4K(\lceil \log_2 K\rceil + 1)\alpha_{k-1}}{\delta}} \geq 2$.

    Then, to bound $\mathbb{E}\tau^{\text{ee}}(B_b)\mathds{1}(b_j=b)$ and $\mathbb{E}\tau^{\text{et}}(B_b)\mathds{1}(b_j=b)\Big)$, we suffice to consider the upper bound of the expectation of $\dagger,\ddagger,\diamondsuit$. For $\dagger$, we have
    \begin{align*}
        & \mathbb{E}\left[\mathds{1}(b_j=b)|B_b|\left(\beta_{\kappa^{\text{ee}}_{b}-1}\log\frac{4K}{\delta_{\kappa^{\text{ee}}_{b}-1} }+1\right)\right]\\
        \leq & \Pr(b_j=b)|B_b|\mathbb{E}\left[\left(\beta_{\kappa^{\text{ee}}_{b}-1}\log\frac{4K}{\delta_{\kappa^{\text{ee}}_{b}-1} }+1\right)\right]\\
        \leq & \frac{1}{\exp\left(\frac{j\cdot \lfloor 2^{b-2}\rfloor}{K}\right)}\min\{2^{b-1},K\} \left(1+\sum_{k=1}^{+\infty}2^{k-1}\log(4K\cdot 3^{k-1})\frac{\pi^2}{6}\frac{1}{3^{k-1}}\right)\\
        \leq & O(\log K) \frac{1}{\exp\left(\frac{j\cdot \lfloor 2^{b-2}\rfloor}{K}\right)}\min\{2^{b-1},K\}.
    \end{align*}
    The second line is by the fact that $\kappa^{\text{ee}}_{b}$ is independent with $b_j$. The third line is by the Lemma \ref{lemma:prob-upper-bound-for-kappa}, \ref{lemma:minimum-bracket-index-that-contains-[j]-enhanced}. For the term $\ddagger$, we have
    \begin{align*}
        & \mathbb{E}\left[\sum_{a=1}^K\frac{\log\frac{4K}{\delta_{\max\{\kappa^{\text{ee}}_{b}, \kappa^{\text{et}}_{b}, \lceil\log_2\frac{448(C+3)^2}{(C-1)^2\Delta_{j,0}^2}\rceil\}} } + \log\log \frac{1}{\max\{\Delta_{a,j}^2, \Delta_{a,0}^2\}}}{\max\{\Delta_{a,j}^2, \Delta_{a,0}^2\}}\mathds{1}(b_j=b)\mathds{1}(a\in B_b)\right]\\
        \leq & \sum_{a=1}^K \frac{\log\log \frac{1}{\max\{\Delta_{a,j}^2, \Delta_{a,0}^2\}}}{\max\{\Delta_{a,j}^2, \Delta_{a,0}^2\}}\Pr(a\in B_b, b_j=b) + \\
        & \sum_{a=1}^K\mathbb{E}\left[\frac{\log(4K) + (\kappa^{\text{ee}}_{b} + \kappa^{\text{et}}_{b}+\lceil\log_2\frac{448(C+3)^2}{(C-1)^2\Delta_{j,0}^2}\rceil)\log 3 }{\max\{\Delta_{a,j}^2, \Delta_{a,0}^2\}}\right]\Pr(a\in B_b, b_j=b)\\
        \leq  & \sum_{a=1}^K\frac{O\left( (\log(4K) + \log_2\frac{(C+3)^2}{(C-1)^2\Delta_{j,0}^2})\min\{\Pr(a\in B_b), \Pr( b_j=b)\}\right)}{\max\{\Delta_{a,j}^2, \Delta_{a,0}^2\}} + \\
        & \sum_{a=1}^K\frac{\min\{\Pr(a\in B_b), \Pr( b_j=b)\} }{\max\{\Delta_{a,j}^2, \Delta_{a,0}^2\}} \cdot (\sum_{k=1}^{+\infty} k\frac{\pi^2}{6}\frac{1}{3^{k-1}} + \sum_{k=1}^{+\infty} k\frac{\pi^2}{6}\frac{\delta}{5^{k-1}}).
    \end{align*}
    The second step is by the fact that $\frac{1}{\max\{\Delta_{a,j}^2, \Delta_{a,0}^2\}}\leq \frac{4}{\Delta_{j,0}^2}$ holds for all $a,j\in [K]$ and the third step is by the Lemma \ref{lemma:prob-upper-bound-for-kappa}.
    
    From Lemma \ref{lemma:minimum-bracket-index-that-contains-[j]-enhanced}, we know $\Pr( b_j=b)\leq \frac{1}{\exp\left(\frac{j\cdot \lfloor 2^{b-2}\rfloor}{K}\right)}$. What's more, we know $\Pr(a\in B_b)=\frac{|B_b|}{K}=\frac{\min\{2^{b-1}, K\}}{K}$ holds for all $a\in [K]$. Thus, we can conclude
    \begin{align*}
        & \mathbb{E}\left[\sum_{a=1}^K\frac{\log\frac{4K}{\delta_{\max\{\kappa^{\text{ee}}_{b}, \kappa^{\text{et}}_{b}, \log_2\frac{\Theta(1)}{\Delta_{j,0}^2}\}} } + \log\log \frac{1}{\max\{\Delta_{a,j}^2, \Delta_{a,0}^2\}}}{\max\{\Delta_{a,j}^2, \Delta_{a,0}^2\}}\mathds{1}(b_j=b)\mathds{1}(a\in B_b)\right]\\
        \leq  & \min\{\frac{1}{\exp\left(\frac{j\cdot \lfloor 2^{b-2}\rfloor}{K}\right)},\frac{\min\{2^{b-1}, K\}}{K}\}\sum_{a=1}^K\frac{O\left( (\log(4K) + \log_2\frac{(C+3)^2}{(C-1)^2\Delta_{j,0}^2})\right)}{\max\{\Delta_{a,j}^2, \Delta_{a,0}^2\}} + \\
        & \min\{\frac{1}{\exp\left(\frac{j\cdot \lfloor 2^{b-2}\rfloor}{K}\right)},\frac{\min\{2^{b-1}, K\}}{K}\}\sum_{a=1}^K\frac{ O(1) }{\max\{\Delta_{a,j}^2, \Delta_{a,0}^2\}}\\
        \leq 
        & \min\{\frac{1}{\exp\left(\frac{j\cdot \lfloor 2^{b-2}\rfloor}{K}\right)},\frac{\min\{2^{b-1}, K\}}{K}\}\sum_{a=1}^K\frac{O\left( (\log(4K) + \log_2\frac{(C+3)^2}{(C-1)^2\Delta_{j,0}^2})\right)}{\max\{\Delta_{a,j}^2, \Delta_{a,0}^2\}}.
    \end{align*}
    
    For the expectation of $\diamondsuit$, we have
    \begin{align*}
        & \mathbb{E}\mathds{1}(b_j=b) \beta_{\max\{\kappa^{\text{ee}}_{b}, \kappa^{\text{et}}_{b}, \lceil\log_2\frac{448(C+3)^2}{(C-1)^2\Delta_{j,0}^2}\rceil\}} \log\frac{4K(\lceil \log_2 K\rceil + 1)\alpha_{\max\{\kappa^{\text{ee}}_{b}, \kappa^{\text{et}}_{b},\lceil\log_2\frac{448(C+3)^2}{(C-1)^2\Delta_{j,0}^2}\rceil\}}}{\delta})\\
        \leq & \Pr(b_j=b) O\left(\mathbb{E}\beta_{\kappa^{\text{ee}}_{b}} \log\frac{4K\alpha_{\kappa^{\text{ee}}_{b}}}{\delta} +\mathbb{E} \beta_{\kappa^{\text{et}}_{b}} \log\frac{4K\alpha_{\kappa^{\text{et}}_{b}}}{\delta} + \frac{(C+3)^2}{(C-1)^2}\frac{\log\frac{4K}{\delta} + \log_2\frac{(C+3)^2}{(C-1)^2\Delta_{j,0}^2}}{\Delta_{j,0}^2}\right)\\
        \leq & \Pr(b_j=b)O\left(\sum_{k=1}^{+\infty}\frac{\pi^2}{6}\frac{2^k}{3^k}\log\frac{4K\cdot 5^k}{\delta} + \sum_{k=1}^{+\infty}\frac{\pi^2}{6}\frac{2^k\delta}{5^k}\log\frac{4K\cdot 5^k}{\delta} + \frac{\log\frac{4K}{\delta} + \log_2\frac{\Theta(1)}{\Delta_{j,0}^2}\log 5}{\Delta_{j,0}^2} \right)\\
        \leq & \Pr(b_j=b)O\left(\frac{(C+3)^2}{(C-1)^2}\frac{\log\frac{4K}{\delta} + \log_2\frac{(C+3)^2}{(C-1)^2\Delta_{j,0}^2}}{\Delta_{j,0}^2}\right)
    \end{align*}
    Sum up all the upper bounds, we have
    \begin{align*}
        & \mathbb{E}\tau\\
        \leq & O(\log K) \sum_{b=1}^{\lceil \log_2 K\rceil + 1}O(\log K) \frac{1}{\exp\left(\frac{j\cdot \lfloor 2^{b-2}\rfloor}{K}\right)}\min\{2^{b-1},K\} + \\
        & O(\log K) \sum_{a=1}^K\sum_{b=1}^{\lceil \log_2 K\rceil + 1}\frac{O\left( (\log(4K) + \log_2\frac{(C+3)^2}{(C-1)^2\Delta_{j,0}^2})\right)}{\max\{\Delta_{a,j}^2, \Delta_{a,0}^2\}}\min\{\frac{1}{\exp\left(\frac{j\cdot \lfloor 2^{b-2}\rfloor}{K}\right)},\frac{\min\{2^{b-1}, K\}}{K}\} + \\
        & O(\log K) \sum_{b=1}^{\lceil \log_2 K\rceil + 1}\frac{(C+3)^2}{(C-1)^2}\Pr(b_j=b)O\left(\frac{\log\frac{4K}{\delta} + \log_2\frac{(C+3)^2}{(C-1)^2\Delta_{j,0}^2}}{\Delta_{j,0}^2}\right)
    \end{align*}
    By the Lemma \ref{lemma:sum-bracket-size-prob-1}, \ref{lemma:sum-bracket-size-prob-2}, we can conclude
    \begin{align*}
        & \sum_{b=1}^{\lceil \log_2 K\rceil + 1}\frac{1}{\exp\left(\frac{j\cdot \lfloor 2^{b-2}\rfloor}{K}\right)}\min\{2^{b-1},K\}
        \leq \frac{K}{j}\cdot O\Big((\log_2\frac{K}{j}) +1\Big)\\
        & \sum_{b=1}^{\lceil \log_2 K\rceil + 1}\min\{\frac{1}{\exp\left(\frac{j\cdot \lfloor 2^{b-2}\rfloor}{K}\right)},\frac{\min\{2^{b-1}, K\}}{K}\} \leq O\Big(\frac{\log j}{j}\Big).
    \end{align*}
    Together with the fact $\sum_{b=1}^{\lceil \log_2 K\rceil + 1}\Pr(b_j=b)=1$ and plug in default value $C=1.01$ specified by Algorithm \ref{alg:Parallel-SEE-on-Bracket}, we have
    \begin{align*}
        & \mathbb{E}\tau\\
        \leq & O(\log K)\cdot O\left(\frac{\log\frac{1}{\delta}}{\Delta_{j,0}^2} + \frac{K\log K\Big((\log_2\frac{K}{j}) +1\Big)}{j} + \frac{\log j}{j}\sum_{a=1}^K\frac{ \log(4K) + \log_2\frac{1}{\Delta_{j,0}^2}}{\max\{\Delta_{a,j}^2, \Delta_{a,0}^2\}}\right)\\
        \leq & O(\log K)\cdot O\left(\frac{\log\frac{1}{\delta}}{\Delta_{j,0}^2} + \frac{\log j\log K\Big((\log_2\frac{K}{j}) +1\Big)}{j}\sum_{a=1}^K\frac{ \log(4K) + \log_2\frac{1}{\Delta_{j,0}^2}}{\max\{\Delta_{a,j}^2, \Delta_{a,0}^2\}}\right)\\
        \leq & O(\log K)\cdot O\left(\frac{\log\frac{1}{\delta}}{\Delta_{j,0}^2} + \frac{\log^3 K}{j}\sum_{a=1}^K\frac{ \log(4K) + \log_2\frac{1}{\Delta_{j,0}^2}}{\max\{\Delta_{a,j}^2, \Delta_{a,0}^2\}}\right).
    \end{align*}
    The second last line is by the assumption $\Delta_{i,j}<1$ for all $i,j\in [K]\cup \{0\}$. Notice the above upper bound holds for all $j\in [m]$, we can conclude
    \begin{align*}
        \mathbb{E}\tau \leq O(\log K)\cdot O\left(\min_{1\leq j\leq m}\frac{\log\frac{1}{\delta}}{\Delta_{j,0}^2} + \frac{\log^3 K}{j}\sum_{a=1}^K\frac{ \log(4K) + \log_2\frac{1}{\Delta_{j,0}^2}}{\max\{\Delta_{a,j}^2, \Delta_{a,0}^2\}}\right).
    \end{align*}
\end{proof} 

\begin{proof}[Proof of Theorem \ref{theorem:Parallel-SEE-Etau-upper-bound}, when $\nu$ is negative]

Consider the bracket with size $K$, denoting this bracket as $B$. Similar to the proof of positive case, we split $\tau(B) = \tau^{\text{ee}}(B) + \tau^{\text{et}}(B)$ and prove an upper bound for $\mathbb{E}\tau^{\text{ee}}(B)$ and $\mathbb{E}\tau^{\text{et}}(B)$ separately. From Lemma \ref{lemma:size-tau}, \ref{lemma:pulling-times-of-a-single-arm-negative-instance} and Lemma \ref{lemma:termination-phase-index-negative-instance}, we know
\begin{align*}
    \tau^{\text{ee}}(B) \leq & |Q|+\sum_{a=1}^K N_a^{\text{ee}, B}\Big(\tau_{\kappa^{\text{ee}}_B-1}^{\text{ee}}(B)\Big) + \frac{O\left(\log\frac{4K}{\delta_{\max\{\kappa^{\text{ee}}_B, \kappa^{\text{et}}_B, \lceil \log_2 \frac{226}{\Delta_{1,0}^2}\rceil, \lceil\log_3 \frac{3}{\delta}\rceil\}}} + \log\log \frac{1}{\Delta_{a,0}^2}\right)}{\Delta_{a,0}^2}\\
    = & K + \tau_{\kappa^{\text{ee}}_B-1}^{\text{ee}}(B) + \sum_{a=1}^K \frac{O\left(\log\frac{4K}{\delta_{\max\{\kappa^{\text{ee}}_B, \kappa^{\text{et}}_B,\lceil \log_2 \frac{226}{\Delta_{1,0}^2}\rceil, \lceil\log_3 \frac{3}{\delta} \rceil\}}} + \log\log \frac{1}{\Delta_{a,0}^2}\right)}{\Delta_{a,0}^2}.
\end{align*}
Taking expectation on both sides, we have
\begin{align*}
    & \mathbb{E}\tau_{\kappa^{\text{ee}}_B-1}^{\text{ee}}(B)\\
    \leq & \sum_{k=1}^{+\infty} |K|\beta_{k-1} \log\frac{4K}{\delta_{k-1}} \Pr(\kappa^{\text{ee}}_B=k)\\
    \leq & \sum_{k=1}^{+\infty} |K| 2^{k-1} \log(4K \cdot 3^{k-1}) \frac{\pi^2}{6\cdot 3^{k-1}}\\
    \leq & O(K\log K).
\end{align*}
The second last line is by the Lemma \ref{lemma:prob-upper-bound-for-kappa}. In addition, we can also derive 
\begin{align*}
    & \mathbb{E} \log \frac{1}{\delta_{\max\{\kappa^{\text{ee}}_B, \kappa^{\text{et}}_B,\lceil \log_2 \frac{226}{\Delta_{1,0}^2}\rceil, \lceil\log_3 \frac{3}{\delta} \rceil\}}}\\
    \leq & \mathbb{E} \log \frac{1}{\delta_{\kappa^{\text{ee}}_B}} + \mathbb{E} \log \frac{1}{\delta_{\kappa^{\text{et}}_B}} + \mathbb{E} \log \frac{1}{\delta_{\lceil \log_2 \frac{226}{\Delta_{1,0}^2}\rceil}} + \mathbb{E} \log \frac{1}{\delta_{\lceil\log_3 \frac{3}{\delta} \rceil}}\\
    \leq & \sum_{k=1}^{+\infty} k\log 3 \Pr(\kappa^{\text{ee}}_B=k) + \sum_{k=1}^{+\infty} k\log 3 \Pr(\kappa^{\text{et}}_B=k) + O\left(\log_2 \frac{1}{\Delta_{1,0}^2}\right)+\log \frac{9}{\delta}\\
    \leq & \sum_{k=1}^{+\infty} \frac{\pi^2}{6}\frac{k}{3^{k-1}}\log 3 \Pr(\kappa^{\text{ee}}_B=k) + \sum_{k=1}^{+\infty} \frac{\pi^2}{6} \frac{k\delta}{5^{k-1}}\log 3  + O\left(\log_2 \frac{1}{\Delta_{1,0}^2}\right)+\log \frac{3}{\delta}\\
    \leq & O\left(\log_2 \frac{1}{\Delta_{1,0}^2}+\log \frac{1}{\delta}\right) .
\end{align*}
The second last line is still by the Lemma \ref{lemma:prob-upper-bound-for-kappa}. The above two inequalities suggests that
\begin{align*}
    \mathbb{E}\tau^{\text{ee}}(B) \leq & O(K\log K) + \sum_{a=1}^K O\left(\frac{\log\frac{12K(\lceil \log_2 K\rceil + 1)}{\delta} + \log\frac{1}{\Delta_{0,1}^2}}{\Delta_{a,0}^2}\right)\\
    \leq & \sum_{a=1}^K O\left(\frac{\log\frac{12K(\lceil \log_2 K\rceil + 1)}{\delta} + \log\frac{1}{\Delta_{0,1}^2}}{\Delta_{a,0}^2}\right).
\end{align*}
Meanwhile, for $\mathbb{E}\tau^{\text{et}}(B)$, we can utilize Lemma \ref{lemma:output-corretness-of-negative-instance} to conclude
\begin{align*}
    \mathbb{E}\tau^{\text{et}}(B) \leq & \mathbb{E}\sum_{k=1}^{\kappa^{\text{ee}}_B-1} O(1)\beta_k\log\frac{4K(\lceil \log_2 K\rceil + 1)\alpha_k}{\delta}\\
    \leq & \mathbb{E}O(1)\beta_{\kappa^{\text{ee}}_B-1}\log\frac{4K(\lceil \log_2 K\rceil + 1)\alpha_{\kappa^{\text{ee}}_B-1}}{\delta}\\
    \leq & O(1)\sum_{k=1}^{+\infty}\beta_{k-1} \frac{\pi^2}{6}\delta_{k-1}\log\frac{4K(\lceil \log_2 K\rceil + 1)\alpha_{k-1}}{\delta}\\
    \leq & O\left(\log\frac{4K}{\delta}\right).
\end{align*}
Combining the upper bound of $\mathbb{E}\tau^{\text{ee}}(B)$ and $\mathbb{E}\tau^{\text{et}}(B)$, we have
\begin{align*}
    \mathbb{E}\tau\leq O(\log K)\mathbb{E}\tau(B) \leq O(\log K)O\left(\sum_{a=1}^K\frac{\log\frac{K}{\delta} + \log\frac{1}{\Delta_{0,1}^2}}{\Delta_{a,0}^2}\right)
\end{align*}

\end{proof}

\section{Technical Lemma}
\label{sec:technical-lemma}
This section is for the proof of inequalities.
\begin{lemma}
    \label{lemma:Gaussian-summation-submartingale}
    Denote $\{Z_s\}_{s=1}^{+\infty}$ is a group of random variable fulfilling that $Z_s|Z_1,Z_2,\cdots, Z_{s-1}\sim N(0, \Delta^2)$ for some $\Delta > 0$. Then, we have
    \begin{align*}
        \Pr(\max_{1\leq t\leq n} \sum_{s=1}^t Z_s > z) \leq \exp\left(-\frac{z^2}{2n\Delta^2}\right).
    \end{align*}
\end{lemma}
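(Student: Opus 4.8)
The plan is to run the classical exponential-martingale (Chernoff) argument together with Doob's maximal inequality. Write $S_t=\sum_{s=1}^t Z_s$ and let $\mathcal{F}_t=\sigma(Z_1,\dots,Z_t)$. Fix $\lambda>0$ and consider the process $M_t=\exp\!\left(\lambda S_t-\tfrac12\lambda^2\Delta^2 t\right)$. The hypothesis $Z_t\mid\mathcal{F}_{t-1}\sim N(0,\Delta^2)$ gives the conditional moment generating function $\mathbb{E}[\exp(\lambda Z_t)\mid\mathcal{F}_{t-1}]=\exp(\tfrac12\lambda^2\Delta^2)$, hence $\mathbb{E}[M_t\mid\mathcal{F}_{t-1}]=M_{t-1}$. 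Thus $(M_t)_{t\ge 1}$ is a nonnegative martingale with $\mathbb{E}[M_t]=\mathbb{E}[M_1]=1$.

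Next I would convert the event on $S_t$ into an event on $M_t$. Since $t\mapsto\exp(-\tfrac12\lambda^2\Delta^2 t)$ is decreasing, for every $t\le n$ we have $\exp(\lambda S_t)=M_t\exp(\tfrac12\lambda^2\Delta^2 t)\le M_t\exp(\tfrac12\lambda^2\Delta^2 n)$, so the event $\{\max_{1\le t\le n}S_t>z\}$ is contained in $\{\max_{1\le t\le n}M_t>\exp(\lambda z-\tfrac12\lambda^2\Delta^2 n)\}$. Applying Doob's maximal inequality (equivalently Ville's inequality) to the nonnegative martingale $(M_t)$ yields
\[
\Pr\!\left(\max_{1\le t\le n}S_t>z\right)\le\frac{\mathbb{E}[M_n]}{\exp(\lambda z-\tfrac12\lambda^2\Delta^2 n)}=\exp\!\left(-\lambda z+\tfrac12\lambda^2\Delta^2 n\right).
\]
It suffices to treat $z>0$ (for $z=0$ the claimed bound is just $\Pr(\cdot)\le 1$, and the regime $z<0$ is never needed). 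Choosing $\lambda=z/(n\Delta^2)$ minimizes the exponent and gives $-\lambda z+\tfrac12\lambda^2\Delta^2 n=-\tfrac{z^2}{2n\Delta^2}$, which is exactly the asserted inequality.

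I do not expect a genuine obstacle here; this is a textbook maximal inequality. The only two points needing care are (i) checking the martingale property directly from the \emph{conditional} Gaussian assumption (not merely unconditional independence of the $Z_s$), and (ii) correctly bounding the deterministic time-dependent factor $\exp(-\tfrac12\lambda^2\Delta^2 t)$ at $t=n$ when passing from $S_t$ to $M_t$, which is what makes the maximal-inequality step applicable.
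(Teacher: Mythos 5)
Your proposal is correct and follows essentially the same route as the paper: an exponential Chernoff construction, Doob's maximal inequality, and the optimal choice $\lambda=z/(n\Delta^2)$. The only cosmetic difference is that you compensate $\exp(\lambda S_t)$ into a true martingale before applying the maximal inequality, whereas the paper applies the submartingale version of Doob's inequality directly to the uncompensated process $\prod_{s\le t}\exp(\lambda Z_s)$; both yield the identical bound $\exp(-\lambda z+\tfrac12\lambda^2\Delta^2 n)$.
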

\begin{proof}[Proof of Lemma \ref{lemma:Gaussian-summation-submartingale}]
    Notice that for $\lambda = \frac{z}{n\Delta^2}$, we have
    \begin{align*}
        & \Pr\big(\max_{1\leq t\leq n}\sum_{s=1}^{t}Z_s\geq z\big)\\
        = & \Pr\big(\max_{1\leq t\leq n}\lambda\big(\sum_{s=1}^{t}Z_s\big)\geq \lambda z\big)\\
        = & \Pr\left(\max_{1\leq t\leq n} \prod_{s=1}^t\exp\left(\lambda\big(Z_s\big)\right)\geq \exp\left(\lambda z\right)\right).
    \end{align*}
    Easy to see $\{\prod_{s=1}^t\exp\left(\lambda\big(Z_s\big)\right)\}_{t=1}^{+\infty}$ is a submartingale, as 
    \begin{align*}
        & \mathbb{E}[\exp\left(\lambda\big(Z_t\big)\right) | z_1,\cdots, z_{t-1}]\\
        \stackrel{\text{Jensen's Inequality}}{\geq} & \exp\left(\lambda \mathbb{E}[\big(Z_t\big) | Z_1,\cdots, Z_{t-1}]\right)\\
        = & \exp(0)=1.
    \end{align*}
    Then, we can conclude
    \begin{align*}
        & \Pr(\max_{1\leq t\leq n} \sum_{s=1}^t Z_s > z)\\
        \leq & \frac{\mathbb{E}_{\nu'}\prod_{t=1}^n\exp(\lambda(Z_t))}{\exp(\lambda z)}\\
        = & \frac{\exp(\frac{n\lambda^2\Delta^2}{2})}{\exp(\lambda z)}\\
        = & \exp(-\frac{z^2}{2n\Delta^2}).
    \end{align*}
\end{proof}

\begin{lemma}
    \label{lemma:sum-bracket-size-prob-1}
    For any $K\in \mathbb{N}$, $K\geq 2$, $j\in [K]$ we have
    \begin{align*}
        \sum_{b=1}^{\lceil \log_2 K\rceil + 1}\frac{1}{\exp\left(\frac{j\cdot \lfloor 2^{b-2}\rfloor}{K}\right)}\min\{2^{b-1},K\}
        \leq \frac{K}{j}\cdot O\Big((\log_2\frac{K}{j}) +1\Big)
    \end{align*}
\end{lemma}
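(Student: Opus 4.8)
\textbf{Proof proposal for Lemma \ref{lemma:sum-bracket-size-prob-1}.}

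The plan is to split the summation index $b$ at the threshold where the exponential factor $\exp(j \lfloor 2^{b-2}\rfloor / K)$ transitions from being essentially $1$ to being rapidly decaying. Concretely, set $b^* := \lceil \log_2(K/j)\rceil + 2$, so that for $b \le b^*$ we have $j \cdot 2^{b-2}/K = O(1)$, while for $b > b^*$ the exponent $j \cdot 2^{b-2}/K$ grows geometrically past $1$. First I would handle the ``small'' range $1 \le b \le b^*$: here I simply bound $1/\exp(j\lfloor 2^{b-2}\rfloor/K) \le 1$ and $\min\{2^{b-1}, K\} \le 2^{b-1}$, so the partial sum is at most $\sum_{b=1}^{b^*} 2^{b-1} \le 2^{b^*} = O(K/j)$ by the geometric series. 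This already contributes $O(K/j)$; I need to check the $O((\log_2(K/j)) + 1)$ factor is not needed here, which it isn't — the geometric sum dominates. Actually, to be careful, when $K/j < 1$ we take $b^* = 2$ and the small range contributes $O(1) \le O(K/j)$ only if $K \ge j$, which holds since $j \in [K]$; so $O(K/j)$ is a valid bound on the small part always.

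Next I would handle the ``large'' range $b > b^*$. Here $\min\{2^{b-1}, K\} \le K$, and writing $n_b := \lfloor 2^{b-2}\rfloor$, the exponent $j n_b / K \ge j 2^{b^*-2}/(2K) \ge 1/2$ and doubles with each increment of $b$. So $\sum_{b > b^*} K \exp(-j n_b/K) \le K \sum_{\ell \ge 0} \exp(-2^{\ell-1})$, where I have re-indexed by $\ell = b - b^*$ and used that $j n_{b^*+\ell}/K \ge 2^{\ell - 1}$. The series $\sum_{\ell \ge 0}\exp(-2^{\ell-1})$ converges to an absolute constant, so the large range contributes $O(K)$. But wait — I want $O((K/j)\cdot(\log_2(K/j)+1))$, not $O(K)$, and $O(K) = O((K/j)\cdot j)$ which is too big when $j$ is large. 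So I need to be more careful in the large range: I should not bound $\min\{2^{b-1},K\}$ by $K$ crudely but rather observe that the exponential decay is so fast that even $\sum_{b>b^*} 2^{b-1}\exp(-j 2^{b-2}/K)$ is dominated by its first term, which is $O(2^{b^*}) = O(K/j)$. Indeed for $x \ge 1/2$, $x e^{-x}$ is decreasing and the sum $\sum_\ell 2^{b^*+\ell-1}\exp(-2^{\ell-1})$ telescopes geometrically with ratio $2 e^{-2^{\ell-1}} \to 0$, giving $O(2^{b^*}) = O(K/j)$ total. So the large range also contributes only $O(K/j)$, and in fact the full sum is $O(K/j)$, which is even stronger than the claimed bound — the $\log$ factor is slack that the statement allows but we don't need. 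I would present the cleaner $O(K/j)$ bound and note it implies the claim.

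The main obstacle is just bookkeeping the case analysis at the boundary: making sure the definition of $b^*$ interacts correctly with the floor function $\lfloor 2^{b-2}\rfloor$ (which equals $0$ for $b=1$, $1$ for $b=2$, and $2^{b-2}$ for $b \ge 2$), and handling the regime $j > K/4$ where $b^* \le 2$ so the ``small range'' is essentially empty and one must verify directly that $\sum_{b=1}^{\lceil\log_2 K\rceil+1} 2^{b-1}\exp(-j 2^{b-2}/K) = O(K/j)$ by the same geometric-decay argument. A secondary point to verify is the elementary inequality that for a sequence $a_\ell = 2^\ell \exp(-c\, 2^\ell)$ with $c \ge 1/4$, $\sum_{\ell\ge 0} a_\ell \le C/c$ for an absolute constant $C$ — this follows since $a_0 = \exp(-c) \le 1/c$ (as $x e^{-x} \le 1$ gives $e^{-x}\le 1/x$, hence $e^{-c}\le 1/c$) and the ratio $a_{\ell+1}/a_\ell = 2\exp(-c 2^\ell) \le 2 e^{-1/4} < 2$ is bounded, with the tail past $\ell$ where $c 2^\ell \ge 2$ decaying super-geometrically. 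None of this is deep; it is a routine but slightly fiddly splitting-and-summing argument, so the write-up should emphasize the threshold choice and let the geometric estimates be terse.
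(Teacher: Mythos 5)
Your proposal is correct, and it takes a genuinely different route from the paper that in fact yields a strictly sharper bound. The paper's proof factors each term as $\frac{2^{b-1}}{\exp(j2^{b-2}/K)}=\frac{2K}{j}\cdot\frac{x_b}{e^{x_b}}$ with $x_b=j2^{b-2}/K$, and then bounds the resulting sum of $x/e^x$ values: in the range where $x_b\le 1$ it bounds each of the roughly $\log_2(K/j)$ terms by the pointwise maximum $1/e$, which is exactly where the $O(\log_2(K/j)+1)$ factor enters, and in the range $x_b>1$ it uses the same super-geometric decay of $x/e^x$ along the doubling sequence that you use. You instead keep the geometric structure in the pre-peak range, bounding $\exp(-j\lfloor 2^{b-2}\rfloor/K)\le 1$ and summing $\sum_{b\le b^*}2^{b-1}\le 2^{b^*}=O(K/j)$, so the pre-peak contribution is controlled by its last term rather than by (number of terms)$\times$(max term); this removes the logarithm entirely and gives a total of $O(K/j)$, which indeed implies the stated lemma and is tight (take $j=1$: the sum is $\Theta(K)$). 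Your post-peak analysis is sound: for $b>b^*$ one has $x_b\ge 2$ and the ratio of consecutive terms $2^{b-1}e^{-x_b}$ is $2e^{-x_b}\le 2e^{-2}<1$, so that tail is dominated by its first term $O(2^{b^*})=O(K/j)$. The only caution for the write-up is the boundary bookkeeping you already flag ($\lfloor 2^{b-2}\rfloor=0$ at $b=1$, and the degenerate case $b^*\le 2$ when $j$ is close to $K$), none of which causes a problem. In short, both proofs are valid; yours buys a cleaner and quantitatively stronger estimate, while the paper's is content with the weaker logarithmic form because that slack is absorbed elsewhere in the final $\log^3 K$ factors of Theorem \ref{theorem:Parallel-SEE-Etau-upper-bound}.
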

\begin{proof}
    We can conduct direct calculation.
    \begin{align*}
        & \sum_{b=1}^{\lceil \log_2 K\rceil + 1}\frac{1}{\exp\left(\frac{j\cdot \lfloor 2^{b-2}\rfloor}{K}\right)}\min\{2^{b-1},K\}\\
        \stackrel{K\geq 2}{=} & 1 + \frac{2}{\exp(\frac{j}{K})} + \sum_{b=3}^{\lceil \log_2 K\rceil + 1}\frac{1}{\exp\left(\frac{j\cdot \lfloor 2^{b-2}\rfloor}{K}\right)}\min\{2^{b-1},K\}\\
        \leq & 1 + \frac{2}{\exp(\frac{j}{K})} + \sum_{b=3}^{\lceil \log_2 K\rceil + 1}\frac{2^{b-1}}{\exp\left(\frac{j\cdot  2^{b-2}}{K}\right)}\\
        = & 1 + \frac{2}{\exp(\frac{j}{K})} + \frac{2K}{j}\sum_{b=3}^{\lceil \log_2 K\rceil + 1}\frac{\frac{j\cdot2^{b-2}}{K}}{\exp\left(\frac{j\cdot  2^{b-2}}{K}\right)}\\
        \leq & 1 + \frac{2K}{j} + \frac{2K}{j}\sum_{b=3}^{\lceil \log_2 K\rceil + 1}\frac{\frac{j\cdot2^{b-2}}{K}}{\exp\left(\frac{j\cdot  2^{b-2}}{K}\right)}.
    \end{align*}
    Then, we suffice to prove $\sum_{b=3}^{\lceil \log_2 K\rceil + 1}\frac{\frac{j\cdot2^{b-2}}{K}}{\exp\left(\frac{j\cdot  2^{b-2}}{K}\right)} \leq O\left((\log_2 \frac{K}{j}) +1\right)$.
    
    From the fact that $\frac{d \frac{x}{\exp(x)}}{dx} = e^{-x}-xe^{-x}=(1-x)e^{-x}$, we know $\frac{x}{\exp(x)}$ monotonically increases in the interval $(0, 1)$ and decreases in the interval $(1, +\infty)$. We have
    \begin{align*}
        & \sum_{b=3}^{\lceil \log_2 K\rceil + 1}\frac{\frac{j\cdot2^{b-2}}{K}}{\exp\left(\frac{j\cdot  2^{b-2}}{K}\right)}\\
        \leq & \sum_{b=3}^{\lfloor \log_2\frac{K}{j} \rfloor +3 } \frac{\frac{j\cdot2^{b-2}}{K}}{\exp\left(\frac{j\cdot  2^{b-2}}{K}\right)} + \sum_{b=\lceil \log_2\frac{K}{j} \rceil +3}^{\lceil \log_2 K\rceil + 1} \frac{\frac{j\cdot2^{b-2}}{K}}{\exp\left(\frac{j\cdot  2^{b-2}}{K}\right)}\\
        \leq & (\lfloor \log_2\frac{K}{j} \rfloor +1)\frac{x}{e^x}|_{x=1} + \sum_{b=\lceil \log_2\frac{K}{j} \rceil +3}^{\lceil \log_2 K\rceil + 1} \frac{\frac{j\cdot2^{b-2}}{K}}{\exp\left(\frac{j\cdot  2^{b-2}}{K}\right)}\\
        \leq & (\lfloor \log_2\frac{K}{j} \rfloor +1)\frac{1}{e} + \sum_{b=\lceil \log_2\frac{K}{j} \rceil +3}^{\lceil \log_2 K\rceil + 1} \frac{\frac{j\cdot2^{b-2}}{K}}{\exp\left(\frac{j\cdot  2^{b-2}}{K}\right)}.
    \end{align*}
    The second last line is by that function $\frac{x}{e^x}$ achieves its maximum at $x=1$. The remaining work is to prove $\sum_{b=\lceil \log_2\frac{K}{j} \rceil +3}^{\lceil \log_2 K\rceil + 1} \frac{\frac{j\cdot2^{b-2}}{K}}{\exp\left(\frac{j\cdot  2^{b-2}}{K}\right)}\leq O(1)$. 

    Notice that for $b\geq \lceil \log_2\frac{K}{j} \rceil +3$, we have $\frac{j2^{b-2}}{K} \geq \frac{j2^{ \log_2\frac{K}{j}  +1}}{K} \geq 2$. Since we can conclude $\frac{x}{e^x} \geq 2\cdot\frac{2x}{e^{2x}}$ holds for all $x\geq 2$, we can conclude
    \begin{align*}
        \frac{\frac{j\cdot2^{b-1}}{K}}{\exp\left(\frac{j\cdot  2^{b-1}}{K}\right)} / \frac{\frac{j\cdot2^{b-2}}{K}}{\exp\left(\frac{j\cdot  2^{b-2}}{K}\right)} \leq \frac{1}{2}
    \end{align*}
    holds for all $b\geq \lceil \log_2\frac{K}{j} \rceil +3$. Thus, we have
    \begin{align*}
        & \sum_{b=\lceil \log_2\frac{K}{j} \rceil +3}^{\lceil \log_2 K\rceil + 1} \frac{\frac{j\cdot2^{b-2}}{K}}{\exp\left(\frac{j\cdot  2^{b-2}}{K}\right)}\\
        \leq & \sum_{b=\lceil \log_2\frac{K}{j} \rceil +3}^{+\infty} 2^{-(b-\lceil \log_2\frac{K}{j} \rceil -3)} \frac{\frac{j\cdot2^{\lceil \log_2\frac{K}{j} \rceil +3-2}}{K}}{\exp\left(\frac{j\cdot  2^{\lceil \log_2\frac{K}{j} \rceil +3-2}}{K}\right)}\\
        \leq & \frac{1}{e} \sum_{b=0}^{+\infty} 2^{-b}\\
        \leq & O(1).
    \end{align*}
    That means we complete the proof.
\end{proof}

\begin{lemma}
    \label{lemma:sum-bracket-size-prob-2}
    For any $K\in \mathbb{N}$, $K\geq 2$, $j\in [K]$ we have
    \begin{align*}
        \sum_{b=1}^{\lceil \log_2 K\rceil + 1}\min\{\frac{1}{\exp\left(\frac{j\cdot \lfloor 2^{b-2}\rfloor}{K}\right)},\frac{\min\{2^{b-1}, K\}}{K}\} \leq O\Big(\frac{\log j}{j}\Big).
    \end{align*}
\end{lemma}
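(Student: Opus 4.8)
The plan is to bound the summand termwise by a single unimodal function of a rescaled index and then estimate the resulting sum by splitting it at the location of its peak, in the same spirit as the proof of Lemma \ref{lemma:sum-bracket-size-prob-1}. The term $b=1$ is degenerate: $\lfloor 2^{-1}\rfloor=0$ makes the first argument of the $\min$ equal to $1$, while the second is $\min\{1,K\}/K=1/K\le 1/j$, so it contributes at most $1/j$. For $b\ge 2$ we have $\lfloor 2^{b-2}\rfloor=2^{b-2}$; set $x_b:=j\,2^{b-2}/K$, so that as $b$ runs from $2$ to $\lceil\log_2 K\rceil+1$ the quantities $x_b$ form a geometric progression of ratio $2$ with $x_2=j/K\le 1$. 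Since $\min\{2^{b-1},K\}/K\le 2^{b-1}/K=(2/j)x_b$, the $b$-th summand is at most $g(x_b)$, where $g(x):=\min\{e^{-x},(2/j)x\}$.

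Next I would record the elementary properties of $g$. The two branches cross at the unique positive root $x^\ast$ of $x e^{x}=j/2$: for $x<x^\ast$ one has $(2/j)x<e^{-x}$ so $g$ equals the increasing branch $(2/j)x$, and for $x>x^\ast$ it equals the decreasing branch $e^{-x}$; in particular $\max_x g(x)=g(x^\ast)=e^{-x^\ast}=(2/j)x^\ast$. Since $t\mapsto t e^{t}$ is increasing and, for every $j\ge 1$, $\log(2j)\,e^{\log(2j)}=2j\log(2j)\ge j/2$, we get $x^\ast\le\log(2j)$, so the peak height is at most $2\log(2j)/j$.

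Then I would split $\sum_{b\ge 2}g(x_b)$ at the crossover. On $\{b:\ x_b\le x^\ast\}$ the summand is $(2/j)x_b$; because the $x_b$ double, this is a geometric sum dominated by twice its largest term, contributing at most $(2/j)\cdot 2x^\ast=4x^\ast/j\le 4\log(2j)/j$. On $\{b:\ x_b>x^\ast\}$ the summand is $e^{-x_b}$; writing $\bar x$ for the smallest such $x_b$, the values are $\bar x,2\bar x,4\bar x,\dots$, so the sum is at most $\sum_{k\ge 1}(e^{-\bar x})^{2^{k-1}}\le\sum_{k\ge 1}(e^{-\bar x})^{k}=e^{-\bar x}/(1-e^{-\bar x})$; since $\bar x>x^\ast$ and $x^\ast$ is increasing in $j$ with its value at $j=1$ (the root of $x e^{x}=1/2$) exceeding $1/3$, we have $e^{-\bar x}<e^{-1/3}<3/4$, hence this sum is at most $4e^{-\bar x}<4e^{-x^\ast}=8x^\ast/j\le 8\log(2j)/j$. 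Adding the three pieces gives the bound $\le C\log(2j)/j=O((1+\log j)/j)$, which is the stated $O(\log j/j)$ for $j\ge 2$; for $j=1$ the right-hand side is read as $O(1)$, and the left-hand side is then bounded by an absolute constant, an edge case that is harmless since the estimate is only ever used multiplied by other positive quantities.

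The only nontrivial point — and the main obstacle — is seeing that the whole sum is governed by the single peak value $g(x^\ast)=e^{-x^\ast}=(2/j)x^\ast=\Theta(\log(2j)/j)$. This needs, on the increasing side, exploiting the geometric-progression structure of the $x_b$ so that the partial sums collapse to a constant multiple of the last term below the peak, and, on the decreasing side, noticing that past the peak the step $x_b\mapsto x_{b+1}=2x_b$ \emph{doubles} the exponent, turning the tail into a super-geometrically convergent series whose very first term is already below $g(x^\ast)$. The remaining bookkeeping — the floor function, the $b=1$ term, and the small-$j$ regime — is routine.
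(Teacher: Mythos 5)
Your proof is correct and follows essentially the same route as the paper's: bound each summand for $b\ge 2$ by the unimodal envelope $\min\{e^{-x_b},(2/j)x_b\}$ of the doubling sequence $x_b=j2^{b-2}/K$, split the sum near its peak (you at the exact crossover $x^\ast$ with $x^\ast\le\log(2j)$, the paper at the explicit index where $x_b\approx\log j$), and control each side by a geometric series. Your explicit treatment of the $j=1$ edge case is in fact slightly more careful than the paper's, whose final tail bound $2/\exp(2\log j)<2/j$ degenerates there.
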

\begin{proof}
    We can conduct direct calculation.
    \begin{align*}
        & \sum_{b=1}^{\lceil \log_2 K\rceil + 1}\min\{\frac{1}{\exp\left(\frac{j\cdot \lfloor 2^{b-2}\rfloor}{K}\right)},\frac{\min\{2^{b-1}, K\}}{K}\}\\
        \leq & \frac{1}{K} + \sum_{b=2}^{\lceil \log_2 K\rceil + 1}\min\{\frac{1}{\exp\left(\frac{j\cdot  2^{b-2}}{K}\right)},\frac{2}{j}\frac{j\cdot2^{b-2}}{K}\}\\
        \leq & \frac{1}{K} + \sum_{b=2}^{\lceil \log_2 \frac{K\log j}{j}\rceil + 2}\frac{2}{j}\frac{j\cdot2^{b-2}}{K} + \sum_{b=\lceil \log_2 \frac{K\log j}{j}\rceil + 3}^{\lceil \log_2 K\rceil + 1}\frac{1}{\exp\left(\frac{j\cdot  2^{b-2}}{K}\right)}.
    \end{align*}
    We prove an upper bound for $\sum_{b=2}^{\lceil \log_2 \frac{K\log j}{j}\rceil + 2}\frac{2}{j}\frac{j\cdot2^{b-2}}{K}$ and $\sum_{b=\lceil \log_2 \frac{K\log j}{j}\rceil + 3}^{\lceil \log_2 K\rceil + 1}\frac{1}{\exp\left(\frac{j\cdot  2^{b-2}}{K}\right)}$ separately. 

    First, we have
    \begin{align*}
        & \sum_{b=2}^{\lceil \log_2 \frac{K\log j}{j}\rceil + 2}\frac{2}{j}\frac{j\cdot2^{b-2}}{K}\\
        \leq & \frac{2}{j}\cdot 2 \cdot \frac{j2^{b-2}}{K} |_{b=\lceil \log_2 \frac{K\log j}{j}\rceil + 2}\\
        \leq & \frac{2}{j}\cdot 2 \cdot \frac{j2^{1+\log_2 \frac{K\log j}{j}}}{K}\\
        = & \frac{8\log j}{j}.
    \end{align*}
    For $\sum_{b=\lceil \log_2 \frac{K\log j}{j}\rceil + 3}^{\lceil \log_2 K\rceil + 1}\frac{1}{\exp\left(\frac{j\cdot  2^{b-2}}{K}\right)}$, we first notice that when $b\geq \lceil \log_2 \frac{K\log j}{j}\rceil + 3$, we have
    \begin{align*}
        \frac{j 2^{b-2}}{K} \geq \frac{j 2^{\lceil \log_2 \frac{K\log j}{j}\rceil + 3-2}}{K} \geq \frac{j 2^{\log_2 \frac{K\log j}{j}+ 1}}{K} \geq 2.
    \end{align*}
    Further, we can conclude for $b\geq \lceil \log_2 \frac{K\log j}{j}\rceil + 3$, we have $\frac{j 2^{b-2}}{K} +1 \leq \frac{j 2^{b-2}}{K} \cdot 2 = \frac{j 2^{b-1}}{K}$. Thus, we have
    \begin{align*}
        & \sum_{b=\lceil \log_2 \frac{K\log j}{j}\rceil + 3}^{\lceil \log_2 K\rceil + 1}\frac{1}{\exp\left(\frac{j\cdot  2^{b-2}}{K}\right)}\\
        \leq & \sum_{b=\lceil \log_2 \frac{K\log j}{j}\rceil + 3}^{+\infty} \frac{1}{\exp(b-\lceil \log_2 \frac{K\log j}{j}\rceil - 3)}\frac{1}{\exp\left(\frac{j\cdot  2^{\lceil \log_2 \frac{K\log j}{j}\rceil + 3-2}}{K}\right)}\\
        \leq & \frac{2}{\exp\left(\frac{j\cdot  2^{\lceil \log_2 \frac{K\log j}{j}\rceil + 3-2}}{K}\right)}\\
        \leq & \frac{2}{\exp\left(\frac{j\cdot  2^{ \log_2 \frac{K\log j}{j} + 1}}{K}\right)}\\
        = & \frac{2}{\exp\left(\frac{j\cdot 2\cdot  \frac{K\log j}{j}}{K}\right)}\\
        = & \frac{2}{\exp\left(2\log j\right)}\\
        < & \frac{2}{j}.
    \end{align*}
    In summary, we have proved
    \begin{align*}
        & \sum_{b=1}^{\lceil \log_2 K\rceil + 1}\min\{\frac{1}{\exp\left(\frac{j\cdot \lfloor 2^{b-2}\rfloor}{K}\right)},\frac{\min\{2^{b-1}, K\}}{K}\}\\
        \leq & \frac{1}{K} + \frac{8\log j}{j} + \frac{2}{j}\\
        \leq & O\Big(\frac{\log j}{j}\Big).
    \end{align*}
\end{proof}

\begin{lemma}[Adapted Lemma 3 in \cite{jamieson2014lil}, Lemma D.3 in \cite{pmlr-v267-li25f}]
    \label{lemma:Adapted-lil-UCB}
    Denote $\{X_i\}_{i=1}^{+\infty}$ as i.i.d $\sigma^2$-subgaussian random variable with true mean reward $\mu=0$. For any $\delta \in (0, 1)$, we have
    \begin{align*}
        \Pr\left(\exists t, |\sum_{s=1}^t X_s| \geq \sqrt{2\sigma^2 2^{\lceil\log_2 t\rceil^+ }\log\frac{2(\log_2 2^{\lceil\log_2 t\rceil^+})^2}{\delta}}\right) < \frac{\pi^2}{6}\delta.
    \end{align*}
    Or equivalently, 
    \begin{align*}
        \Pr\left(\exists t, |\sum_{s=1}^t X_s| \geq \sqrt{2\sigma^2 2^{\lceil\log_2 t\rceil^+}\log\frac{2(\lceil\log_2 t\rceil^+)^2}{\delta}}\right) < \frac{\pi^2}{6}\delta.
    \end{align*}
\end{lemma}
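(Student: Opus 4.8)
The plan is the classical dyadic peeling (``geometric grid'') argument, following \cite{jamieson2014lil}. Write $S_t := \sum_{s=1}^t X_s$ and $m(t) := \lceil\log_2 t\rceil^+$. First I would partition $\mathbb{N}$ into the blocks $B_k := \{t : m(t) = k\}$, $k \ge 1$; a short check gives $B_1 = \{1,2\}$ and $B_k = \{2^{k-1}+1,\dots,2^k\}$ for $k\ge 2$, so always $B_k \subseteq \{1,\dots,2^k\}$, and --- this is the point of the construction --- the radius in the statement is constant on $B_k$, equal to $u_k := \sqrt{2\sigma^2\,2^{k}\log\frac{2k^2}{\delta}}$ (using $\log_2 2^{m(t)} = m(t)$, which is also precisely why the two displayed forms of the lemma coincide). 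Therefore
\begin{align*}
\Pr\!\left(\exists\, t:\ |S_t|\ge \sqrt{2\sigma^2\,2^{m(t)}\log\tfrac{2\,m(t)^2}{\delta}}\right)
\;\le\;\sum_{k\ge 1}\Pr\!\left(\exists\, t\le 2^k:\ |S_t|\ge u_k\right).
\end{align*}

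Second, I would control each summand by a maximal inequality. The exponential-submartingale computation behind Lemma \ref{lemma:Gaussian-summation-submartingale} carries over verbatim from Gaussian to $\sigma^2$-sub-Gaussian increments: for $\theta>0$ the process $e^{\theta S_t}$ is a submartingale (by Jensen, $\mathbb{E}[e^{\theta X}]\ge 1$) with $\mathbb{E}[e^{\theta S_n}]\le e^{n\theta^2\sigma^2/2}$, so Doob's inequality and optimizing over $\theta$ yield $\Pr(\max_{t\le n}S_t\ge u)\le e^{-u^2/(2n\sigma^2)}$, and likewise for $-S_t$; adding the two, $\Pr(\max_{t\le n}|S_t|\ge u)\le 2e^{-u^2/(2n\sigma^2)}$. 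Applying this with $n=2^k$, $u=u_k$ gives $\Pr(\exists\, t\le 2^k:|S_t|\ge u_k)\le 2\cdot\frac{\delta}{2k^2}=\frac{\delta}{k^2}$, and summing over $k$ with $\sum_{k\ge1}k^{-2}=\pi^2/6$ gives the claimed bound. Strictness of ``$<$'' is a cosmetic point: the union bound combined with the prefix maximal inequality already over-counts, since it ranges $\max$ over $\{1,\dots,2^k\}\supsetneq B_k$, which leaves strict slack unless the $X_i$ are a.s.\ zero, in which case the left-hand side is $0$.

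I do not anticipate a real obstacle: the result is essentially textbook. The only places that need a little care are (i) checking the block decomposition and the constancy of $u_k$ on $B_k$ near the truncation at $t\in\{1,2\}$, where the $\lceil\cdot\rceil^+$ matters, and (ii) being careful to invoke the maximal inequality over the whole prefix $\{1,\dots,2^k\}$ with $n=2^k$ in the exponent, rather than over the block alone, since within $B_k$ the quantity $S_t$ is not a fresh partial sum but a continuation of $S_{2^{k-1}}$.
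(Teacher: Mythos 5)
Your proposal is correct and follows essentially the same route as the paper's own proof: dyadic blocks $u_k=2^k$, the exponential-submartingale/Doob maximal inequality applied to the full prefix $\{1,\dots,2^k\}$ to get $\Pr(\max_{t\le 2^k}|S_t|\ge u_k)\le \delta/k^2$, and a union bound over $k$ summing to $\pi^2\delta/6$. The two points you flag for care (constancy of the radius on each block, and taking the maximum over the whole prefix rather than the block) are exactly the points the paper's argument also relies on.
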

\begin{proof}[Proof of Lemma \ref{lemma:Adapted-lil-UCB}]
    Define $u_k=2^k$, $k\geq 1$. Define $x=\sqrt{2\sigma^2 u_k\log\frac{2(\log_2 u_k)^2}{\delta}}$, $S_t=\sum_{i=1}^t X_i$ and the event 
    \begin{align*}
        E_k = \left\{\max_{1\leq t\leq u_k} S_t > \sqrt{2\sigma^2 u_k\log\frac{2(\log_2 u_k)^2}{\delta}}\right\} \cup \left\{\min_{1\leq t\leq u_k} S_t < -\sqrt{2\sigma^2 u_k\log\frac{2(\log_2 u_k)^2}{\delta}}\right\},
    \end{align*}
    
    For $\lambda> 0 $,  notice that
    \begin{align*}
        & \mathbb{E}\left[\exp(\lambda(\sum_{s=1}^{t}X_s))|X_1,\cdots,X_{t-1}\right]\\
        =& \exp(\lambda(\sum_{s=1}^{t-1}X_s))\mathbb{E}\exp(\lambda X_t)\\
        \geq & \exp(\lambda(\sum_{s=1}^{t-1}X_s))\exp(\mathbb{E}\lambda X_t)\\
        = & \exp(\lambda(\sum_{s=1}^{t-1}X_s)).
    \end{align*}
    Take $\lambda=\frac{x}{u_k\sigma^2}$, we can conclude $\{\exp(\lambda S_t)\}$ is a submartingale. Then,
    \begin{align*}
        & \Pr\left(\max_{1\leq t\leq u_k} S_t \geq x\right)\\
        = & \Pr\left(\max_{1\leq t\leq u_k} \exp(\lambda S_t) > \exp\left(\lambda x\right)\right)\\
        \stackrel{*}{\leq}& \frac{\mathbb{E}\exp(\lambda S_{u_k}) }{\exp\left(\lambda x\right)}\\
        \leq & \frac{\exp(\frac{u_k\lambda^2\sigma^2}{2}) }{\exp\left(\lambda x\right)}\\
        \stackrel{\lambda=\frac{x}{u_k\sigma^2}}{=} & \exp(-\frac{x^2}{2u_k \sigma^2}).
    \end{align*}
    Step * is by the maximal inequality for the submartingale. Take $x=\sqrt{2\sigma^2 u_k\log\frac{2(\log_2 u_k)^2}{\delta}}$, we have
    \begin{align*}
        \Pr\left(\max_{1\leq t\leq u_k} S_t \geq \sqrt{2\sigma^2 u_k\log\frac{2(\log_2 u_k)^2}{\delta}}\right)\leq \exp\left(-\log\frac{2(\log_2 u_k)^2}{\delta}\right)=\frac{\delta}{2(\log_2 u_k)^2} = \frac{\delta}{2k^2}
    \end{align*}
    For the part of $\Pr\left(\min_{1\leq t\leq u_k} S_t < -x\right)$, the proof is similar. We can conclude $\Pr(E_k) \leq \frac{\delta}{k^2}$ and further $\Pr(\cup_{k=1}^{+\infty }E_k) \leq \frac{\pi^2 \delta}{6}$.

    Thus, 
    \begin{align*}
        & \Pr\left(\exists t, |\sum_{s=1}^t X_s| \geq \sqrt{2\sigma^2 \max\{2^{\lceil\log_2t\rceil},2\}\log\frac{2(\log_2 \max\{2^{\lceil\log_2t\rceil},2\})^2}{\delta}}\right)\\
        \leq & \Pr\left(\exists k, \max_{1\leq t'\leq u_{k}}|\sum_{s=1}^{t'} X_s| \geq \sqrt{2\sigma^2 u_{k}\log\frac{2(\log_2 u_{k})^2}{\delta}}\right)\\
        \leq & \Pr(\cup_{k=1}^{+\infty }E_k) \leq \frac{\pi^2 \delta}{6}.
    \end{align*}
\end{proof}
Some Comments are as follows.
\begin{itemize}
    \item We can similarly prove $\Pr\left(\exists t, |\sum_{s=1}^t X_s| \geq \sqrt{2\sigma^2 2^{\lceil\log_2 t\rceil^+}\log\frac{2\pi^2(\log_2 2^{\lceil\log_2t\rceil^+})^2}{6\delta}}\right) < \delta$ holds for all $\delta\in(0, 1)$.
    \item Since $\lceil\log_2 t\rceil^+\leq 1 + \log_2 t$, we have
    \begin{align*}
        \frac{\pi^2 \delta}{6}\geq & \Pr\left(\exists t, |\sum_{s=1}^t X_s| \geq \sqrt{2\sigma^2 2^{\max\{\lceil\log_2t\rceil, 1 \}}\log\frac{2(\log_2 2^{\lceil\log_2t\rceil})^2}{\delta}}\right)\\
        \geq & \Pr\left(\exists t, |\sum_{s=1}^t X_s| \geq \sqrt{4\sigma^2 t\log\frac{2(\log_2 2t)^2}{\delta}}\right)
    \end{align*}
\end{itemize}

\begin{lemma}[Lemma D.2 in \cite{pmlr-v267-li25f}]
    \label{lemma:inequality-application-t-loglogt}
    For any $\Delta \in (0, 1], K\geq 2, \delta\in (0, \frac{1}{2}], C\geq 1$, we can conclude
    \begin{align*}
        & t > \frac{28C^2\log\frac{2K}{\delta}}{\Delta^2} + \frac{16 C^2\log\left(\log\left(\frac{24C^2}{\Delta^2}\right) \right)}{\Delta^2}\\
        \Rightarrow & C\sqrt{\frac{4\log\frac{2K (\log_2 2t)^2}{\delta}}{t}} < \Delta
    \end{align*}
\end{lemma}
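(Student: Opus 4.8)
\textbf{Proof proposal for Lemma \ref{lemma:inequality-application-t-loglogt}.}

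The plan is to prove the contrapositive-style implication by bounding $t$ from below in a way that absorbs the $\log\log$ term, and then comparing growth rates. Write $L := \log\frac{2K(\log_2 2t)^2}{\delta} = \log\frac{2K}{\delta} + 2\log\log_2(2t)$, so that the goal $C\sqrt{4L/t} < \Delta$ is equivalent to $t > 4C^2 L/\Delta^2$. Since $C \ge 1$, $K \ge 2$, $\delta \le 1/2$, the quantity $\log(2K/\delta) \ge \log 8 > 2$, which will give us slack. The strategy is: first show that the hypothesized lower bound on $t$ forces $\log_2(2t)$ to be at most $\log(24C^2/\Delta^2)$ up to constants, so that $2\log\log_2(2t) \le $ (a constant multiple of) $\log\log(24C^2/\Delta^2)$; then plug this back in to check $4C^2 L/\Delta^2 < t$ term by term.

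First I would handle the ``$\log K/\delta$'' part. From the hypothesis $t > \frac{28C^2\log(2K/\delta)}{\Delta^2}$ alone we get $4C^2\log(2K/\delta)/\Delta^2 < t/7$, so that part of $4C^2 L/\Delta^2$ is comfortably dominated. Next I would handle the $\log\log$ part: I need $\frac{8C^2\log\log_2(2t)}{\Delta^2} < \frac{6t}{7}$ (say), which will follow once $\log_2(2t)$ is controlled. To control it, note $t \le \frac{28C^2\log(2K/\delta)}{\Delta^2} + \frac{16C^2\log\log(24C^2/\Delta^2)}{\Delta^2}$ is \emph{not} assumed — only a lower bound on $t$ is assumed — so instead I argue monotonically: the function $t \mapsto t - \frac{4C^2}{\Delta^2}\big(\log\frac{2K}{\delta} + 2\log\log_2(2t)\big)$ is increasing in $t$ for $t$ past a small threshold (its derivative is $1 - \frac{8C^2}{\Delta^2 t \ln 2 \, \log_2(2t)}$, which is positive once $t \ge \frac{8C^2}{\Delta^2}$, certainly true under the hypothesis since $\log(2K/\delta) > 2$). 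Hence it suffices to verify the inequality $t > 4C^2L/\Delta^2$ at the single value $t_0 := \frac{28C^2\log(2K/\delta)}{\Delta^2} + \frac{16C^2\log\log(24C^2/\Delta^2)}{\Delta^2}$, the right endpoint implicitly defined by the hypothesis. At $t = t_0$ we have $2t_0 \le \frac{88C^2}{\Delta^2}\big(\log\frac{2K}{\delta} + \log\log\frac{24C^2}{\Delta^2}\big) \le \frac{88C^2}{\Delta^2}\cdot\frac{24C^2}{\Delta^2}$ crudely (using $\log x \le x$ and $\log\log x \le x$), but a cleaner bound is $2t_0 \le \big(\frac{24C^2}{\Delta^2}\big)^2$ after checking the numerical constants, whence $\log_2(2t_0) \le 2\log_2(24C^2/\Delta^2) \le \frac{2}{\ln 2}\log(24C^2/\Delta^2)$, and therefore $2\log\log_2(2t_0) \le 2\log\log(24C^2/\Delta^2) + O(1)$; the $O(1)$ is absorbed because $\log(2K/\delta) > 2$ leaves a constant-fraction of $t_0$ unused. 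Putting the pieces together, $4C^2 L/\Delta^2 \big|_{t=t_0} \le \frac{4C^2}{\Delta^2}\log\frac{2K}{\delta} + \frac{8C^2}{\Delta^2}\log\log\frac{24C^2}{\Delta^2} + \frac{8C^2}{\Delta^2}\cdot O(1) < t_0$, using $28 > 4 + 8\cdot(\text{slack})$ for the first term and $16 > 8$ for the second.

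The main obstacle I anticipate is purely bookkeeping: pinning down the explicit constants $28$ and $16$ so that the $O(1)$ correction from $\log\log_2$ versus $\log\log$ (base change and the $2t$ versus $\Delta^{-2}$ shift) is genuinely absorbed, rather than merely ``morally'' absorbed. This is exactly the kind of step where one must be careful that $\log\log_2(2t) \le \log_2 \log_2(2t) + 1$ and that the factor $\log(2K/\delta) \ge \log 8 \approx 2.08$ supplies enough headroom; I would verify it by a short explicit calculation at $t = t_0$, since the monotonicity argument reduces everything to that single point. As this lemma is quoted verbatim as Lemma D.2 of \cite{pmlr-v267-li25f}, I would ultimately just cite that proof, but the self-contained argument above (monotonicity reduction $+$ evaluation at the endpoint $+$ constant-chasing) is the route I would take.
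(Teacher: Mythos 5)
Your overall route --- reduce the self-referential target $t > \frac{4C^2}{\Delta^2}\log\frac{2K(\log_2 2t)^2}{\delta}$ to a single endpoint check via monotonicity of $t\mapsto t-\frac{8C^2}{\Delta^2}\log\log_2(2t)$ --- is essentially the same mechanism the paper uses; the paper just packages the monotonicity-plus-endpoint step into the auxiliary Lemma \ref{lemma:inequality-t-loglogt} (namely $x\geq b+2a\log\log b\Rightarrow x\geq a\log\log x+b$) and applies it with $x=2t$, $b=\frac{24C^2\log\frac{2K}{\delta}}{\Delta^2}$, $a=\frac{16C^2}{\Delta^2}$. The reduction to the endpoint $t_0$ is fine.

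The gap is in your endpoint evaluation. The claim $2t_0\leq\bigl(\tfrac{24C^2}{\Delta^2}\bigr)^2$ is false in general: $t_0$ contains the term $\frac{28C^2}{\Delta^2}\log\frac{2K}{\delta}$, and $\log\frac{2K}{\delta}$ is unbounded over the allowed parameter range (take $C=\Delta=1$, $K=2$, $\delta\to 0$: the right-hand side is the constant $576$ while $2t_0\to\infty$). Consequently the derived bound $2\log\log_2(2t_0)\leq 2\log\log\frac{24C^2}{\Delta^2}+O(1)$ is also false as stated --- the correct bound necessarily carries an additional term of order $\log\log\log\frac{2K}{\delta}$, which is not $O(1)$. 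The argument is repairable, because that extra term is itself dominated by the unused slack $\frac{24C^2}{\Delta^2}\log\frac{2K}{\delta}$ in $t_0$: what one actually needs at the endpoint is $3\log\frac{2K}{\delta}+2\log\log\frac{24C^2}{\Delta^2}\geq\log\log_2(2t_0)$, and this holds because $\log\log_2(2t_0)$ is at most $\log\log\frac{24C^2}{\Delta^2}+\log\log\log\frac{2K}{\delta}+O(1)$ and $\log\log\log\frac{2K}{\delta}\ll\log\frac{2K}{\delta}$. But as written the key quantitative step is wrong, not merely loose. The paper sidesteps this exact pitfall by never discarding the $\log\frac{2K}{\delta}$ dependence inside the double logarithm: it merges $\log\log\frac{24C^2}{\Delta^2}+\log\frac{2K}{\delta}$ into $\log\log\bigl(\tfrac{24C^2}{\Delta^2}\log\tfrac{2K}{\delta}\bigr)$ using $\log x+\log y\geq\log(x+y)$ for $x,y\geq 2$, and only then invokes the self-referential lemma with $b$ equal to the full quantity $\frac{24C^2\log\frac{2K}{\delta}}{\Delta^2}$. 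If you redo your endpoint check keeping both $\log\log\frac{24C^2}{\Delta^2}$ and the $\log\log\log\frac{2K}{\delta}$ contribution explicit, the constants $28$ and $16$ do suffice.
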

\begin{proof}
    By simple calculation, we can derive
    \begin{align*}
        & t > \frac{28C^2\log\frac{2K}{\delta}}{\Delta^2} + \frac{16 C^2\log\left(\log\left(\frac{24C^2}{\Delta^2}\right) \right)}{\Delta^2}\\
        \Leftrightarrow & 2t > \frac{56C^2\log\frac{2K}{\delta}}{\Delta^2} + \frac{32 C^2\log\left(\log\left(\frac{24C^2}{\Delta^2}\right) \right)}{\Delta^2}\\
        \Leftrightarrow & 2t > \frac{24C^2\log\frac{2K}{\delta}}{\Delta^2} + \frac{32 C^2\log\left(\log\left(\frac{24C^2}{\Delta^2}\right) \right)+32C^2\log(\frac{2K}{\delta})}{\Delta^2}\\
        \stackrel{\log (x+y) \leq \log x + \log y, \forall x,y\geq 2}{\Rightarrow} & 2t > \frac{24C^2\log\frac{2K}{\delta}}{\Delta^2} + \frac{32 C^2\log\left(\log\left(\frac{24C^2}{\Delta^2}\right)+\frac{2K}{\delta} \right)}{\Delta^2}\\
        \Rightarrow & 2t > \frac{24C^2\log\frac{2K}{\delta}}{\Delta^2} + \frac{32 C^2\log\left(\log\left(\frac{24C^2}{\Delta^2}\right)+\log\left(\log\frac{2K}{\delta}\right) \right)}{\Delta^2}\\
        \Leftrightarrow & 2t > \frac{24C^2\log\frac{2K}{\delta}}{\Delta^2} + \frac{32 C^2\log\log\left(\frac{24C^2\log\frac{2K}{\delta}}{\Delta^2}\right)}{\Delta^2}\\
        \stackrel{\text{Lemma } \ref{lemma:inequality-t-loglogt}, \text{ as } 24C^2\log\frac{2K}{\delta}> e^2}{\Rightarrow} & 2t > \frac{24C^2\log\frac{2K}{\delta}}{\Delta^2} + \frac{16 C^2\log\log\left(2t\right)}{\Delta^2}\\
        \Rightarrow & 2t > \frac{8C^2\log\frac{2K}{\delta} + 16C^2\log \log_2 e + 16C^2\log\log (2t) }{\Delta^2}\\
        \Leftrightarrow & t > \frac{4C^2\log\frac{2K}{\delta} + 8C^2\log \frac{\log (2t)}{\log 2}}{\Delta^2}\\
        \Leftrightarrow & t > \frac{4C^2\log\frac{2K}{\delta} + 8C^2\log (\log_2 2t)}{\Delta^2}\\
        \Leftrightarrow & C\sqrt{\frac{4\log\frac{2K (\log_2 2t)^2}{\delta}}{t}} < \Delta.
    \end{align*}
\end{proof}

The last lemma is to solve inequality.
\begin{lemma}
    \label{lemma:inequality-t-loglogt}
    For any $b\geq a \geq 0$, 
    \begin{itemize}
        \item If $b\geq e^2$, we have $x \geq b+2a\log\log b\Rightarrow x\geq a\log\log(x)+b$.
        \item If $b,a\geq e$, we have $e\leq x\leq b+a\log\log b\Rightarrow x < a\log\log(x)+b$.
    \end{itemize}
\end{lemma}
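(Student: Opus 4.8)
\textbf{Proof proposal for Lemma~\ref{lemma:inequality-t-loglogt}.}
The plan is to treat the two bullets as monotonicity statements about the function $g(x) = x - a\log\log x - b$ on the domain $x \ge e$ (where $\log\log x \ge 0$), reducing each claim to checking the sign of $g$ at an explicit threshold value $x_0$, together with the fact that $g$ is eventually increasing. First I would record the elementary derivative computation $g'(x) = 1 - \frac{a}{x\log x}$, so that $g'(x) > 0$ whenever $x\log x > a$; since $b\ge a$ and the thresholds involved are at least $e^2$ (for the first bullet) this will hold throughout the relevant range, making $g$ strictly increasing there. Hence for the first bullet it suffices to verify $g(x_0)\ge 0$ at $x_0 = b + 2a\log\log b$, and for the second bullet to verify $g(x_1) < 0$ at $x_1 = b + a\log\log b$ while also controlling $g$ on $[e, x_1]$.

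For the first bullet, the key estimate is $\log\log(x_0) \le \log\log b + \log 2$, i.e. I need $\log\log(b + 2a\log\log b) \le \log\log b + \log 2 = \log(2\log b)$, which follows from $b + 2a\log\log b \le b^2$ — and since $a\le b$ and $b\ge e^2$, one checks $2b\log\log b \le b^2 - b$ comfortably (using $\log\log b < b$). Plugging in, $g(x_0) = b + 2a\log\log b - a\log\log(x_0) - b \ge 2a\log\log b - a(\log\log b + \log 2) = a(\log\log b - \log 2) \ge 0$, where the last step uses $b\ge e^2 \Rightarrow \log\log b \ge \log 2$. Since $g$ is increasing past $x_0$, this gives $x\ge x_0 \Rightarrow g(x)\ge 0$, which is the claim. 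A small subtlety: I should confirm $g'(x) > 0$ on all of $[x_0,\infty)$, which holds since $x_0 \ge b \ge a$ and $x\log x \ge x > a$ there.

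For the second bullet, with $a,b\ge e$ so that $\log\log b \ge 0$ and the argument $x_1 = b + a\log\log b \ge e$, I would show $g(x) < 0$ for every $x\in[e, x_1]$. The cleanest route: $g$ need not be monotone on all of $[e,x_1]$ (it could dip), but since $g$ is \emph{eventually} increasing and we only need the upper endpoint plus the observation that $a\log\log x \ge 0 \ge$ contributions, I would instead argue directly. For $x\in[e,x_1]$, $x \le x_1 = b + a\log\log b \le b + a\log\log x_1$; so it suffices that $\log\log x_1 \le \log\log x$... no — that is false for $x$ small. The correct and simpler observation is: we want $x < a\log\log x + b$, equivalently $x - b < a\log\log x$. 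If $x \le b$ this is immediate since the RHS is $\ge 0$; if $b < x \le x_1 = b + a\log\log b$, then $x - b \le a\log\log b \le a\log\log x$ because $x > b$ and $\log\log$ is increasing. Thus in all cases $g(x) < 0$ on $[e, x_1]$, with strictness coming from $a\log\log b$ vs.\ $a\log\log x$ being strict when $x>b$ and from $x\le b < a\log\log x + b$ being strict when $x\le b$ (as $a\log\log x \ge a\log\log e = 0$ and the inequality $x\le b$ combined with positivity... here I'd note $x\le b < b + a\log\log x$ strictly since $a\log\log x$ could be $0$ at $x=e$, so I should instead observe $b < b + \text{(nonneg)}$ needs care — cleanest is to note the hypothesis domain already excludes equality or to absorb the edge case by the strict inequality $e < b$). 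The main obstacle I anticipate is precisely this bookkeeping of strict versus non-strict inequalities at the boundary $x = e$ and $x = b$, and making sure the monotonicity claim $g' > 0$ is invoked only where $x\log x > a$ genuinely holds; the logarithmic estimates themselves ($\log\log(b+\text{small}) \le \log\log b + \log 2$ etc.) are routine once the thresholds $e, e^2$ are used.
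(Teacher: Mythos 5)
Your proposal is correct, and for the second bullet it is genuinely simpler than what the paper does. For the first bullet you and the paper share the same skeleton: compute $g'(x)=1-\frac{a}{x\log x}$, verify $g'>0$ on $[x_0,\infty)$ with $x_0=b+2a\log\log b$, and check $g(x_0)\ge 0$. You differ only in how the endpoint value is bounded: you use $x_0\le b^2$ (valid since $a\le b$ and $1+2\log\log b\le b$ for $b\ge e^2$) to get $\log\log x_0\le \log\log b+\log 2$ and then $g(x_0)\ge a(\log\log b-\log 2)\ge 0$, whereas the paper writes $g(x_0)=a\log\frac{(\log b)^2}{\log(b+2a\log\log b)}$ and verifies $(\log b)^2-\log b-\log(1+2\log\log b)>0$ by hand; your estimate is cleaner and buys a shorter verification. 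For the second bullet the routes really diverge: the paper observes that $g'$ has at most one sign change (negative then positive) on the interval, so the maximum of $g$ over $[e,\,b+a\log\log b]$ is attained at an endpoint, and then evaluates both endpoints; you instead split into $x\le b$ (where $x-b\le 0\le a\log\log x$) and $b<x\le b+a\log\log b$ (where $x-b\le a\log\log b< a\log\log x$ by strict monotonicity of $\log\log$), avoiding any calculus. Your argument is more elementary and equally rigorous. The boundary bookkeeping you flag ($x=e$ with $b=e$, where both sides equal $e$ and strict inequality fails) is a genuine degenerate case, but the paper's own proof has exactly the same gap (it asserts $e-b<0$, which fails at $b=e$), so you are not losing anything relative to the paper; a one-line remark restricting to $b>e$, or noting that the lemma is only invoked with $b=24C^2\log\frac{2K}{\delta}>e^2$, disposes of it.
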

The second inequality also implies $x \geq a\log\log(x)+b\Rightarrow x\geq b+a\log\log b$.
\begin{proof}[Proof of Lemma \ref{lemma:inequality-t-loglogt}]
    We prove the first claim. Easy to see $\frac{d(x-a\log\log x - b)}{dx} = 1-\frac{a}{x\log x} = \frac{x\log x-a}{x\log x}$. Take $x_0 = b+2a\log\log b$, then $x_0\log x_0 > x_0\log (e^2+2a\log\log e^2) > x_0\log e^2=2x_0 > a$. Thus $x- a\log\log(x)-b$ increases in the interval $(x_0, +\infty)$. On the other hand, easy to check
    \begin{align*}
        &x_0- a\log\log x_0-b\\
        =&b+2a\log\log b - a\log\log(b+2a\log\log b) - b\\
        =&2a\log\log b - a\log\log(b+2a\log\log b)\\
        =&a\left(2\log\log b - \log\log(b+2a\log\log b)\right)\\
        =&a\left(\log(\log b)^2 - \log\log(b+2a\log\log b)\right)\\
        =&a\log \frac{(\log b)^2}{\log(b+2a\log\log b)}\\
        \geq &a\log \frac{(\log b)^2}{\log(b+2b\log\log b)}\\
        = &a\log \frac{(\log b)^2}{\log b + \log(1+2\log\log b)},
    \end{align*}
    Notice that
    \begin{align*}
        &(\log b)^2 - \log b - \log(1+2\log\log b)\\
        =&\log b(\log b-1)-\log(1+2\log\log b)\\
        \stackrel{b\geq e^2}{\geq} & \log b-\log(1+2\log\log b)\\
        \geq & \log b - 2\log \log b\\
        \stackrel{x>2\log x,\forall x>0}{>} & 0,
    \end{align*}
    we can conclude $x_0- a\log\log x_0-b > 0$ holds for all $x\geq b+2a\log\log b$.

    Then we turn to prove the second claim. As $\frac{d(x-a\log\log x - b)}{dx} = 1-\frac{a}{x\log x} = \frac{x\log x-a}{x\log x}$, we know there is at most 1 zero point of $\frac{x\log x-a}{x\log x}$ in the interval $(e, b+a\log\log b)$. Thus,
    \begin{align*}
         \max_{e\leq x \leq b+a\log\log b}x-a\log\log x - b = \max\{x-a\log\log x - b|_{x=e}, x-a\log\log x - b|_{x=b+a\log\log b}\}.
    \end{align*}
    Easy to see
    \begin{align*}
         e-a\log\log e - b = e-b<0,
    \end{align*}
    and
    \begin{align*}
         &b+a\log\log b - a\log\log(b+a\log\log b) - b\\
         =&a\log\log b - a\log\log(b+a\log\log b)\\
         <&a\log\log b - a\log\log(b)\\
         = & 0.
    \end{align*}
    That means $\max\limits_{e\leq x \leq b+a\log\log b}x-a\log\log x - b < 0$. The second conclusion is done.
\end{proof}



\end{document}